\tikzstyle{server}=[fill={rgb,255: red,64; green,64; blue,64}, draw=black, shape=circle]
\tikzstyle{client}=[fill={rgb,255: red,55; green,126; blue,184}, draw=black, shape=circle]
\tikzstyle{optima}=[fill={rgb,255: red,55; green,126; blue,184}, draw=black, shape=rectangle]
\tikzstyle{globalopt}=[fill={rgb,255: red,255; green,128; blue,0}, draw=black, shape=rectangle]
\tikzstyle{pseudoserver}=[fill={rgb,255: red,191; green,191; blue,191}, draw={rgb,255: red,128; green,128; blue,128}, shape=circle]
\tikzstyle{sgdnode}=[fill=white, draw=black, shape=circle]
\tikzstyle{client2}=[fill={rgb,255: red,77; green,175; blue,74}, draw=black, shape=circle]
\tikzstyle{clientopt}=[fill={rgb,255: red,77; green,175; blue,74}, draw=black, shape=rectangle]
\tikzstyle{averageopt}=[fill={rgb,255: red,191; green,191; blue,191}, draw={rgb,255: red,128; green,128; blue,128}, shape=rectangle]
\tikzstyle{gradient}=[dashed]
\tikzstyle{server update}=[->, draw={rgb,255: red,77; green,175; blue,74}]
\tikzstyle{client update}=[draw={rgb,255: red,55; green,126; blue,184}, ->]
\tikzstyle{update}=[draw=black, dashed, -]
\tikzstyle{sgd}=[draw={rgb,255: red,65; green,61; blue,58}, ->]
\tikzstyle{correction}=[draw={rgb,255: red,228; green,26; blue,28}, ->]
\tikzstyle{midpoint}=[-, draw={rgb,255: red,191; green,191; blue,191}]
\setlist{leftmargin=5.5mm}
\newcommand{\covfefe}{{\sc SCAFFOLD}}
\newcommand{\fullformcovfefe}{Stochastic Controlled Averaging}
\newcommand{\fedavg}{{\sc FedAvg}}
\newcommand{\easgd}{{\sc FedProx}}
\newcommand{\emnist}{{\sc EMNIST}}
\DeclarePairedDelimiterX{\inp}[2]{\langle}{\rangle}{#1, #2}
\DeclarePairedDelimiterX{\abs}[1]{\lvert}{\rvert}{#1}
\DeclarePairedDelimiterX{\norm}[1]{\lVert}{\rVert}{#1}
\DeclarePairedDelimiterX{\cbr}[1]{\{}{\}}{#1} 
\DeclarePairedDelimiterX{\rbr}[1]{(}{)}{#1} 
\DeclarePairedDelimiterX{\sbr}[1]{[}{]}{#1} 
\providecommand{\refLE}[1]{\ensuremath{\stackrel{(\ref{#1})}{\leq}}}
\providecommand{\mgeq}{\succeq}
\providecommand{\mleq}{\preceq}
\providecommand{\tsum}{\textstyle\sum} 
  \providecommand{\R}{\mathbb{R}} 
  \providecommand{\real}{\mathbb{R}} 
  \DeclareMathOperator{\expect}{\mathbb{E}}
  \DeclareMathOperator{\E}{\expect}
  \DeclareMathOperator{\sgn}{sign}
  \def\sign{\@ifnextchar*{\@sgnargscaled}{\@ifnextchar[{\sgnargscaleas}{\@ifnextchar{\bgroup}{\@sgnarg}{\sgn} }}}
  \def\@sgnarg#1{\sgn\rbr{#1}}
  \def\@sgnargscaled#1{\sgn\rbr*{#1}}
  \def\@sgnargscaleas[#1]#2{\sgn\rbr[#1]{#2}}
  \providecommand{\alphav}{\bm{\alpha}}
  \providecommand{\cc}{\bm{c}}
  \let\ggg\gg
  \renewcommand{\gg}{\bm{g}}
  \renewcommand{\vv}{\bm{v}}
  \providecommand{\xx}{\bm{x}}
  \providecommand{\yy}{\bm{y}}
  \providecommand{\zz}{\bm{z}}
  \providecommand{\cA}{\mathcal{A}}
  \providecommand{\cO}{\mathcal{O}}
  \providecommand{\cS}{\mathcal{S}}
  \providecommand{\cT}{\mathcal{T}}
  \providecommand{\error}{\mathcal{E}}
  \providecommand{\ce}{\mathcal{C}}
  \providecommand{\tce}{\Xi}
\newtheorem{theorem}{Theorem}
\newtheorem{lemma}{Lemma}
\newtheorem{remark}[lemma]{Remark}
\newcommand{\ignore}[1]{}
\newcommand{\e}{\epsilon}
\definecolor{color1}{RGB}{228,26,28}
\definecolor{color2}{RGB}{55,126,184}
\definecolor{color3}{RGB}{77,175,74}
\definecolor{color4}{RGB}{152,78,163}
\definecolor{color5}{RGB}{255,127,0}
\newcommand{\yes}{$\checkmark$}%
\newcommand{\no}{$\times$}%
\newcommand{\myitem}[1]{%
\item[\textbf{(#1)}]\protected@edef\@currentlabel{#1}%
}
\newcommand*{\tikzmk}[1]{\tikz[remember picture,overlay,] \node (#1) {};}
\newcommand{\boxit}[1]{\tikz[remember picture,overlay]{\node[yshift=3pt,fill=#1,opacity=.25,fit={($(A)+(-0.2*\linewidth - 3pt,3pt)$)($(B)+(0.75*\linewidth - 5pt,-2pt)$)}] {};}}
\newcommand{\blockfed}[1]{\tikz[remember picture,overlay]{\node[yshift=3pt,fill=#1,opacity=.25,fit={($(A)+(-0.1*\linewidth - 3pt,3pt)$)($(B)+(0.88*\linewidth - 5pt,-2pt)$)}] {};}}
\newcommand{\highlight}[1]{\tikz[remember picture,overlay]{\node[yshift=3pt,fill=#1,opacity=.25,fit={($(A)+(2pt,6pt)$)($(B)+(-3pt,-7pt)$)}] {};}}
\colorlet{client}{red!40}
\newcommand{\speedup}[1]{{\color{gray}(\ifdim #1 pt > 0.3pt #1\else $< #1$\fi{}$\times$)}}
\icmltitlerunning{\covfefe: \fullformcovfefe\ for Federated Learning}
\begin{document}

\twocolumn[
\icmltitle{\covfefe: \fullformcovfefe\ for Federated Learning}



\icmlsetsymbol{equal}{*}

\begin{icmlauthorlist}

\icmlauthor{Sai Praneeth Karimireddy}{epfl,intern}
\icmlauthor{Satyen Kale}{goo}
\icmlauthor{Mehryar Mohri}{goo,ci}
\icmlauthor{Sashank J. Reddi}{goo}
\icmlauthor{Sebastian U. Stich}{epfl}
\icmlauthor{Ananda Theertha Suresh}{goo}
\icmlaffiliation{epfl}{EPFL, Lausanne}
\icmlaffiliation{goo}{Google Research, New York}
\icmlaffiliation{ci}{Courant Institute, New York}
\icmlaffiliation{intern}{Based on work performed at Google Research, New York.}
\end{icmlauthorlist}

\icmlcorrespondingauthor{Sai Praneeth Karimireddy}{sai.karimireddy@epfl.ch}

\icmlkeywords{Federated learning, optimization, distributed optimization, decentralized optimization}

\vskip 0.3in
]



\printAffiliationsAndNotice{}  

\begin{abstract}%
    Federated Averaging (\fedavg) has emerged as the algorithm of choice for federated learning due to its simplicity and low communication cost. However, in spite of recent research efforts, its performance is not fully understood. We obtain tight convergence rates for \fedavg\ and prove that it suffers from `client-drift' when the data is heterogeneous (non-iid), resulting in unstable and slow convergence.
    
    As a solution, we propose a new algorithm (\covfefe) which uses control variates (variance reduction) to correct for the `client-drift' in its local updates. We prove that \covfefe\ requires significantly fewer communication rounds and is not affected by data heterogeneity or client sampling. Further, we show that (for quadratics) \covfefe\ can take advantage of similarity in the client's data yielding even faster convergence. The latter is the first result to quantify the usefulness of local-steps in distributed optimization.

  \end{abstract}

\section{Introduction}
Federated learning has emerged as an important paradigm in modern large-scale machine learning. Unlike in traditional centralized learning where models are trained using large datasets stored in a central server
\citep{dean2012large,iandola2016firecaffe,goyal2017accurate}, in
federated learning, the training data remains distributed over a large
number of clients, which may be phones, network
sensors, hospitals, or alternative local information sources
\citep{konevcny2016federated,konecny2016federated2,
  mcmahan2017communication,mohri2019agnostic,kairouz2019advances}. A centralized model (referred to as server model) is then trained without ever transmitting client data over the network,
thereby ensuring a basic level of privacy. In this work, we investigate stochastic optimization algorithms for federated learning.

The key challenges for federated optimization are 1) dealing with unreliable and relatively slow network connections between the server and the clients, 2) only a small subset of clients being available for training at a given time, and 3) large heterogeneity (non-iid-ness) in the data present on the different clients \citep{konecny2016federated2}. The most popular algorithm for this setting is \fedavg\ \citep{mcmahan2017communication}. \fedavg\ tackles the communication bottleneck by performing multiple local updates on the available clients before communicating to the server. While it has shown success in certain applications, its performance on heterogeneous data is still an active area of research \citep{li2018fedprox,yu2019parallel,li2019fedconvergence,haddadpour2019convergence,khaled2020tighter}. We prove that indeed such heterogeneity has a large effect on \fedavg---it introduces a \emph{drift} in the updates of each client resulting in slow and unstable convergence. Further, we show that this client-drift persists even if full batch gradients are used and all clients participate throughout the training.

As a solution, we propose a new \fullformcovfefe\ algorithm (\covfefe) which tries to correct for this client-drift. Intuitively, \covfefe\ estimates the update direction for the server model ($\cc$) and the update direction for each client $\cc_i$.\footnote{We refer to these estimates as \emph{control variates} and the resulting correction technique as stochastic controlled averaging.} The difference $(\cc - \cc_i)$ is then an estimate of the client-drift which is used to correct the local update. This strategy successfully overcomes heterogeneity and converges in significantly fewer rounds of communication. Alternatively, one can see heterogeneity as introducing `client-variance' in the updates across the different clients and \covfefe\ then performs `client-variance reduction' \citep{schmidt2017minimizing,johnson2013accelerating,defazio2014saga}. We use this viewpoint to show that \covfefe\ is relatively unaffected by client sampling.

Finally, while accommodating heterogeneity is important, it is equally important that a method can take advantage of similarities in the client data. We prove that \covfefe\ indeed has such a property, requiring fewer rounds of communication when the clients are more similar.

\paragraph{Contributions.} We summarize our main results below.
\begin{itemize}[topsep=0pt,itemsep=0pt,partopsep=0.5ex,parsep=0.5ex]
  \item We derive tighter convergence rates for \fedavg\ than previously known for convex and non-convex functions with client sampling and heterogeneous data.
  \item We give matching lower bounds to prove that even with no client sampling and full batch gradients, \fedavg\ can be slower than SGD due to client-drift.
  \item We propose a new \fullformcovfefe\ algorithm (\covfefe) which corrects for this client-drift. We prove that \covfefe\ is at least as fast as SGD and converges for arbitrarily heterogeneous data.
  \item We show \covfefe\ can additionally take advantage of similarity between the clients to further reduce the communication required, proving the advantage of taking local steps over large-batch SGD for the first time.
  \item We prove that \covfefe\ is relatively unaffected by the client sampling obtaining variance reduced rates, making it especially suitable for federated learning.
\end{itemize}
Finally, we confirm our theoretical results on simulated and real datasets (extended MNIST by \citet{cohen2017emnist}).

\textbf{Related work.} For identical clients, \fedavg\ coincides with
parallel SGD analyzed by \cite{zinkevich2010parallelized} who proved asymptotic convergence. \citet{stich2018local} and, more
recently \citet{stich2019error,patel2019communication,khaled2020tighter},
gave a sharper analysis of the same method, under the name of local
SGD, also for identical functions. However, there still remains a gap
between their upper bounds and the lower bound of
\citet{woodworth2018graph}. The analysis of \fedavg\ for heterogeneous clients is more delicate
due to the afore-mentioned client-drift, first empirically observed by \citet{zhao2018federated}. Several
analyses bound this drift by assuming bounded gradients
\citep{wang2019adaptive,yu2019parallel}, or view it as additional noise
\citep{khaled2020tighter}, or assume that the client optima are $\epsilon$-close \citep{li2018fedprox,haddadpour2019convergence}. In a concurrent work, \citep{liang2019variance} propose to use variance reduction to deal with client heterogeneity but still show rates slower than SGD and do not support client sampling.
Our method \covfefe\ can also be seen as an improved version of the distributed optimization algorithm DANE by \cite{shamir2014communication}, where a fixed number of
(stochastic) gradient steps are used in place of a proximal point update. A more in-depth discussion of related work is given in Appendix~\ref{appsec:related}. We summarize the complexities of different methods for heterogeneous clients in Table~\ref{tab:results}.

\begin{table}[!t]
  \caption{Summary of notation used in the paper}
  \resizebox{\linewidth}{!}{
  \begin{tabular}{@{}cl@{}} 
  \toprule
   $N$, $S$, and $i$ & total num., sampled num., and index of clients\\ 
   $R$, $r$ & number, index of communication rounds\\ 
   $K$, $k$ & number, index of local update steps\\ 
   $\xx^r$ & aggregated server model after round $r$\\
   $\yy^r_{i,k}$ & $i$th client's model in round $r$ and step $k$\\ 
   $\cc^r$, $\cc^r_i$ & control variate of server, $i$th client after round $r$\\ 
  \bottomrule
  \end{tabular}
  }
  \end{table}
  
\begin{table*}[!t]
  \caption{Number of communication rounds required to reach $\e$ accuracy for $\mu$ strongly convex  and non-convex functions (log factors are ignored). Set $\mu = \e$ for general convex rates. $(G,B)$ bounds gradient dissimilarity \eqref{asm:heterogeneity}, and $\delta$  bounds Hessian dissimilarity \eqref{asm:hessian-similarity}. Our rates for \fedavg\ are more general and tighter than others, even matching the lower bound. However, SGD is still faster ($B \geq 1$). \covfefe\ does not require any assumptions, is faster than SGD, and is robust to client sampling. Further, when clients become more similar (small $\delta$), \covfefe\ converges even faster.}
  \centering
  \begin{threeparttable}
  \begin{tabular}{@{}l>{\large}c>{\large}c>{\small}c>{\small}l@{}}
  \toprule
    Method  & \normalsize{Strongly convex} & \normalsize{Non-convex} & \normalsize{Sampling} & \normalsize{Assumptions}\\
    \midrule
    SGD (large batch) & $\frac{\sigma^2}{\mu N K \e} + \frac{1}{\mu}$ & $\frac{\sigma^2}{NK \e^2} + \frac{1}{\e}$ & \no & --\\
    \cmidrule{2-5}
    FedAvg\\ 
      \qquad {\citep{li2019fedconvergence}}  & $\frac{\sigma^2}{\mu^2 N K \e} + \frac{G^2K}{\mu^2 \e}$ & -- & \no & $(G,0)$-BGD\\
      \qquad {\citep{yu2019parallel}}  & -- & $\frac{\sigma^2}{N K \e^2} + \frac{G^2 NK}{\e}$ & \no & $(G,0)$-BGD\\
      \qquad {\citep{khaled2020tighter}}  & $\frac{\sigma^2 + G^2}{\mu N K \e} + \frac{\sigma + G}{\mu \sqrt{\e}} + \frac{N B^2}{\mu}$ & -- & \no & $(G,B)$-BGD\\
      \tikzmk{A}
      \qquad {Ours (Thm. \ref{thm:fedavg})}\tnote{1}  & $\frac{M^2}{\mu S K \e} + \frac{G}{\mu \sqrt{\e}} + \frac{B^2}{\mu}$ & $\frac{M^2}{SK \e^2} + \frac{G}{\e^{3/2}} + \frac{B^2}{\e}$ & \yes & $(G,B)$-BGD\\
      \qquad\ {Lower-bound} (Thm.~\ref{thm:lowerbound})  & $\Omega\rbr{\frac{\sigma^2}{\mu N K \e} + \frac{G}{\sqrt{\mu\e}}}$ & \normalsize{?} & \no &  $(G,1)$-BGD
      \hfill \tikzmk{B}\highlight{client}\\
    \cmidrule{2-5}
    FedProx~\citep{li2018fedprox}\tnote{2} & $\frac{B^2}{\mu}$ & $\frac{B^2}{\e}$ \small{(weakly convex)}  & \yes & $\sigma = 0$, $(0,B)$-BGD\\
    DANE~\citep{shamir2014communication}\tnote{2,3} & $\frac{\delta^2}{\mu^2}$ & --  & \no & $\sigma = 0$, $\delta$-BHD\\
    VRL-SGD \citep{liang2019variance} & -- & $\frac{N \sigma^2}{K \e^2} + \frac{N}{\e}$  & \no & --\\
    \cmidrule{2-5}
    \tikzmk{A}
    SCAFFOLD\\
    \qquad Theorem \ref{thm:convergence} & $\frac{\sigma^2}{\mu SK\e} + \frac{1}{\mu} + \frac{N}{S}$ & $\frac{\sigma^2}{SK\e^2} + \frac{1}{\e}\rbr{\frac{N}{S}}^{\frac{2}{3}}$  & \yes & --\\
    \qquad Theorem \ref{thm:interpolation}\tnote{3} & $\frac{\sigma^2}{\mu NK \e} + \frac{1}{\mu K} + \frac{\delta}{\mu}$ & $\frac{\sigma^2}{NK \e^2} + \frac{1}{K \e} + \frac{\delta}{\e}$  & \no &  $\delta$-BHD
    \hfill \tikzmk{B}\highlight{client}\\
    \bottomrule
  \end{tabular}
  \begin{tablenotes}
    \item[1]$M^2 := \sigma^2 + K(1-\frac{S}{N})G^2$. Note that $\frac{M^2}{S} = \frac{\sigma^2}{N}$ when no sampling ($S = N$).
    \item[2] proximal point method i.e. $K \ggg 1$.
    \item[3] proved only for quadratic functions.
  \end{tablenotes}
  \end{threeparttable}
  \label{tab:results}
  \end{table*}


\vspace{-2mm}
\section{Setup}
We formalize the problem as minimizing a sum of stochastic
functions, with only access to stochastic samples:\vspace{-2mm}
\[
    \min_{\xx \in \real^d} \cbr[\Big]{ f(\xx)
:= \frac{1}{N}\sum_{i=1}^N \rbr*{f_i(\xx)
:= \expect_{\zeta_i}[f_i(\xx;\zeta_i)]}}\,.
\]
The functions $f_i$ represents the loss function on client $i$. All our results can be easily extended to the weighted case. 

We assume that $f$ is bounded from below by $f^\star$ and $f_i$ is $\beta$-smooth. Further, we assume $g_i(\xx) := \nabla f_i(\xx;\zeta_i)$ is an unbiased stochastic gradient of $f_i$ with variance bounded by $\sigma^2$. For some results, we assume $\mu\geq 0$ (strong) convexity. Note that $\sigma$ only bounds the variance \emph{within} clients. We also define two non-standard terminology below.
\begin{enumerate}[topsep=0pt,itemsep=-1ex,partopsep=1ex,parsep=1ex,%
itemindent=8pt]
    \myitem{A1} \label{asm:heterogeneity} \textbf{$(G,B)$-BGD} or bounded gradient dissimilarity: there exist constants $G \geq 0$ and $B \geq 1$ such that\vspace{-2mm}
      \begin{equation*}
          \frac{1}{N}\sum_{i=1}^N \norm{\nabla f_i(\xx)}^2 \leq G^2 + B^2 \norm{\nabla f(\xx)}^2 \,,\ \forall \xx\,.\vspace{-1mm}
      \end{equation*}
      If $\{f_i\}$ are convex, we can relax the assumption to\vspace{-2mm}
      \begin{equation*}
          \frac{1}{N}\sum_{i=1}^N \norm{\nabla f_i(\xx)}^2 \leq G^2 + 2\beta  B^2(f(\xx) - f^\star)\,,\ \forall \xx\,.\vspace{-1mm}
      \end{equation*}  
      \myitem{A2}\textbf{$\delta$-BHD} or bounded Hessian dissimilarity:\vspace{-2mm}
      \label{asm:hessian-similarity} 
      \begin{equation*}\label{eqn:hessian-similarity}
      \norm{\nabla^2 f_i(\xx) - \nabla^2 f(\xx)} \leq  \delta\,,\ \forall \xx\,.\vspace{-2mm}
      \end{equation*}
      Further, $f_i$ is $\delta$-weakly convex i.e. $\nabla^2 f_i(\xx) \mgeq -\delta I$.

\end{enumerate}
The assumptions \ref{asm:heterogeneity} and \ref{asm:hessian-similarity} are orthogonal---it is possible to have $G=0$ and $\delta = 2\beta$, or $\delta = 0$ but $G \ggg 1$. 


  \section{Convergence of FedAvg}\label{sec:fedavg}
  In this section we review \fedavg\ and improve its convergence analysis by deriving tighter rates than known before. 
  The scheme consists of two main parts: local updates to the model \eqref{eqn:fedavg-model}, and aggregating the client updates to update the server model \eqref{eqn:fedavg-agg}. In each round, a subset of clients $\cS \subseteq [N]$ are sampled uniformly. Each of these clients $i \in \cS$ copies the current sever model $\yy_i = \xx$ and performs $K$ local updates of the form:
  \begin{equation}\label{eqn:fedavg-model}
    \yy_i \leftarrow \yy_i - \eta_l g_i(\yy_i)\,.
  \end{equation}
  Here $\eta_l$ is the local step-size. Then the clients' updates $(\yy_i - \xx)$ are aggregated to form the new server model using a global step-size $\eta_g$ as:
  \begin{equation}\label{eqn:fedavg-agg}
    \xx \leftarrow \xx + \frac{\eta_g}{\abs{\cS}}\sum_{i \in \cS}(\yy_i - \xx)\,.
  \end{equation}  
  \subsection{Rate of convergence}
  We now state our novel convergence results for functions with bounded dissimilarity (proofs in Appendix~\ref{appsec:fedavg}).
  \begin{theorem}\label{thm:fedavg}
   For $\beta$-smooth functions $\{f_i\}$ which satisfy \eqref{asm:heterogeneity}, the output of \fedavg\ has expected error smaller than $\e$ for some values of $\eta_l, \eta_g, R$ satisfying
        \begin{itemize}[topsep=0pt,itemsep=-1ex,partopsep=0.5ex,parsep=0.5ex]
          \item \textbf{Strongly convex:} $\eta_g \geq \sqrt{S}$, $\eta_l \leq \frac{1}{(1 + B^2)6\beta K \eta_g}$, and 
          \[
            R = \tilde\cO\rbr[\bigg]{\frac{\sigma^2}{\mu K S \e}+ \rbr[\big]{1  - \tfrac{S}{N}}\frac{G^2}{\mu S \e} + \frac{\sqrt{\beta}G}{\mu \sqrt{\e}} + \frac{B^2 \beta}{\mu}}\,,
          \]
         \item \textbf{General convex:} $\eta_g \geq \sqrt{S}$, $\eta_l \leq \frac{1}{(1 + B^2)6\beta K \eta_g}$, and
          \[
              R = \cO\rbr[\bigg]{\frac{\sigma^2D^2}{K S \e^2}+ \rbr[\big]{1  - \tfrac{S}{N}}\frac{G^2D^2}{S \e^2} + \frac{\sqrt{\beta}G}{\e^{\frac{3}{2}}} + \frac{B^2 \beta D^2}{\e}}\,,
          \] 
        \item \textbf{Non-convex:} $\eta_g \geq \sqrt{S}$, $\eta_l \leq \frac{1}{(1 + B^2)6\beta K \eta_g}$, and
        \[
              R = \cO\rbr[\bigg]{\frac{\beta \sigma^2 F}{K S \e^2}+ \rbr[\big]{1  - \tfrac{S}{N}}\frac{G^2 F}{S \e^2} + \frac{\sqrt{\beta}G}{\e^{\frac{3}{2}}} + \frac{B^2 \beta F}{\e}}\,,
        \]
        \end{itemize}
        where $D := \norm{\xx^0 - \xx^\star}^2$ and $F := f(\xx^0) - f^\star$.
    \end{theorem}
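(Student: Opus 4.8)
The plan is to analyze a single communication round as one inexact stochastic gradient step on $f$ with effective step-size $\tilde\eta := \eta_g \eta_l K$, and then unroll the resulting recursion. Combining \eqref{eqn:fedavg-model} and \eqref{eqn:fedavg-agg}, the server update is $\xx^{r+1} = \xx^r - \eta_g\eta_l\frac{1}{\abs{\cS}}\sum_{i\in\cS}\sum_{k=0}^{K-1} g_i(\yy^r_{i,k})$. I would decompose the aggregated direction into the true gradient $\nabla f(\xx^r)$ plus three error sources: (i) intra-client stochastic noise controlled by $\sigma^2$, (ii) the variance from sampling $S$ of $N$ clients, controlled by $G^2$ via \eqref{asm:heterogeneity}, and (iii) the client drift $\yy^r_{i,k}-\xx^r$ accumulated over the $K$ local steps. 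The first two are handled by standard conditional-unbiasedness and sampling-without-replacement computations, which produce the $\sigma^2/(SK)$ and $(1-\tfrac{S}{N})G^2/S$ factors in the final rate.

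For the one-round descent I would expand $\E\norm{\xx^{r+1}-\xx^\star}^2$ (strongly convex / convex) or $\E f(\xx^{r+1})$ (non-convex). Using $\mu$-strong convexity and $\beta$-smoothness, the inner-product term with $\nabla f(\xx^r)$ yields a contraction factor $(1-\mu\tilde\eta/2)$ together with a negative progress term $-\tilde\eta\,\Theta(\norm{\nabla f(\xx^r)}^2)$ (or $-\tilde\eta\,\Theta(f(\xx^r)-f^\star)$ via the relaxed convex form of \eqref{asm:heterogeneity}); the squared-norm term contributes the noise and drift errors. The remaining ingredient is to control the drift.

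The crux is a client-drift lemma bounding $\frac{1}{NK}\sum_{i,k}\E\norm{\yy^r_{i,k}-\xx^r}^2$. I would unroll the local recursion \eqref{eqn:fedavg-model} from $\yy^r_{i,0}=\xx^r$, bound each step by $\beta$-smoothness, and obtain a self-referential inequality in the drift; the step-size condition $\eta_l\le \frac{1}{(1+B^2)6\beta K\eta_g}$ is exactly what lets me absorb the recursive $\beta^2\eta_l^2$ term and close the recursion. Applying \eqref{asm:heterogeneity} converts $\frac1N\sum_i\norm{\nabla f_i(\xx^r)}^2$ into $G^2+B^2\norm{\nabla f(\xx^r)}^2$, so the drift is bounded by $\cO(\eta_l^2 K^2)$ times $(\sigma^2/K + G^2 + B^2\norm{\nabla f(\xx^r)}^2)$. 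Substituted back, the $B^2\norm{\nabla f(\xx^r)}^2$ piece is dominated by the negative progress term (again by the step-size choice), the $G^2$ piece becomes the $\sqrt{\beta}G/(\mu\sqrt\e)$ term, and $B^2$ surfaces as the additive $B^2\beta/\mu$.

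Finally I would plug the drift bound into the descent inequality to obtain a clean recursion of the form $\E\norm{\xx^{r+1}-\xx^\star}^2\le (1-\mu\tilde\eta/2)\E\norm{\xx^r-\xx^\star}^2 - c\,\tilde\eta\,\E(f(\xx^r)-f^\star) + (\text{noise})$, and unroll it. For the strongly convex case I would use the standard weighted-telescoping lemma (geometrically increasing weights) and then tune $\eta_l,\eta_g,R$ to balance the four error terms, yielding the stated $R$; the general convex case follows by the $\mu\to 0$ version (linear weights) with $D=\norm{\xx^0-\xx^\star}^2$; the non-convex case tracks $f(\xx^r)-f^\star$ through smoothness, sums the gradient-norm progress, and tunes analogously with $F=f(\xx^0)-f^\star$. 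The main obstacle is the drift lemma: getting the recursion to close with a clean constant so that the single factor $(1+B^2)$ in the step-size bound suffices across all four regimes, while keeping the sampling-variance factors $1/S$ and $(1-\tfrac{S}{N})$ tight rather than loose.
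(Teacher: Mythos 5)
Your proposal follows the paper's own proof essentially step for step: the same effective step-size $\tilde\eta = K\eta_g\eta_l$, the same one-round progress bound (perturbed strong convexity applied to gradients at the drifted iterates, with separation of mean and variance and the without-replacement sampling computation producing the $\sigma^2/(KS)$ and $(1-\tfrac{S}{N})G^2/S$ terms), the same recursive client-drift lemma closed by the $(1+B^2)\beta K \eta_g$ step-size condition with the $B^2$ piece absorbed into the negative progress term and the $G^2$ piece entering at order $\tilde\eta^3$ to yield $\sqrt{\beta}G/(\mu\sqrt{\e})$, and the same weighted-telescoping and step-size-tuning lemmas for the strongly convex, general convex, and non-convex regimes. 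This is correct and is the paper's argument.
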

      It is illuminating to compare our rates with those of the simpler iid.\ case i.e.\ with $G =0$ and $B=1$. Our strongly-convex rates become $\frac{\sigma^2}{\mu S K \e}+ \frac{1}{\mu}$. In comparison, the best previously known rate for this case was by \citet{stich2019error} who show a rate of $\frac{\sigma^2}{\mu S K \e}+ \frac{S}{\mu}$. The main source of improvement in the rates came from the use of \emph{two separate step-sizes} ($\eta_l$ and $\eta_g$). By having a larger global step-size $\eta_g$, we can use a smaller local step-size $\eta_l$ thereby reducing the client-drift while still ensuring progress. However, even our improved rates do not match the lower-bound for the identical case of $\frac{\sigma^2}{\mu S K \e}+ \frac{1}{K \mu}$ \citep{woodworth2018graph}. We bridge this gap for quadratic functions in Section \ref{sec:interpolation}.

      We now compare \fedavg\ to two other algorithms FedProx by \cite{li2018fedprox} (aka EASGD by \cite{zhang2015deep}) and to SGD. Suppose that $G=0$ and $\sigma =0$ i.e. we use full batch gradients and all clients have very similar optima. In such a case, \fedavg\ has a complexity of $\frac{B^2}{\mu}$ which is identical to that of FedProx \citep{li2018fedprox}. Thus, FedProx does not have any theoretical advantage.
    
      Next, suppose that all clients participate (no sampling) with $S = N$ and there is no variance $\sigma = 0$. Then, the above for strongly-convex case simplifies to ${\frac{G}{\mu \sqrt{\e}} + \frac{B^2}{\mu}}$. In comparison, extending the proof of \citep{khaled2020tighter} using our techniques gives a worse dependence on $G$ of ${\frac{G^2}{\mu K N \e} + \frac{G}{\mu \sqrt{\e}}}$. Similarly, for the non-convex case, our rates are tighter and have better dependence on $G$ than \citep{yu2019parallel}. However, simply running SGD in this setting would give a communication complexity of $\frac{\beta}{\mu}$ which is faster, and independent of similarity assumptions. In the next section we examine the necessity of such similarity assumptions.

    \subsection{Lower bounding the effect of heterogeneity}
    We now show that when the functions $\{f_i\}$ are distinct, the local updates of \fedavg\ on each client experiences \emph{drift} thereby slowing down convergence. We show that the amount of this client drift, and hence the slowdown in the rate of convergence, is exactly determined by the gradient dissimilarity parameter $G$ in \eqref{asm:heterogeneity}.
    
    \begin{figure}[t]
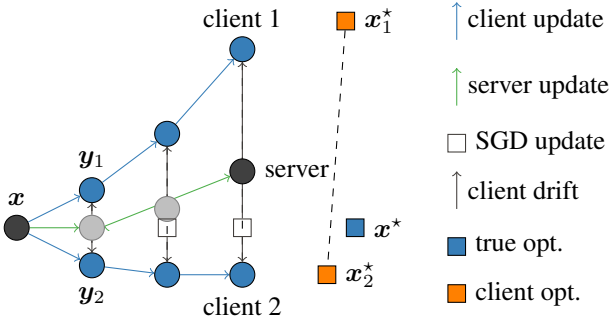

      \centering
      \ctikzfig{drift}  
      \caption{Client-drift in \fedavg\ is illustrated for 2 clients with 3 local steps ($N=2$, $K=3$). The local updates $\yy_i$ (in blue) move towards the individual client optima $\xx_i^\star$ (orange square). The server updates (in red) move towards $\frac{1}{N}\sum_{i}\xx_i^\star$ instead of to the true optimum $\xx^\star$ (black square).}\label{fig:client drift}
    \end{figure}

    We now examine the mechanism by which the client-drift arises (see Fig.~\ref{fig:client drift}). Let $\xx^\star$ be the global optimum of $f(\xx)$ and $\xx_i^\star$ be the optimum of each client's loss function $f_i(\xx)$. In the case of heterogeneous data, it is quite likely that each of these $\xx_i^\star$ is far away from the other, and from the global optimum $\xx^\star$. Even if all the clients start from the same point $\xx$, each of the $\yy_i$ will move towards their client optimum $\xx_i^\star$. This means that the average of the client updates (which is the server update) moves towards $\frac{1}{N}\sum_{i=1}^N \xx_i^\star$. This difference between $\frac{1}{N}\sum_{i=1}^N \xx_i^\star$ and the true optimum $\xx^\star$ is exactly the cause of client-drift. To counter this drift, \fedavg\ is forced to use much smaller step-sizes which in turn hurts convergence. We can formalize this argument to prove a lower-bound (see Appendix~\ref{appsec:lowerbound} for proof).

    \begin{theorem}\label{thm:lowerbound}
      For any positive constants $G$ and $\mu$, there exist $\mu$-strongly convex functions satisfying \ref{asm:heterogeneity} for which \fedavg\ with $K \geq 2$, $\sigma = 0$ and $N = S$ has an error\vspace{-2mm}
      \[
          f(\xx^r) - f(\xx^\star) \geq \Omega\rbr[\bigg]{ \frac{G^2}{\mu R^2}}\,.\vspace{-3mm}
      \]
     \end{theorem}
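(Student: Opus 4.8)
The plan is to exhibit an explicit family of heterogeneous quadratics and to track the \fedavg\ server iterate \emph{exactly}, exploiting that with $\sigma=0$ and $N=S$ every local update is deterministic. I would take $N=S=2$ clients in one dimension, $f_i(\xx)=\tfrac{a_i}{2}\xx^2-b_i\xx$, with \emph{unequal} curvatures $a_1\neq a_2$ in $[\mu,\beta]$ (so each $f_i$ is $\mu$-strongly convex and the pair is well conditioned). The global minimizer is $\xx^\star=(b_1+b_2)/(a_1+a_2)$ and each client minimizer is $\xx_i^\star=b_i/a_i$. Unequal curvatures are indispensable here: if $a_1=a_2$ the two $K$-step local maps are identical affine contractions, their average collapses exactly onto $\xx^\star$, and no drift survives.

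Next I would write the closed form of $K$ gradient-descent steps, $\yy_i=\alpha_i^{K}\xx+(1-\alpha_i^{K})\xx_i^\star$ with $\alpha_i:=1-\eta_l a_i$, and feed it into the aggregation \eqref{eqn:fedavg-agg}. This collapses the scheme to a scalar affine recursion $\xx^{r+1}=\rho\,\xx^r+(1-\rho)\,\xx^\infty$ with contraction $\rho=1-\eta_g\cdot\tfrac1N\sum_i(1-\alpha_i^{K})$ and unique fixed point
\[
  \xx^\infty=\frac{\sum_i(1-\alpha_i^{K})\,\xx_i^\star}{\sum_i(1-\alpha_i^{K})}\,.
\]
Expanding $1-\alpha_i^{K}=K\eta_l a_i-\binom{K}{2}(\eta_l a_i)^2+\cdots$ shows that the $O(\eta_l)$ weights are proportional to $a_i$, for which the weighted average is exactly $\xx^\star$; the leading surviving term of the drift $\xx^\infty-\xx^\star$ comes from the second-order term and is proportional to $(K-1)\eta_l\sum_i a_i^2(\xx_i^\star-\xx^\star)$. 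This vanishes identically when $K=1$ (so $K=1$ reduces to exact gradient descent on $f$, with no floor --- which is exactly why the hypothesis $K\ge2$ is needed), and is $\Theta((K-1)\eta_l)$ otherwise.

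I would then choose $b_1,b_2$ so that $\tfrac1N\sum_i\norm{\nabla f_i(\xx^\star)}^2=\Theta(G^2)$; since $\nabla f_i(\xx^\star)=-a_i(\xx_i^\star-\xx^\star)$, this simultaneously pins the drift at $\norm{\xx^\infty-\xx^\star}=\Theta((K-1)\eta_l G)$ and fixes the dissimilarity constant in \ref{asm:heterogeneity}, which I would verify (in its convex form, with $B=1$ and $\beta=\max_i a_i$) for \emph{all} $\xx$, not merely at $\xx^\star$. The bound then follows from a tension governed by $\eta_l$: the residual drift grows like $\eta_l$, while $1-\rho=\Theta(K\eta_l\mu)$ for the plain averaging step $\eta_g=\Theta(1)$ of standard \fedavg, so for the transient $\rho^R(\xx^0-\xx^\infty)$ to fall below the drift within $R$ rounds one needs $R(1-\rho)\gtrsim 1$ (up to a logarithmic factor), i.e.\ $\eta_l\gtrsim 1/(RK\mu)$. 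Substituting back, $\norm{\xx^R-\xx^\star}\gtrsim\norm{\xx^\infty-\xx^\star}\gtrsim G/(\mu R)$, whence $f(\xx^R)-f^\star\gtrsim\mu\,\norm{\xx^R-\xx^\star}^2=\Omega(G^2/(\mu R^2))$; and if $\eta_l$ is taken smaller, the transient has not decayed and the error is only larger. Balancing the regimes at $\eta_l\asymp 1/(RK\mu)$ yields the claimed floor.

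The main obstacle is making this tradeoff rigorous \emph{uniformly over the admissible step sizes}. The delicate point is the global step size: as $\eta_l\to0$ the scheme degenerates to exact gradient descent on the well-conditioned $f$, which converges linearly, so one must argue within the step-size regime proper to \fedavg\ (bounded $\eta_g$, i.e.\ genuine averaging) that the fast-contraction regime cannot be exploited to reach $\xx^\infty$ in $O(1)$ rounds while keeping the drift negligible. The rest is bookkeeping: separating the transient $\rho^R(\xx^0-\xx^\infty)$ from the persistent bias $(\xx^\infty-\xx^\star)$ by a reverse triangle inequality without the two cancelling, and carrying the constants through the expansion of $1-\alpha_i^{K}$ so that the exponent of $R$ and the powers of $G$ and $\mu$ emerge exactly as stated.
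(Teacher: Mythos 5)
Your proposal is correct and follows essentially the same route as the paper: the paper likewise uses one-dimensional clients of unequal curvature (there $f_1(x)=\mu x^2+Gx$ and the linear $f_2(x)=-Gx$, a degenerate instance of your unequal-$a_i$ quadratics, chosen to simplify the algebra), reduces the server iterate to an explicit affine recursion whose fixed point is displaced from $\xx^\star$ by $\Theta(K\eta_l G)$, and extracts the floor from exactly your tension between the transient contraction and the persistent drift, under the same restriction to $\eta_g=1$ (genuine averaging) that you identify as necessary. The only substantive delta is that the paper's formal version (Assumption~\ref{asm:algorithm} and Lemma~\ref{lem:tedious-computations}) carries out the balancing uniformly over \emph{round-varying, adaptive} step-sizes $\eta_r\le 1/\mu$ via a recursive inequality and an induction with weights $(1-\tfrac{1}{2R})^{r-r_0}$ --- precisely the ``main obstacle'' you flag --- whereas your single-fixed-point recursion presumes a constant $\eta_l$; for the constant-step-size algorithm of the statement, your two-regime argument ($\eta_l\gtrsim 1/(RK\mu)$ versus smaller) suffices and matches the paper's case split on $\gamma_r=\mu\eta_r R(K-1)$.
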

     This implies that the $\frac{G}{\sqrt{\e}}$ term is unavoidable even if there is no stochasticity. Further, because \fedavg\ uses $RKN$ stochastic gradients, we also have the statistical lower-bound of $\frac{\sigma^2}{\mu KN \e}$. Together, these lower bounds prove that the rate derived in Theorem~\ref{thm:fedavg} is nearly optimal (up to dependence on $\mu$). In the next section, we introduce a new method \covfefe\ to mitigate this client-drift.


\section{\covfefe\ algorithm}\label{sec:method}
In this section we first describe \covfefe\ and then discuss how it solves the problem of client-drift. 

\begin{algorithm}[t]
  \caption{\covfefe: \fullformcovfefe\ for federated
    learning}\label{alg:sampling}
  \begin{algorithmic}[1]
    \STATE \textbf{server input:} initial $\xx$ and  $\cc$, and global step-size $\eta_g$
        \STATE \textbf{\color{client} client $i$'s input:} $\cc_i$, and local step-size $\eta_l$
          \FOR{each round $r=1,\dots, R$}
          \STATE sample clients $\cS \subseteq \{1,\dots,N\}$
          \STATE \textbf{communicate} $(\xx, \cc)$ to all clients $i \in \cS$
          \ONCLIENT{\tikzmk{A} $i \in \cS$}
            \STATE initialize local model $\yy_{i} \leftarrow \xx$
            \FOR{$k = 1,\dots,K $}
      \STATE compute mini-batch gradient $g_i(\yy_{i})$
      \STATE $\yy_{i} \leftarrow \yy_{i} - \eta_l\,(g_i(\yy_{i})- \cc_i + \cc)$ 
      \ENDFOR
      \STATE $\cc_i^+ \leftarrow$ (i) $g_i(\xx)$, or (ii) $\cc_i - \cc + \frac{1}{K\eta_l}(\xx- \yy_i)$\label{step:control-update}
      \STATE \textbf{communicate} $(\Delta\yy_i, \Delta\cc_i) \leftarrow (\yy_{i} - \xx,\cc_i^+ - \cc_i)$
      \STATE $\cc_i \leftarrow \cc_i^+$
          \ENDON
          \tikzmk{B}\boxit{client}
          \STATE $(\Delta\xx, \Delta\cc) \leftarrow \tfrac{1}{\abs{\cS}}\sum_{i \in \cS}(\Delta\yy_i, \Delta \cc_i)$ 
          \STATE $\xx \leftarrow \xx + \eta_g\Delta\xx$ and $\cc \leftarrow \cc + \tfrac{\abs{\cS}}{N}\Delta\cc$
          \ENDFOR
    \end{algorithmic}
  \end{algorithm}
  \vspace{-2mm}
\paragraph{Method.} SCAFFOLD has three main steps: local updates to the client model \eqref{eqn:covfefe-model}, local updates to the client control variate \eqref{eqn:covfefe-control}, and aggregating the updates \eqref{eqn:covfefe-aggregate}. We describe each in more detail.

Along with the server model $\xx$, \covfefe\ maintains a state for each client (client control variate $\cc_i$) and for the server (server control variate $\cc$). These are initialized to ensure that $\cc = \frac{1}{N}\sum \cc_i$ and can safely all be initialized to 0. In each round of communication, the server parameters $(\xx, \cc)$ are communicated to the participating clients $\cS \subset [N]$. Each participating client $i \in \cS$ initializes its local model with the server model $\yy_i \leftarrow \xx$. Then it makes a pass over its local data performing $K$ updates of the form:
\begin{equation}\label{eqn:covfefe-model}
  \yy_i \leftarrow \yy_i - \eta_l(g_i(\yy_i) + \cc - \cc_i)\,.
\end{equation}
Then, the local control variate $\cc_i$ is also updated. For this, we provide two options:\vspace{-2mm}
\begin{equation} \label{eqn:covfefe-control}
  \cc_i^+ \leftarrow \begin{cases}
  \text{Option I.}& g_i(\xx)\,, \text{or}\\
  \text{Option II.} & \cc_i - \cc + \tfrac{1}{K \eta_l}(\xx - \yy_{i})\,.
  \end{cases}\vspace{-2mm}
  \end{equation}
  Option I involves making an additional pass over the local data to compute the gradient at the server model $\xx$. Option II instead re-uses the previously computed gradients to update the control variate. Option I can be more stable than II depending on the application, but II is cheaper to compute and usually suffices (all our experiments use Option II). The client updates are then aggregated and used to update the server parameters: \vspace{-2mm}
  \begin{align}\label{eqn:covfefe-aggregate}
    \begin{split}
      \xx &\leftarrow \xx + \frac{\eta_g}{\abs{\cS}}\sum_{i \in \cS}(\yy_i - \xx)\,,\\
      \cc &\leftarrow \cc + \frac{1}{N}\sum_{i \in \cS}(\cc_i^+ - \cc_i)\,.\vspace{-4mm}
    \end{split}
  \end{align}
  This finishes one round of communication. Note that the clients in \covfefe\ are \emph{stateful} and retain the value of $\cc_i$ across multiple rounds. Further, if $\cc_i$ is always set to 0, then \covfefe\ becomes equivalent to \fedavg. The full details are summarized in Algorithm~\ref{alg:sampling}.

  \paragraph{Usefulness of control variates.} 
  \begin{figure}[t]
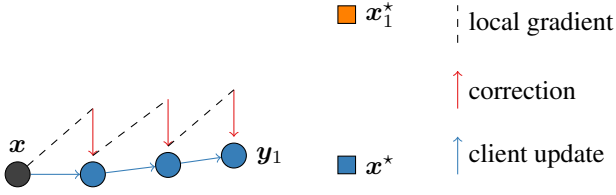

    \centering
    \ctikzfig{scaffold}  
    \caption{Update steps of \covfefe\ on a single client. The local gradient (dashed black) points to $\xx_1^\star$ (orange square), but the correction term $(\cc - \cc_i)$ (in red) ensures the update moves towards the true optimum $\xx^\star$ (black square).}\label{fig:scaffold}
  \end{figure}
  
  If communication cost was not a concern, the ideal update on client $i$ would be\vspace{-2mm}
  \begin{equation}\label{eqn:ideal-model}
    \yy_i \leftarrow \yy_i + \frac{1}{N}\sum_{j}g_j(\yy_i)\,.\vspace{-3mm}
  \end{equation}
  Such an update essentially computes an unbiased gradient of $f$ and hence becomes equivalent to running \fedavg\ in the iid case (which has excellent performance). Unfortunately such an update requires communicating with all clients for every update step. \covfefe\ instead uses control variates such that \vspace{-2mm}
  \[
      \cc_j \approx g_j(\yy_i) \text{ and } \cc \approx \frac{1}{N}\sum_{j}g_j(\yy_i)\,.\vspace{-3mm}
  \]
  Then, \covfefe\ \eqref{eqn:covfefe-model} mimics the ideal update \eqref{eqn:ideal-model} with\vspace{-2mm}
  \[ 
    (g_i(\yy_i) - \cc_i + \cc) \approx \frac{1}{N}\sum_{j}g_j(\yy_i)\,.\vspace{-3mm}
  \]
  Thus, the local updates of \covfefe\ remain synchronized and converge for arbitrarily heterogeneous clients.

\section{Convergence of \covfefe}\label{sec:convergence}
We state the rate of \covfefe\ without making any assumption on the similarity between the functions. See Appendix \ref{appsec:covfefe} for the full proof.
\begin{theorem}\label{thm:convergence}
 For any $\beta$-smooth functions $\{f_i\}$, the output of \covfefe\ has expected error smaller than $\e$ for $\eta_g = \sqrt{S}$ and some values of $\eta_l, R$ satisfying
      \begin{itemize}[topsep=0pt,itemsep=-1ex,partopsep=1ex,parsep=1ex]
        \item \textbf{Strongly convex:} $\eta_l \leq \min\rbr*{\tfrac{1}{81\beta K\eta_g}, \tfrac{S}{15 \mu N K\eta_g}}$ and 
        \[
          R = \tilde\cO\rbr[\bigg]{\frac{\sigma^2}{\mu K S \e}+ \frac{\beta}{\mu} + \frac{N}{S}}\,,
        \]
       \item \textbf{General convex:} $\eta_l \leq \tfrac{1}{81
       \beta K\eta_g}$ and
       \[
        R = \tilde\cO\rbr[\bigg]{\frac{\sigma^2D^2}{K S \e^2}+ \frac{\beta D^2}{\e} + \frac{N F}{S}}\,,
      \]
      \item \textbf{Non-convex:} $\eta_l \leq \frac{1}{24K\eta_g\beta}\rbr*{\frac{S}{N}}^{\frac{2}{3}}$ and
      \[
            R = \cO\rbr[\bigg]{\frac{\beta \sigma^2 F}{K S \e^2}+ \rbr[\bigg]{\frac{N}{S}}^{\frac{2}{3}}\frac{\beta F}{\e} }\,,
       \]
      \end{itemize}
      where $D := \norm{\xx^0 - \xx^\star}^2$ and $F := f(\xx^0) - f^\star$.
  \end{theorem}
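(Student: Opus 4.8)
The plan is to track a single potential function that couples the distance of the server model to the optimum with the quality of the control variates. Define the effective step-size $\tilde\eta := K\eta_l\eta_g$ and the control-variate error $\ce^r := \frac{1}{N}\sum_{i=1}^N \E\norm{\cc_i^r - \nabla f_i(\xx^\star)}^2$, which measures how far each client's control variate is from the gradient at the optimum. The aggregation step \eqref{eqn:covfefe-aggregate} preserves the invariant $\cc = \frac{1}{N}\sum_i \cc_i$, so the corrections $\cc - \cc_i$ average to zero across clients and are unbiased; at $\xx = \xx^\star$ with $\cc_i = \nabla f_i(\xx^\star)$ the correction exactly cancels the client bias and the drift vanishes. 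For the strongly convex case I would work with the Lyapunov function $\Phi^r := \E\norm{\xx^r - \xx^\star}^2 + c_1\tilde\eta^2\,\ce^r$ for a suitable constant $c_1$, and show that it contracts geometrically up to an additive variance floor.

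First I would prove a one-round server-progress lemma of the form
\[
\E\norm{\xx^{r+1} - \xx^\star}^2 \leq \rbr[\big]{1 - \tfrac{\mu\tilde\eta}{2}}\E\norm{\xx^r - \xx^\star}^2 - \tilde\eta\,\E(f(\xx^r) - f^\star) + (\text{error})\,,
\]
where the error collects three contributions: (i) the within-client gradient variance, scaled like $\tilde\eta^2\sigma^2/(SK)$ thanks to averaging over $S$ clients and $K$ steps; (ii) the accumulated client-drift $\frac{1}{KN}\sum_{i,k}\E\norm{\yy_{i,k} - \xx^r}^2$; and (iii) a client-sampling term that is controlled by $\ce^r$ rather than by any gradient-dissimilarity constant. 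This last point is the heart of why \covfefe\ removes the $G$-dependence of \fedavg: sampling noise in $\Delta\xx$ is proportional to the spread of the corrected updates $\cc - \cc_i$, which the control variates drive to zero.

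Next I would bound the drift. Each local step moves $\yy_i$ by $-\eta_l(g_i(\yy_i) + \cc - \cc_i)$; unrolling $K$ steps and using $\beta$-smoothness with the ceiling $\eta_l \leq \frac{1}{81\beta K\eta_g}$, I would show $\frac{1}{KN}\sum_{i,k}\E\norm{\yy_{i,k} - \xx^r}^2 \lesssim \eta_l^2 K^2(\beta^2\E\norm{\xx^r - \xx^\star}^2 + \ce^r) + \eta_l^2 K\sigma^2$, so that with the prescribed step-size the drift is dominated by the progress and variance terms already present. In parallel I would establish the control-variate recursion: since only the sampled fraction $S/N$ of clients refreshes $\cc_i$ (to $g_i(\xx^r)$ under Option~I, or its Option~II surrogate) while the rest retain stale values, one gets $\ce^{r+1} \leq (1 - \tfrac{S}{N})\ce^r + \tfrac{S}{N}\,\cO(\beta^2\E\norm{\xx^r - \xx^\star}^2 + \sigma^2)$, with the $\beta^2\norm{\cdot}^2$ term replaced by a Bregman/function-suboptimality term in the convex case. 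The main obstacle is exactly this coupling under sampling: the control variates are stateful and stale, so I must choose $c_1$ and the step-sizes so that the $\ce^r$ appearing in the server lemma is absorbed by the contraction of the control recursion, while the $\E\norm{\xx^r-\xx^\star}^2$ generated by the control recursion is absorbed by the $(1 - \mu\tilde\eta/2)$ factor together with the $-\tilde\eta\,\E(f-f^\star)$ progress; balancing these forces the $\tfrac{N}{S}$ term and dictates the admissible $\eta_l$.

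Finally, with the combined potential contracting as $\Phi^{r+1} \leq \rbr[\big]{1 - \min(\tfrac{\mu\tilde\eta}{2}, \tfrac{S}{2N})}\Phi^r + \tilde\eta^2\,\cO(\sigma^2/(SK))$, I would unroll the recursion. For the strongly convex case, tuning $\tilde\eta$ and applying the standard geometric-sum argument (as in the \fedavg\ analysis of Theorem~\ref{thm:fedavg}) yields $R = \tilde\cO\rbr{\frac{\sigma^2}{\mu KS\e} + \frac{\beta}{\mu} + \frac{N}{S}}$, where the $\frac{N}{S}$ term comes directly from the sampling contraction $\frac{S}{N}$ in the potential. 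For general convex functions I would set $\mu \to 0$ and telescope $\tilde\eta\sum_r\E(f(\xx^r)-f^\star)$ against $\Phi^0$; for the non-convex case I would replace the distance potential by the function-value potential $\E(f(\xx^r)-f^\star)$, use descent in $f$ rather than contraction in distance, and track $\frac{1}{R}\sum_r\E\norm{\nabla f(\xx^r)}^2$, where the tighter restriction $\eta_l \leq \frac{1}{24K\eta_g\beta}(S/N)^{2/3}$ produces the $(N/S)^{2/3}$ factor in place of the linear $N/S$.
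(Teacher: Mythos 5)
Your overall strategy coincides with the paper's own proof: a one-round server-progress bound with variance floor $\tilde\eta^2\sigma^2/(KS)$ (Lemma~\ref{lem:server-variance-sample}), a client-drift lemma (Lemma~\ref{lem:drift-bound-sampling}), a control-lag recursion contracting at rate $S/N$ because only sampled clients refresh their $\cc_i$ (Lemma~\ref{lem:control-error-sample}), and a coupled Lyapunov function $\E\norm{\xx^r-\xx^\star}^2 + c_1\tilde\eta^2\,\ce_r$ with $c_1 \asymp N/S$ (the paper takes $9N\tilde\eta^2/S$ in Lemma~\ref{lem:progress-sample}), unrolled via Lemmas~\ref{lem:constant} and~\ref{lemma:general}; your reading of where the $N/S$ term comes from (the cap $\tilde\eta \lesssim S/(\mu N)$ that lets the $S/N$ contraction absorb the lag) is exactly right. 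However, two steps as you have written them would not go through. First, your drift bound carries the term $\beta^2\E\norm{\xx^r-\xx^\star}^2$. After it is multiplied by the $\beta\tilde\eta$ weight with which drift enters the server-progress inequality, absorbing it into the $(1-\mu\tilde\eta/2)$ contraction or into the $-\tilde\eta\,\E(f-f^\star)$ progress term forces a step-size cap of order $\mu/\beta^2$ (up to $\eta_g$ factors), degrading the strongly convex rate beyond the claimed $\beta/\mu$; and when $\mu=0$ the term cannot be telescoped against anything at all, so the general convex case breaks. You flag the Bregman/function-suboptimality replacement, but only for the control recursion; the same replacement via co-coercivity, $\frac{1}{N}\sum_i\norm{\nabla f_i(\xx)-\nabla f_i(\xx^\star)}^2 \le 2\beta(f(\xx)-f^\star)$, is equally essential in the drift bound, and is precisely what the paper's Lemma~\ref{lem:drift-bound-sampling} uses.

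Second, and more substantively, your non-convex case has a missing idea: the control error $\ce^r$ is anchored at $\nabla f_i(\xx^\star)$, which is unusable without convexity (a global minimizer need not exist, and even if it does, nothing relates $\norm{\cc_i - \nabla f_i(\xx^\star)}$ to descent of $f$). The paper instead redefines the lag as the distance from the \emph{stale iterates at which the control gradients were actually computed} to the current server model, $\tce_r = \frac{1}{KN}\sum_{i,k}\E\norm{\alphav^r_{i,k-1}-\xx^r}^2$ with $\cc_i^r = \frac{1}{K}\sum_k g_i(\alphav^r_{i,k-1})$, and couples it to the function-value potential with weight $12\beta^3\tilde\eta^2 N/S$ (Lemmas~\ref{lem:non-convex-control-error-sample} and~\ref{lem:non-convex-progress-sample}). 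Crucially, this recursion picks up the full server movement $\E\norm{\Delta\xx^r}^2$ each round (handled via Young's inequality with parameter $b=\tfrac{S}{2(N-S)}$), and it is balancing that feedback against the $S/N$ contraction which \emph{generates} the stronger cap $\tilde\eta \lesssim \beta^{-1}(S/N)^{2/3}$; your sketch asserts the $(N/S)^{2/3}$ factor but lacks the object whose recursion produces it.
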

Let us first examine the rates without client sampling ($S = N$). For the strongly convex case, the number of rounds becomes $\frac{\sigma^2}{\mu NK\e} + \frac{1}{\mu}$. This rate holds for arbitrarily heterogeneous clients unlike Theorem~\ref{thm:fedavg} and further matches that of SGD with $K$ times larger batch-size, proving that \covfefe\ is at least as fast as SGD. These rates also match known lower-bounds for distributed optimization \citep{arjevani2015communication} (up to acceleration) and are unimprovable in general. However in certain cases \covfefe\ is provably faster than SGD. We show this fact in Section~\ref{sec:interpolation}.

Now let $\sigma = 0$. Then our rates in the strongly-convex case are $\frac{1}{\mu} + \frac{N}{S}$ and $\rbr[\big]{\frac{N}{S}}^{\frac{2}{3}}\frac{1}{\e}$ in the non-convex case. These exactly match the rates of SAGA \citep{defazio2014saga,reddi2016fast}. In fact, when $\sigma = 0$, $K = 1$ and $S=1$, the update of \covfefe\ with option I reduces to SAGA where in each round consists of sampling one client $f_i$. Thus \covfefe\ can be seen as an extension of variance reduction techniques for federated learning, and one could similarly extend SARAH~\citep{nguyen2017sarah}, SPIDER~\citep{fang2018spider}, etc. Note that standard SGD with client sampling is provably slower and converges at a sub-linear rate even with $\sigma = 0$.

\paragraph{Proof sketch.} For simplicity, assume that $\sigma = 0$ and consider the ideal update of \eqref{eqn:ideal-model} which uses the full gradient $\nabla f(\yy)$ every step. Clearly, this would converge at a linear rate even with $S =1$. \fedavg\ would instead use an update $\nabla f_i(\yy)$. The difference between the ideal update \eqref{eqn:ideal-model} and the \fedavg\ update \eqref{eqn:fedavg-model} is $\norm{\nabla f_i(\yy) - \nabla f(\yy)}$. We need a bound on the gradient-dissimilarity as in \eqref{asm:heterogeneity} to bound this error. \covfefe\ instead uses the update $\nabla f_i(\yy) - \cc_i + \cc$, and the difference from ideal update becomes
\[
  \sum_{i}\norm{(\nabla f_i(\yy) - \cc_i + \cc) - \nabla f(\yy)}^2 \leq \sum_i\norm{\cc_i - \nabla f_i(\yy)}^2\,.
  \]
  Thus, the error is independent of how similar or dissimilar the functions $f_i$ are, and instead only depends on the quality of our approximation $\cc_i \approx \nabla f_i(\yy)$. Since $f_i$ is smooth, we can expect that the gradient $\nabla f_i(\yy)$ does not change too fast and hence is easy to approximate. Appendix~\ref{appsec:covfefe} translates this intuition into a formal proof.


  \section{Usefulness of local steps}\label{sec:interpolation}
  In this section we investigate when and why taking local steps might be useful over simply computing a large-batch gradient in distributed optimization. We will show that when the functions across the clients share some similarity, local steps can take advantage of this and converge faster. For this we consider quadratic functions and express their similarity with the $\delta$ parameter introduced in~(\ref{asm:hessian-similarity}). 

  \begin{theorem}\label{thm:interpolation}
    For any $\beta$-smooth quadratic functions $\{f_i\}$ with $\delta$ bounded Hessian dissimilarity \eqref{asm:hessian-similarity}, the output of \covfefe\ with $S=N$ (no sampling) has error smaller than $\e$ for $\eta_g = 1$ and some values of $\eta_l, R$ satisfying
         \begin{itemize}[topsep=0pt,itemsep=-1ex,partopsep=0.5ex,parsep=0.5ex]
           \item \textbf{Strongly convex:} $\eta_l \leq \tfrac{1}{15 K\delta + 8\beta}$ and 
           \[
             R = \tilde\cO\rbr[\bigg]{\frac{\beta \sigma^2}{\mu KN\e} + \frac{\beta + \delta K}{\mu K}}\,,
           \]
          \item \textbf{Weakly convex:} $\eta_l \leq \tfrac{1}{15 K\delta + 8\beta}$  and
          \[
           R = \cO\rbr[\bigg]{ \frac{\beta \sigma^2 F}{K N \e^2} + \frac{(\beta + \delta K )F}{K \e}}\,,
         \]\vspace{-3mm}
         \end{itemize}
         where we define $F: = (f(\xx^0) - f^\star)$.
     \end{theorem}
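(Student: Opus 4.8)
The plan is to exploit the quadratic structure so that every SCAFFOLD update becomes an exact affine recursion, then track a Lyapunov function coupling the server-model error with the control-variate error. Since each $f_i$ is quadratic, write $\nabla f_i(\yy) = \mH_i\yy + \bb_i$ with a constant Hessian $\mH_i$; set $\mH := \frac1N\sum_i\mH_i = \nabla^2 f$ and decompose $\mH_i = \mH + \mE_i$, where \ref{asm:hessian-similarity} gives $\norm{\mE_i}\le\delta$ and, crucially, $\frac1N\sum_i\mE_i = \0$. At the optimum, $\frac1N\sum_i\nabla f_i(\xx^\star) = \nabla f(\xx^\star) = \0$, so the natural fixed point of the control variates is $\cc_i^\star = \nabla f_i(\xx^\star)$ and $\cc^\star = \0$ (the value that makes the corrected gradient $\nabla f_i(\xx^\star) + \cc^\star - \cc_i^\star$ vanish). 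I would therefore track $\xx^r - \xx^\star$ together with the centered residuals $\phi_i^r := \cc_i^r - \cc^r - \nabla f_i(\xx^\star)$, which satisfy $\frac1N\sum_i\phi_i^r = \0$ because aggregation preserves $\cc^r = \frac1N\sum_i\cc_i^r$.

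First I would unroll the $K$ inner steps. The correction $\cc^r - \cc_i^r$ is constant over the inner loop, so with $\mA_i := \mI - \eta_l\mH_i$ the local iterate obeys $\yy_i^{(k)} - \xx^\star = \mA_i^k(\xx^r-\xx^\star) + \eta_l\sum_{j<k}\mA_i^j\,\phi_i^r$. Summing the loop shows Option II collapses to the clean identity $\cc_i^+ = \frac1K\sum_k\nabla f_i(\yy_i^{(k)}) = \nabla f_i(\xx^\star) + \mH_i\bar{\ss}_i$ with $\bar{\ss}_i := \frac1K\sum_k(\yy_i^{(k)}-\xx^\star)$. Hence the server step (with $\eta_g=1$, $S=N$) is $\xx^{r+1}-\xx^\star = \frac1N\sum_i\mA_i^K(\xx^r-\xx^\star) + \frac1N\sum_i\mB_i\phi_i^r$ with $\mB_i := \eta_l\sum_{k<K}\mA_i^k$, and the new residual is $\phi_i^{r+1} = \mH(\bar{\ss}_i - \bar{\ss}) + \mE_i\bar{\ss}_i - \frac1N\sum_j\mE_j\bar{\ss}_j$, both explicit affine functions of $(\xx^r-\xx^\star,\{\phi_i^r\})$ plus stochastic-gradient noise.

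The mechanism that makes $\delta$ rather than $\beta$ the operative constant is now visible. Because $\frac1N\sum_i\mE_i = \0$, the first-order-in-$\delta$ correction to the averaged server map $\frac1N\sum_i\mA_i^K$ cancels, leaving $(\mI-\eta_l\mH)^K$ up to $\cO((\eta_l K\delta)^2)$; and since the $\phi_i^r$ sum to zero they couple into the server error only through the variation of $\mB_i$, which is itself $\cO(\eta_l K\delta)$. Thus when $\delta=0$ the residuals drop out of the server update entirely and the server performs $K$ genuine gradient steps on the shared quadratic, contracting at $(1-\eta_l\mu)^K$ — this is the source of the factor-$K$ improvement. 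A matching computation shows each $\phi_i^{r+1}$ self-contracts and is driven by $\xx^r-\xx^\star$ and the other residuals only at order $\delta$. I would package these into $\Phi^r := \norm{\xx^r-\xx^\star}^2 + c\,\eta_l^2\,\frac1N\sum_i\norm{\cc_i^r - \nabla f_i(\xx^\star)}^2$ for a suitable $c$ and show, via Young's inequality on the cross terms weighted so the step-size budget exactly closes, that $\E\Phi^{r+1}\le(1-\Omega(\eta_l\mu K))\E\Phi^r + \cO(\eta_l^2 K\sigma^2/N)$ precisely when $\eta_l(15K\delta + 8\beta)\le 1$.

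The main obstacle will be the residual recursion: I must bound $\mH(\bar{\ss}_i-\bar{\ss})$ and the $\mE_j\bar{\ss}_j$ terms so that the control-variate error contributes to $\Phi$ at strength $\delta$ and not $\beta$, while carrying the matrix geometric sums $\mA_i^k,\mB_i$ through without losing the $K$-factor — this is where the $15K\delta$ term in the step-size constraint originates and where the coupled contraction must be verified on the relevant quadratic form (the system diagonalizes cleanly only when the $\mH_i$ commute, so in general one works with the Young-weighted form rather than eigenvalues). Once the one-step contraction holds, unrolling over $r$ and balancing the noise floor against $\e$ in the usual SGD fashion, together with the bias rate $R\sim\frac{1}{\eta_l\mu K}$, yields the strongly convex bound $R = \tilde\cO\rbr{\frac{\beta\sigma^2}{\mu KN\e} + \frac{\beta+\delta K}{\mu K}}$, since at $\eta_l = \frac{1}{15K\delta+8\beta}$ one has $\frac{1}{\eta_l\mu K} = \frac{15\delta}{\mu} + \frac{8\beta}{\mu K}$. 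The weakly convex rate follows from the same one-step inequality with the strong-convexity contraction replaced by a descent bound on $f(\xx^r)-f^\star$ and $\eta_l$ tuned against $\e$.
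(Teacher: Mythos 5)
You have a genuine gap, and it is concentrated in the weakly convex half of the theorem. Your entire machinery is anchored at an optimum $\xx^\star$: the Lyapunov function $\norm{\xx^r-\xx^\star}^2$, the residuals $\phi_i^r = \cc_i^r - \cc^r - \nabla f_i(\xx^\star)$, and the affine contraction analysis of $\mA_i = \mI - \eta_l \mH_i$. For a weakly convex quadratic ($\nabla^2 f \mgeq -\delta I$, with zero or negative eigenvalues) a minimizer need not exist, $\mA_i$ is expansive along negative-curvature directions, and the iterate-space recursion you build is divergent there, so your closing remark that ``the weakly convex rate follows from the same one-step inequality with the strong-convexity contraction replaced by a descent bound'' has no content: converting an $\xx^\star$-anchored distance recursion into a function-value descent bound is precisely the hard step, and your framework offers no route to it. The paper's proof is structured to avoid exactly this: it never anchors at $\xx^\star$ in the weakly convex case, but instead tracks the per-client potential $\xi_{i,k}^r = \E[f(\yy_{i,k}^r)] - f^\star + \delta(1+\tfrac1K)^{K-k}\E\norm{\yy_{i,k}^r - \xx^{r-1}}^2$, proves a per-step decrease of it, and uses $\delta$-weak convexity (convexity of $\zz \mapsto f(\zz) + \delta(1+\tfrac1K)^{K-k}\norm{\zz-\xx}^2$) to show that averaging the clients' iterates cannot increase the potential; the error is then measured as $\E\norm{\nabla f(\bar\xx^R)}^2$ over a randomized \emph{within-round} averaged iterate $\xx^r_k$, which your scheme (output $\xx^r$ only) also does not produce.

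Two further points in your strongly convex branch need repair. First, the claim that $\phi_i^{r+1}$ is driven by the residuals ``only at order $\delta$'' is false: $\mH(\bar\ss_i - \bar\ss)$ contains the term $\eta_l \mH \tfrac1K\sum_k \sum_{j<k}\mA_i^j \phi_i^r$, of size $\beta\eta_l K \norm{\phi_i^r}$, and under the theorem's step-size $\eta_l \leq (15K\delta + 8\beta)^{-1}$ the product $\beta \eta_l K$ is allowed to be $\ggg 1$ (that is the entire point of the theorem when $\delta K \lll \beta$). In the PSD case the exact telescoped map $\tfrac1K\sum_k(\mI - \mA_i^k)$ does have norm below one, so the residuals do not explode, but they are emphatically not $\delta$-suppressed, the contraction factor approaches $1$ as $\eta_l\lambda \to 1$, and your Young-weighted bookkeeping must exploit the exact telescoped structure rather than the first-order-in-$\delta$ expansion you describe --- and this structure again fails for negative eigenvalues. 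Second, you propose to analyze genuine Option II with stochastic control variates, which the paper deliberately does not: its proof replaces them with full-batch gradients $\nabla f_i(\xx^{r-1})$ recomputed each round, explicitly noting that stochastic control variates couple the clients' randomness. In your scheme $\tfrac1N\sum_i \norm{\phi_i^r}^2$ carries a per-client noise floor of order $\sigma^2/K$ that is \emph{not} reduced by $N$; recovering the $\tfrac{\beta\sigma^2}{\mu K N \e}$ statistical term forces you to separate mean and variance and use independence across clients when bounding $\norm{\tfrac1N\sum_i \mB_i \phi_i^r}^2$ (the paper's averaging lemma does exactly this for its simplified scheme) --- a naive norm bound loses the $1/N$ factor. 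Your cancellation $\tfrac1N\sum_i \mE_i = \0$ for the averaged server map is correct and is an attractive idea, but as written the proposal proves at most a (repairable) strongly convex bound and leaves the weakly convex statement unproved.
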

  When $\sigma = 0$ and $K$ is large, the complexity of \covfefe\ becomes $\frac{\delta}{\mu}$. In contrast DANE, which being a proximal point method also uses large $K$, requires $(\frac{\delta}{\mu})^2$ rounds \citep{shamir2014communication} which is significantly slower, or needs an additional backtracking-line search to match the rates of \covfefe\ \citep{yuan2019convergence}. Further, Theorem~\ref{thm:interpolation} is the first result to demonstrate improvement due to similairty for non-convex functions as far as we are aware.
  
  Suppose that $\{f_i\}$ are identical. Recall that $\delta$ in \eqref{asm:hessian-similarity} measures the Hessian dissimilarity between functions and so $\delta = 0$ for this case. Then Theorem~\ref{thm:interpolation} shows that the complexity of \covfefe\ is $\frac{\sigma^2}{\mu K N \e} + \frac{1}{\mu K}$ which (up to acceleration) matches the i.i.d. lower bound of \citep{woodworth2018graph}. In contrast, SGD with $K$ times larger batch-size would require $\frac{\sigma^2}{\mu K N \e} + \frac{1}{\mu}$ (note the absence of $K$ in the second term). Thus, for identical functions, \covfefe\ (and in fact even \fedavg) improves linearly with increasing number of local steps. In the other extreme, if the functions are arbitrarily different, we may have $\delta = 2 \beta$. In this case, the complexity of \covfefe\ and large-batch SGD match the lower bound of \citet{arjevani2015communication} for the heterogeneous case.

  The above insights can be generalized to when the functions are only somewhat similar. If the Hessians are $\delta$-close and $\sigma = 0$, then the complexity is $\frac{\beta + \delta K}{\mu K}$. This bound implies that the optimum number of local steps one should use is $K = \frac{\beta}{\delta}$. Picking a smaller $K$ increases the communication required whereas increasing it further would only waste computational resources. While this result is intuitive---if the functions are more `similar', local steps are more useful---Theorem~\ref{thm:interpolation} shows that it is the similarity of the \emph{Hessians} which matters. This is surprising since the Hessians of $\{f_i\}$ may be identical even if their individual optima $\xx_i^\star$ are arbitrarily far away from each other and the gradient-dissimilarity \eqref{asm:heterogeneity} is unbounded.

  \paragraph{Proof sketch.} Consider a simplified \covfefe\ update with $\sigma = 0$ and no sampling ($S = N$):
  \[
    \yy_i = \yy_i - \eta(\nabla f_i(\yy_i) + \nabla f(\xx) - \nabla f_i(\xx))\,.
  \]
  We would ideally want to perform the update $\yy_i = \yy_i - \eta\nabla f(\yy_i)$ using the full gradient $\nabla f(\yy_i)$. We reinterpret the correction term of \covfefe\ $(\cc - \cc_i)$ as performing the following first order correction to the local gradient $\nabla f_i(\yy_i)$ to make it closer to the full gradient $\nabla f(\yy_i)$:
  \begin{align*}
    \underbrace{\nabla f_i(\yy_i) - \nabla f_i(\xx)}_{\approx \nabla^2 f_i(\xx) (\yy_i - \xx)} &+ \underbrace{\nabla f(\xx)}_{\approx \nabla f(\yy_i) + \nabla^2 f(\xx)(\xx - \yy_i)}\\
    &\hspace{-10mm}\approx \nabla f(\yy_i) + (\nabla^2 f_i(\xx) - \nabla^2 f(\xx)) (\yy_i - \xx)\\
    &\hspace{-10mm}\approx \nabla f(\yy_i) + \delta (\yy_i - \xx)
  \end{align*}
  Thus the \covfefe\ update approximates the ideal update up to an error $\delta$. This intuition is proved formally for quadratic functions in Appendix~\ref{appsec:interpolation}. Generalizing these results to other functions is a challenging open problem.

  
  \begin{figure}[!t]
    \centering
    \captionsetup[subfigure]{position=b,format=myformat}
    \begin{subfigure}{0.34\columnwidth}
      \centering
      \includegraphics[width=\linewidth]{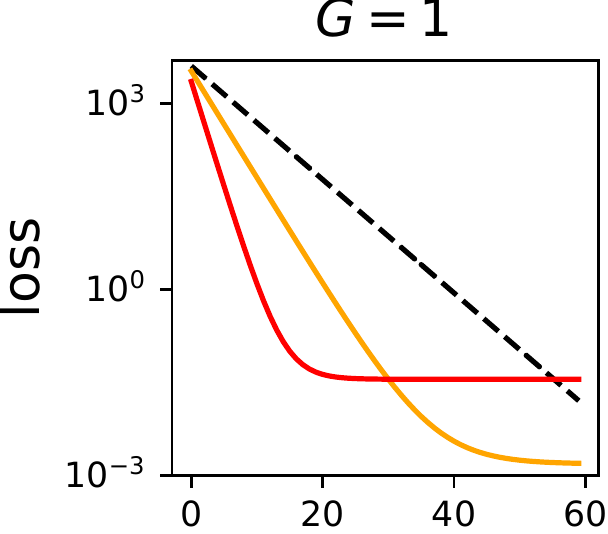}
    \end{subfigure}
    \begin{subfigure}{.31\columnwidth}
      \centering
      \includegraphics[width=\linewidth]{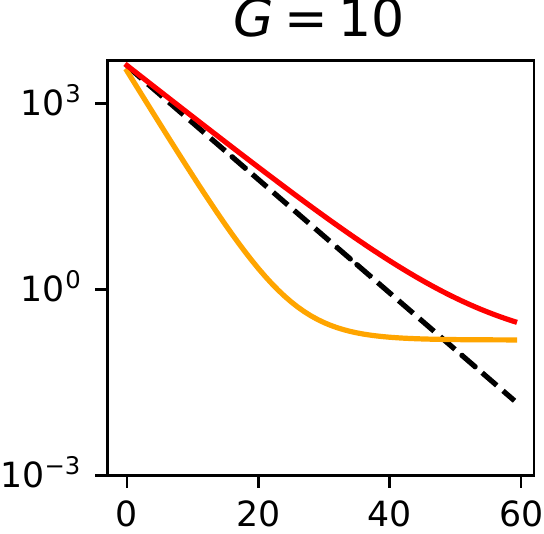}
    \end{subfigure}
    \begin{subfigure}{.31\columnwidth}
      \centering
      \includegraphics[width=\linewidth]{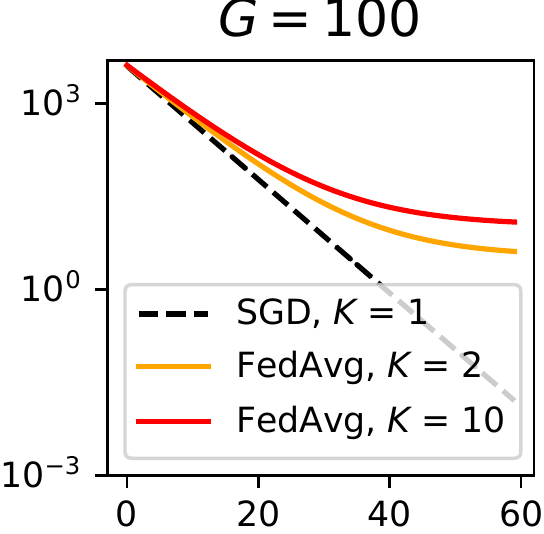}
    \end{subfigure}
    \begin{subfigure}{.34\columnwidth}
      \centering
      \includegraphics[width=\linewidth]{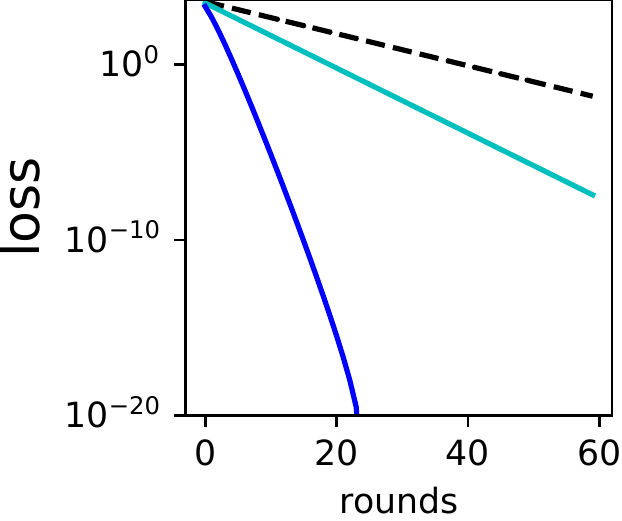}
    \end{subfigure}
    \begin{subfigure}{0.31\columnwidth}
      \centering
      \includegraphics[width=\linewidth]{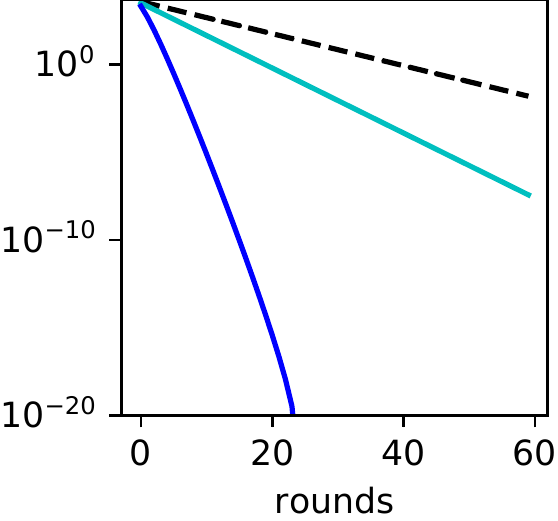}
    \end{subfigure}
    \begin{subfigure}{.31\columnwidth}
      \centering
      \includegraphics[width=\linewidth]{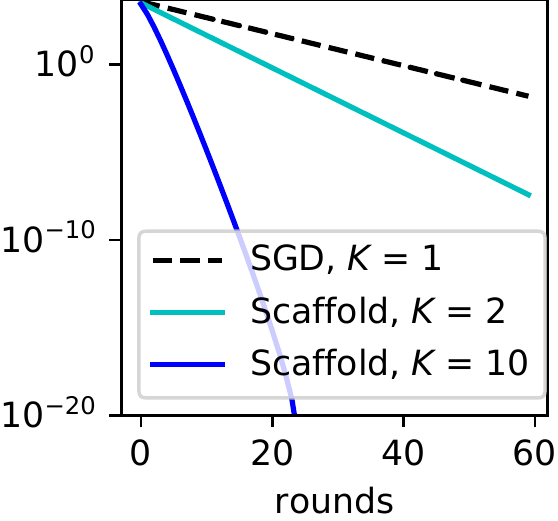}
    \end{subfigure}
    \caption{SGD (dashed black), FedAvg (above), and SCAFFOLD (below) on simulated data. FedAvg gets worse as local steps increases with $K=10$ (red) worse than $K=2$ (orange). It also gets slower as gradient-dissimilarity ($G$) increases (to the right). SCAFFOLD significantly improves with more local steps, with $K=10$ (blue) faster than $K=2$ (light blue) and SGD. Its performance is identical as we vary heterogeneity ($G$).}
    \label{fig:simulated}\vspace{-5mm}
  \end{figure}


\section{Experiments}\label{sec:experiments}
We run experiments on both simulated and real datasets to confirm our theory. Our main findings are i) \covfefe\ consistently outperforms SGD and \fedavg\ across all parameter regimes, and ii) the benefit (or harm) of local steps depends on both the algorithm and the similarity of the clients data.

\subsection{Setup}
Our simulated experiments uses $N=2$ quadratic functions based on our lower-bounds in Theorem~\ref{thm:lowerbound}. We use full-batch gradients ($\sigma = 0$) and no client sampling. Our real world experiments run logistic regression (convex) and 2 layer fully connected network (non-convex) on the \emnist\ \citep{cohen2017emnist}. We divide this dataset among $N=100$ clients as follows: for $s\%$ similar data we allocate to each client $s\%$ i.i.d. data and the remaining $(100 - s)\%$ by sorting according to label (cf. \citet{hsu2019measuring}).

We consider four algorithms: SGD, \fedavg\, \covfefe\, and \easgd\ with SGD as the local solver \citep{li2018fedprox}. On each client SGD uses the full local data to compute a single update, whereas the other algorithms take 5 steps per epoch (batch size is 0.2 of local data). We always use global step-size $\eta_g = 1$ and tune the local step-size $\eta_l$ individually for each algorithm. \covfefe\ uses option II (no extra gradient computations) and \easgd\ has fixed regularization~$= 1$ to keep comparison fair. Additional tuning of the regularization parameter may sometimes yield improved empirical performance.

\subsection{Simulated results}
The results are summarized in Fig.~\ref{fig:simulated}. Our simulated data has Hessian difference $\delta = 1$ \eqref{asm:hessian-similarity} and $\beta = 1$. We vary the gradient heterogeneity \eqref{asm:heterogeneity} as $G \in [1, 10, 100]$. For all valued of $G$, \fedavg\ gets slower as we increase the number of local steps. This is explained by the fact that client-drift increases as we increase the number of local steps, hindering progress. Further, as we increase $G$, \fedavg\ continues to slow down exactly as dictated by Thms.~\ref{thm:fedavg} and~\ref{thm:lowerbound}. Note that when heterogeneity is small ($G = \beta = 1$), \fedavg\ can be competitive with SGD. 

\covfefe\ is consistently faster than SGD, with $K=2$ being twice as fast and $K = 10$ about 5 times faster. Further, its convergence is completely unaffected by $G$, confirming our theory in Thm.~\ref{thm:convergence}. The former observation that we do not see linear improvement with $K$ is explained by Thm.~\ref{thm:interpolation} since we have $\delta > 0$. This sub linear improvement is still significantly faster than both SGD and \fedavg.

\subsection{\emnist\ results}
\begin{table*}[!t]
  \caption{Communication rounds to reach $0.5$ test accuracy for logistic regression on \emnist\ as we vary number of epochs. 1k+ indicates 0.5 accuracy was not reached even after 1k rounds, and similarly an arrowhead indicates that the barplot extends beyond the table. 1 epoch for local update methods corresponds to 5 local steps (0.2 batch size), and 20\% of clients are sampled each round. We fix $\mu=1$ for \easgd\ and use variant (ii) for \covfefe\ to ensure all methods are comparable. Across all parameters (epochs and similarity), \covfefe\ is the fastest method. When similarity is 0 (sorted data), \fedavg\ consistently gets worse as we increase the number of epochs, quickly becoming slower than SGD. \covfefe\ initially gets worse and later stabilizes, but is always at least as fast as SGD. As similarity increases (i.e. data is more shuffled), both \fedavg\ and \covfefe\ significantly outperform SGD though \covfefe\ is still better than \fedavg. Further, with higher similarity, both methods benefit from increasing number of epochs.}
  \centering
  \includegraphics[width=\linewidth]{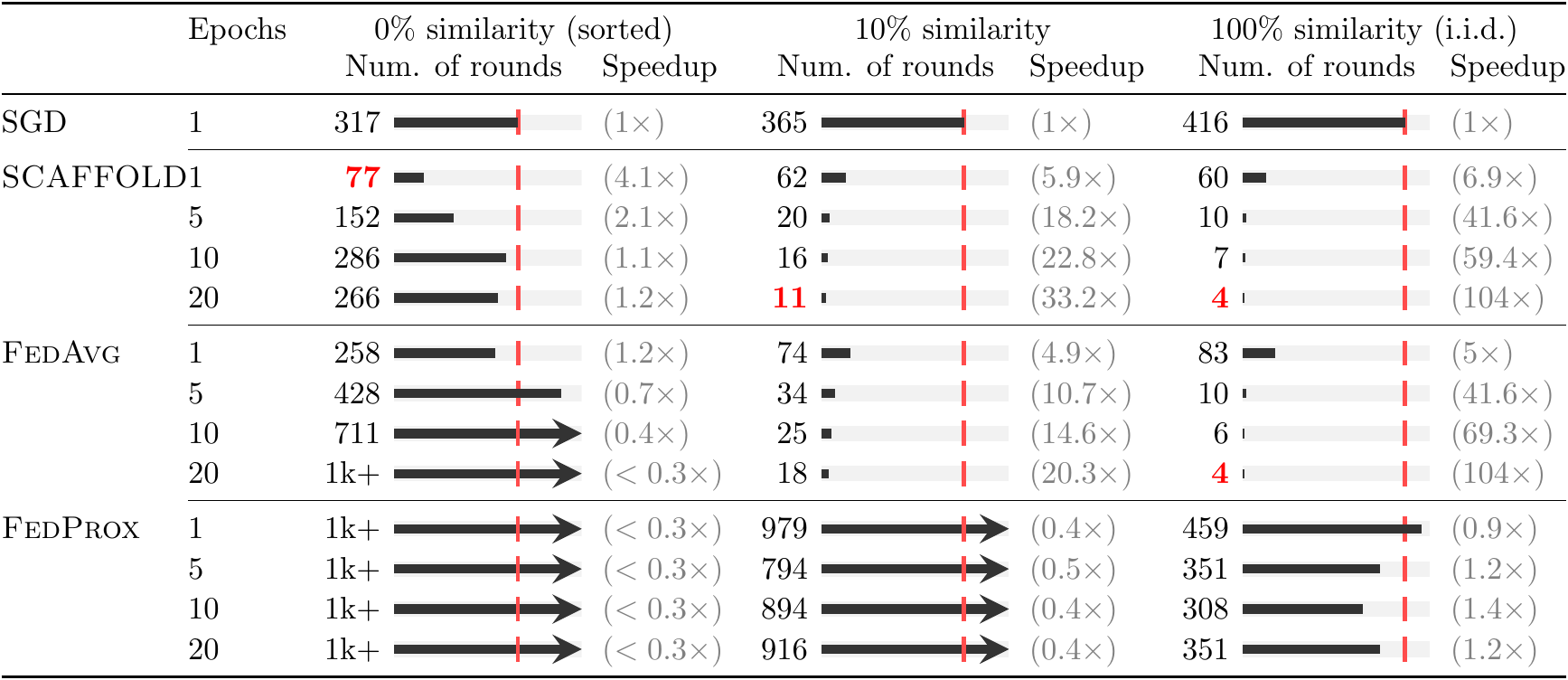}
  \label{tab:similarity}
  \end{table*}

We run extensive experiments on the \emnist\ dataset to measure the interplay between the algorithm, number of epochs (local updates), number of participating clients, and the client similarity. Table~\ref{tab:similarity} measures the benefit (or harm) of using more local steps, Table~\ref{tab:sampling} studies the resilience to client sampling, and Table~\ref{tab:nonconvex} reports preliminary results on neural networks. We are mainly concerned with minimizing the number of \emph{communication rounds}. We observe that

\textbf{SCAFFOLD is consistently the best.} Across all range of values tried, we observe that \covfefe\ outperforms SGD, \fedavg, and \easgd. The latter \easgd\ is always slower than the other local update methods, though in some cases it outperforms SGD. Note that it is possible to improve \easgd\ by carefully tuning the regularization parameter \citep{li2018fedprox}. \fedavg\ is always slower than \covfefe\ and faster than \easgd.

\textbf{SCAFFOLD $>$ SGD $>$ FedAvg\ for heterogeneous clients.} When similarity is $0\%$, \fedavg\ gets slower with increasing local steps. If we take more than 5 epochs, its performance is worse than SGD's. \covfefe\ initially worsens as we increase the number of epochs but then flattens. However, its performance is always better than that of SGD, confirming that it can handle heterogeneous data. 

\textbf{SCAFFOLD and FedAvg get faster with more similarity, but not SGD.} As similarity of the clients increases, the performance of SGD remains relatively constant. On the other hand, \covfefe\ and \fedavg\ get significantly faster as similarity increases. Further, local steps become much more useful, showing monotonic improvement with the increase in number of epochs. This is because with increasing the i.i.d.ness of the data, both the gradient and Hessian dissimilarity decrease.

\textbf{SCAFFOLD is resilient to client sampling.} As we decrease the fraction of clients sampled, \covfefe\ ,and \fedavg\ only show a sub-linear slow-down. They are more resilient to sampling in the case of higher similarity.

\begin{table}[!t]
  \caption{Communication rounds to reach 0.45 test accuracy for logistic regression on \emnist\ as we vary the number of sampled clients. Number of epochs is kept fixed to 5. \covfefe\ is consistently faster than \fedavg. As we decrease the number of clients sampled in each round, the increase in number of rounds is sub-linear. This slow-down is better for more similar clients.\vspace{1mm}}
  \centering\resizebox{\linewidth}{!}{
  \begin{threeparttable}
  \begin{tabular}{@{}llllll@{}}
  \toprule
  & Clients &\multicolumn{2}{c}{0\%~similarity} & \multicolumn{2}{c}{10\%~similarity}  \\
  \midrule
  \covfefe & 20\% & 143 &\speedup{1.0} &9 &\speedup{1.0}\\
   & 5\% & 290 &\speedup{2.0} &13 &\speedup{1.4}\\
   & 1\% & 790 &\speedup{5.5} &28 &\speedup{3.1}\\
  \cmidrule{2-6}
  \fedavg & 20\% & 179 &\speedup{1.0} &12 &\speedup{1.0}\\
  & 5\% & 334 &\speedup{1.9} &17 &\speedup{1.4}\\
  & 1\% & 1k+ & {\color{gray}(5.6+$\times$)} &35 &\speedup{2.9}\\
  \bottomrule
  \end{tabular}
  \end{threeparttable}}
  \label{tab:sampling}
  \end{table}

\textbf{SCAFFOLD outperforms FedAvg on non-convex experiments.} We see that \covfefe\ is better than \fedavg\ in terms of final test accuracy reached, though interestingly \fedavg\ seems better than SGD even when similarity is 0. However, much more extensive experiments (beyond current scope) are needed before drawing conclusions.
 
\begin{table}[!t]
  \caption{Best test accuracy after 1k rounds with 2-layer fully connected neural network (non-convex) on \emnist\ trained with 5 epochs per round (25 steps) for the local methods, and 20\% of clients sampled each round. \covfefe\ has the best accuracy and SGD has the least. \covfefe\ again outperforms other methods. SGD is unaffected by similarity, whereas the local methods improve with client similarity.\vspace{2mm}}
  \centering
  \begin{threeparttable}
  \begin{tabular}{@{}lll@{}}
  \toprule
  & 0\%~similarity & 10\%~similarity  \\
  \midrule
  SGD & 0.766 & 0.764\\
  \fedavg & 0.787 & 0.828\\
  \covfefe & \textbf{0.801} & \textbf{0.842}\\
  \bottomrule
  \end{tabular}
  \end{threeparttable}
  \label{tab:nonconvex}
  \end{table}

\section{Conclusion}
Our work studied the impact of heterogeneity on the performance of optimization methods for federated learning. Our careful theoretical analysis showed that \fedavg\ can be severely hampered by \emph{gradient dissimilarity}, and can be even slower than SGD. We then proposed a new stochastic algorithm (\covfefe) which overcomes gradient dissimilarity using control variates. We demonstrated the effectiveness of \covfefe\ via strong convergence guarantees and empirical evaluations. Further, we showed that while \covfefe\ is always at least as fast as SGD, it can be much faster depending on the \emph{Hessian dissimilarity} in our data. Thus, different algorithms can take advantage of (and are limited by) different notions of dissimilarity. We believe that characterizing and isolating various dissimilarities present in real world data can lead to further new algorithms and significant impact on distributed, federated, and decentralized learning.
\clearpage
\paragraph{Acknowledgments.}
We thank Filip Hanzely and Jakub Kone{\v{c}}n{\`y} for discussions regarding variance reduction techniques and Blake Woodworth, Virginia Smith and Kumar Kshitij Patel for suggestions which improved the writing.

{
  \bibliography{scaffold}

\begin{thebibliography}{74}
\providecommand{\natexlab}[1]{#1}
\providecommand{\url}[1]{\texttt{#1}}
\expandafter\ifx\csname urlstyle\endcsname\relax
  \providecommand{\doi}[1]{doi: #1}\else
  \providecommand{\doi}{doi: \begingroup \urlstyle{rm}\Url}\fi

\bibitem[Acar et~al.(2021)Acar, Zhao, Navarro, Mattina, Whatmough, and
  Saligrama]{acar2021federated}
Acar, D. A.~E., Zhao, Y., Navarro, R.~M., Mattina, M., Whatmough, P.~N., and
  Saligrama, V.
\newblock Federated learning based on dynamic regularization.
\newblock In \emph{International Conference on Learning Representations}, 2021.

\bibitem[Agarwal et~al.(2018)Agarwal, Suresh, Yu, Kumar, and
  McMahan]{agarwal2018cpsgd}
Agarwal, N., Suresh, A.~T., Yu, F.~X., Kumar, S., and McMahan, B.
\newblock {cpSGD}: Communication-efficient and differentially-private
  distributed {SGD}.
\newblock In \emph{Proceedings of NeurIPS}, pp.\  7575--7586, 2018.

\bibitem[Arjevani \& Shamir(2015)Arjevani and
  Shamir]{arjevani2015communication}
Arjevani, Y. and Shamir, O.
\newblock Communication complexity of distributed convex learning and
  optimization.
\newblock In \emph{Advances in neural information processing systems}, pp.\
  1756--1764, 2015.

\bibitem[Bassily et~al.(2014)Bassily, Smith, and Thakurta]{bassily2014private}
Bassily, R., Smith, A., and Thakurta, A.
\newblock Private empirical risk minimization: Efficient algorithms and tight
  error bounds.
\newblock In \emph{2014 IEEE 55th Annual Symposium on Foundations of Computer
  Science}, pp.\  464--473. IEEE, 2014.

\bibitem[Basu et~al.(2019)Basu, Data, Karakus, and Diggavi]{basu2019qsparse}
Basu, D., Data, D., Karakus, C., and Diggavi, S.
\newblock Qsparse-local-{SGD}: Distributed {SGD} with quantization,
  sparsification, and local computations.
\newblock \emph{arXiv preprint arXiv:1906.02367}, 2019.

\bibitem[Bonawitz et~al.(2017)Bonawitz, Ivanov, Kreuter, Marcedone, McMahan,
  Patel, Ramage, Segal, and Seth]{bonawitz2017practical}
Bonawitz, K., Ivanov, V., Kreuter, B., Marcedone, A., McMahan, H.~B., Patel,
  S., Ramage, D., Segal, A., and Seth, K.
\newblock Practical secure aggregation for privacy-preserving machine learning.
\newblock In \emph{Proceedings of the 2017 ACM SIGSAC Conference on Computer
  and Communications Security}, pp.\  1175--1191. ACM, 2017.

\bibitem[Brisimi et~al.(2018)Brisimi, Chen, Mela, Olshevsky, Paschalidis, and
  Shi]{brisimi2018federated}
Brisimi, T.~S., Chen, R., Mela, T., Olshevsky, A., Paschalidis, I.~C., and Shi,
  W.
\newblock Federated learning of predictive models from federated electronic
  health records.
\newblock \emph{International journal of medical informatics}, 112:\penalty0
  59--67, 2018.

\bibitem[Cen et~al.(2019)Cen, Zhang, Chi, Chen, and Liu]{cen2019convergence}
Cen, S., Zhang, H., Chi, Y., Chen, W., and Liu, T.-Y.
\newblock Convergence of distributed stochastic variance reduced methods
  without sampling extra data.
\newblock \emph{arXiv preprint arXiv:1905.12648}, 2019.

\bibitem[Chaudhuri et~al.(2011)Chaudhuri, Monteleoni, and
  Sarwate]{chaudhuri2011differentially}
Chaudhuri, K., Monteleoni, C., and Sarwate, A.~D.
\newblock Differentially private empirical risk minimization.
\newblock \emph{Journal of Machine Learning Research}, 12\penalty0
  (Mar):\penalty0 1069--1109, 2011.

\bibitem[Chen et~al.(2019{\natexlab{a}})Chen, Mathews, Ouyang, and
  Beaufays]{chen2019federated}
Chen, M., Mathews, R., Ouyang, T., and Beaufays, F.
\newblock Federated learning of out-of-vocabulary words.
\newblock \emph{arXiv preprint arXiv:1903.10635}, 2019{\natexlab{a}}.

\bibitem[Chen et~al.(2019{\natexlab{b}})Chen, Suresh, Mathews, Wong, Beaufays,
  Allauzen, and Riley]{chen2019federatedb}
Chen, M., Suresh, A.~T., Mathews, R., Wong, A., Beaufays, F., Allauzen, C., and
  Riley, M.
\newblock Federated learning of {N}-gram language models.
\newblock In \emph{Proceedings of the 23rd Conference on Computational Natural
  Language Learning (CoNLL)}, 2019{\natexlab{b}}.

\bibitem[Cohen et~al.(2017)Cohen, Afshar, Tapson, and
  Van~Schaik]{cohen2017emnist}
Cohen, G., Afshar, S., Tapson, J., and Van~Schaik, A.
\newblock Emnist: Extending mnist to handwritten letters.
\newblock In \emph{2017 International Joint Conference on Neural Networks
  (IJCNN)}, pp.\  2921--2926. IEEE, 2017.

\bibitem[Dean et~al.(2012)Dean, Corrado, Monga, Chen, Devin, Mao, Ranzato,
  Senior, Tucker, Yang, Le, and Ng]{dean2012large}
Dean, J., Corrado, G., Monga, R., Chen, K., Devin, M., Mao, M., Ranzato, M.,
  Senior, A., Tucker, P., Yang, K., Le, Q.~V., and Ng, A.~Y.
\newblock Large scale distributed deep networks.
\newblock In \emph{Advances in neural information processing systems}, pp.\
  1223--1231, 2012.

\bibitem[Defazio \& Bottou(2019)Defazio and Bottou]{defazio2019ineffectiveness}
Defazio, A. and Bottou, L.
\newblock On the ineffectiveness of variance reduced optimization for deep
  learning.
\newblock In \emph{Advances in Neural Information Processing Systems}, pp.\
  1753--1763, 2019.

\bibitem[Defazio et~al.(2014)Defazio, Bach, and
  Lacoste-Julien]{defazio2014saga}
Defazio, A., Bach, F., and Lacoste-Julien, S.
\newblock {SAGA}: A fast incremental gradient method with support for
  non-strongly convex composite objectives.
\newblock In \emph{Advances in neural information processing systems}, pp.\
  1646--1654, 2014.

\bibitem[Fang et~al.(2018)Fang, Li, Lin, and Zhang]{fang2018spider}
Fang, C., Li, C.~J., Lin, Z., and Zhang, T.
\newblock {SPIDER}: Near-optimal non-convex optimization via stochastic
  path-integrated differential estimator.
\newblock In \emph{Advances in Neural Information Processing Systems}, pp.\
  689--699, 2018.

\bibitem[Glasserman(2013)]{glasserman2013monte}
Glasserman, P.
\newblock \emph{Monte Carlo methods in financial engineering}, volume~53.
\newblock Springer Science \& Business Media, 2013.

\bibitem[Goyal et~al.(2017)Goyal, Doll{\'a}r, Girshick, Noordhuis, Wesolowski,
  Kyrola, Tulloch, Jia, and He]{goyal2017accurate}
Goyal, P., Doll{\'a}r, P., Girshick, R., Noordhuis, P., Wesolowski, L., Kyrola,
  A., Tulloch, A., Jia, Y., and He, K.
\newblock Accurate, large minibatch {SGD}: Training imagenet in 1 hour.
\newblock \emph{arXiv preprint arXiv:1706.02677}, 2017.

\bibitem[Haddadpour \& Mahdavi(2019)Haddadpour and
  Mahdavi]{haddadpour2019convergence}
Haddadpour, F. and Mahdavi, M.
\newblock On the convergence of local descent methods in federated learning.
\newblock \emph{arXiv preprint arXiv:1910.14425}, 2019.

\bibitem[Hanzely \& Richt{\'a}rik(2019)Hanzely and
  Richt{\'a}rik]{hanzely2019one}
Hanzely, F. and Richt{\'a}rik, P.
\newblock One method to rule them all: Variance reduction for data, parameters
  and many new methods.
\newblock \emph{arXiv preprint arXiv:1905.11266}, 2019.

\bibitem[Hard et~al.(2018)Hard, Rao, Mathews, Beaufays, Augenstein, Eichner,
  Kiddon, and Ramage]{hard2018federated}
Hard, A., Rao, K., Mathews, R., Beaufays, F., Augenstein, S., Eichner, H.,
  Kiddon, C., and Ramage, D.
\newblock Federated learning for mobile keyboard prediction.
\newblock \emph{arXiv preprint arXiv:1811.03604}, 2018.

\bibitem[Hsu et~al.(2019)Hsu, Qi, and Brown]{hsu2019measuring}
Hsu, T.-M.~H., Qi, H., and Brown, M.
\newblock Measuring the effects of non-identical data distribution for
  federated visual classification.
\newblock \emph{arXiv preprint arXiv:1909.06335}, 2019.

\bibitem[Iandola et~al.(2016)Iandola, Moskewicz, Ashraf, and
  Keutzer]{iandola2016firecaffe}
Iandola, F.~N., Moskewicz, M.~W., Ashraf, K., and Keutzer, K.
\newblock Firecaffe: near-linear acceleration of deep neural network training
  on compute clusters.
\newblock In \emph{Proceedings of the IEEE Conference on Computer Vision and
  Pattern Recognition}, pp.\  2592--2600, 2016.

\bibitem[Johnson \& Zhang(2013)Johnson and Zhang]{johnson2013accelerating}
Johnson, R. and Zhang, T.
\newblock Accelerating stochastic gradient descent using predictive variance
  reduction.
\newblock In \emph{Advances in neural information processing systems}, pp.\
  315--323, 2013.

\bibitem[Kairouz et~al.(2019)Kairouz, McMahan, Avent, Bellet, Bennis, Bhagoji,
  Bonawitz, Charles, Cormode, Cummings, et~al.]{kairouz2019advances}
Kairouz, P., McMahan, H.~B., Avent, B., Bellet, A., Bennis, M., Bhagoji, A.~N.,
  Bonawitz, K., Charles, Z., Cormode, G., Cummings, R., et~al.
\newblock Advances and open problems in federated learning.
\newblock \emph{arXiv preprint arXiv:1912.04977}, 2019.

\bibitem[Karimi et~al.(2016)Karimi, Nutini, and Schmidt]{karimi2016linear}
Karimi, H., Nutini, J., and Schmidt, M.
\newblock Linear convergence of gradient and proximal-gradient methods under
  the polyak-{\l}ojasiewicz condition.
\newblock In \emph{Joint European Conference on Machine Learning and Knowledge
  Discovery in Databases}, pp.\  795--811. Springer, 2016.

\bibitem[Karimireddy et~al.(2019)Karimireddy, Rebjock, Stich, and
  Jaggi]{karimireddy2019error}
Karimireddy, S.~P., Rebjock, Q., Stich, S.~U., and Jaggi, M.
\newblock Error feedback fixes {SignSGD} and other gradient compression
  schemes.
\newblock \emph{arXiv preprint arXiv:1901.09847}, 2019.

\bibitem[Khaled et~al.(2020)Khaled, Mishchenko, and
  Richt{\'a}rik]{khaled2020tighter}
Khaled, A., Mishchenko, K., and Richt{\'a}rik, P.
\newblock Tighter theory for local {SGD} on indentical and heterogeneous data.
\newblock In \emph{Proceedings of AISTATS}, 2020.

\bibitem[Kifer et~al.(2012)Kifer, Smith, and Thakurta]{kifer2012private}
Kifer, D., Smith, A., and Thakurta, A.
\newblock Private convex empirical risk minimization and high-dimensional
  regression.
\newblock In \emph{Conference on Learning Theory}, pp.\  25--1, 2012.

\bibitem[Kone{\v{c}}n{\`y} et~al.(2016{\natexlab{a}})Kone{\v{c}}n{\`y},
  McMahan, Ramage, and Richt{\'a}rik]{konecny2016federated2}
Kone{\v{c}}n{\`y}, J., McMahan, H.~B., Ramage, D., and Richt{\'a}rik, P.
\newblock Federated optimization: Distributed machine learning for on-device
  intelligence.
\newblock \emph{arXiv preprint arXiv:1610.02527}, 2016{\natexlab{a}}.

\bibitem[Kone{\v{c}}n{\`y} et~al.(2016{\natexlab{b}})Kone{\v{c}}n{\`y},
  McMahan, Yu, Richt{\'a}rik, Suresh, and Bacon]{konevcny2016federated}
Kone{\v{c}}n{\`y}, J., McMahan, H.~B., Yu, F.~X., Richt{\'a}rik, P., Suresh,
  A.~T., and Bacon, D.
\newblock Federated learning: Strategies for improving communication
  efficiency.
\newblock \emph{arXiv preprint arXiv:1610.05492}, 2016{\natexlab{b}}.

\bibitem[Kulunchakov \& Mairal(2019)Kulunchakov and
  Mairal]{kulunchakov2019estimate}
Kulunchakov, A. and Mairal, J.
\newblock Estimate sequences for stochastic composite optimization: Variance
  reduction, acceleration, and robustness to noise.
\newblock \emph{arXiv preprint arXiv:1901.08788}, 2019.

\bibitem[Lee et~al.(2015)Lee, Lin, Ma, and Yang]{lee2015distributed}
Lee, J.~D., Lin, Q., Ma, T., and Yang, T.
\newblock Distributed stochastic variance reduced gradient methods and a lower
  bound for communication complexity.
\newblock \emph{arXiv preprint arXiv:1507.07595}, 2015.

\bibitem[Lei \& Jordan(2017)Lei and Jordan]{lei2017less}
Lei, L. and Jordan, M.
\newblock Less than a single pass: Stochastically controlled stochastic
  gradient.
\newblock In \emph{AISTATS}, pp.\  148--156, 2017.

\bibitem[Li et~al.(2018)Li, Sahu, Sanjabi, Zaheer, Talwalkar, and
  Smith]{li2018fedprox}
Li, T., Sahu, A.~K., Sanjabi, M., Zaheer, M., Talwalkar, A., and Smith, V.
\newblock On the convergence of federated optimization in heterogeneous
  networks.
\newblock \emph{arXiv preprint arXiv:1812.06127}, 2018.

\bibitem[Li et~al.(2019{\natexlab{a}})Li, Sanjabi, and Smith]{li2019fair}
Li, T., Sanjabi, M., and Smith, V.
\newblock Fair resource allocation in federated learning.
\newblock \emph{arXiv preprint arXiv:1905.10497}, 2019{\natexlab{a}}.

\bibitem[Li et~al.(2020)Li, Sahu, Zaheer, Sanjabi, Talwalkar, and
  Smith]{li2020feddane}
Li, T., Sahu, A.~K., Zaheer, M., Sanjabi, M., Talwalkar, A., and Smith, V.
\newblock Feddane: A federated newton-type method.
\newblock \emph{arXiv preprint arXiv:2001.01920}, 2020.

\bibitem[Li et~al.(2019{\natexlab{b}})Li, Huang, Yang, Wang, and
  Zhang]{li2019fedconvergence}
Li, X., Huang, K., Yang, W., Wang, S., and Zhang, Z.
\newblock On the convergence of {FedAvg} on non-iid data.
\newblock \emph{arXiv preprint arXiv:1907.02189}, 2019{\natexlab{b}}.

\bibitem[Liang et~al.(2019)Liang, Shen, Liu, Pan, Chen, and
  Cheng]{liang2019variance}
Liang, X., Shen, S., Liu, J., Pan, Z., Chen, E., and Cheng, Y.
\newblock Variance reduced local sgd with lower communication complexity.
\newblock \emph{arXiv preprint arXiv:1912.12844}, 2019.

\bibitem[McMahan et~al.(2017)McMahan, Moore, Ramage, Hampson, and
  y~Arcas]{mcmahan2017communication}
McMahan, B., Moore, E., Ramage, D., Hampson, S., and y~Arcas, B.~A.
\newblock Communication-efficient learning of deep networks from decentralized
  data.
\newblock In \emph{Proceedings of AISTATS}, pp.\  1273--1282, 2017.

\bibitem[Mishchenko et~al.(2019)Mishchenko, Gorbunov, Tak{\'a}{\v{c}}, and
  Richt{\'a}rik]{mishchenko2019distributed}
Mishchenko, K., Gorbunov, E., Tak{\'a}{\v{c}}, M., and Richt{\'a}rik, P.
\newblock Distributed learning with compressed gradient differences.
\newblock \emph{arXiv preprint arXiv:1901.09269}, 2019.

\bibitem[Mohri et~al.(2019)Mohri, Sivek, and Suresh]{mohri2019agnostic}
Mohri, M., Sivek, G., and Suresh, A.~T.
\newblock Agnostic federated learning.
\newblock \emph{arXiv preprint arXiv:1902.00146}, 2019.

\bibitem[Nedich et~al.(2016)Nedich, Olshevsky, and
  Shi]{nedich2016geometrically}
Nedich, A., Olshevsky, A., and Shi, W.
\newblock A geometrically convergent method for distributed optimization over
  time-varying graphs.
\newblock In \emph{2016 IEEE 55th Conference on Decision and Control (CDC)},
  pp.\  1023--1029. IEEE, 2016.

\bibitem[Nesterov(2018)]{nesterov2018lectures}
Nesterov, Y.
\newblock \emph{Lectures on convex optimization}, volume 137.
\newblock Springer, 2018.

\bibitem[Nguyen et~al.(2017)Nguyen, Liu, Scheinberg, and
  Tak{\'a}{\v{c}}]{nguyen2017sarah}
Nguyen, L.~M., Liu, J., Scheinberg, K., and Tak{\'a}{\v{c}}, M.
\newblock Sarah: A novel method for machine learning problems using stochastic
  recursive gradient.
\newblock In \emph{Proceedings of the 34th International Conference on Machine
  Learning-Volume 70}, pp.\  2613--2621. JMLR. org, 2017.

\bibitem[Nguyen et~al.(2018)Nguyen, Scheinberg, and
  Tak{\'a}{\v{c}}]{nguyen2018inexact}
Nguyen, L.~M., Scheinberg, K., and Tak{\'a}{\v{c}}, M.
\newblock Inexact {SARAH} algorithm for stochastic optimization.
\newblock \emph{arXiv preprint arXiv:1811.10105}, 2018.

\bibitem[Patel \& Dieuleveut(2019)Patel and Dieuleveut]{patel2019communication}
Patel, K.~K. and Dieuleveut, A.
\newblock Communication trade-offs for synchronized distributed {SGD} with
  large step size.
\newblock \emph{arXiv preprint arXiv:1904.11325}, 2019.

\bibitem[Ramaswamy et~al.(2019)Ramaswamy, Mathews, Rao, and
  Beaufays]{ramaswamy2019federated}
Ramaswamy, S., Mathews, R., Rao, K., and Beaufays, F.
\newblock Federated learning for emoji prediction in a mobile keyboard.
\newblock \emph{arXiv preprint arXiv:1906.04329}, 2019.

\bibitem[Reddi et~al.(2016{\natexlab{a}})Reddi, Hefny, Sra, Poczos, and
  Smola]{reddi2016stochastic}
Reddi, S.~J., Hefny, A., Sra, S., Poczos, B., and Smola, A.
\newblock Stochastic variance reduction for nonconvex optimization.
\newblock In \emph{International conference on machine learning}, pp.\
  314--323, 2016{\natexlab{a}}.

\bibitem[Reddi et~al.(2016{\natexlab{b}})Reddi, Kone{\v{c}}n{\`y},
  Richt{\'a}rik, P{\'o}cz{\'o}s, and Smola]{reddi2016aide}
Reddi, S.~J., Kone{\v{c}}n{\`y}, J., Richt{\'a}rik, P., P{\'o}cz{\'o}s, B., and
  Smola, A.
\newblock Aide: Fast and communication efficient distributed optimization.
\newblock \emph{arXiv preprint arXiv:1608.06879}, 2016{\natexlab{b}}.

\bibitem[Reddi et~al.(2016{\natexlab{c}})Reddi, Sra, P{\'o}czos, and
  Smola]{reddi2016fast}
Reddi, S.~J., Sra, S., P{\'o}czos, B., and Smola, A.
\newblock Fast incremental method for smooth nonconvex optimization.
\newblock In \emph{2016 IEEE 55th Conference on Decision and Control (CDC)},
  pp.\  1971--1977. IEEE, 2016{\natexlab{c}}.

\bibitem[Safran \& Shamir(2019)Safran and Shamir]{safran2019good}
Safran, I. and Shamir, O.
\newblock How good is sgd with random shuffling?
\newblock \emph{arXiv preprint arXiv:1908.00045}, 2019.

\bibitem[Samarakoon et~al.(2018)Samarakoon, Bennis, Saad, and
  Debbah]{samarakoon2018federated}
Samarakoon, S., Bennis, M., Saad, W., and Debbah, M.
\newblock Federated learning for ultra-reliable low-latency v2v communications.
\newblock In \emph{2018 IEEE Global Communications Conference (GLOBECOM)}, pp.\
   1--7. IEEE, 2018.

\bibitem[Schmidt et~al.(2017)Schmidt, Le~Roux, and Bach]{schmidt2017minimizing}
Schmidt, M., Le~Roux, N., and Bach, F.
\newblock Minimizing finite sums with the stochastic average gradient.
\newblock \emph{Mathematical Programming}, 162\penalty0 (1-2):\penalty0
  83--112, 2017.

\bibitem[Shamir et~al.(2014)Shamir, Srebro, and Zhang]{shamir2014communication}
Shamir, O., Srebro, N., and Zhang, T.
\newblock Communication-efficient distributed optimization using an approximate
  newton-type method.
\newblock In \emph{International conference on machine learning}, pp.\
  1000--1008, 2014.

\bibitem[Shi et~al.(2015)Shi, Ling, Wu, and Yin]{shi2015extra}
Shi, W., Ling, Q., Wu, G., and Yin, W.
\newblock {EXTRA}: An exact first-order algorithm for decentralized consensus
  optimization.
\newblock \emph{SIAM Journal on Optimization}, 25\penalty0 (2):\penalty0
  944--966, 2015.

\bibitem[Smith et~al.(2018)Smith, Forte, Ma, Tak{\'a}{\v{c}}, Jordan, and
  Jaggi]{smith2018cocoa}
Smith, V., Forte, S., Ma, C., Tak{\'a}{\v{c}}, M., Jordan, M., and Jaggi, M.
\newblock {CoCoA}: A general framework for communication-efficient distributed
  optimization.
\newblock \emph{Journal of Machine Learning Research}, 18\penalty0
  (230):\penalty0 1--49, 2018.

\bibitem[Stich(2018)]{stich2018local}
Stich, S.~U.
\newblock Local {SGD} converges fast and communicates little.
\newblock \emph{arXiv preprint arXiv:1805.09767}, 2018.

\bibitem[Stich(2019)]{stich2019unified}
Stich, S.~U.
\newblock Unified optimal analysis of the (stochastic) gradient method.
\newblock \emph{arXiv preprint arXiv:1907.04232}, 2019.

\bibitem[Stich \& Karimireddy(2019)Stich and Karimireddy]{stich2019error}
Stich, S.~U. and Karimireddy, S.~P.
\newblock The error-feedback framework: Better rates for {SGD} with delayed
  gradients and compressed communication.
\newblock \emph{arXiv preprint arXiv:1909.05350}, 2019.

\bibitem[Stich et~al.(2018)Stich, Cordonnier, and Jaggi]{stich2018sparsified}
Stich, S.~U., Cordonnier, J.-B., and Jaggi, M.
\newblock Sparsified {SGD} with memory.
\newblock In \emph{Advances in Neural Information Processing Systems}, pp.\
  4447--4458, 2018.

\bibitem[Suresh et~al.(2017)Suresh, Yu, Kumar, and
  McMahan]{suresh2017distributed}
Suresh, A.~T., Yu, F.~X., Kumar, S., and McMahan, H.~B.
\newblock Distributed mean estimation with limited communication.
\newblock In \emph{Proceedings of the 34th International Conference on Machine
  Learning-Volume 70}, pp.\  3329--3337. JMLR. org, 2017.

\bibitem[Tran-Dinh et~al.(2019)Tran-Dinh, Pham, Phan, and
  Nguyen]{tran2019hybrid}
Tran-Dinh, Q., Pham, N.~H., Phan, D.~T., and Nguyen, L.~M.
\newblock Hybrid stochastic gradient descent algorithms for stochastic
  nonconvex optimization.
\newblock \emph{arXiv preprint arXiv:1905.05920}, 2019.

\bibitem[Vaswani et~al.(2019)Vaswani, Bach, and Schmidt]{vaswani2019fast}
Vaswani, S., Bach, F., and Schmidt, M.
\newblock Fast and faster convergence of sgd for over-parameterized models and
  an accelerated perceptron.
\newblock In \emph{The 22nd International Conference on Artificial Intelligence
  and Statistics}, pp.\  1195--1204, 2019.

\bibitem[Wang et~al.(2019)Wang, Tuor, Salonidis, Leung, Makaya, He, and
  Chan]{wang2019adaptive}
Wang, S., Tuor, T., Salonidis, T., Leung, K.~K., Makaya, C., He, T., and Chan,
  K.
\newblock Adaptive federated learning in resource constrained edge computing
  systems.
\newblock \emph{IEEE Journal on Selected Areas in Communications}, 37\penalty0
  (6):\penalty0 1205--1221, 2019.

\bibitem[Woodworth et~al.(2018)Woodworth, Wang, Smith, McMahan, and
  Srebro]{woodworth2018graph}
Woodworth, B.~E., Wang, J., Smith, A., McMahan, H.~B., and Srebro, N.
\newblock Graph oracle models, lower bounds, and gaps for parallel stochastic
  optimization.
\newblock In \emph{Advances in neural information processing systems}, pp.\
  8496--8506, 2018.

\bibitem[Yang et~al.(2018)Yang, Andrew, Eichner, Sun, Li, Kong, Ramage, and
  Beaufays]{yang2018applied}
Yang, T., Andrew, G., Eichner, H., Sun, H., Li, W., Kong, N., Ramage, D., and
  Beaufays, F.
\newblock Applied federated learning: Improving google keyboard query
  suggestions.
\newblock \emph{arXiv preprint arXiv:1812.02903}, 2018.

\bibitem[Yu et~al.(2019)Yu, Yang, and Zhu]{yu2019parallel}
Yu, H., Yang, S., and Zhu, S.
\newblock Parallel restarted {SGD} with faster convergence and less
  communication: Demystifying why model averaging works for deep learning.
\newblock In \emph{Proceedings of the AAAI Conference on Artificial
  Intelligence}, volume~33, pp.\  5693--5700, 2019.

\bibitem[Yuan \& Li(2019)Yuan and Li]{yuan2019convergence}
Yuan, X.-T. and Li, P.
\newblock On convergence of distributed approximate newton methods:
  Globalization, sharper bounds and beyond.
\newblock \emph{arXiv preprint arXiv:1908.02246}, 2019.

\bibitem[Zhang et~al.(2013{\natexlab{a}})Zhang, Mahdavi, and
  Jin]{zhang2013linear}
Zhang, L., Mahdavi, M., and Jin, R.
\newblock Linear convergence with condition number independent access of full
  gradients.
\newblock In \emph{Advances in Neural Information Processing Systems}, pp.\
  980--988, 2013{\natexlab{a}}.

\bibitem[Zhang et~al.(2015)Zhang, Choromanska, and LeCun]{zhang2015deep}
Zhang, S., Choromanska, A.~E., and LeCun, Y.
\newblock Deep learning with elastic averaging sgd.
\newblock In \emph{Advances in neural information processing systems}, pp.\
  685--693, 2015.

\bibitem[Zhang et~al.(2013{\natexlab{b}})Zhang, Duchi, and
  Wainwright]{zhang2013communication}
Zhang, Y., Duchi, J.~C., and Wainwright, M.~J.
\newblock Communication-efficient algorithms for statistical optimization.
\newblock \emph{The Journal of Machine Learning Research}, 14\penalty0
  (1):\penalty0 3321--3363, 2013{\natexlab{b}}.

\bibitem[Zhao et~al.(2018)Zhao, Li, Lai, Suda, Civin, and
  Chandra]{zhao2018federated}
Zhao, Y., Li, M., Lai, L., Suda, N., Civin, D., and Chandra, V.
\newblock Federated learning with non-iid data.
\newblock \emph{arXiv preprint arXiv:1806.00582}, 2018.

\bibitem[Zinkevich et~al.(2010)Zinkevich, Weimer, Li, and
  Smola]{zinkevich2010parallelized}
Zinkevich, M., Weimer, M., Li, L., and Smola, A.~J.
\newblock Parallelized stochastic gradient descent.
\newblock In \emph{Advances in neural information processing systems}, pp.\
  2595--2603, 2010.

\end{thebibliography}
  \bibliographystyle{icml2020}
}

\newpage
\onecolumn
\part*{Appendix}
\appendix


\section{Related work and significance}\label{appsec:related}
\paragraph{Federated learning.} As stated earlier, federated learning involves learning a centralized model from distributed client data. This centralized model benefits from all client data and can often
result in a beneficial performance e.g. in
including next word prediction \citep{hard2018federated,
  yang2018applied}, emoji prediction \citep{ramaswamy2019federated},
decoder models \citep{chen2019federatedb}, vocabulary estimation
\citep{chen2019federated}, low latency vehicle-to-vehicle
communication \citep{samarakoon2018federated}, and predictive models
in health \citep{brisimi2018federated}. Nevertheless, federated
learning raises several types of issues and has been the topic of
multiple research efforts studying the issues of generalization and fairness \citep{mohri2019agnostic, li2019fair}, the design of more efficient communication
strategies \citep{konevcny2016federated,konecny2016federated2,
  suresh2017distributed,stich2018sparsified,karimireddy2019error,
  basu2019qsparse}, the study of lower bounds
\citep{woodworth2018graph}, differential privacy
guarantees \citep{agarwal2018cpsgd},
security \citep{bonawitz2017practical}, etc. 
We refer to \citet{kairouz2019advances} for an in-depth survey of this area.

\paragraph{Convergence of \fedavg}  For identical clients, \fedavg\ coincides with
parallel SGD analyzed by \cite{zinkevich2010parallelized} who proved asymptotic convergence. \citet{stich2018local} and, more
recently \citet{stich2019error,patel2019communication,khaled2020tighter},
gave a sharper analysis of the same method, under the name of local
SGD, also for identical functions. However, there still remains a gap
between their upper bounds and the lower bound of
\citet{woodworth2018graph}. The analysis of \fedavg\ for heterogeneous clients is more delicate
due to the afore-mentioned client-drift, first empirically observed by \citet{zhao2018federated}. Several
analyses bound this drift by assuming bounded gradients
\citep{wang2019adaptive,yu2019parallel}, or view it as additional noise
\citep{khaled2020tighter}, or assume that the client optima are $\epsilon$-close \citep{li2018fedprox,haddadpour2019convergence}. In a concurrent work, \citep{liang2019variance} propose to use variance reduction to deal with client heterogeneity but still show rates slower than SGD. We summarize the communication complexities of different methods for heterogeneous clients in Table~\ref{tab:results}.

\paragraph{Variance reduction.} The use of \emph{control variates} is a classical technique to reduce
variance in Monte Carlo sampling methods
(cf.\ \cite{glasserman2013monte}). In optimization, they were used for
finite-sum minimization by SVRG
\citep{johnson2013accelerating,zhang2013linear} and then in SAGA
\citep{defazio2014saga} to simplify the linearly convergent method SAG
\citep{schmidt2017minimizing}. Numerous variations and extensions of
the technique are studied in \citep{hanzely2019one}. Starting from
\citep{reddi2016stochastic}, control variates have also frequently
been used to reduce variance in finite-sum non-convex settings
\citep{reddi2016fast,nguyen2018inexact,fang2018spider,tran2019hybrid}.
Further, they are used to obtain
linearly converging decentralized algorithms under the guise of `gradient-tracking'
in \citep{shi2015extra,nedich2016geometrically} and for gradient
compression as `compressed-differences' in
\citep{mishchenko2019distributed}. Our method can be viewed as seeking
to remove the `client-variance' in the gradients across the clients, though
there still remains additional stochasticity as in \citep{kulunchakov2019estimate}, which is important in deep learning \citep{defazio2019ineffectiveness}.

\paragraph{Distributed optimization.} The problem of client-drift we described is a common phenomenon in
distributed optimization. In fact, classic techniques such as ADMM
mitigate this drift, though they are not applicable in federated
learning. For well structured convex problems, CoCoA uses the dual
variable as the control variates, enabling flexible distributed
methods \citep{smith2018cocoa}. DANE by \cite{shamir2014communication}
obtain a closely related primal only algorithm, which was later
accelerated by \citet{reddi2016aide} and recently extended to federated learning \citep{li2020feddane}. \covfefe\ can be
viewed as an improved version of DANE where a fixed number of
(stochastic) gradient steps are used in place of a proximal point update. In a similar spirit, distributed variance reduction techniques have been proposed for the finite-sum case \citep{lee2015distributed,konecny2016federated2,cen2019convergence}. However, these methods are restricted to finite-sums and are not applicable to the stochastic setting studied here.

\newpage

\section{Technicalities}
We examine some additional definitions and introduce some technical lemmas.

\subsection{Additional definitions}
We make precise a few definitions and explain some of their implications.
\begin{enumerate}
    \myitem{A3}
      \label{asm:strong-convexity} $f_i$ is \textbf{$\mu$-convex} for $\mu \geq 0$ and satisfies:
      \begin{equation*}\label{eqn:strong-convexity}
  \inp{\nabla f_i(\xx)}{\yy - \xx} \leq - \rbr*{f_i(\xx) - f_i(\yy) + \frac{\mu}{2}\norm{\xx - \yy}^2}\,, \text{ for any } i, \xx, \yy\,.
      \end{equation*}
      Here, we allow that $\mu = 0$ (we refer to this case as the general convex case as opposed to strongly convex). It is also possible to generalize all proofs here to the weaker notion of PL-strong convexity \citep{karimi2016linear}.
    \myitem{A4}\label{asm:variance} $g_i(\xx) := \nabla f_i(\xx;\zeta_i)$ is unbiased stochastic gradient of $f_i$ with \textbf{bounded variance}
      \begin{equation*}\label{eqn:variance}
  \expect_{\zeta_i}[\norm{g_i(\xx) - \nabla f_i(\xx)}^2] \leq \sigma^2\,, \text{ for any } i, \xx\,.
      \end{equation*}
      Note that \eqref{asm:variance} only bounds the variance within the same client, but not the variance across the clients.
    \myitem{A5}\label{asm:smoothness} $\{f_i\}$ are \textbf{$\beta$-smooth} and satisfy:
      \begin{equation}
        \label{eqn:lip-grad}
        \norm{\nabla f_i(\xx) - \nabla f_i(\yy)} \leq \beta \norm{\xx - \yy}\,, \text{ for any } i, \xx, \yy\,.
      \end{equation}
    \end{enumerate}
    The assumption \eqref{asm:smoothness} also implies the following quadratic upper bound on $f_i$
    \begin{equation}\label{eqn:quad-upper}
        f_i(\yy) \leq f_i(\xx) + \inp{\nabla f_i(\xx)}{\yy - \xx} + \frac{\beta}{2}\norm{\yy - \xx}^2\,.
    \end{equation}
    If additionally the function $\{f_i\}$ are convex and $\xx^\star$ is an optimum of $f$, \eqref{asm:smoothness} implies (via \citet{nesterov2018lectures}, Theorem 2.1.5)
    \begin{align}\label{eqn:smoothness}
      \begin{split}
        \frac{1}{2\beta N}\sum_{i=1}^N\norm{\nabla f_i(\xx) - \nabla
        f_i(\xx^\star)}^2 \leq f(\xx) - f^\star\,.
      \end{split}
      \end{align}
      Further, if $f_i$ is twice-differentiable, \eqref{asm:smoothness} implies that $\norm{\nabla^2 f_i(\xx)} \leq \beta$ for any $\xx$.

\subsection{Some technical lemmas}

Now we cover some technical lemmas which are useful for
computations later on.
The two lemmas below are useful to unroll recursions and derive convergence
rates.
The first one is a slightly improved (and simplified) version of \citep[Theorem~2]{stich2019unified}. It is straightforward to remove the additional logarithmic terms if we use a varying step-size \citep[Lemma~13]{kulunchakov2019estimate}.

\begin{lemma}[linear convergence rate]
\label{lem:constant}
For every non-negative sequence $\{d_{r-1}\}_{r \geq 1}$ and any
parameters $\mu > 0$, $\eta_{\max} \in (0, 1/\mu]$, $c \geq 0$,
$R \geq \frac{1}{2\eta_{\max} \mu}$, there exists a constant step-size
$\eta \leq \eta_{\max}$ and weights $w_r:=(1-\mu\eta)^{1-r}$ such that
for $W_R := \sum_{r=1}^{R+1} w_r$,
\begin{align*}
    \Psi_R := \frac{1}{W_R}\sum_{r=1}^{R+1} \left( \frac{w_r}{\eta} \left(1-\mu \eta \right) d_{r-1} - \frac{w_r}{\eta} d_{r} + c \eta w_r \right) = \tilde \cO \left( \mu d_0 \exp\rbr*{- \mu\eta_{\max}  R } + \frac{c}{\mu R}  \right)\,.
\end{align*}
\end{lemma}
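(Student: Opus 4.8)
The plan is to exploit the particular geometric form of the weights $w_r = (1-\mu\eta)^{1-r}$ so that the first two terms of $\Psi_R$ telescope, and then to optimize over the free step-size $\eta$. The starting observation is the identity $w_r(1-\mu\eta) = w_{r-1}$, immediate from the definition. Substituting it into the summand rewrites $\frac{w_r}{\eta}(1-\mu\eta)d_{r-1}$ as $\frac{w_{r-1}}{\eta}d_{r-1}$, so that
\[
W_R\Psi_R = \sum_{r=1}^{R+1}\left(\frac{w_{r-1}}{\eta}d_{r-1} - \frac{w_r}{\eta}d_r\right) + c\eta\sum_{r=1}^{R+1}w_r = \frac{w_0}{\eta}d_0 - \frac{w_{R+1}}{\eta}d_{R+1} + c\eta W_R\,.
\]
Since $d_{R+1}\geq 0$, I can drop the (nonpositive) middle term and obtain the upper bound $\Psi_R \leq \frac{w_0 d_0}{\eta W_R} + c\eta$.

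The delicate step is to lower bound $W_R$ sharply enough to recover the factor $\mu$ in front of $d_0$: the crude estimate $W_R \geq w_{R+1}$ only yields $\frac{d_0}{\eta}$, which is off by $\frac{1}{\mu\eta}\in[1,2R]$ and would destroy the rate. I would instead sum the geometric series exactly, $W_R = \frac{(1-\mu\eta)^{-R} - (1-\mu\eta)}{\mu\eta}$, and use the hypothesis $R \geq \frac{1}{2\eta_{\max}\mu}$ together with $\eta\leq\eta_{\max}$ to guarantee $\mu\eta R \geq \tfrac12$ in both step-size regimes below. This gives $1 - (1-\mu\eta)^{R+1} \geq 1 - e^{-1/2} =: c_1 > 0$, hence $W_R \geq \frac{c_1(1-\mu\eta)^{-R}}{\mu\eta}$. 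Plugging this in and using $(1-\mu\eta)^{R+1}\leq e^{-\mu\eta R}$ yields the clean intermediate bound $\Psi_R \leq \frac{\mu d_0}{c_1}e^{-\mu\eta R} + c\eta$, now valid for every admissible $\eta$.

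Finally I would tune the step-size by setting $\eta = \min\{\eta_{\max},\ \tfrac{1}{\mu R}\ln(\max\{2,\ \mu^2 d_0 R/c\})\}$ and splitting into two cases. If the minimum is attained at $\eta_{\max}$, the exponential term is exactly $\frac{\mu d_0}{c_1}e^{-\mu\eta_{\max}R}$ and $c\eta \leq c\eta_{\max} \leq \tilde\cO(c/(\mu R))$; if it is attained at the logarithmic value, then $e^{-\mu\eta R} = (\max\{2,\ \mu^2 d_0 R/c\})^{-1}$ forces $\mu d_0 e^{-\mu\eta R} = \cO(c/(\mu R))$, and again $c\eta = \tilde\cO(c/(\mu R))$. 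In either case $\Psi_R = \tilde\cO(\mu d_0 e^{-\mu\eta_{\max}R} + c/(\mu R))$, as claimed, with the only logarithmic overhead being the $\ln(\cdot)$ factor from this choice of $\eta$, precisely what $\tilde\cO$ absorbs (and what a decreasing step-size schedule would remove). The main obstacle is the second paragraph—extracting the correct $\mu$ dependence from $W_R$—since the naive telescoping bound silently loses it; everything else is routine once the weights are chosen to make the recursion collapse.
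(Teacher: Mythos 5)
Your proof is correct and follows essentially the same route as the paper's: telescoping via the identity $w_r(1-\mu\eta)=w_{r-1}$, lower-bounding $\eta W_R$ by $\Omega\bigl((1-\mu\eta)^{-R}/\mu\bigr)$ using $\mu\eta R \geq \tfrac12$, and then tuning $\eta = \min\bigl\{\eta_{\max}, \tfrac{\log(\cdot)}{\mu R}\bigr\}$ with the same two-case split. The only differences are cosmetic constants (e.g. $1-e^{-1/2}$ in place of the paper's $\tfrac13$, and $\max\{2,\cdot\}$ in place of $\max\{1,\cdot\}$ inside the logarithm).
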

\begin{proof}
  By substituting the value of $w_r$, we observe that we end up with a
  telescoping sum and estimate
    \begin{align*}
     \Psi_R = \frac{1}{\eta W_R} \sum_{r=1}^{R+1} \left(w_{r-1}d_{r-1} - w_{r} d_{r} \right) + \frac{c\eta}{W_R}\sum_{r=1}^{R+1} w_r \leq \frac{d_0}{\eta W_R} +  c \eta \,.
    \end{align*}
    When $R \geq \frac{1}{2\mu\eta}$, $(1 - \mu \eta)^R \leq \exp(-\mu\eta R) \leq \frac{2}{3}$. For such an $R$, we can lower bound $\eta W_R$ using
    \[
\eta W_R = \eta (1 - \mu \eta)^{-R} \sum_{r=0}^{R} (1 - \mu \eta)^r = \eta (1 - \mu \eta)^{-R} \frac{1 - (1 - \mu \eta)^R}{\mu \eta} \geq (1 - \mu \eta)^{-R} \frac{1}{3\mu}\,.
    \]
    This proves that for all $R \geq \frac{1}{2\mu \eta}$,
    \[
 \Psi_R \leq 3\mu d_0 (1 - \mu \eta)^{R}  +  c \eta \leq 3 \mu d_o \exp(-\mu \eta R) + c\eta\,.
    \]
    The lemma now follows by carefully tuning $\eta$. Consider the following two cases depending on the magnitude of $R$ and $\eta_{\max}$:
    \begin{itemize}
\item Suppose $\frac{1}{2\mu R} \leq \eta_{\max} \leq \frac{\log(\max(1,\mu^2 R d_0/ c))}{\mu R}$. Then we can choose $\eta = \eta_{\max}$,
\[
    \Psi_R \leq 3\mu d_0 \exp\left[-\mu \eta_{\max} R \right] + c \eta_{\max} \leq 3\mu d_0 \exp\left[-\mu \eta_{\max} R \right] + \tilde\cO\rbr[\bigg]{\frac{c}{\mu R}}\,.
\]
\item Instead if $\eta_{\max} > \frac{\log(\max(1,\mu^2 R d_0/ c))}{\mu R}$, we pick $\eta = \frac{\log(\max(1,\mu^2 R d_0/ c))}{\mu R}$ to claim that
\[
    \Psi_R \leq 3 \mu d_0 \exp\left[-\log(\max(1,\mu^2 R d_0/ c))\right] + \tilde\cO\rbr[\bigg]{\frac{c}{\mu R}} \leq \tilde\cO\rbr[\bigg]{\frac{c}{\mu R}} \,.
\]
    \end{itemize}
    \end{proof}

    The next lemma is an extension of \citep[Lemma 13]{stich2019error}, \citep[Lemma~13]{kulunchakov2019estimate} and is useful to derive convergence rates for general convex functions ($\mu=0$) and non-convex functions.
\begin{lemma}[sub-linear convergence rate]\label{lemma:general}
For every non-negative sequence $\{d_{r-1}\}_{r \geq 1}$ and any
parameters $\eta_{\max}  \geq 0$, $c \geq 0$,
$R \geq 0$, there exists a constant step-size
$\eta \leq \eta_{\max}$ and weights $w_r  = 1$ such that,
\begin{align*}
    \Psi_R := \frac{1}{R+1}\sum_{r=1}^{R+1} \left( \frac{d_{r-1}}{\eta}  - \frac{d_r}{\eta} + c_1 \eta +c_2 \eta^2\right) \leq  \frac{d_0}{\eta_{\max}(R+1)} + \frac{2\sqrt{c_1 d_0}}{\sqrt{R +1}} + 2\rbr[\bigg]{\frac{d_0}{R+1}}^{\frac{2}{3}} c_2^{\frac{1}{3}}\,.
\end{align*}
\end{lemma}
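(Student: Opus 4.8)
The plan is to reduce the statement to a one-dimensional step-size tuning problem and then choose $\eta$ to balance the resulting terms. First I would exploit the fact that all weights equal $1$, so the difference terms telescope: $\sum_{r=1}^{R+1}\bigl(\tfrac{d_{r-1}}{\eta}-\tfrac{d_r}{\eta}\bigr)=\tfrac{d_0-d_{R+1}}{\eta}\le \tfrac{d_0}{\eta}$ using $d_{R+1}\ge 0$. The remaining summands $c_1\eta+c_2\eta^2$ are constant in $r$ and contribute $(R+1)(c_1\eta+c_2\eta^2)$. Dividing the whole sum by $R+1$ gives the clean bound
\[
  \Psi_R \;\le\; \frac{d_0}{\eta(R+1)} + c_1\eta + c_2\eta^2,
\]
valid for every admissible $\eta\in(0,\eta_{\max}]$. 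From here the lemma is purely a matter of choosing $\eta$ well.

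Next I would tune the step-size. The first term is decreasing in $\eta$ while the latter two are increasing, so I would take $\eta$ to be the smallest of three natural candidates: the constraint $\eta_{\max}$; the value $\bigl(\tfrac{d_0}{c_1(R+1)}\bigr)^{1/2}$ that equates the first and second terms; and the value $\bigl(\tfrac{d_0}{c_2(R+1)}\bigr)^{1/3}$ that equates the first and third. Setting $\eta=\min$ of these three immediately ensures $\eta\le\eta_{\max}$, so the chosen step-size is admissible, which is what the statement requires.

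The bound then follows from two elementary estimates. For the decreasing term I would use that the reciprocal of a minimum is the maximum of the reciprocals, hence $\tfrac{1}{\eta}\le \tfrac{1}{\eta_{\max}}+\bigl(\tfrac{c_1(R+1)}{d_0}\bigr)^{1/2}+\bigl(\tfrac{c_2(R+1)}{d_0}\bigr)^{1/3}$; multiplying by $\tfrac{d_0}{R+1}$ produces exactly the three terms $\tfrac{d_0}{\eta_{\max}(R+1)}$, $\tfrac{\sqrt{c_1 d_0}}{\sqrt{R+1}}$, and $\bigl(\tfrac{d_0}{R+1}\bigr)^{2/3}c_2^{1/3}$. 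For the increasing terms, $\eta\le\bigl(\tfrac{d_0}{c_1(R+1)}\bigr)^{1/2}$ gives $c_1\eta\le \tfrac{\sqrt{c_1 d_0}}{\sqrt{R+1}}$, and $\eta\le\bigl(\tfrac{d_0}{c_2(R+1)}\bigr)^{1/3}$ gives $c_2\eta^2\le \bigl(\tfrac{d_0}{R+1}\bigr)^{2/3}c_2^{1/3}$. Summing these five contributions reproduces the claimed right-hand side, with the factors of $2$ arising because the second and third terms each appear twice.

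Since this is a routine balancing argument, I do not expect a genuine obstacle; the only care needed is in the degenerate cases where a candidate step-size is ill-defined. If $c_1=0$ or $c_2=0$, the corresponding candidate is $+\infty$ and simply drops out of the minimum (and the associated term on the right-hand side vanishes), while if $d_0=0$ the telescoped term disappears and one takes $\eta=\eta_{\max}$, the right-hand side being nonnegative. I would also record the trivial inequality $1/\min\{a,b,c\}\le 1/a+1/b+1/c$ used above, which holds because the reciprocal of the minimum equals the largest reciprocal. With these caveats handled, the five-term sum yields the stated bound.
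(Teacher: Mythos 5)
Your proof is correct and follows essentially the same route as the paper: telescope the sum to get $\Psi_R \le \frac{d_0}{\eta(R+1)} + c_1\eta + c_2\eta^2$, then balance via the candidate step-sizes $\eta_{\max}$, $\bigl(\tfrac{d_0}{c_1(R+1)}\bigr)^{1/2}$, and $\bigl(\tfrac{d_0}{c_2(R+1)}\bigr)^{1/3}$. The only difference is organizational: the paper splits into two cases according to whether $\eta_{\max}$ is the binding constraint, while you take $\eta$ as the minimum of all three candidates at once and use $1/\min \le$ the sum of reciprocals, which yields the identical bound (with the same factors of $2$) in a single stroke.
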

\begin{proof} Unrolling the sum, we can simplify
\[
    \Psi_R \leq \frac{d_0}{\eta (R+1)} + c_1 \eta + c_2 \eta^2\,.
\]
Similar to the strongly convex case (Lemma~\ref{lem:constant}), we distinguish the following cases:
\begin{itemize}
    \item When $R+1 \leq \frac{d_0}{c_1 \eta_{\max}^2}$, and $R+1 \leq \frac{d_0}{c_2 \eta_{\max}^3}$ we pick $\eta = \eta_{\max}$ to claim
    \[
        \Psi_R \leq \frac{d_0}{\eta_{\max} (R+1)} + c_1 \eta_{\max} + c_2 \eta_{\max}^2 \leq \frac{d_0}{\eta_{\max} (R+1)} + \frac{\sqrt{c_1 d_0}}{\sqrt{R +1}}+ \rbr[\bigg]{\frac{d_0}{R+1}}^{\frac{2}{3}} c_2^{\frac{1}{3}} \,.
    \]
    \item In the other case, we have $\eta_{\max}^2 \geq \frac{d_0}{c_1(R+1)}$ or $\eta_{\max}^3 \geq \frac{d_0}{c_2(R+1)}$. We choose $\eta = \min\cbr[\bigg]{\sqrt{\frac{d_0}{c_1(R+1)}}, \sqrt[3]{\frac{d_0}{c_2(R+1)}}}$ to prove
    \[
        \Psi_R \leq \frac{d_0}{\eta (R+1)} + c \eta = \frac{2\sqrt{c_1 d_0}}{\sqrt{R +1}} + 2\sqrt[3]{\frac{d_0^2 c_2}{(R+1)^2}} \,.\vspace{-5mm}
    \]
\end{itemize}
\end{proof}

    Next, we state a relaxed triangle inequality true for the squared
    $\ell_2$ norm.
\begin{lemma}[relaxed triangle inequality]\label{lem:norm-sum}
    Let $\{\vv_1,\dots,\vv_\tau\}$ be $\tau$ vectors in $\R^d$. Then the following are true:
    \begin{enumerate}
\item $\norm{\vv_i + \vv_j}^2 \leq (1 + a)\norm{\vv_i}^2 + (1 + \tfrac{1}{a})\norm{\vv_j}^2$ for any $a >0$, and
\item $\norm{\sum_{i=1}^\tau \vv_i}^2 \leq \tau \sum_{i=1}^\tau\norm{\vv_i}^2$.
    \end{enumerate}
\end{lemma}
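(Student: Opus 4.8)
The plan is to prove the two parts separately, each by an elementary expansion that invokes only a single textbook inequality. There is no substantive obstacle here, since this is a standard auxiliary tool; the only care required is bookkeeping of constants in the first part.

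For the first inequality, I would expand the left-hand side as $\norm{\vv_i + \vv_j}^2 = \norm{\vv_i}^2 + 2\inp{\vv_i}{\vv_j} + \norm{\vv_j}^2$ and then control the cross term via Young's inequality. The key step is the bound $2\inp{\vv_i}{\vv_j} \le a\norm{\vv_i}^2 + \tfrac{1}{a}\norm{\vv_j}^2$, valid for every $a>0$, which follows immediately by expanding the nonnegative quantity $\norm{\sqrt{a}\,\vv_i - \tfrac{1}{\sqrt{a}}\vv_j}^2 \ge 0$. Substituting this bound back into the expansion yields exactly $(1+a)\norm{\vv_i}^2 + (1+\tfrac{1}{a})\norm{\vv_j}^2$, as claimed.

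For the second inequality, I would appeal to convexity of the map $\xx \mapsto \norm{\xx}^2$ (equivalently Jensen's inequality, or Cauchy--Schwarz). Viewing $\tfrac{1}{\tau}\sum_{i=1}^\tau \vv_i$ as a uniform average and applying convexity gives $\norm{\tfrac{1}{\tau}\sum_{i=1}^\tau \vv_i}^2 \le \tfrac{1}{\tau}\sum_{i=1}^\tau \norm{\vv_i}^2$; multiplying through by $\tau^2$ delivers the stated bound. Alternatively, Cauchy--Schwarz against the all-ones vector gives $\norm{\sum_{i=1}^\tau \vv_i} \le \sqrt{\tau}\,\bigl(\sum_{i=1}^\tau \norm{\vv_i}^2\bigr)^{1/2}$, and squaring finishes. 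Since both parts reduce to one application of a classical inequality, I anticipate no difficulty beyond matching the weights $(1+a)$ and $(1+\tfrac{1}{a})$ precisely in part one.
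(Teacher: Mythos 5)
Your proof is correct and follows essentially the same route as the paper: your Young's-inequality step, obtained by expanding $\norm{\sqrt{a}\,\vv_i - \tfrac{1}{\sqrt{a}}\vv_j}^2 \geq 0$, is exactly the nonnegative completing-the-square term the paper subtracts in its identity for part one, and part two is the same Jensen/convexity argument. No gaps; the weights $(1+a)$ and $(1+\tfrac{1}{a})$ come out precisely as you describe.
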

\begin{proof}
The proof of the first statement for any $a > 0$ follows from the identity:
\[
    \norm{\vv_i + \vv_j}^2 = (1 + a)\norm{\vv_i}^2 + (1 + \tfrac{1}{a})\norm{\vv_j}^2 - \norm{\sqrt{a} \vv_i + \tfrac{1}{\sqrt{a}}\vv_j}^2\,.
\]
For the second inequality, we use the convexity of
$\xx \rightarrow \norm{\xx}^2$ and Jensen's inequality
\[
     \norm[\bigg]{\frac{1}{\tau}\sum_{i=1}^\tau \vv_i }^2 \leq \frac{1}{\tau}\sum_{i=1}^\tau\norm[\big]{ \vv_i }^2\,.
\]
\end{proof}

Next we state an elementary lemma about expectations of norms of
random vectors.
\begin{lemma}[separating mean and variance]\label{lem:independent}
Let $\{\Xi_1,\dots,\Xi_{\tau}\}$ be $\tau$ random variables in $\R^d$ which are not necessarily independent. First suppose that their mean is $\E[\Xi_i] = \xi_i$ and variance is bounded as $\E[\norm{\Xi_i - \xi_i}^2]\leq \sigma^2$. Then, the following holds
\[
    \E[\norm{\sum_{i=1}^\tau \Xi_i}^2] \leq \norm{\sum_{i=1}^\tau \xi_i}^2+ \tau^2 \sigma^2\,.
\]
Now instead suppose that their \emph{conditional mean} is $\E[\Xi_i | \Xi_{i-1}, \dots \Xi_{1}] = \xi_i$ i.e. the variables $\{\Xi_i - \xi_i\}$ form a martingale difference sequence, and the variance is bounded by $\E[\norm{\Xi_i - \xi_i}^2]\leq \sigma^2$ as before. Then we can show the tighter bound
\[
    \E[\norm{\sum_{i=1}^\tau \Xi_i}^2] \leq 2\norm{\sum_{i=1}^\tau \xi_i}^2+ 2\tau \sigma^2\,.
\]
\end{lemma}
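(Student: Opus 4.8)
The plan is to decompose each random vector into its mean and a fluctuation, writing $\Xi_i = \xi_i + Z_i$ with $Z_i := \Xi_i - \xi_i$, then expand the squared norm of the sum and exploit that the cross terms involving the fluctuations vanish in expectation. The two parts of the lemma differ only in how much cancellation the sum of fluctuations enjoys: in the first part the $Z_i$ may be arbitrarily correlated, so only a crude bound is available, whereas in the second part the martingale-difference structure makes the $Z_i$ orthogonal in expectation, which is exactly what converts the $\tau^2$ factor into $\tau$.

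For the first part I would use that $\sum_i \xi_i$ is deterministic and expand
\[
\E\sbr[\Big]{\norm[\Big]{\tsum_{i=1}^\tau \Xi_i}^2} = \norm[\Big]{\tsum_{i=1}^\tau \xi_i}^2 + 2\inp[\Big]{\tsum_i \xi_i}{\E\sbr[\big]{\tsum_i Z_i}} + \E\sbr[\Big]{\norm[\Big]{\tsum_i Z_i}^2}\,.
\]
The middle term vanishes since $\E[Z_i] = \0$. For the last term, having no control over the correlations among the $Z_i$, I would invoke the crude relaxed triangle inequality (Lemma~\ref{lem:norm-sum}, part 2) to get $\E[\norm{\sum_i Z_i}^2] \le \tau \sum_i \E[\norm{Z_i}^2] \le \tau^2\sigma^2$, which gives the claimed bound.

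For the second part $\sum_i \xi_i$ is now random, so I would first split with the relaxed triangle inequality (Lemma~\ref{lem:norm-sum}, part 1, with $a=1$) to obtain $\E[\norm{\sum_i \Xi_i}^2] \le 2\E[\norm{\sum_i \xi_i}^2] + 2\E[\norm{\sum_i Z_i}^2]$. The heart of the argument is the fluctuation term: expanding $\E[\norm{\sum_i Z_i}^2] = \sum_i \E[\norm{Z_i}^2] + 2\sum_{i<j}\E[\inp{Z_i}{Z_j}]$, I would apply the tower rule, conditioning $Z_j$ on the past $\{\Xi_1,\dots,\Xi_{j-1}\}$. Since $Z_i$ is measurable with respect to that $\sigma$-algebra (because $i<j$) while $\E[Z_j \mid \Xi_{j-1},\dots,\Xi_1] = \0$ by the martingale-difference assumption, every cross term vanishes. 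Hence $\E[\norm{\sum_i Z_i}^2] = \sum_i \E[\norm{Z_i}^2] \le \tau\sigma^2$, yielding the stated bound (with $\norm{\sum_i \xi_i}^2$ understood under expectation, as the $\xi_i$ are random here).

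I expect the only genuinely delicate step to be the vanishing of the cross terms in the martingale case, which requires carefully invoking the tower property together with the measurability of $Z_i$ relative to the conditioning $\sigma$-algebra of $Z_j$ for $i<j$; the remainder is routine use of the already-established relaxed triangle inequalities and linearity of expectation.
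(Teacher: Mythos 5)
Your proposal is correct and follows essentially the same route as the paper's proof: the bias--variance decomposition plus the crude relaxed triangle inequality for the first part, and the factor-$2$ split followed by exact cancellation of the martingale cross terms for the second. Your explicit tower-property argument just spells out what the paper asserts in one line, and your remark that $\norm{\sum_i \xi_i}^2$ must sit under an expectation in the martingale case (since the $\xi_i$ are random there) is a fair, minor precision.
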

\begin{proof}
For any random variable $X$, $\E[X^2] = (\E[X - \E[X]])^2 + (\E[X])^2$ implying
\[
     \E[\norm{\sum_{i=1}^\tau \Xi_i}^2] = \norm{\sum_{i=1}^\tau \xi_i}^2 +  \E[\norm{\sum_{i=1}^\tau \Xi_i - \xi_i}^2]  \,.
\]
Expanding the above expression using relaxed triangle inequality (Lemma~\ref{lem:norm-sum}) proves the first claim:
\[
   \E[\norm{\sum_{i=1}^\tau \Xi_i - \xi_i}^2]  \leq \tau\sum_{i=1}^\tau \E[\norm{ \Xi_i - \xi_i}^2] \leq \tau^2\sigma^2\,.
\]
For the second statement, $\xi_i$ is not deterministic and depends on $\Xi_{i-1}, \dots, \Xi_1$. Hence we have to resort to the cruder relaxed triangle inequality to claim
\[
    \E[\norm{\sum_{i=1}^\tau \Xi_i}^2] \leq 2\norm{\sum_{i=1}^\tau \xi_i}^2 +  2\E[\norm{\sum_{i=1}^\tau \Xi_i - \xi_i}^2] 
\]
and then use the tighter expansion of the second term:
\[
   \E[\norm{\sum_{i=1}^\tau \Xi_i - \xi_i}^2]  = \sum_{i, j} \E\sbr*{(\Xi_i - \xi_i)^\top(\Xi_j - \xi_j)} = \sum_{i} \E\sbr*{\norm{\Xi_i - \xi_i}^2} \leq \tau \sigma^2\,.
\]
The cross terms in the above expression have zero mean since $\{\Xi_i - \xi_i\}$ form a martingale difference sequence.
\end{proof}


\newpage
\section{Properties of convex functions}

We now study two lemmas which hold for any smooth and strongly-convex functions. The first is a generalization of the standard strong convexity inequality \eqref{eqn:strong-convexity}, but can handle gradients computed at slightly perturbed points.
 \begin{lemma}[perturbed strong convexity]\label{lem:magic}
    The following holds for any $\beta$-smooth and $\mu$-strongly convex function $h$, and any $\xx, \yy, \zz$ in the domain of $h$:
    \[
\inp{\nabla h(\xx)}{\zz - \yy} \geq h(\zz) - h(\yy) +\frac{\mu}{4}\norm{\yy - \zz}^2  - \beta \norm{\zz - \xx}^2\,.
    \]
\end{lemma}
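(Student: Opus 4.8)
The plan is to split the inner product by inserting the anchor point $\xx$, namely
\[
\inp{\nabla h(\xx)}{\zz - \yy} = \inp{\nabla h(\xx)}{\zz - \xx} + \inp{\nabla h(\xx)}{\xx - \yy}\,,
\]
and then bound each of the two pieces separately using a different property of $h$. For the second term I would use $\mu$-strong convexity in the form of \eqref{eqn:strong-convexity} (applied with the roles of the points chosen so that the gradient is evaluated at $\xx$), which yields
\[
\inp{\nabla h(\xx)}{\xx - \yy} \geq h(\xx) - h(\yy) + \tfrac{\mu}{2}\norm{\xx - \yy}^2\,.
\]
For the first term I would use the quadratic upper bound \eqref{eqn:quad-upper} coming from $\beta$-smoothness, rearranged to lower-bound the linear term:
\[
\inp{\nabla h(\xx)}{\zz - \xx} \geq h(\zz) - h(\xx) - \tfrac{\beta}{2}\norm{\zz - \xx}^2\,.
\]

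Adding these two estimates, the $\pm h(\xx)$ terms cancel and I obtain the intermediate bound
\[
\inp{\nabla h(\xx)}{\zz - \yy} \geq h(\zz) - h(\yy) + \tfrac{\mu}{2}\norm{\xx - \yy}^2 - \tfrac{\beta}{2}\norm{\zz - \xx}^2\,.
\]
This already has the right function-value difference $h(\zz) - h(\yy)$; what remains is purely to massage the quadratic terms into the desired shape $\tfrac{\mu}{4}\norm{\yy - \zz}^2 - \beta\norm{\zz - \xx}^2$.

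The remaining step, which I expect to be the only slightly delicate point, is to show
\[
\tfrac{\mu}{2}\norm{\xx - \yy}^2 - \tfrac{\beta}{2}\norm{\zz - \xx}^2 \geq \tfrac{\mu}{4}\norm{\yy - \zz}^2 - \beta\norm{\zz - \xx}^2\,.
\]
Here I would apply the relaxed triangle inequality of Lemma~\ref{lem:norm-sum} to the target term, $\norm{\yy - \zz}^2 \leq 2\norm{\yy - \xx}^2 + 2\norm{\xx - \zz}^2$, so that $\tfrac{\mu}{4}\norm{\yy - \zz}^2 \leq \tfrac{\mu}{2}\norm{\yy - \xx}^2 + \tfrac{\mu}{2}\norm{\xx - \zz}^2$. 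After cancelling the matching $\tfrac{\mu}{2}\norm{\xx - \yy}^2$ terms, the inequality reduces to $\tfrac{\beta}{2}\norm{\zz - \xx}^2 \geq \tfrac{\mu}{2}\norm{\zz - \xx}^2$, which holds because every $\mu$-strongly convex $\beta$-smooth function satisfies $\mu \leq \beta$. This is precisely why the statement is stated with the looser constants $\tfrac{\mu}{4}$ and $\beta$ rather than the sharper $\tfrac{\mu}{2}$ and $\tfrac{\beta}{2}$: the extra slack is exactly what is consumed by the triangle inequality. Chaining the two displays together gives the claimed bound and completes the proof.
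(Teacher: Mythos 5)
Your proposal is correct and follows essentially the same route as the paper's proof: the same split of the inner product at the anchor $\xx$, the same smoothness and strong-convexity bounds on the two pieces, the same relaxed triangle inequality applied to $\norm{\yy-\zz}^2$ (the paper just writes it in the rearranged form $\tfrac{\mu}{2}\norm{\yy-\xx}^2 \geq \tfrac{\mu}{4}\norm{\yy-\zz}^2 - \tfrac{\mu}{2}\norm{\xx-\zz}^2$), and the same final absorption using $\mu \leq \beta$ (the paper writes this as $\tfrac{\beta+\mu}{2} \leq \beta$). Your closing remark correctly identifies why the lemma is stated with the slack constants $\tfrac{\mu}{4}$ and $\beta$.
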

\begin{proof}
    Given any $\xx$, $\yy$, and $\zz$, we get the following two inequalities using smoothness and strong convexity of $h$:
    \begin{align*}
\inp{\nabla h(\xx)}{\zz - \xx} &\geq h(\zz) - h(\xx) - \frac{\beta}{2}\norm{\zz - \xx}^2\\
\inp{\nabla h(\xx)}{\xx - \yy} &\geq h(\xx) - h(\yy) + \frac{\mu}{2}\norm{\yy - \xx}^2 \,.
    \end{align*}
    Further, applying the relaxed triangle inequality gives
    \[
\frac{\mu}{2}\norm{\yy - \xx}^2 \geq \frac{\mu}{4}\norm{\yy - \zz}^2 - \frac{\mu}{2}\norm{\xx - \zz}^2\,.
    \]
    Combining all the inequalities together we have
    \[
\inp{\nabla h(\xx)}{\zz - \yy} \geq h(\zz) - h(\yy) +\frac{\mu}{4}\norm{\yy - \zz}^2  - \frac{\beta + \mu}{2}\norm{\zz - \xx}^2\,.
    \]
    The lemma follows since $\beta \geq \mu$.
\end{proof}

Here, we see that a gradient step is a contractive operator.
\begin{lemma}[contractive mapping]\label{lem:contractive}
For any $\beta$-smooth and $\mu$-strongly convex function $h$, points $\xx, \yy$ in the domain of $h$, and step-size $\eta \leq \frac{1}{\beta}$, the following is true
\[
    \norm{\xx - \eta \nabla h(\xx) - \yy + \eta \nabla h(\yy)}^2 \leq (1 - \mu\eta)\norm{\xx - \yy}^2\,.
\]
\end{lemma}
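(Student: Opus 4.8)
The plan is to reduce the statement to a single scalar inequality by expanding the square, and then close it with two standard consequences of strong convexity and smoothness. Write $\dd := \xx - \yy$ and $\gg := \nabla h(\xx) - \nabla h(\yy)$, so that the left-hand side becomes
\[
\norm{\dd - \eta \gg}^2 = \norm{\dd}^2 - 2\eta \inp{\dd}{\gg} + \eta^2 \norm{\gg}^2\,.
\]
Subtracting $\norm{\dd}^2$ from both sides of the target inequality and dividing by $\eta > 0$, the claim is equivalent to showing
\[
\mu\norm{\dd}^2 - 2\inp{\dd}{\gg} + \eta\norm{\gg}^2 \leq 0\,.
\]

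The two ingredients I would use are: (i) $\mu$-strong convexity of $h$, which gives the monotonicity bound $\inp{\dd}{\gg} \geq \mu\norm{\dd}^2$; and (ii) co-coercivity of the gradient of a $\beta$-smooth convex function (Nesterov, Theorem~2.1.5, the same source already invoked for \eqref{eqn:smoothness}), which gives $\norm{\gg}^2 \leq \beta \inp{\dd}{\gg}$. With these in hand the argument is purely mechanical: since $\eta \leq \tfrac{1}{\beta}$, the smoothness bound yields $\eta\norm{\gg}^2 \leq \eta\beta\inp{\dd}{\gg} \leq \inp{\dd}{\gg}$, so the left-hand side of the reduced inequality is at most $\mu\norm{\dd}^2 - 2\inp{\dd}{\gg} + \inp{\dd}{\gg} = \mu\norm{\dd}^2 - \inp{\dd}{\gg}$. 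Applying the strong-convexity bound once more, $\mu\norm{\dd}^2 - \inp{\dd}{\gg} \leq \mu\norm{\dd}^2 - \mu\norm{\dd}^2 = 0$, which is exactly what is needed.

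The only real subtlety is justifying the co-coercivity estimate (ii) for a function that is both strongly convex and smooth; the cleanest route is to apply standard co-coercivity to the convex, $\beta$-smooth function $h$ directly, which suffices here since the step-size restriction $\eta \leq 1/\beta$ lets me trade the entire $\eta\norm{\gg}^2$ term against a single copy of $\inp{\dd}{\gg}$. If a tighter constant were required one could instead decompose $h = \tfrac{\mu}{2}\norm{\cdot}^2 + \phi$ with $\phi$ convex and $(\beta-\mu)$-smooth and apply co-coercivity to $\phi$, but for the stated bound the straightforward combination above is enough and no such refinement is necessary.
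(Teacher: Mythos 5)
Your proposal is correct and follows essentially the same route as the paper: both expand the square, use co-coercivity of the gradient ($\norm{\nabla h(\xx)-\nabla h(\yy)}^2 \leq \beta\inp{\nabla h(\xx)-\nabla h(\yy)}{\xx-\yy}$) together with $\eta \leq 1/\beta$ to absorb the quadratic-in-$\eta$ term into a single copy of the inner product, and then close with strong monotonicity $\inp{\nabla h(\xx)-\nabla h(\yy)}{\xx-\yy} \geq \mu\norm{\xx-\yy}^2$. The paper keeps the factor $(\eta^2\beta - 2\eta) \leq -\eta$ explicit rather than dividing through by $\eta$, but this is only a cosmetic difference.
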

\begin{proof}
    \begin{align*}
\norm{\xx - \eta \nabla h(\xx) - \yy + \eta \nabla h(\yy)}^2 &= \norm{\xx - \yy}^2 + \eta^2 \norm{\nabla h(\xx) - \nabla h(\yy)}^2 - 2\eta\inp*{\nabla h(\xx) - \nabla h(\yy)}{\xx - \yy}\\
&\refLE{asm:smoothness} \norm{\xx - \yy}^2 + (\eta^2\beta - 2\eta) \inp*{\nabla h(\xx) - \nabla h(\yy)}{\xx - \yy}\,.
    \end{align*}
    Recall our bound on the step-size $\eta \leq \frac{1}{\beta}$ which implies that $(\eta^2\beta - 2\eta) \leq -\eta$. Finally, apply the $\mu$-strong convexity of $h$ to get
    \[
-\eta\inp*{\nabla h(\xx) - \nabla h(\yy)}{\xx - \yy} \leq -\eta \mu\norm{\xx - \yy}^2\,.
    \]
\end{proof}


\newpage
\section{Convergence of \fedavg}

\begin{algorithm}[h]
    \caption{\fedavg: Federated Averaging}\label{alg:fedavg}
    \begin{algorithmic}[1]
      \STATE \textbf{server input:} initial $\xx$, and global step-size $\eta_g$
          \STATE \textbf{\color{client} client $i$'s input:} local step-size $\eta_l$
            \FOR{each round $r=1,\dots, R$}
            \STATE sample clients $\cS \subseteq \{1,\dots,N\}$
            \STATE \textbf{communicate} $\xx$ to all clients $i \in \cS$
            \ONCLIENT{\tikzmk{A} $i \in \cS$}
              \STATE initialize local model $\yy_{i} \leftarrow \xx$
              \FOR{$k = 1,\dots,K $}
        \STATE compute mini-batch gradient $g_i(\yy_{i})$
        \STATE $\yy_{i} \leftarrow \yy_{i} - \eta_l g_i(\yy_{i})$ 
        \ENDFOR
        \STATE \textbf{communicate} $\Delta\yy_i \leftarrow \yy_{i} - \xx$
            \ENDON
            \tikzmk{B}\blockfed{client}
            \STATE $\Delta\xx \leftarrow \tfrac{1}{\abs{\cS}}\sum_{i \in \cS}\Delta\yy_i$ 
            \STATE $\xx \leftarrow \xx + \eta_g\Delta\xx$
            \ENDFOR
      \end{algorithmic}
    \end{algorithm}

We outline the \fedavg\ method in Algorithm \ref{alg:fedavg}. In round $r$ we sample $\cS^r \subseteq [N]$ clients with $\abs{\cS^r} = S$ and then perform the following updates:
\begin{itemize}
    \item Starting from the shared global parameters $\yy^r_{i,0} =
      \xx^{r-1}$, we update the local parameters for $k \in [K]$
    \begin{equation}\label{eqn:fed-local-updates-sample}
\yy^{r}_{i,k} = \yy^{r}_{i,k-1} - \eta_l g_i(\yy_{i,k-1}^{r})\,.
    \end{equation}
    \item Compute the new global parameters
      using only updates from the clients $i \in \cS^r$ and a global step-size $\eta_g$:
    \begin{equation}\label{eqn:fed-global-updates-sample}
\xx^r = \xx^{r-1} + \frac{\eta_g }{S}\sum_{i\in\cS^r} (\yy^{r}_{i,K} -
\xx^{r-1})\,.
    \end{equation}
\end{itemize}
Finally, for some weights $\{w_r\}$, we output
\begin{equation}\label{eqn:fedavg-output}
    \bar \xx^R = \xx^{r-1} \text{ with probability } \frac{w_r}{\sum_{\tau}w_{\tau}} \text{ for } r\in\{1,\dots,R+1\} \,.
\end{equation}

\subsection{Bounding heterogeneity}
Recall our bound on the gradient dissimilarity:
\begin{equation}\label{eqn:heterogeneity}
          \frac{1}{N}\sum_{i=1}^N \norm{\nabla f_i(\xx)}^2 \leq G^2 + B^2 \norm{\nabla f(\xx)}^2 \,.
      \end{equation}
      If $\{f_i\}$ are convex, we can relax the assumption to
      \begin{equation}\label{eqn:heterogeneity-convex}
          \frac{1}{N}\sum_{i=1}^N \norm{\nabla f_i(\xx)}^2 \leq G^2 + 2\beta  B^2(f(\xx) - f^\star) \,.
      \end{equation}  
We defined two variants of the bounds on the heterogeneity depending of whether the functions are convex or not. Suppose that the functions $f$ is indeed {convex} as in \eqref{asm:strong-convexity} and {$\beta$-smooth} as in \eqref{asm:smoothness}, then it is straightforward to see that \eqref{eqn:heterogeneity} implies \eqref{eqn:heterogeneity-convex}. Thus for convex functions, \eqref{asm:heterogeneity} is mildly weaker.
Suppose that the functions $\{f_1,\dots,f_N\}$ are convex and $\beta$-smooth. Then \eqref{eqn:heterogeneity-convex} is satisfied with $B^2 = 2$ since
\begin{align*}
    \frac{1}{N}\sum_{i=1}^N \norm{\nabla f_i(\xx)}^2 &\leq \frac{2}{N}\sum_{i=1}^N \norm{\nabla f_i(\xx^\star)}^2 + \frac{2}{N}\sum_{i=1}^N \norm{\nabla f_i(\xx) - \nabla f_i(\xx^\star)}^2\\
    & \refLE{eqn:smoothness} \underbrace{\frac{2}{N}\sum_{i=1}^N \norm{\nabla f_i(\xx^\star)}^2}_{=:\ \sigma_f^2} + 4\beta (f(\xx) - f^\star)\,.
\end{align*}
Thus, $(G,B)$-BGD \eqref{eqn:heterogeneity-convex} is equivalent to the heterogeneity assumption of \citep{mishchenko2019distributed} with $G^2 = \sigma_f^2$. Instead, if we have the stronger assumption \eqref{asm:heterogeneity} but the functions are possibly non-convex, then $G = \e$ corresponds to the \textbf{local dissimilarity} defined in \citep{li2018fedprox}. Note that assuming $G$ is negligible is quite strong and corresponds to the strong-growth condition \citep{vaswani2019fast}.

\subsection{Rates of convergence (Theorem~\ref{thm:fedavg})}\label{appsec:fedavg}
We first restate Theorem~\ref{thm:fedavg} with some additional details and then see its proof.
\begin{theorem}\label{thm:fedavg-full}
Suppose that the functions $\{f_i\}$ satisfies assumptions
\ref{asm:variance}, \ref{asm:smoothness}, and \ref{asm:heterogeneity}. Then, in each of the following cases, there exist weights
$\{w_r\}$ and local step-sizes $\eta_l$
such that for any $\eta_g \geq 1$ the output of \fedavg\ \eqref{eqn:fedavg-output} $\bar\xx^R$ satisfies
    \begin{itemize}
        \item \textbf{Strongly convex:} $f_i$ satisfies \eqref{asm:strong-convexity} for $\mu >0$, $\eta_l \leq \frac{1}{8(1 + B^2)\beta K \eta_g}$,  $R \geq\frac{8(1 + B^2)\beta}{\mu}$ then
        \[
\expect[f(\bar\xx^R)] - f(\xx^\star) \leq
\tilde\cO\rbr*{\frac{M^2}{\mu RKS}  + \frac{\beta G^2}{\mu^2 R^2} + \mu D^2 \exp(-\tfrac{\mu}{16(1+ B^2)\beta} R)}\,,
    \]
    \item \textbf{General convex:} $f_i$ satisfies \eqref{asm:strong-convexity} for $\mu = 0$, $\eta_l \leq \frac{1}{(1 + B^2)8\beta K \eta_g}$, $R \geq 1$ then
    \[
\expect[f(\bar\xx^R)] - f(\xx^\star) \leq
\cO\rbr*{\frac{M D}{\sqrt{RKS}} + \frac{D^{4/3}(\beta G^2)^{1/3}}{(R+1)^{2/3}}+ \frac{B^2\beta D^2)}{R}}\,,
    \]
    \item \textbf{Non-convex:} $f_i$ satisfies \eqref{asm:heterogeneity} and $\eta_l \leq \frac{1}{(1 + B^2)8\beta K \eta_g}$, then
    \[
\expect[\norm{\nabla f(\bar\xx^R)}^2] \leq
\cO\rbr*{\frac{\beta M \sqrt{F}}{\sqrt{RKS}} + \frac{F^{2/3}(\beta G^2)^{1/3}}{(R+1)^{2/3}}+ \frac{B^2\beta F}{R}}\,,
    \]
    \end{itemize}
    where $M^2 := \sigma^2\rbr{1 + \tfrac{S}{\eta_g^2}} + K(1 - \tfrac{S}{N}) G^2$, $D := \norm{\xx^0 - \xx^\star}$, and $F := f(\xx^0) - f^\star$.
\end{theorem}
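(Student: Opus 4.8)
The plan is to run the standard server-side potential-function argument, tracking $d_r := \E\norm{\xx^r - \xx^\star}^2$ in the (strongly) convex cases and $\E f(\xx^r) - f^\star$ in the non-convex case, deriving a one-round recursion for each and then discharging it with the master recursions in Lemma~\ref{lem:constant} (strongly convex) and Lemma~\ref{lemma:general} (general convex / non-convex). Write $\tilde\eta := K\eta_g\eta_l$ for the effective per-round step-size. Unrolling \eqref{eqn:fed-local-updates-sample}--\eqref{eqn:fed-global-updates-sample} gives the aggregated update $\xx^r - \xx^{r-1} = -\frac{\eta_g\eta_l}{S}\sum_{i\in\cS^r}\sum_{k=0}^{K-1}g_i(\yy^r_{i,k})$, whose expectation over client sampling (uniform, so $\E_{\cS}[\frac1S\sum_{i\in\cS}\,\cdot\,]=\frac1N\sum_i\,\cdot\,$) and over the stochastic gradients equals $-\tilde\eta\cdot\frac{1}{KN}\sum_{i,k}\nabla f_i(\yy^r_{i,k})$. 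This is the key object: up to the client-drift of $\yy^r_{i,k}$ away from $\xx^{r-1}$, the server takes a descent step $-\tilde\eta\nabla f(\xx^{r-1})$.

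For the convex cases I expand $\norm{\xx^r-\xx^\star}^2$ into the old distance, a cross term, and the second moment $\E\norm{\xx^r-\xx^{r-1}}^2$. The cross term reduces to $\inp{\nabla f_i(\yy^r_{i,k})}{\xx^{r-1}-\xx^\star}$, which I lower-bound with the perturbed strong convexity Lemma~\ref{lem:magic} (taking $\zz=\xx^{r-1}$, $\yy=\xx^\star$, gradient at $\xx=\yy^r_{i,k}$); averaging over $i$ turns this into a genuine descent $\tilde\eta\big(f(\xx^{r-1})-f^\star+\frac{\mu}{4}\norm{\xx^{r-1}-\xx^\star}^2\big)$ paid for by a drift penalty $\frac{\beta}{N}\sum_i\norm{\yy^r_{i,k}-\xx^{r-1}}^2$. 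The second moment is controlled by separating mean and variance via Lemma~\ref{lem:independent}: the within-client noise contributes $\sigma^2$ and the client-sampling randomness contributes the $(1-S/N)G^2$ piece, which is exactly how the quantity $M^2 = \sigma^2(1+S/\eta_g^2)+K(1-S/N)G^2$ arises.

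The crux is bounding the drift $\cE_r := \frac{1}{KN}\sum_{i,k}\E\norm{\yy^r_{i,k}-\xx^{r-1}}^2$. I unroll the inner recursion, apply the relaxed triangle inequality (Lemma~\ref{lem:norm-sum}) with the weight $(1+\frac{1}{K-1})$, bound $\norm{\nabla f_i(\yy)}$ by $\beta$-smoothness against $\norm{\nabla f_i(\xx^{r-1})}$, and invoke bounded gradient dissimilarity \ref{asm:heterogeneity} to get $\frac1N\sum_i\norm{\nabla f_i(\xx^{r-1})}^2\leq G^2+B^2\norm{\nabla f(\xx^{r-1})}^2$; this yields $\cE_r\lesssim \eta_l^2K^2\big(G^2+B^2\norm{\nabla f(\xx^{r-1})}^2\big)+\eta_l^2K\sigma^2$. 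The step-size restriction $\eta_l\leq\frac{1}{8(1+B^2)\beta K\eta_g}$ is chosen so that $\tilde\eta\beta\cE_r$ is dominated by the descent it offsets: the $B^2$-part, which via the convex relaxation \eqref{eqn:heterogeneity-convex} (or raw smoothness in the non-convex case) is proportional to $f(\xx^{r-1})-f^\star$ (resp.\ $\norm{\nabla f(\xx^{r-1})}^2$), is absorbed into the negative descent term, leaving only the $\sigma^2$ and $G^2$ contributions as additive error. This absorption is precisely why $(1+B^2)$ enters the step-size and why $B$ stays out of the leading statistical term.

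Combining descent, drift, and second-moment bounds gives a recursion $\E d_r\leq(1-\tfrac{\mu\tilde\eta}{2})\E d_{r-1}-\tilde\eta\,\E(f(\xx^{r-1})-f^\star)+c_1\tilde\eta^2+c_2\tilde\eta^3$ with $c_1\sim M^2/(SK)$ (variance) and $c_2\sim\beta G^2$ (the surviving $G^2$ drift). Feeding this into Lemma~\ref{lem:constant} produces the $\frac{M^2}{\mu RKS}+\frac{\beta G^2}{\mu^2R^2}$ rate (the $\tilde\eta^3$ term giving the higher-order $1/R^2$ piece after tuning $\tilde\eta\sim 1/(\mu R)$); dividing the $\mu=0$ recursion by $\tilde\eta$ puts it in the $c_1\eta+c_2\eta^2$ form of Lemma~\ref{lemma:general}, whose two regimes give $\frac{MD}{\sqrt{RKS}}$ and $\frac{D^{4/3}(\beta G^2)^{1/3}}{(R+1)^{2/3}}$. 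For the non-convex case I instead start from the quadratic upper bound \eqref{eqn:quad-upper} applied to $f(\xx^r)$, extract $-\tfrac{\tilde\eta}{2}\norm{\nabla f(\xx^{r-1})}^2$ from the inner-product term, reuse the same drift and variance estimates (now with \ref{asm:heterogeneity} in its unrelaxed form), and again apply Lemma~\ref{lemma:general} with $d_r=f(\xx^r)-f^\star$. The main obstacle is the drift bound paired with the step-size choice: getting the correct $\eta_l^2K^2$ scaling and verifying that the $B^2\norm{\nabla f}^2$ drift is exactly cancellable by the descent is the delicate step on which the whole result rests.
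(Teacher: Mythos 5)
Your proposal follows essentially the same route as the paper's own proof: the identical one-round progress bound (cross term via Lemma~\ref{lem:magic}, second moment split by Lemma~\ref{lem:independent}, with the within-client noise and without-replacement sampling producing exactly $M^2$), the identical drift bound unrolled with the $(1+\tfrac{1}{K-1})$ relaxed triangle inequality and the $B^2$-part absorbed into the descent through the step-size restriction $\eta_l \leq \tfrac{1}{8(1+B^2)\beta K \eta_g}$, and the same discharge of the resulting recursion with $c_1\tilde\eta + c_2\tilde\eta^2$ error terms via Lemmas~\ref{lem:constant} and~\ref{lemma:general}. The paper writes out only the convex case and states the non-convex case follows by the same techniques, which is precisely the quadratic-upper-bound variant you sketch, so there is no substantive divergence.
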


\subsection{Proof of convergence}
We will only prove the rate of convergence for convex functions here. The corresponding rates for non-convex functions are easy to derive following the techniques in the rest of the paper.

\begin{lemma}\textbf{\em (one round progress)}\label{lem:fedavg-progress}
Suppose our functions satisfies assumptions \eqref{asm:heterogeneity} and
\eqref{asm:strong-convexity}--\eqref{asm:smoothness}. For any step-size satisfying
$\eta_l \leq \frac{1}{(1 + B^2)8\beta K\eta_g}$ and effective step-size
$\tilde\eta := K \eta_g \eta_l$, the updates of \fedavg\ satisfy
\[
    \E \norm{\xx^{r} - \xx^\star}^2 \leq (1 - \tfrac{\mu
      \tilde\eta}{2}) \E \norm{\xx^{r-1} - \xx^\star}^2 +
    (\tfrac{1}{KS})\tilde\eta^2 \sigma^2 + (1 - \tfrac{S}{N})\tfrac{4\tilde\eta^2}{S} G^2 -\tilde\eta(\E[f(\xx^{r-1})] -
    f(\xx^\star)) + 3\beta\tilde\eta\error_{r}\,,
\]
where $\error_{r}$ is the drift caused by the local updates on the
clients defined to be
\[
    \error_{r} := \frac{1}{KN}\sum_{k=1}^{K}\sum_{i=1}^N
    \expect_r[\norm*{\yy_{i,k}^r - \xx^{r-1}}^2]\,.
\]
\end{lemma}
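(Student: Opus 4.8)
The plan is to track the squared distance to the optimum and expand one round as
\[
\norm{\xx^r - \xx^\star}^2 = \norm{\xx^{r-1} - \xx^\star}^2 + 2\inp{\xx^r - \xx^{r-1}}{\xx^{r-1} - \xx^\star} + \norm{\xx^r - \xx^{r-1}}^2\,.
\]
Unrolling the local recursion \eqref{eqn:fed-local-updates-sample} gives $\yy^r_{i,K} - \xx^{r-1} = -\eta_l\sum_{k=0}^{K-1} g_i(\yy^r_{i,k})$, so by \eqref{eqn:fed-global-updates-sample} the server step is $\xx^r - \xx^{r-1} = -\tfrac{\tilde\eta}{KS}\sum_{i\in\cS^r}\sum_{k=0}^{K-1} g_i(\yy^r_{i,k})$ with $\tilde\eta = K\eta_g\eta_l$. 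Taking $\E_r$ (conditioned on $\xx^{r-1}$) over both the client sampling and the gradient noise, using that $\cS^r$ includes each client with probability $S/N$ and that $\E[g_i(\yy^r_{i,k})] = \nabla f_i(\yy^r_{i,k})$, the conditional mean of the step is $\E_r[\xx^r - \xx^{r-1}] = -\tfrac{\tilde\eta}{KN}\sum_{i=1}^N\sum_{k=0}^{K-1}\E_r[\nabla f_i(\yy^r_{i,k})]$. I will treat the cross term and the second-moment term separately.

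For the cross term, since $\xx^{r-1}-\xx^\star$ is fixed under $\E_r$, it suffices to bound $\inp{\nabla f_i(\yy^r_{i,k})}{\xx^{r-1}-\xx^\star}$ for each $i,k$. Because the gradient is evaluated at the drifted iterate $\yy^r_{i,k}$ rather than at $\xx^{r-1}$, this is exactly the situation covered by perturbed strong convexity (Lemma~\ref{lem:magic}): applying it with $h = f_i$, $\xx = \yy^r_{i,k}$, $\zz = \xx^{r-1}$, $\yy = \xx^\star$ yields
\[
\inp{\nabla f_i(\yy^r_{i,k})}{\xx^{r-1} - \xx^\star} \geq f_i(\xx^{r-1}) - f_i(\xx^\star) + \tfrac{\mu}{4}\norm{\xx^{r-1}-\xx^\star}^2 - \beta\norm{\yy^r_{i,k} - \xx^{r-1}}^2\,.
\]
Averaging over $i$ and $k$, using $\tfrac1N\sum_i f_i = f$ and the definition of $\error_r$, the cross term contributes the descent $-2\tilde\eta(\E_r[f(\xx^{r-1})] - f^\star)$, the contraction $-\tfrac{\mu\tilde\eta}{2}\norm{\xx^{r-1}-\xx^\star}^2$ (which combines with the leading term to give the factor $1-\tfrac{\mu\tilde\eta}{2}$), and a drift penalty $+2\beta\tilde\eta\,\error_r$.

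For the second-moment term $\E_r\norm{\xx^r - \xx^{r-1}}^2$ I would isolate three sources of error. First, separating mean from noise with the martingale version of Lemma~\ref{lem:independent} peels off the within-client stochastic-gradient variance: the sum contains $KS$ independent noise terms of variance $\sigma^2$, and with the prefactor this produces exactly the $\tfrac{1}{KS}\tilde\eta^2\sigma^2$ term. Second, the randomness of $\cS^r$ contributes a sampling variance; a direct second-moment computation for uniform sampling without replacement makes the per-client deviations carry the factor $\tfrac1S(1-\tfrac SN)$, and controlling $\tfrac1N\sum_i\norm{\nabla f_i(\xx^{r-1})}^2$ through the heterogeneity bound \eqref{asm:heterogeneity} yields the $(1-\tfrac{S}{N})\tfrac{4\tilde\eta^2}{S}G^2$ term. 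Third, the remaining ``signal'' $\norm{\tfrac{1}{KN}\sum_{i,k}\nabla f_i(\yy^r_{i,k})}^2$ is expanded with the relaxed triangle inequality (Lemma~\ref{lem:norm-sum}), split into $\nabla f_i(\xx^{r-1})$ plus a smoothness remainder bounded by $\beta^2\norm{\yy^r_{i,k}-\xx^{r-1}}^2$, and again bounded via \eqref{asm:heterogeneity}. The crucial point is that the resulting $(f(\xx^{r-1})-f^\star)$ contribution appears multiplied by $\tilde\eta^2\beta B^2$, and the step-size restriction $\eta_l \leq \tfrac{1}{8(1+B^2)\beta K\eta_g}$ (i.e. $(1+B^2)\beta\tilde\eta$ small) forces it to be absorbed into at most half of the cross-term descent; the net descent becomes $-\tilde\eta(f-f^\star)$, while the surviving smoothness remainder adds one more $\beta\tilde\eta\,\error_r$, which together with the $2\beta\tilde\eta\,\error_r$ from the cross term gives the stated coefficient $3\beta\tilde\eta$.

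I expect the main obstacle to be the bookkeeping in the second-moment bound, where two sources of randomness (client sampling and stochastic gradients) together with the dependence of $\yy^r_{i,k}$ on earlier gradients must be disentangled in the correct order: one must condition so that the martingale structure within a client gives the tight $\sigma^2/(KS)$ dependence while the sampling variance simultaneously produces the correct $(1-S/N)/S$ prefactor, and then verify that the step-size condition is exactly strong enough to absorb the signal's $(f-f^\star)$ term without spoiling the contraction. The cross-term analysis via Lemma~\ref{lem:magic} and the use of the convex relaxation of \eqref{asm:heterogeneity} are comparatively routine.
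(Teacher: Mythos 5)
Your proposal is correct and follows essentially the same route as the paper's proof: the identical expansion of $\E\norm{\xx^r-\xx^\star}^2$, the perturbed strong convexity Lemma~\ref{lem:magic} applied with the same assignments ($h=f_i$, $\xx=\yy_{i,k-1}$, $\zz=\xx^{r-1}$, $\yy=\xx^\star$) giving the $-2\tilde\eta(f-f^\star)$ descent, $\mu/2$-contraction, and $2\beta\tilde\eta\error_r$ drift, and the same treatment of the second moment via Lemma~\ref{lem:independent}, the without-replacement sampling factor $(1-\tfrac{S}{N})/S$ combined with the convex relaxation of \eqref{asm:heterogeneity}, and absorption of the $8\beta\tilde\eta^2(B^2+1)(f-f^\star)$ term through the step-size condition, with the residual $2\tilde\eta^2\beta^2\error_r$ merging into the stated $3\beta\tilde\eta\error_r$ coefficient. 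Your anticipated bookkeeping of the two sources of randomness is precisely what the paper executes, so no gap remains.
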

\begin{proof}
We start with the observation that the updates \eqref{eqn:fed-local-updates-sample} and \eqref{eqn:fed-global-updates-sample} imply that the server update in round $r$ can be written as below (dropping the superscripts everywhere)
\[
    \Delta \xx = - \frac{\tilde\eta}{K S} \sum_{k, i \in \cS} g_i(\yy_{i,k-1}) \text{ and } \E[\Delta \xx] = - \frac{\tilde\eta}{KN} \sum_{k, i} \E[\nabla f_i(\yy_{i,k-1})]\,.
\]
We adopt the convention that summations are always over $k \in [K]$ or $i \in [N]$ unless otherwise stated. Expanding using above observing, we proceed as\footnote{We use the notation $\E_{r-1}[\,\cdot\,]$ to mean conditioned on filtration $r$ i.e. on all the randomness generated prior to round $r$.}
\begin{align*}
    \E_{r-1}\norm{\xx + \Delta \xx - \xx^\star}^2 &= \norm{\xx - \xx^\star}^2 -\frac{2\tilde\eta}{KN}\sum_{k,i}\inp{\nabla f_i(\yy_{i,k-1})}{\xx - \xx^\star} + \tilde\eta^2\E_{r-1}\norm[\bigg]{\frac{1}{K S} \sum_{k, i \in \cS} g_i(\yy_{i,k-1})}^2\\
    &\stackrel{\text{Lem.~}\ref{lem:independent}}{\leq}\norm{\xx - \xx^\star}^2 \underbrace{-\frac{2\tilde\eta}{KN}\sum_{k,i}\inp{\nabla f_i(\yy_{i,k-1})}{\xx - \xx^\star}}_{\cA_1} \\&\hspace{2cm}+ \underbrace{\tilde\eta^2\E_{r-1}\norm[\bigg]{\frac{1}{K S} \sum_{k, i \in \cS} \nabla f_i(\yy_{i,k-1})}^2}_{\cA_2} + \frac{\tilde\eta^2 \sigma^2}{KS}\,.
\end{align*}
We can directly apply Lemma~\ref{lem:magic} with $h = f_i$, $\xx = \yy_{i,k-1}$, $\yy = \xx^\star$, and $\zz = \xx$ to the first term $\cA_1$
    \begin{align*}
\cA_1 &= \frac{2\tilde\eta}{KN}\sum_{k,i} \inp*{ \nabla f_i(\yy_{i,k-1})}{\xx^\star - \xx} \\
&\leq \frac{2\tilde\eta}{KN} \sum_{k,i} \rbr*{f_i(\xx^\star) - f_i(\xx) + \beta \norm{\yy_{i,k-1} - \xx}^2 - \frac{\mu}{4}\norm{\xx - \xx^\star}^2}\\
&= -2\tilde\eta\rbr*{f(\xx) - f(\xx^\star) + \frac{\mu}{4}\norm{\xx - \xx^\star}^2} + 2\beta \tilde\eta\error\,.
    \end{align*}
    For the second term $\cA_2$, we repeatedly apply the relaxed triangle inequality (Lemma \ref{lem:independent})
    \begin{align*}
        \cA_2 &= \tilde\eta^2\E_{r-1}\norm[\bigg]{\frac{1}{K S} \sum_{k, i \in \cS} \nabla f_i(\yy_{i,k-1}) - \nabla f_i(\xx) + \nabla f_i(\xx)}^2\\
        &\leq 2\tilde\eta^2\E_{r-1}\norm[\bigg]{\frac{1}{K S} \sum_{k, i \in \cS} \nabla f_i(\yy_{i,k-1}) - \nabla f_i(\xx)}^2 + 2\tilde\eta^2\E_{r-1}\norm[\bigg]{\frac{1}{S} \sum_{i \in \cS} \nabla f_i(\xx)}^2\\
        &\leq \frac{2\tilde\eta^2}{KN} \sum_{i,k} \E_{r-1}\norm{ \nabla f_i(\yy_{i,k-1}) - \nabla f_i(\xx)}^2 + 2\tilde\eta^2\E_{r-1}\norm[\bigg]{\frac{1}{S} \sum_{i \in \cS} \nabla f_i(\xx) - \nabla f(\xx) + \nabla f(\xx)}^2\\
        &\leq \frac{2\tilde\eta^2\beta^2}{KN} \sum_{i,k} \E_{r-1}\norm{ \yy_{i,k-1} - \xx}^2 + 2\tilde\eta^2\norm{\nabla f(\xx)}^2 + (1 - \tfrac{S}{N})4\tilde\eta^2 \frac{1}{S N}\sum_{i} \norm{\nabla f_i(\xx)}^2\\
        &\leq 2\tilde\eta^2\beta^2\error + 8\tilde\eta^2\beta (B^2 +1) (f(\xx) - f(\xx^\star)) + (1 - \tfrac{S}{N})\tfrac{4\tilde\eta^2}{S} G^2
    \end{align*}
    The last step used Assumption $(G,B)$-BGD assumption \eqref{eqn:heterogeneity-convex} that $\frac{1}{N}\sum_{i=1}^N \norm{\nabla f_i(\xx)}^2 \leq G^2 + 2\beta  B^2(f(\xx) - f^\star)$. The extra $(1 - \frac{S}{N}$ improvement we get is due to sampling the functions $\{f_i\}$ \emph{without} replacement. Plugging back the bounds on $\cA_1$ and $\cA_2$,
    \begin{multline*}
        \E_{r-1}\norm{\xx + \Delta \xx - \xx^\star}^2 \leq (1 - \tfrac{\mu\tilde\eta}{2})\norm{\xx - \xx^\star}^2 - (2\tilde\eta - 8\beta\tilde\eta^2(B^2 + 1))(f(\xx) - f(\xx^\star))  \\
       +(1 + \tilde\eta\beta)2\beta \tilde\eta\error + \tfrac{1}{KS}\tilde\eta^2 \sigma^2 + (1 - \tfrac{S}{N})\tfrac{4\tilde\eta^2}{S} G^2\,.
    \end{multline*}
    The lemma now follows by observing that $8\beta\tilde\eta(B^2 + 1) \leq 1$ and that $B \geq 0$.
\end{proof}

\begin{lemma}\textbf{\em (bounded drift)}\label{lem:fedavg-error-bound}
Suppose our functions satisfies assumptions \eqref{asm:heterogeneity} and
\eqref{asm:strong-convexity}--\eqref{asm:smoothness}. Then the updates of \fedavg\
for any step-size satisfying $\eta_l \leq \frac{1}{(1 + B^2)8\beta K\eta_g}$ have bounded drift:
\[
    3\beta\tilde\eta \error_{r} \leq \tfrac{2\tilde\eta}{3}(\E[f(\xx^{r-1})] ) - f(\xx^\star) + \frac{\tilde\eta^2\sigma^2}{2K\eta_g^2} +18 \beta  \tilde\eta^3 G^2\,.
\]
\end{lemma}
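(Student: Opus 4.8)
Fix the round and abbreviate $\xx := \xx^{r-1}$, writing $C_k := \tfrac1N\sum_{i}\E\norm{\yy_{i,k} - \xx}^2$, so that $\error_r = \tfrac1K\sum_{k=1}^K C_k$ and $C_0 = 0$ (every client starts from $\yy_{i,0} = \xx$). The plan is to derive a one-step recursion for $C_k$, unroll it as a geometric series, and then re-express the result through the effective step-size $\tilde\eta = K\eta_g\eta_l$. First I would peel off the sampling noise: conditioning on $\yy_{i,k-1}$ and applying the first part of Lemma~\ref{lem:independent} (separating mean and variance, $\tau = 1$) to the stochastic gradient gives $\E\norm{\yy_{i,k} - \xx}^2 \le \E\norm{\yy_{i,k-1} - \eta_l\nabla f_i(\yy_{i,k-1}) - \xx}^2 + \eta_l^2\sigma^2$. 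To the deterministic term I would apply the relaxed triangle inequality (Lemma~\ref{lem:norm-sum}, first part) with $a = \tfrac{1}{K-1}$, so the factor $1 + \tfrac1a = K$ lands on the gradient, yielding $(1 + \tfrac1{K-1})\norm{\yy_{i,k-1} - \xx}^2 + K\eta_l^2\norm{\nabla f_i(\yy_{i,k-1})}^2$.

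Next I would control the perturbed gradient by $\beta$-smoothness, $\norm{\nabla f_i(\yy_{i,k-1})}^2 \le 2\beta^2\norm{\yy_{i,k-1}-\xx}^2 + 2\norm{\nabla f_i(\xx)}^2$, average over $i$, and invoke the convex form of $(G,B)$-BGD \eqref{eqn:heterogeneity-convex} to replace $\tfrac1N\sum_i\norm{\nabla f_i(\xx)}^2$ by $G^2 + 2\beta B^2(f(\xx) - f^\star)$. Collecting terms produces the linear recursion $C_k \le m\,C_{k-1} + D_0$ with multiplier $m = 1 + \tfrac1{K-1} + 2K\eta_l^2\beta^2$ and additive term $D_0 = 2K\eta_l^2\big[G^2 + 2\beta B^2(f(\xx)-f^\star)\big] + \eta_l^2\sigma^2$. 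Unrolling from $C_0 = 0$ gives $C_k \le D_0\,\tfrac{m^k - 1}{m-1}$, hence $\error_r \le D_0\,\tfrac1K\sum_{k\le K}\tfrac{m^k-1}{m-1}$. Here the step-size bound $\eta_l \le \tfrac{1}{8(1+B^2)\beta K\eta_g}$ (with $\eta_g\ge1$, $B\ge1$) makes $2K\eta_l^2\beta^2 \le \tfrac1{K-1}$, so $m \le 1 + \tfrac{2}{K-1}$ and $m^{K} = O(1)$ via $(1+\tfrac1{K-1})^{K-1}\le e$; thus $\tfrac{m^k-1}{m-1} = O(K)$ uniformly in $k\le K$, giving $\error_r = O(K D_0)$.

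It then remains to substitute $D_0$ and match the three terms of the claim. Using $\tilde\eta = K\eta_g\eta_l$ one has $K^2\eta_l^2 = \tilde\eta^2/\eta_g^2$ and $K\eta_l^2 = \tilde\eta^2/(K\eta_g^2)$, so the $G^2$ piece becomes $O(\tilde\eta^2 G^2/\eta_g^2) \le 6\tilde\eta^2 G^2$ and the $\sigma^2$ piece becomes $O(\tilde\eta^2\sigma^2/(K\eta_g^2))$; since $\tilde\eta \le \tfrac1{8\beta}$ the latter is itself dominated by $\tfrac{\tilde\eta\sigma^2}{6\beta K\eta_g^2}$. For the heterogeneity-times-$(f-f^\star)$ piece, the coefficient is $O(\beta B^2 K^2\eta_l^2)$, and $K^2\eta_l^2\beta^2 B^2 = (K\eta_l\beta B)^2 \le \tfrac{B^2}{64(1+B^2)^2\eta_g^2} \le \tfrac1{256}$ forces it below $\tfrac{2}{9\beta}$. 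Multiplying the resulting bound on $\error_r$ by $3\beta\tilde\eta$ then yields exactly $\tfrac{2\tilde\eta}{3}(\E[f(\xx^{r-1})]-f(\xx^\star)) + \tfrac{\tilde\eta^2\sigma^2}{2K\eta_g^2} + 18\beta\tilde\eta^3 G^2$.

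The main obstacle is the constant bookkeeping in this last step: I must choose the split parameter $a$ and exploit the step-size bound so that (i) the $2K\eta_l^2\beta^2$ correction does not spoil the geometric ratio $m-1 \approx \tfrac1{K-1}$, and (ii) after the $O(K)$ blow-up from the geometric sum the aggregated coefficient of $(f(\xx)-f^\star)$ still sits below $\tfrac{2}{9\beta}$. The degenerate case $K=1$, where $C_0=0$ forces $\error_r = \tfrac1N\sum_i\eta_l^2\E\norm{g_i(\xx)}^2$, is handled directly from \eqref{eqn:heterogeneity-convex} and is strictly easier, so the interesting regime is $K\ge2$.
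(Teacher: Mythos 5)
Your proposal is correct and follows essentially the same route as the paper's proof: separating mean and variance via Lemma~\ref{lem:independent}, the relaxed triangle inequality with $a = \tfrac{1}{K-1}$, splitting $\nabla f_i(\yy_{i,k-1})$ around $\xx^{r-1}$ via smoothness, invoking the convex form of $(G,B)$-BGD, unrolling the resulting geometric recursion with sum $O(K)$, and absorbing constants through the step-size condition $\eta_l \leq \tfrac{1}{8(1+B^2)\beta K\eta_g}$ (your constant bookkeeping is no looser than the paper's own). The only immaterial deviations are that you fold the BGD bound into the recursion's additive term rather than applying it after averaging over $i$ and $k$, and that you handle $K=1$ by direct computation from the literal definition of $\error_r$ rather than the paper's shortcut of declaring it zero.
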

\begin{proof}
If $K=1$, the lemma trivially holds since $\yy_{i,0} = \xx$ for all $i \in [N]$ and $\error_r =0$. Assume $K \geq 2$ here on. Recall that the local update made on client $i$ is $\yy_{i,k} = \yy_{i,k-1} - \eta_lg_i(\yy_{i,k-1})$. Then,
\begin{align*}
    \E \norm{\yy_{i,k} - \xx}^2 &= \E \norm{\yy_{i,k-1} - \xx - \eta_lg_i(\yy_{i,k-1})}^2\\
    &\leq \E \norm{\yy_{i,k-1} - \xx - \eta_l\nabla f_i(\yy_{i,k-1})}^2 + \eta_l^2\sigma^2\\
    &\leq (1 + \tfrac{1}{K-1})\E \norm{\yy_{i,k-1} - \xx}^2 + K\eta_l^2\norm{\nabla f_i(\yy_{i,k-1})}^2 + \eta_l^2\sigma^2\\
    &= (1 + \tfrac{1}{K-1})\E \norm{\yy_{i,k-1} - \xx}^2 + \frac{\tilde\eta^2}{\eta_g K}\norm{\nabla f_i(\yy_{i,k-1})}^2 + \frac{\tilde\eta^2\sigma^2}{K^2\eta_g^2}\\
    &\leq (1 + \tfrac{1}{K-1})\E \norm{\yy_{i,k-1} - \xx}^2 + \frac{2\tilde\eta^2}{\eta_g K}\norm{\nabla f_i(\yy_{i,k-1}) - \nabla f_i(\xx)}^2 + \frac{2\tilde\eta^2}{\eta_g K}\norm{\nabla f_i(\xx)}^2 +  \frac{\tilde\eta^2\sigma^2}{K^2\eta_g^2}\\
    &\leq (1 + \tfrac{1}{K-1} + \tfrac{2\tilde\eta^2\beta^2}{\eta_g K})\E \norm{\yy_{i,k-1} - \xx}^2 + \frac{2\tilde\eta^2}{\eta_g K}\norm{\nabla f_i(\xx)}^2 +  \frac{\tilde\eta^2\sigma^2}{K^2\eta_g^2}\\
    &\leq (1 + \tfrac{2}{(K-1)})\E \norm{\yy_{i,k-1} - \xx}^2 + \frac{2\tilde\eta^2}{\eta_g K}\norm{\nabla f_i(\xx)}^2 +  \frac{\tilde\eta^2\sigma^2}{K^2\eta_g^2}\,.
\end{align*}
In the above proof we separated the mean and the variance in the first inequality, then used the relaxed triangle inequality with $a = \frac{1}{K-1}$ in the next inequality. Next equality uses the definition of $\tilde\eta$, and the rest follow from the Lipschitzness of the gradient. Unrolling the recursion above,
\begin{align*}
    \E \norm{\yy_{i,k} - \xx}^2 &\leq  \sum_{\tau = 1}^{k-1} (\frac{2\tilde\eta^2}{\eta_g K}\norm{\nabla f_i(\xx)}^2 +  \frac{\tilde\eta^2\sigma^2}{K^2\eta_g^2})( 1 + \tfrac{2}{(K-1)})^\tau\\
    &\leq (\frac{2\tilde\eta^2}{\eta_g K}\norm{\nabla f_i(\xx)}^2 +  \frac{\tilde\eta^2\sigma^2}{K^2\eta_g^2})3K\,.
\end{align*}
Averaging over $i$ and $k$, multiplying by $3\beta \tilde\eta$ and then using Assumption \ref{asm:heterogeneity},
\begin{align*}
    3\beta \tilde\eta \error_r &\leq \frac{1}{N}\sum_{i} 18 \beta  \tilde\eta^3 \norm{\nabla f_i(\xx)}^2 +  \frac{3\beta \tilde\eta^3\sigma^2}{K\eta_g^2}\\
    &\leq 18 \beta  \tilde\eta^3 G^2+  \frac{3\beta \tilde\eta^3\sigma^2}{K\eta_g^2} + 36 \beta^2  \tilde\eta^3B^2 (f(\xx) - f(\xx^\star))
\end{align*}
The lemma now follows from our assumption that $8 (B^2 + 1) \beta \tilde\eta \leq 1$.
\end{proof}

\paragraph{Proof of Theorems \ref{thm:fedavg},~\ref{thm:fedavg-full}} Adding the statements of Lemmas \ref{lem:fedavg-progress} and \ref{lem:fedavg-error-bound}, we get
\begin{align*}
    \E \norm{\xx + \Delta \xx - \xx^\star}^2 &\leq (1 - \tfrac{\mu
      \tilde\eta}{2}) \E \norm{\xx - \xx^\star}^2 +
    (\tfrac{1}{KS})\tilde\eta^2 \sigma^2 + (1 - \tfrac{S}{N})\tfrac{4\tilde\eta^2}{S} G^2 -\tilde\eta(\E[f(\xx)] -
    f(\xx^\star)) \\&\hspace{2cm} +  \tfrac{2\tilde\eta}{3}(\E[f(\xx)] ) - f(\xx^\star) + \frac{\tilde\eta^2\sigma^2}{2K\eta_g^2} +18 \beta  \tilde\eta^3 G^2 \\
    &= (1 - \tfrac{\mu
      \tilde\eta}{2}) \E \norm{\xx - \xx^\star}^2 - \tfrac{\tilde\eta}{3}(\E[f(\xx)] - f(\xx^\star)) + \tilde\eta^2 \rbr[\bigg]{\tfrac{\sigma^2}{KS}(1 + \tfrac{S}{\eta_g^2}) + \tfrac{4G^2}{S}(1 - \tfrac{S}{N}) + 18 \beta \tilde\eta G^2}\,.
\end{align*}
Moving the $(f(\xx) - f(\xx^\star))$ term and dividing throughout by $\frac{\tilde\eta}{3}$, we get the following bound for any $\tilde\eta \leq \frac{1}{8(1 + B^2)\beta}$
\[
    \E[f(\xx^{r-1})] - f(\xx^\star) \leq  \tfrac{3}{\tilde\eta}(1 - \tfrac{\mu\tilde\eta}{2})\norm{\xx^{r-1} - \xx^\star}^2  - \tfrac{3}{\tilde\eta}\norm{\xx^{r} - \xx^\star}^2 + 3\tilde\eta \rbr[\bigg]{\tfrac{\sigma^2}{KS}(1 + \tfrac{S}{\eta_g^2}) + \tfrac{4G^2}{S}(1 - \tfrac{S}{N}) + 18 \beta \tilde\eta G^2}\,.
\]
If $\mu = 0$ (general convex), we can directly apply Lemma~\ref{lemma:general}. Otherwise, by averaging using weights $w_r = (1  - \tfrac{\mu \tilde\eta}{2})^{1-r}$ and using the same weights to pick output $\bar\xx^R$, we can simplify the above recursive bound (see proof of Lem.~\ref{lem:constant}) to prove that for any $ \tilde\eta$ satisfying $\frac{1}{\mu R} \leq \tilde\eta \leq \frac{1}{8(1 + B^2)\beta}$
\[
    \E[f(\bar\xx^{R})] - f(\xx^\star) \leq \underbrace{3 \norm{\xx^0 - \xx^\star}^2}_{=:d}\mu\exp(-\tfrac{\tilde\eta}{2}\mu R) + \tilde\eta \rbr[\big]{\underbrace{\tfrac{2\sigma^2}{KS}(1 + \tfrac{S}{\eta_g^2}) + \tfrac{8G^2}{S}(1 - \tfrac{S}{N})}_{=: c_1}} +  \tilde\eta^2 (\underbrace{36 \beta  G^2}_{=: c_2})
\]
Now, the choice of $\tilde \eta =\min\cbr*{ \frac{\log(\max(1,\mu^2 R d/c_1))}{\mu R}, \frac{1}{(1 + B^2)8\beta}}$ yields the desired rate.The proof of the non-convex case is very similar and also relies on Lemma~\ref{lemma:general}.


\subsection{Lower bound for \fedavg\ (Theorem~\ref{thm:lowerbound})}\label{appsec:lowerbound}
We first formalize the class of algorithms we look at before proving out lower bound.

\begin{enumerate}
    \myitem{A6} \label{asm:algorithm} We assume that \fedavg\ is run with $\eta_g = 1$, $K > 1$, and arbitrary possibly adaptive positive step-sizes $\{\eta_1,\dots,\eta_R\}$ are used with $\eta_r \leq \frac{1}{\mu}$ and fixed within a round for all clients. Further, the server update is a convex combination of the client updates with non-adaptive weights.
\end{enumerate}
Note that we only prove the lower bound here for $\eta_g = 1$. In fact, by taking $\eta_g$ infinitely large and scaling $\eta_l \propto \frac{1}{K\eta_g}$ such that the effective step size $\tilde\eta = \eta_l \eta_g K$ remains constant, \fedavg\ reduces to the simple large batch SGD method. Hence, proving a lower bound for arbitrary $\eta_g$ is not possible, but also is of questionable relevance. Further, note that when $\sigma^2 = 0$, the upper bound in Theorem~\ref{thm:fedavg-full} uses $\eta_g = 1$ and hence the lower bound serves to show that our analysis is tight.

Below we state a more formal version of Theorem~\ref{thm:lowerbound}.
\begin{theorem}
 For any positive constants $G$, $\mu$, there exist $\mu$-strongly convex functions satisfying \ref{asm:heterogeneity} for which that the output of \fedavg\ satisfying \ref{asm:algorithm} has the error for any $r \geq 1$:
 \[
    f(\xx^r) - f(\xx^\star) \geq \Omega\rbr[\big]{ \min\rbr[\big]{f(\xx^0) - f(\xx^\star) , \frac{G^2}{\mu R^2}}}\,.
 \]
\end{theorem}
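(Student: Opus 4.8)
The plan is to exhibit two one-dimensional strongly convex quadratics whose client-drift cannot be cancelled, and then to show that this drift forces a per-round ``uncertainty'' tradeoff between contraction and bias that caps the achievable accuracy at $\Omega(G^2/(\mu R^2))$.

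\textbf{Construction.} I would take $N=S=2$, $\sigma=0$, and $f_i(x)=\tfrac{a_i}{2}(x-b_i)^2$ with distinct curvatures $\mu \le a_1 \ne a_2$ and centres $b_1\ne b_2$; the global optimum is then $\xx^\star=\tfrac{a_1b_1+a_2b_2}{a_1+a_2}$. Since $\nabla f_1-\nabla f_2$ grows linearly, I use the convex form of \ref{asm:heterogeneity}; a short computation shows it holds with $B=1$ and some $G=\Theta(\mu\,\lvert b_1-b_2\rvert)$ (the binding constraint is near $\xx^\star$, while for large $x$ the strong-convexity term dominates). Hence, given the target $G,\mu$, I fix $a_1,a_2$ (e.g. $a_1=\mu$, $a_2=2\mu$) and tune the separation $\lvert b_1-b_2\rvert$ to realise exactly this $G$. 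I take equal aggregation weights $\tfrac12,\tfrac12$, which, as explained at the end, is the hardest case for the lower bound.

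\textbf{Exact recursion.} Because each $f_i$ is quadratic, one local step is the affine contraction $y\mapsto (1-\eta_r a_i)y+\eta_r a_i b_i$, so $K$ steps from $\xx^{r-1}$ give $\yy_{i,K}=\gamma_{i,r}\xx^{r-1}+(1-\gamma_{i,r})b_i$ with $\gamma_{i,r}:=(1-\eta_r a_i)^K$. Averaging and writing $e^r:=\xx^r-\xx^\star$ yields the \emph{exact} scalar recursion $e^r=\bar\gamma_r e^{r-1}+D_r$, where $\bar\gamma_r=\tfrac12(\gamma_{1,r}+\gamma_{2,r})$ and the drift is $D_r=\tfrac12\sum_i(1-\gamma_{i,r})(b_i-\xx^\star)$. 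Expanding in $\eta_r$ and using $a_1(b_1-\xx^\star)+a_2(b_2-\xx^\star)=0$, the first-order terms cancel and $D_r=\Theta(\eta_r^2)$, with a sign fixed by the choice $b_1>b_2,\ a_2>a_1$ so that it keeps $e^r>0$ throughout; this is exactly where $K\ge2$ and $a_1\ne a_2$ are both essential, since $D_r\equiv0$ otherwise. As $1-\bar\gamma_r=\Theta(\eta_r)$ as well, minimising the exact expression $\bar\gamma_r e^{r-1}+D_r$ over the single scalar $\eta_r$ gives the key per-round bound $e^r\ge e^{r-1}-\tfrac{(e^{r-1})^2}{4\kappa}$ for a constant $\kappa=\Theta(\lvert b_1-b_2\rvert)=\Theta(G/\mu)$, valid for every admissible choice of $\eta_r$.

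\textbf{Unrolling.} Since the per-round inequality holds pointwise in $\eta_r$, it holds for any adaptive sequence $\{\eta_r\}$ allowed by \ref{asm:algorithm}. Passing to $1/e^r$ telescopes it into $e^R\ge \tfrac{e^0}{1+ c\,e^0 R/\kappa}$, and then $f(\xx^R)-f^\star\ge\tfrac{\mu}{2}(e^R)^2$. Splitting on whether $e^0R/\kappa\lesssim1$ recovers the two branches of the bound: the small-$R$ regime gives $\Omega(f(\xx^0)-f^\star)$, and the large-$R$ regime gives $\tfrac{\mu}{2}\,\Theta(\kappa/R)^2=\Omega(G^2/(\mu R^2))$, yielding the claimed $\Omega(\min(\cdot,\cdot))$.

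The main obstacle is making the per-round inequality rigorous over the \emph{full} admissible range $\eta_r\in(0,1/\mu]$, not only in the small-step regime where the Taylor expansions of $\gamma_{i,r}$ hold: for large $\eta_r$ the local maps overshoot, $\bar\gamma_r$ can even be negative, and one must argue separately that such steps either fail to contract or inject a bias at least as large as the quadratic estimate. A secondary point is the aggregation weights: arbitrary fixed convex weights $w_1\ne w_2$ reintroduce a \emph{first-order} (in $\eta_r$) drift and hence an $\Omega(1)$ error floor, so equal weights is genuinely the adversary's best choice and the only case requiring the delicate second-order analysis above.
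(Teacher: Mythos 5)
Your overall strategy is the same as the paper's (the paper also uses two one-dimensional quadratics with mismatched curvature --- there, curvatures $2\mu$ and $0$ --- an $O(\eta_r^2)$ drift against $O(\eta_r)$ contraction, a per-round balancing over $\eta_r$, and the same two-regime unrolling), but the obstacle you flag as ``the main obstacle'' is not a technicality: it is fatal to your instantiation. Take $a_1=\mu$, $a_2=2\mu$, equal weights, odd $K\geq 3$, and the admissible constant step $\eta_r=1/\mu$ (allowed by \ref{asm:algorithm}). Client $1$'s local map sends every point to $b_1$ in one step ($\gamma_{1,r}=(1-\eta a_1)^K=0$), while client $2$'s map is the reflection $y\mapsto 2b_2-y$ ($\gamma_{2,r}=(-1)^K=-1$), so $x^r=\tfrac12\bigl(b_1+2b_2\bigr)-\tfrac12 x^{r-1}$. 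This affine map's fixed point is $\tfrac{b_1+2b_2}{3}$, which equals $x^\star=\tfrac{a_1b_1+a_2b_2}{a_1+a_2}$ exactly, so $\lvert x^r-x^\star\rvert=\tfrac12\lvert x^{r-1}-x^\star\rvert$: \fedavg\ converges linearly to the optimum, contradicting both your per-round inequality $e^r\geq e^{r-1}-(e^{r-1})^2/(4\kappa)$ (which you claim holds ``pointwise in $\eta_r$'') and the theorem itself for this instance. The construction is repairable --- e.g.\ keep the curvatures close, say $a_2\leq 1.4\mu$, so that writing $\phi_i(\eta):=\bigl(1-(1-\eta a_i)^K\bigr)/a_i=\eta\sum_{j=0}^{K-1}(1-\eta a_i)^j$ one can check $D_r(\eta)=\tfrac{c_1}{2}\bigl(\phi_1(\eta)-\phi_2(\eta)\bigr)$ stays strictly positive on all of $(0,1/\mu]$ --- but this single-signedness over the \emph{full} step-size range is exactly the content that must be proved. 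The paper sidesteps it by making one client linear, whence every per-step drift term $1-(1-2\mu\eta_r)^\tau$ is nonnegative for all $\eta_r\leq 1/\mu$, and the large-step case is dispatched separately by showing the drift alone is already $\Omega(G/\mu)$ (its Lemma~\ref{lem:tedious-computations}); this is precisely why it chose $f_2$ linear ``to simplify computations.''

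Your handling of unequal aggregation weights is also backwards. It is not true that any fixed $w\neq\tfrac12$ forces an $\Omega(1)$ error floor: the exact drift is $D_r(\eta)=c_1\bigl(w\,\phi_1(\eta)-(1-w)\,\phi_2(\eta)\bigr)$, and when the \emph{smaller} weight sits on the flatter function the first- and second-order components of $D_r$ have opposite signs, so the cancellation condition $\phi_1(\eta^*)/\phi_2(\eta^*)=(1-w)/w$ is solvable for some admissible $\eta^*$ (the ratio $\phi_1/\phi_2$ sweeps a nontrivial interval starting at $1$); a constant step $\eta^*$ then yields $D_r\equiv 0$ and again exact linear convergence to $x^\star$. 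Hence equal weights is not ``the adversary's best choice''; rather, for each fixed weight vector the lower-bound construction must choose \emph{which client hosts which function} --- legitimate because \ref{asm:algorithm} makes the weights non-adaptive --- which is the paper's swap argument (``if $\alpha>0.5$, swap the definitions of $f_1$ and $f_2$ and the sign of $x^0$''). Your verification of the convex form of \ref{asm:heterogeneity} with $B=1$ and $G=\Theta(\mu\lvert b_1-b_2\rvert)$ is correct (the strict form would fail here since $\bigl(\tfrac{a_1-a_2}{2}u+c_1\bigr)^2$ is unbounded), and the telescoping of $1/e^r$ and the two-regime split are fine; but as written the proposal has two genuine gaps --- the unresolved (and, for your parameters, false) large-step bound, and the incorrect weights dichotomy --- both of which the paper's proof closes by its choice of a linear second client and by symmetrization rather than by the arguments you propose.
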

\begin{proof}
Consider the following simple one-dimensional functions for any given $\mu$ and $G$:
\[
    f_1(x) := \mu x^2 + Gx,\text{ and } f_2(x) := - Gx\,,
\]
with $f(x) = \tfrac{1}{2}(f_1(x) + f_2(x)) = \frac{\mu}{2}x^2$ and optimum at x = 0.
Clearly $f$ is $\mu$-strongly convex and further $f_1$ and $f_2$ satisfy \ref{asm:heterogeneity} with B = 3. Note that we chose $f_2$ to be a linear function (not strongly convex) to simplify computations. The calculations made here can be extended with slightly more work for $(\tilde f_2 = \frac{\mu}{2}x^2 - Gx)$ (e.g. see Theorem 1 of \citep{safran2019good}).

Let us start \fedavg\ from $x^0 > 0$. A single local update for $f_1$ and $f_2$ in round $r \geq 1$ is respectively
\[
    y_1 = y_1 - \eta_r(2\mu x + G) \text{ and } y_2 = y_2 + \eta_r G\,.
\]
Then, straightforward computations show that the update at the end of round $r$ is of the following form for some averaging weight $\alpha \in [0,1]$
\[
    x^r = x^{r-1}((1 - \alpha)(1 - 2 \mu \eta_r)^K + \alpha) + \eta_r G\sum_{\tau = 0}^{K-1}(\alpha - (1-\alpha)(1 - 2 \mu \eta_r)^\tau)\,.
\]
Since $\alpha$ was picked obliviously, we can assume that $\alpha \leq 0.5$. If indeed $\alpha > 0.5$, we can swap the definitions of $f_1$ and $f_2$ and the sign of $x^0$. With this, we can simplify as
\begin{align*}
    x^r &\geq x^{r-1} \frac{(1 - 2 \mu \eta_r)^K + 1}{2} + \frac{\eta_r G}{2}\sum_{\tau = 0}^{K-1}(1 - (1 - 2 \mu \eta_r)^\tau)\\
    &\geq x^{r-1} (1 - 2 \mu \eta_r)^K + \frac{\eta_r G}{2}\sum_{\tau = 0}^{K-1}(1 - (1 - 2 \mu \eta_r)^\tau)\,.
\end{align*}
Observe that in the above expression, the right hand side is increasing with $\eta_r$---this represents the effect of the client drift and increases the error as the step-size increases. The left hand side decreases with $\eta_r$---this is the usual convergence observed due to taking gradient steps. The rest of the proof is to show that even with a careful balancing of the two terms, the effect of $G$ cannot be removed. Lemma \ref{lem:tedious-computations} performs exactly such a computation to prove that for any $r \geq 1$,
\[
  x^r \geq c \min(x_0, \frac{G}{\mu R})\,.
\]
We finish the proof by noting that $f(x^r) = \frac{\mu}{2}(x^r)^2$.
\end{proof}

\begin{lemma}\label{lem:tedious-computations}
  Suppose that for all $r \geq 1$, $\eta_r \leq \frac{1}{\mu}$ and the following is true:
  \[
    x^r \geq x^{r-1} (1 - 2 \mu \eta_r)^K + \frac{\eta_r G}{2}\sum_{\tau = 0}^{K-1}(1 - (1 - 2 \mu \eta_r)^\tau)\,.
    \]
    Then, there exists a constant $c > 0$ such that for any sequence of step-sizes $\{\eta^r\}$:
  \[
      x^r \geq c \min(x_0, \frac{G}{\mu R})
  \]
\end{lemma}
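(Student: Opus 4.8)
The plan is to recast the one–step recursion in terms of the normalized step size $\theta_r := 2\mu\eta_r \in [0,2]$ (legitimate because $\eta_r \le 1/\mu$) and reduce it to a scalar affine inequality $x^r \geq a_r x^{r-1} + b_r$. Summing the geometric series in the drift term, $\sum_{\tau=0}^{K-1}(1-(1-\theta_r)^\tau) = K - \tfrac{1-(1-\theta_r)^K}{\theta_r}$, gives the closed forms $a_r = (1-\theta_r)^K$ and $b_r = \tfrac{G}{4\mu}\,g(\theta_r)$ with $g(\theta) := K\theta - 1 + (1-\theta)^K \geq 0$, where nonnegativity is Bernoulli's inequality. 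Expanding $g$ as the alternating binomial series $g(\theta) = \binom{K}{2}\theta^2 - \binom{K}{3}\theta^3 + \cdots$ yields the crude but sufficient lower bound $g(\theta) \gtrsim \min(\phi^2,\phi)$ with $\phi := K\theta$; I treat the exact constants here as routine. This makes precise the intended balancing: the contraction factor $a_r$ and the injected drift $b_r$ are governed by the \emph{same} quantity $\phi_r$, so one cannot be made small without paying in the other.

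I would first carry out the argument in the benign regime $\theta_r \in [0,1]$, where every $a_r \in [0,1]$ and the recursion unrolls cleanly to $x^R \geq P\,x_0 + \sum_{s=1}^R b_s Q_s$, with $P := \prod_{t=1}^R a_t$ and $Q_s := \prod_{t=s+1}^R a_t$. Then I split on the total contraction. If $P \geq \tfrac12$, all drift terms are nonnegative and $x^R \geq \tfrac12 x_0$, which dominates $\tfrac12\min(x_0, \tfrac{G}{\mu R})$. If $P < \tfrac12$, let $s_0$ be the smallest index with $Q_{s_0} \geq \tfrac12$ (it exists since $Q_R = 1$); then $Q_s \geq \tfrac12$ for all $s \geq s_0$, so $x^R \geq \tfrac12\sum_{s \geq s_0} b_s$. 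The defining property $Q_{s_0 - 1} = \prod_{t \geq s_0} a_t < \tfrac12$, combined with $-\ln(1-\theta) \leq 2\theta$ on $[0,\tfrac12]$ (handling separately any round with $\theta_t \geq \tfrac12$, which already has $\phi_t \geq 1$), forces $\sum_{t \geq s_0}\phi_t \geq \tfrac{\ln 2}{2}$. A Cauchy--Schwarz step over the at most $R$ active rounds then gives $\sum_{s \geq s_0}\min(\phi_s^2,\phi_s) \gtrsim \tfrac1R$, and feeding this into the drift lower bound yields $x^R \gtrsim \tfrac{G}{\mu R}$. Combining the two cases proves $x^R \geq c\min(x_0, \tfrac{G}{\mu R})$ in this regime.

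The main obstacle is the remaining regime $\theta_r \in (1,2]$, reachable only because $\eta_r$ is allowed up to $1/\mu$. For even $K$ it is harmless: $a_r = (1-\theta_r)^K \geq 0$ and the unrolling above applies verbatim. For odd $K \geq 3$, however, $a_r$ can be negative, so the product expansion is no longer monotone and a large iterate can be partially cancelled within one round (indeed $\theta_r = 2$ sends $x^{r-1} \mapsto -x^{r-1} + \tfrac{G}{4\mu}(2K-2)$). The saving grace I would exploit is that any such round injects a large drift, $b_r \geq \tfrac{G}{4\mu}(K-2) \geq \tfrac{G}{4\mu}$, so the adversary can only drive $x^r$ down by first building $x^{r-1}$ up, which itself costs drift. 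Quantifying this trade-off --- e.g.\ by tracking the last round with $\theta_r > 1$ and showing the net accumulated drift still exceeds $\tfrac{G}{\mu R}$ up to a constant --- is precisely the bookkeeping that earns the lemma its ``tedious computation'' label, and I expect it to be the hardest part; the $\theta_r \le 1$ analysis above provides the template to which this case must be reduced.
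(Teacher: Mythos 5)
Your reduction of the recursion to $x^r \ge a_r x^{r-1} + b_r$ with $a_r = (1-\theta_r)^K$ and $b_r = \tfrac{G}{4\mu}\,g(\theta_r)$, $g(\theta) = K\theta - 1 + (1-\theta)^K$, is correct, and in the regime $\theta_r \in [0,1]$ your argument goes through and takes a genuinely different route from the paper's. The paper parametrizes by $\gamma_r = \mu \eta_r R(K-1)$ (so $\gamma_r \approx R\phi_r/2$ in your notation) and argues per round: rounds with $\gamma_r \ge R/8$ yield $x^r \ge c_1 G/\mu$ from the drift term alone, while for $\gamma_r < R/8$ it linearizes to $x^r \ge x^{r-1}(1 - \tfrac{4\gamma_r}{R}) + \tfrac{\gamma_r^2 G}{4\mu R}$ and runs an induction from the last large-step round $r_0$, with the per-round dichotomy $\gamma_r \le \tfrac18$ (contraction at most $1-\tfrac{1}{2R}$) or $\gamma_r^2 \ge \tfrac{1}{64}$ (injection at least $\tfrac{G}{256\mu R}$). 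Your global unrolling, the split on $P = \prod_t a_t \gtrless \tfrac12$, and the Cauchy--Schwarz aggregation $\sum_s \phi_s^2 \ge (\sum_s \phi_s)^2/R$ package the same contraction-versus-drift tension in one shot rather than through an inductive invariant, and arguably with cleaner constants. (One small overstatement: for even $K$ with $\theta_t \in (1,2]$ the bound $\ln(1/a_t) \le 2\phi_t$ fails as $\theta_t \to 1^+$, so those rounds must also be routed through your $\phi_t \ge 1$ bucket --- but that bucket already exists, so the structure is unharmed.)

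The genuine gap is the case you flag yourself: odd $K \ge 3$ with some $\theta_r \in (1,2]$. Two things you should know. First, the paper never closes this case either: its claim $x^r \ge c_1 G/\mu$ for $\gamma_r \ge R/8$ silently discards the term $x^{r-1}(1-2\mu\eta_r)^K$, which is legitimate only when that term is nonnegative, i.e.\ when $K$ is even or $2\mu\eta_r \le 1$. So you have located a real soft spot rather than missed a trick. Second, the rescue you sketch --- that the adversary can only drive $x^r$ down by first building $x^{r-1}$ up, ``which itself costs drift'' --- cannot work under the lemma's literal hypotheses: the recursion is a one-sided inequality, so a sequence may take $x^{r-1}$ arbitrarily large at no cost, and a single round with $\theta_r = 2$ and odd $K$ then constrains only $x^r \ge -x^{r-1} + \tfrac{G}{4\mu}(2K-2)$; satisfying this with equality makes $x^r$ arbitrarily negative, after which arbitrarily small steps preserve negativity through round $R$. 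Hence for odd $K$ the conclusion is false exactly as stated, and no trade-off bookkeeping can prove it.

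The repair is to refuse the weakening that created the problem: one line earlier in the paper's derivation the coefficient is $\tfrac{(1-2\mu\eta_r)^K + 1}{2}$, which lies in $[0,1]$ for every $\eta_r \le 1/\mu$ and every $K$ (equivalently, one may add the hypothesis $x^{r-1} \ge 0$, which then propagates). Granting that, your own drift estimate finishes the hard case in one line: for $\theta_r > 1$ and odd $K \ge 3$ one has $g(\theta_r) \ge K\theta_r - 2 \ge K-2 \ge 1$, so every such round injects $x^r \ge \tfrac{G}{4\mu}$ outright; taking the last such round as a fresh starting point, your $\theta \in [0,1]$ analysis applied to the tail yields $x^R \ge c\min\rbr[\big]{x^{r_1}, \tfrac{G}{\mu R}} \ge c' \tfrac{G}{\mu R}$, mirroring the paper's $r_0$ device.
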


\begin{proof}
Define $\gamma_r = \mu \eta_r R(K-1)$. Such a $\gamma_r$ exists and is positive since $K \geq 2$. Then, $\gamma_r$ satisfies
\[
  (1 - 2 \mu \eta_r)^{\frac{K -1}{2}} = (1 - \frac{2\gamma_r}{R(K-1)})^{\frac{K-1}{2}} \leq \exp(-\gamma_r/R)\,.
\]
We then have
\begin{align*}
  x^r &\geq x^{r-1} (1 - 2 \mu \eta_r)^K + \frac{\eta_r G}{2}\sum_{\tau = 0}^{K-1}(1 - (1 - 2 \mu \eta_r)^\tau)\\
  &\geq x^{r-1} (1 - 2 \mu \eta_r)^K + \frac{\eta_r G}{2}\sum_{\tau = (K-1)/2}^{K-1}(1 - (1 - 2 \mu \eta_r)^\tau)\\
  &\geq x^{r-1} (1 - 2 \mu \eta_r)^K + \frac{\gamma_r G}{4\mu}(1 - \exp(-\gamma_r/R))\,.
\end{align*}
The second inequality follows because $\eta_r \leq \frac{1}{\mu}$ implies that $(1 - (1 - 2 \mu \eta_r)^\tau)$ is always positive. If $\gamma_r \geq R/8$, then we have a constant $c_1 \in (0,1/32)$ which satisfies
\begin{equation}\label{eqn:lowerb-case1}
x^r \geq \frac{c_1G}{\mu}\,.
\end{equation}
On the other hand, if $\gamma_r < R/8$, we have a tighter inequality
\begin{align*}
  (1 - 2 \mu \eta_r)^{\frac{K -1}{2}} = (1 - \frac{2\gamma_r}{R(K-1)})^{\frac{K-1}{2}} \leq 1 - \frac{\gamma_r}{R}\,,
\end{align*}
implying that
\begin{align}
  x^r &\geq x^{r-1} \rbr[\bigg]{1 - \frac{2 \gamma_r}{R(K-1)}}^K + \frac{\gamma_r^2 G}{4R\mu}\nonumber\\
  &\geq x^{r-1}(1 - \frac{4\gamma_r}{R}) + \frac{\gamma_r^2 G}{4\mu R}\,.\label{eqn:lowerb-case2}
\end{align}
The last step used Bernoulli's inequality and the fact that $K-1 \leq K/2$ for $K \geq 2$. Observe that in the above expression, the right hand side is increasing with $\gamma_r$---this represents the effect of the client drift and increases the error as the step-size increases. The left hand side decreases with $\gamma_r$---this is the usual convergence observed due to taking gradient steps. The rest of the proof is to show that even with a careful balancing of the two terms, the effect of $G$ cannot be removed.

Suppose that all rounds after $r_0 \geq 0$ have a small step-size i.e. $\gamma_{r} \leq R/8$ for all $r > r_0$ and hence satisfies \eqref{eqn:lowerb-case2}. Then we will prove via induction that
\begin{equation}
x^r \geq \min(c_r x^{r_0}, \frac{G}{256 \mu R})  \text{, for constants } c_r := (1 - \tfrac{1}{2R})^{r - r_0}\,. \label{eqn:lowerb-induction}
\end{equation}
For $r = r_0$, \eqref{eqn:lowerb-induction} is trivially satisfied. Now for $r > r_0$,
\begin{align*}
  x^r &\geq x^{r-1}(1 - \frac{4\gamma_r}{R}) + \frac{\gamma_r^2 G}{4\mu R}\\
  &\geq  \min \rbr*{ x^{r-1}(1 - \tfrac{1}{2R}) \,,\, \frac{G}{256\mu R}}\\
  &= \min \rbr*{ c_r x^{r_0} \,,\, \frac{G}{256\mu R}} \,.
\end{align*}
The first step is because of \eqref{eqn:lowerb-case2} and the last step uses the induction hypothesis. The second step considers two cases for $\gamma_r$: either $\gamma_r \leq \frac{1}{8}$
and $(1 - \tfrac{1}{2R}) \geq (1 - \tfrac{1}{2R})$, or $\gamma_r^2 \geq \frac{1}{64}$. Finally note that $c^r \geq \frac{1}{2}$ using Bernoulli's inequality. We have hence proved
\[
  x^R \geq \min \rbr*{\frac{1}{2}x^{r_0} \,,\, \frac{G}{256\mu R}}
\]
Now suppose $\gamma_{r_0} > R/8$. Then \eqref{eqn:lowerb-case1} implies that $x^R \geq \frac{c G}{\mu R}$ for some constant $c > 0$. If instead no such $r_0 \geq 1$ exists, then we can set $r_0 = 0$. Now finally observe that the previous proof did not make any assumption on $R$, and in fact the inequality stated above holds for all $r \geq 1$.
\end{proof}


\newpage
\section{Convergence of \covfefe}\label{appsec:covfefe}
We first restate the convergence theorem more formally, then prove the result for the convex case, and then for non-convex case. Throughout the proof, we will focus on the harder option II. The proofs for \covfefe\ with option I are nearly identical and so we skip them.

\begin{theorem}\label{thm:convergence-full}
    Suppose that the functions $\{f_i\}$ satisfies assumptions
\ref{asm:variance} and \ref{asm:smoothness}. Then, in each of the following cases, there exist weights
    $\{w_r\}$ and local step-sizes $\eta_l$
    such that for any $\eta_g \geq 1$ the output \eqref{eqn:output} of \covfefe\ satisfies:
        \begin{itemize}
            \item \textbf{Strongly convex:} $f_i$ satisfies \eqref{asm:strong-convexity} for $\mu >0$, $\eta_l \leq \min\rbr*{\tfrac{1}{81
        \beta K\eta_g}, \tfrac{S}{15 \mu N K\eta_g}}$,  $R \geq \max(\frac{162\beta}{\mu}, \frac{30N}{S})$ then
            \[
    \expect[f(\bar\xx^R)] - f(\xx^\star) \leq
    \tilde\cO\rbr*{\frac{\sigma^2}{\mu RKS} \rbr{1 +
        \tfrac{S}{\eta_g^2}} + \frac{N\mu}{S} \tilde D^2 \exp\rbr*{-\min\cbr*{\frac{S}{30N},
          \frac{\mu}{162\beta}}R}}\,.
        \]
        \item \textbf{General convex:} $f_i$ satisfies \eqref{asm:strong-convexity} for $\mu = 0$, $\eta_l \leq \tfrac{1}{81
        \beta K\eta_g}$, $R \geq 1$ then
        \[
    \expect[f(\bar\xx^R)] - f(\xx^\star) \leq
    \cO\rbr*{\frac{\sigma \tilde D}{\sqrt{RKS}}\rbr*{\sqrt{1 + \tfrac{S}{\eta_g^2}}} + \sqrt{\frac{N}{S}}\frac{\beta \tilde D^2}{R}}\,,
        \]
        \item \textbf{Non-convex:} $\eta_l \leq \frac{1}{24K\eta_g\beta}\rbr*{\frac{S}{N}}^{\frac{2}{3}}$, and $R \geq 1$, then
        \[
    \expect[\norm{\nabla f(\bar\xx^R)}^2] \leq
    \cO\rbr*{\frac{\sigma \sqrt{ F}}{\sqrt{RKS}}\rbr*{\sqrt{1 + \tfrac{N}{\eta_g^2}}} + \frac{\beta F}{R}\rbr*{\frac{N}{S}}^{\frac{2}{3}}}\,.
        \]
        \end{itemize}
        Here $\tilde D^2 := (\norm{\xx^0 - \xx^\star}^2+ \tfrac{1}{2 N \beta^2}\sum_{i=1}^N \norm{\cc_i^0 - \nabla f_i(\xx^\star)}^2)$ and $F := (f(\xx_0) - f(\xx^\star))$.
    \end{theorem}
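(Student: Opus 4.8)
The plan is to mirror the three-part structure of the \fedavg\ analysis (one-round progress, bounded drift, then unrolling via Lemmas~\ref{lem:constant} and~\ref{lemma:general}), but to track an \emph{augmented} potential that also measures the quality of the control variates. Concretely, I would work with
\[
  \cV^r := \E\norm{\xx^r - \xx^\star}^2 + \kappa\cdot\frac{1}{N}\sum_{i=1}^N \E\norm{\cc_i^r - \nabla f_i(\xx^\star)}^2\,,
\]
for a coefficient $\kappa$ to be tuned (whose scale is reflected in the $\tfrac{1}{2N\beta^2}$ weighting of the control term in $\tilde D^2$). The guiding principle, as in the proof sketch, is that \covfefe's correction replaces the gradient-dissimilarity term of \fedavg\ by the control-variate error $\tfrac{1}{N}\sum_i\norm{\cc_i - \nabla f_i(\yy)}^2$, which by $\beta$-smoothness is in turn controlled by $\tfrac{1}{N}\sum_i\norm{\cc_i - \nabla f_i(\xx^\star)}^2$ together with distance-to-optimum terms. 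This is precisely what removes any need for assumption~\ref{asm:heterogeneity}.

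First I would establish a one-round progress lemma for $\E\norm{\xx^r - \xx^\star}^2$. Writing the server step as $\Delta\xx = -\tfrac{\tilde\eta}{KS}\sum_{k,i\in\cS}(g_i(\yy_{i,k-1}) - \cc_i + \cc)$ and separating mean from variance with Lemma~\ref{lem:independent}, the key simplification is that the correction averages to zero: since the invariant $\cc = \tfrac1N\sum_j\cc_j$ is preserved by the aggregation step, $\E_{\cS}\big[\tfrac1S\sum_{i\in\cS}(\cc - \cc_i)\big]=0$, so the inner-product term reduces to exactly the \fedavg\ form and is handled by perturbed strong convexity (Lemma~\ref{lem:magic}). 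The squared-norm term now expands into a drift piece $\error_r := \tfrac{1}{KN}\sum_{i,k}\E\norm{\yy_{i,k}^r - \xx^{r-1}}^2$ and a control-variate piece $\tfrac1N\sum_i\norm{\cc_i - \nabla f_i(\xx)}^2$, the latter absorbed into the second component of $\cV^r$ via smoothness. Next, a bounded-drift lemma controls $\error_r$: unrolling the local recursion $\yy_{i,k} = \yy_{i,k-1} - \eta_l(g_i(\yy_{i,k-1}) - \cc_i + \cc)$ as in Lemma~\ref{lem:fedavg-error-bound}, but now the quantity driving the drift is $\norm{\cc_i - \nabla f_i(\xx)}^2$ rather than $\norm{\nabla f_i(\xx)}^2$; this is the crucial point where heterogeneity disappears, and the drift is bounded by the control-variate error, the variance, and the suboptimality $f(\xx) - f^\star$.

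The genuinely new ingredient is a control-variate contraction lemma. Using for Option~II the identity $\cc_i^+ = \tfrac{1}{K}\sum_{k}g_i(\yy_{i,k-1})$, obtained by summing the local steps, whose conditional mean is $\tfrac1K\sum_k\nabla f_i(\yy_{i,k-1})$, smoothness gives $\norm{\cc_i^+ - \nabla f_i(\xx^\star)}^2 \lesssim \beta^2\error_r + (\text{distance and variance terms})$. Because only sampled clients refresh their control variate, for each $i$ we have $\E_r\norm{\cc_i^r - \nabla f_i(\xx^\star)}^2 = \tfrac{S}{N}\E_r\norm{\cc_i^+ - \nabla f_i(\xx^\star)}^2 + (1 - \tfrac{S}{N})\norm{\cc_i^{r-1} - \nabla f_i(\xx^\star)}^2$, so the second component of $\cV^r$ contracts by a factor $(1 - \tfrac{S}{N})$ while being re-seeded by the current approximation quality. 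This sampling lag is exactly the SAGA-type variance-reduction mechanism, and it is what produces the $\tfrac{N}{S}$ (respectively $(\tfrac{N}{S})^{2/3}$) terms in the final rates.

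Finally I would combine the three lemmas with carefully chosen weights so that the drift $\error_r$ and the control-variate terms telescope into a single one-step inequality of the form $\cV^r \leq (1 - \tfrac{\mu\tilde\eta}{2})\cV^{r-1} - \tfrac{\tilde\eta}{c}(\E f(\xx^{r-1}) - f^\star) + \tilde\eta^2\,\cO(\sigma^2/(KS))$, with the overall contraction governed by $\min\{\mu\tilde\eta,\, S/N\}$; the step-size constraints $\eta_l \leq \min\{\tfrac{1}{81\beta K\eta_g},\, \tfrac{S}{15\mu N K\eta_g}\}$ are exactly what guarantees every absorbed term has the right sign. Applying Lemma~\ref{lem:constant} yields the strongly convex rate, and Lemma~\ref{lemma:general} yields the general-convex and non-convex rates (the latter replacing the strong-convexity step by the descent inequality from $\beta$-smoothness). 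I expect the main obstacle to be the bookkeeping that couples the three quantities: the coefficient $\kappa$ must be large enough that the control-variate error dominates the drift entering the progress lemma, yet small enough that the $\beta^2\error_r$ re-seeding term in the contraction lemma is reabsorbed by the available negative drift—making this balance, together with the sampling factor $S/N$, close simultaneously in all three regimes is the delicate part.
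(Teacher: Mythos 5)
For the strongly convex and general convex cases your plan is essentially the paper's proof: the three lemmas you describe are exactly Lemmas~\ref{lem:server-variance-sample}, \ref{lem:control-error-sample} and \ref{lem:drift-bound-sampling}, combined in Lemma~\ref{lem:progress-sample} with the potential $\E\norm{\xx^r - \xx^\star}^2 + \tfrac{9N\tilde\eta^2}{S}\ce_r$ and unrolled via Lemmas~\ref{lem:constant} and~\ref{lemma:general}, with the contraction governed by $\min\{\mu\tilde\eta, S/N\}$ just as you say. One quantitative caveat even here: the paper's control-lag \eqref{eqn:control-lag-def} tracks only the squared \emph{bias} $\norm{\E[\cc_i^r] - \nabla f_i(\xx^\star)}^2$, whereas your $\cV^r$ uses the full second moment. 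Under Option~II each refreshed control variate carries variance $\sigma^2/K$, so with your definition the $\tfrac{S}{N}$-fraction refresh injects roughly $\kappa\cdot\tfrac{S}{N}\cdot\tfrac{\sigma^2}{K} \approx \tilde\eta^2\sigma^2/K$ into the potential per round (given the required scale $\kappa \sim \tilde\eta^2 N/S$), which is $S$ times larger than the $\tilde\eta^2\sigma^2/(KS)$ budget and would destroy the linear speedup in $S$ at $\eta_g=\sqrt{S}$. You must either keep only the bias in the potential, handling the variance of $\tfrac1S\sum_{i\in\cS}\cc_i$ separately inside the server-variance bound via Lemma~\ref{lem:independent} (as the paper does), or carry the mean/variance split through every lemma; as written, the potential is lossy.

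The genuine gap is the non-convex case, which your unified $\xx^\star$-anchored potential cannot cover. Your re-seeding bound $\norm{\cc_i^+ - \nabla f_i(\xx^\star)}^2 \lesssim \beta^2\error_r + (\text{distance terms})$ needs to convert $\norm{\nabla f_i(\xx) - \nabla f_i(\xx^\star)}^2$ into function suboptimality; the inequality \eqref{eqn:smoothness} that does this is a consequence of convexity, and without convexity smoothness only gives $\beta^2\norm{\xx - \xx^\star}^2$, a distance to an optimum that is neither well-defined nor controllable in the non-convex regime, where one only controls $\norm{\nabla f}^2$ and $f - f^\star$. The paper therefore abandons the gradient-anchored lag entirely and defines the iterate-anchored lag $\tce_r$ in \eqref{def:non-convex-control-lag}, tracking the stale points $\alphav^r_{i,k-1}$ at which control variates were computed and their distance to the \emph{moving} server iterate $\xx^r$, paired with the potential $\E f(\xx^r) + 12\beta^3\tilde\eta^2\tfrac{N}{S}\tce_r$. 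The resulting recursion (Lemma~\ref{lem:non-convex-control-error-sample}) is not the clean $(1-\tfrac{S}{N})$ contraction you assume: unsampled clients' lag \emph{grows} because $\xx^r$ moves, which is absorbed by Young's inequality with $b = \tfrac{S}{2(N-S)}$, and the re-seeding contains $\norm{\nabla f(\xx)}^2$ and $\error_r$ amplified by $(\tfrac{N}{S})^{2-2\alpha}$; balancing that amplification against the available negative terms is precisely what forces $\tilde\eta \lesssim \beta^{-1}(\tfrac{S}{N})^{2/3}$. Your proposal states this step-size constraint but offers no mechanism producing the exponent $2/3$; ``replacing the strong-convexity step by the descent inequality from $\beta$-smoothness'' does not suffice, because the obstruction sits in the control-lag recursion, not in the descent step.
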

    
    \begin{remark}
    Note that the $\tilde D^2$ defined above involves an additional term $\tfrac{1}{2 N \beta^2}\sum_{i=1}^N \norm{\cc_i^0 - \nabla f_i(\xx^\star)}^2$. This is standard in variance reduction methods \citep{johnson2013accelerating,defazio2014saga,hanzely2019one}. Theoretically, we will use a warm-start strategy to set $\cc_i^0$ and in the first $N/S$ rounds, we compute $\cc_i^0 = \gg_i(\xx^0)$ over a batch size of size $K$. Then, using smoothness of $f_i$, we can bound this additional term as 
    \[
        \tfrac{1}{2 N \beta^2}\sum_{i=1}^N \norm{\cc_i^0 - \nabla f_i(\xx^\star)}^2 \leq \frac{1}{\beta} (f(\xx^0) - f^\star) + \frac{\sigma^2}{K\beta^2} 
        \leq D^2 + \frac{\sigma^2}{K\beta^2} \,.
    \]
    Thus, the asymptotic rates of \covfefe\ for general convex functions only incurs an additive term of the order of $O(\sqrt{\frac{N}{S}} \frac{1}{R})$.
    For strongly convex functions, we only see the affects in the logarithmic terms.
\end{remark}

\begin{remark}
When $\sigma =0$ i.e. when clients compute full gradients, the communication complexity of SCAFFOLD is: i) for strongly convex case it is $\tilde O\rbr*{\frac{N}{S} + \frac{\beta}{\mu}}$, ii) for general convex functions it is $O\rbr*{\sqrt{\frac{N}{S}}\frac{\beta}{R}}$,
\footnote{
A previous version of the paper showed a worse dependence of $O\rbr*{{\frac{N}{S}}\frac{\beta}{R}}$ due to sub-optimal choice of step-size $\eta$.}
and iii) for non-convex functions it is $O\rbr*{{\frac{N}{S}}^{2/3} \frac{\beta}{R}}$. In comparison, the follow up work of FedDyn~\citep{acar2021federated} proves communication complexity matching ours in the convex and strongly convex settings, but a \emph{worse} $O\rbr*{{\frac{N}{S}}\frac{\beta}{R}}$ complexity in the non-convex settings (all when $\sigma=0$).
\end{remark}
    We will rewrite \covfefe\ using notation which is convenient for the proofs:
$\{\yy_i\}$ represent the client models, $\xx$ is the aggregate server
model, and $\cc_i$ and $\cc$ are the client and server control
variates.  For an equivalent description which is easier to implement,
we refer to Algorithm \ref{alg:sampling}.  The server maintains a
global control variate $\cc$ as before and each client maintains its
own control variate $\cc_i$. In round $r$, a subset of clients $\cS^r$
of size $S$ are sampled uniformly from $\{1,\dots, N\}$. Suppose that
\emph{every} client performs the following updates
\begin{itemize}
    \item Starting from the shared global parameters $\yy^0_{i,r} =
      \xx^{r-1}$, we update the local parameters for $k \in [K]$
    \begin{equation}\label{eqn:local-updates-sample}
\yy^{r}_{i,k} = \yy^{r}_{i,k-1} - \eta_l \vv^{r}_{i,k}\,, \quad \text{where}\quad \vv^{r}_{i,k} := g_i(\yy_{i,k-1}^{r}) - \cc_i^{r-1} + \cc^{r-1}
    \end{equation}
    \item Update the control iterates using (option II):
    \begin{equation}\label{eqn:control-updates-sample}
\tilde\cc^{r}_i = \cc^{r-1} - \cc_i^{r-1} + \tfrac{1}{K \eta_l}(\xx^{r-1} - \xx^r_{i,K}) = \frac{1}{K}\sum_{k=1}^K g_i(\yy_{i,k-1}^{r})\,.
    \end{equation} We update the local control variates only for clients $i \in \cS^r$
    \begin{equation}
\cc^{r}_i = \begin{cases}
\tilde\cc^{r}_i &\text{ if } i \in \cS^r\\
\cc^{r-1}_i &\text{ otherwise.}
\end{cases}
    \end{equation}
    \item Compute the new global parameters and global control variate
      using only updates from the clients $i \in \cS^r$:
    \begin{equation}\label{eqn:global-updates-sample}
\xx^r = \xx^{r-1} + \frac{\eta_g }{S}\sum_{i\in\cS^r} (\yy^{r}_{i,K} -
\xx^{r-1}) \ \text{ and } \ \cc^r = \frac{1}{N}\sum_{i=1}^N \cc^r_i =
\frac{1}{N}\rbr[\big]{\sum_{i\in\cS^r} \cc^{r}_{i} + \sum_{j
    \notin\cS^r} \cc^{r-1}_{j}}\,.
    \end{equation}
\end{itemize}
Finally, for some weights $\{w_r\}$, we output
\begin{equation}\label{eqn:output}
    \bar \xx^R = \xx^{r-1} \text{ with probability } \frac{w_r}{\sum_{\tau}w_{\tau}} \text{ for } r\in\{1,\dots,R+1\} \,.
\end{equation}
Note that the clients are agnostic to the sampling and their updates
are identical to when all clients are participating. Also note that
the control variate choice \eqref{eqn:control-updates-sample}
corresponds to (option II) of
Algorithm \ref{alg:sampling}. Further, the updates of the clients $i
\notin \cS^r$ is forgotten and is defined only to make the proofs
easier. While actually implementing the method, only clients $i \in
\cS^r$ participate and the rest remain inactive (see Algorithm
\ref{alg:sampling}).


\subsection{Convergence of \covfefe\ for convex functions (Theorem \ref{thm:convergence})}
We will first bound the variance of \covfefe\ update in Lemma~\ref{lem:server-variance-sample}, then see how sampling of clients effects our control variates in Lemma~\ref{lem:control-error-sample}, and finally bound the amount of client-drift in Lemma~\ref{lem:drift-bound-sampling}. We will then use these three lemmas to prove the progress in a single round in Lemma~\ref{lem:progress-sample}. Combining this progress with Lemmas~\ref{lem:constant} and \ref{lemma:general} gives us the desired rates.

\paragraph{Additional definitions.} Before proceeding with the proof of our lemmas, we need some additional definitions of the various errors we track. As before, we define the effective step-size to be 
\[
    \tilde\eta := K\eta_l \eta_g\,.
\]
We define client-drift to be how much the clients move from their starting point:
 \begin{equation}\label{eqn:client-drift-def}
    \error_r := \frac{1}{KN}\sum_{k=1}^K \sum_{i=1}^N \E[\norm{\yy_{i,k}^r - \xx^{r-1}}^2]\,.     
 \end{equation}
Because we are sampling the clients, not all the client control-variates get updated every round. This leads to some `lag' which we call control-lag:
\begin{equation}\label{eqn:control-lag-def}
    \ce_r := \frac{1}{N}\sum_{j=1}^N\E\norm{\E[\cc_i^r] - \nabla f_i(\xx^\star)}^2\,.
 \end{equation}

\paragraph{Variance of server update.} We study how the variance of the server update can be bounded.
\begin{lemma}\label{lem:server-variance-sample}
  For updates \eqref{eqn:local-updates-sample}---\eqref{eqn:global-updates-sample}, we can bound the variance of the server update in any round $r$ and any $\tilde\eta := \eta_l\eta_g K \geq 0$ as follows
  \[
    \E[\norm{\xx^r - \xx^{r-1}}^2] \leq 8\beta \tilde\eta^2 (\E[f(\xx^{r-1})] -f(\xx^\star)) + 8\tilde\eta^2 \ce_{r-1} + 4\tilde\eta^2\beta^2\error_r + \frac{12\tilde\eta^2\sigma^2}{KS}\,.
  \]
\end{lemma}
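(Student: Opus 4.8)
The plan is to write the server increment in closed form, then strip randomness in two stages---first the within-client gradient noise, then the client sampling---before measuring the remaining ``signal'' against the reference point $\xx^\star$.

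First I would unroll the local steps. Since $\yy^r_{i,0}=\xx^{r-1}$ and each step subtracts $\eta_l\vv^r_{i,k}$, telescoping gives $\yy^r_{i,K}-\xx^{r-1}=-\eta_l\sum_{k=1}^K\vv^r_{i,k}$, so by \eqref{eqn:global-updates-sample} and $\tilde\eta=K\eta_l\eta_g$,
\[
\xx^r-\xx^{r-1}=-\frac{\tilde\eta}{KS}\sum_{i\in\cS^r}\sum_{k=1}^K\vv^r_{i,k},\qquad \vv^r_{i,k}=g_i(\yy^r_{i,k-1})-\cc^{r-1}_i+\cc^{r-1}.
\]
Crucially, in the proof reformulation every client runs its local steps regardless of $\cS^r$, so $\{\yy^r_{i,k}\}$ and $\{\vv^r_{i,k}\}$ are independent of the sampling. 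Splitting $g_i(\yy^r_{i,k-1})=\nabla f_i(\yy^r_{i,k-1})+(g_i-\nabla f_i)$, the $KS$ residuals $g_i(\yy^r_{i,k-1})-\nabla f_i(\yy^r_{i,k-1})$ (over $i\in\cS^r$, $k\in[K]$) form a martingale difference sequence of conditional variance at most $\sigma^2$, so the martingale version of Lemma~\ref{lem:independent} yields the bound $\E\norm{\xx^r-\xx^{r-1}}^2\le 2\tilde\eta^2\,\E\norm{\tfrac1S\sum_{i\in\cS^r}q_i}^2+\tfrac{2\tilde\eta^2\sigma^2}{KS}$, where $q_i:=\tfrac1K\sum_{k}\nabla f_i(\yy^r_{i,k-1})-\cc^{r-1}_i+\cc^{r-1}$. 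Separating the noise over all $KS$ samples at once---before any step over the sampling---is precisely what preserves the $1/(KS)$ scaling of the variance term; the slack opened up by later triangle inequalities inflates the constant to the stated $12$.

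Next I would remove the sampling. Because $\cS^r$ is uniform of size $S$ and independent of the $q_i$, convexity of $\norm\cdot^2$ gives $\norm{\tfrac1S\sum_{i\in\cS^r}q_i}^2\le\tfrac1S\sum_{i\in\cS^r}\norm{q_i}^2$, and $\Pr[i\in\cS^r]=S/N$ turns the expectation into the population average $\tfrac1N\sum_{i=1}^N\E\norm{q_i}^2$ (the crude bound is all that is needed here, since only the $\sigma^2$ term carries a $1/S$). It then remains to bound $\tfrac1N\sum_i\E\norm{q_i}^2$. Inserting $\nabla f_i(\xx^{r-1})$ and $\nabla f_i(\xx^\star)$ and applying Lemma~\ref{lem:norm-sum},
\[
q_i=\underbrace{\tfrac1K\sum_k\big(\nabla f_i(\yy^r_{i,k-1})-\nabla f_i(\xx^{r-1})\big)}_{\text{(i)}}+\underbrace{\big(\nabla f_i(\xx^{r-1})-\nabla f_i(\xx^\star)\big)}_{\text{(ii)}}+\underbrace{\big(\nabla f_i(\xx^\star)-\cc^{r-1}_i+\cc^{r-1}\big)}_{\text{(iii)}}.
\]
Term (i) is handled by $\beta$-smoothness \eqref{eqn:lip-grad} and Jensen, averaging to $\beta^2\error_r$; term (ii) averages to $2\beta(f(\xx^{r-1})-f^\star)$ by \eqref{eqn:smoothness}; and for term (iii) I would use the invariant $\cc^{r-1}=\tfrac1N\sum_j\cc^{r-1}_j$ together with $\nabla f(\xx^\star)=\tfrac1N\sum_j\nabla f_j(\xx^\star)=0$ to rewrite it as $\cc^{r-1}_i-\nabla f_i(\xx^\star)$ minus its population mean, whence $\tfrac1N\sum_i\E\norm{\cdot}^2\le\tfrac1N\sum_i\E\norm{\cc^{r-1}_i-\nabla f_i(\xx^\star)}^2$, which collapses to $\ce_{r-1}$. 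Collecting the three pieces through the relaxed triangle inequality and carrying the prefactor $2\tilde\eta^2$ produces the four advertised terms, the loose constants absorbing the slack from each split.

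The main obstacle is the bookkeeping around the random control variates. The $\cc^{r-1}_i$ inherit stochasticity from earlier rounds and from prior sampling, and the clean cancellation in term (iii) rests entirely on the aggregation rule \eqref{eqn:global-updates-sample} maintaining $\cc^{r-1}=\tfrac1N\sum_j\cc^{r-1}_j$: subtracting this population mean is what lets the residual reduce to the control-lag $\ce_{r-1}$ and, in tandem with $\nabla f(\xx^\star)=0$, prevents any gradient-dissimilarity quantity from ever entering the bound---this is exactly the mechanism by which \covfefe\ escapes the $G$-dependence that hampers \fedavg. The secondary delicacy is constant management: the $1/S$ variance reduction must be attached only to the $\sigma^2$ term, which is why the gradient noise is separated first, whereas the drift, progress, and control-lag terms are content with the coarser $\tfrac1N\sum_i$ averaging and no $1/S$ gain.
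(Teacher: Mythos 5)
Your argument follows the paper's proof in its skeleton (same closed form for the server increment, same signal decomposition into drift, $f$-gap, and control-gap pieces, same use of $\cc^{r-1}=\tfrac1N\sum_j\cc_j^{r-1}$ together with $\nabla f(\xx^\star)=0$), but the last step for term (iii) has a genuine gap. The control-lag is defined in \eqref{eqn:control-lag-def} through the \emph{expected} control variates, $\ce_{r-1}=\tfrac1N\sum_i\E\norm{\E[\cc_i^{r-1}]-\nabla f_i(\xx^\star)}^2$, whereas your chain terminates at $\tfrac1N\sum_i\E\norm{\cc_i^{r-1}-\nabla f_i(\xx^\star)}^2$ with the random $\cc_i^{r-1}$ inside the norm. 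Jensen's inequality relates these in the \emph{wrong} direction: your quantity exceeds $\ce_{r-1}$ by the variance of $\cc_i^{r-1}$, which under option II \eqref{eqn:control-updates-sample} is genuinely nonzero---each $\cc_i$ is an average of $K$ stochastic gradients from an earlier round, carrying noise of order $\sigma^2/K$. So ``collapses to $\ce_{r-1}$'' is false, and as written you prove a strictly weaker inequality with $\ce_{r-1}$ replaced by $\ce_{r-1}+O(\sigma^2/K)$. Your opening martingale step strips only the \emph{current-round} gradient noise; the control variates carry stale noise that must also be separated, and---by the very principle you state for the $g_i$ noise but then fail to apply to the $\cc_i$---it must be separated \emph{before} your Jensen step over the sampled sum. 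The paper does exactly this: it applies Lemma~\ref{lem:independent} simultaneously to the three random quantities $\tfrac1{KS}\sum_{k,i\in\cS}g_i(\yy_{i,k-1})$, $\tfrac1S\sum_{i\in\cS}\cc_i$, and $\cc$, each with variance at most $\sigma^2/(KS)$, which is precisely where the constant $12=4+4+4$ in the $\tfrac{12\tilde\eta^2\sigma^2}{KS}$ term comes from, with the $1/S$ gain intact for all three noise sources. Once you have moved the norm inside the sum over $i\in\cS$, the per-client $\sigma^2/K$ control noise can no longer be averaged down by $S$. This is not cosmetic: in Lemma~\ref{lem:progress-sample} the control-lag is rescaled by $9\tilde\eta^2 N/S$, so polluting $\ce_{r-1}$ with unreduced $\sigma^2/K$ noise would destroy the final $\sigma^2/(KS)$ statistical rate that the lemma's careful bookkeeping is designed to deliver.

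A secondary issue is constant management. Your factor-$2$ martingale split followed by a three-way application of Lemma~\ref{lem:norm-sum} puts a prefactor $2\cdot3=6$ on each squared piece, i.e.\ $6\tilde\eta^2\beta^2\error_r$ and $12\beta\tilde\eta^2(\E[f(\xx^{r-1})]-f(\xx^\star))$, which exceed the stated $4\tilde\eta^2\beta^2\error_r$ and $8\beta\tilde\eta^2(\cdot)$; ``loose constants absorbing the slack'' does not rescue this, because the lemma's coefficients are consumed with these exact values when the errors are recombined in Lemma~\ref{lem:progress-sample}. The paper instead performs a single four-way split (drift, $\cc$, control-gap, $f$-gap) with factor $4$ on each term and strips all the noise through Lemma~\ref{lem:independent} at that stage, which both yields the advertised constants and resolves the control-variate issue above in one stroke; repairing your proof amounts to adopting that ordering.
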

\begin{proof}
      The server update in round $r$ can be written as follows (dropping the superscript $r$ everywhere)
      \[
      \E\norm{\Delta \xx}^2 = \E\norm[\big]{-\frac{\tilde\eta}{KS}\sum_{k,i\in\cS}\vv_{i,k}}^2 = \E\norm[\big]{\frac{\tilde\eta}{KS}\sum_{k,i\in\cS} (g_i(\yy_{i,k-1}) + \cc - \cc_i)}^2\,,
      \]
      which can then be expanded as
      \begin{align*}
\E\norm{\Delta \xx}^2 &\leq \E\norm[\big]{\frac{\tilde\eta}{KS}\sum_{k,i\in\cS} (g_i(\yy_{i,k-1}) + \cc - \cc_i)}^2\\
&\leq 4\E\norm[\big]{\frac{\tilde\eta}{KS}\sum_{k,i\in\cS} g_i(\yy_{i,k-1}) - \nabla f_i(\xx)}^2 + 4\tilde\eta^2\E\norm{\cc}^2 + 4\E\norm[\big]{\frac{\tilde\eta}{KS}\sum_{k,i\in\cS}\nabla f_i(\xx^\star)- \cc_i}^2\\&\hspace{3cm} + 4\E\norm[\big]{\frac{\tilde\eta}{KS}\sum_{k,i\in\cS} \nabla f_i(\xx) - \nabla f_i(\xx^\star)}^2\\
&\refLE{eqn:smoothness} 4\E\norm[\big]{\frac{\tilde\eta}{KS}\sum_{k,i\in\cS} g_i(\yy_{i,k-1}) - \nabla f_i(\xx)}^2 + 4\tilde\eta^2\E\norm{\cc}^2 + 4\E\norm[\big]{\frac{\tilde\eta}{S}\sum_{i\in\cS}\nabla f_i(\xx^\star)- \cc_i}^2\\&\hspace{3cm} + 8\beta\tilde\eta^2(\E[f(\xx)] - f(\xx^\star))\\
&\leq  4\E\norm[\big]{\frac{\tilde\eta}{KS}\sum_{k,i\in\cS} \nabla f_i(\yy_{i,k-1}) - \nabla f_i(\xx)}^2 + 4\tilde\eta^2\norm{\E[\cc]}^2 + 4\norm[\big]{\frac{\tilde\eta}{S}\sum_{i\in\cS}\nabla f_i(\xx^\star)- \E[\cc_i]}^2\\&\hspace{3cm} + 8\beta\tilde\eta^2(\E[f(\xx)] - f(\xx^\star)) + \frac{12\tilde\eta^2\sigma^2}{KS}\,.
      \end{align*}
      The inequality before the last used the smoothness of $\{f_i\}$. The last inequality which separates the mean and the variance is an application of Lemma~\ref{lem:independent}: the variance of $(\frac{1}{KS}\sum_{k,i\in\cS} g_i(\yy_{i,k-1}))$ is bounded by $\sigma^2/KS$. Similarly, $\cc_j$ as defined in \eqref{eqn:control-updates-sample} for any $j \in [N]$ has variance smaller than $\sigma^2/K$ and hence the variance of $(\frac{1}{S}\sum_{i\in\cS}\cc_i)$ is smaller than $\sigma^2/KS$.

      Using Lemma~\ref{lem:norm-sum}.2  twice to simplify:
      \begin{align*}
   \E\norm{\Delta \xx}^2 &\leq \frac{4\tilde\eta^2}{KN}\sum_{k,i}  \E\norm[\big]{\nabla f_i(\yy_{i,k-1}) - \nabla f_i(\xx)}^2 + 4\tilde\eta^2\norm{\E[\cc]}^2 + \frac{4\tilde\eta^2}{N}\sum_{i}  \norm[\big]{\nabla f_i(\xx^\star)- \E[\cc_i]}^2\\&\hspace{3cm} + 8\beta\tilde\eta^2(\E[f(\xx)] - f(\xx^\star)) + \frac{12\tilde\eta^2\sigma^2}{KS}\\
   &\leq \underbrace{\frac{4\tilde\eta^2}{KN}\sum_{k,i}\E\norm[\big]{ \nabla f_i(\yy_{i,k-1}) - \nabla f_i(\xx)}^2}_{\cT_1} + \frac{8\tilde\eta^2}{N}\sum_{i}\norm[\big]{ \nabla f_i(\xx^\star)- \E[\cc_i]}^2\\&\hspace{3cm} + 8\beta\tilde\eta^2(\E[f(\xx)] - f(\xx^\star)) + \frac{12\tilde\eta^2\sigma^2}{KS}\,.
      \end{align*}
      The second step follows because $\cc = \frac{1}{N} \sum_{i}\cc_i$.
      Since the gradient of $f_i$ is $\beta$-Lipschitz, $\cT_1 \leq \frac{\beta^2 4\tilde\eta^2}{KN}\sum_{k,i}\E\norm[\big]{ \yy_{i,k-1} - \xx}^2 = 4\tilde\eta^2\beta^2\error$. The definition of the error in the control variate $\ce_{r-1} := \frac{1}{N}\sum_{j=1}^N\E\norm{\E[\cc_i] - \nabla f_i(\xx^\star)}^2$ completes the proof.
\end{proof}

\paragraph{Change in control lag.}
We have previously related the variance of the server update to the control lag. We now examine how the control-lag grows each round.
\begin{lemma}\label{lem:control-error-sample}
 For updates \eqref{eqn:local-updates-sample}---\eqref{eqn:global-updates-sample} with the control update \eqref{eqn:control-updates-sample} and assumptions
\ref{asm:strong-convexity}--\ref{asm:smoothness}, the following holds true for any $\tilde\eta := \eta_l\eta_g K \in [0 ,1/\beta]$:
     \[
\ce_{r} \leq \rbr{1 - \tfrac{S}{N}}\ce_{r-1} + \tfrac{S}{N}\rbr*{4\beta (\E[f(\xx^{r-1})] - f(\xx^\star)) + 2\beta^2 \error_{r}} \,.
     \]
\end{lemma}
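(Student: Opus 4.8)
The plan is to exploit two independent sources of randomness separately: the client sampling, which enters the control-lag \eqref{eqn:control-lag-def} as a genuine convex combination and is responsible for the contraction factor $1-\tfrac{S}{N}$, and the stochastic-gradient noise, which is precisely what the inner expectation $\E[\cc_i^r]$ in the definition of $\ce_r$ averages away. First I would record the effect of sampling. Since $\cS^r$ is a uniform size-$S$ subset of $[N]$, each client lies in $\cS^r$ with marginal probability $S/N$, and---crucially---the fresh control variate $\tilde\cc_i^r = \tfrac1K\sum_k g_i(\yy^r_{i,k-1})$ of \eqref{eqn:control-updates-sample} is computed identically whether or not $i$ is selected (the clients are agnostic to the sampling). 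Hence the sampling indicator is independent of $\tilde\cc_i^r$, and conditioning on the history before round $r$ and using that $g_i$ is unbiased,
\[
    \E[\cc_i^r] = \tfrac{S}{N}\,\rbr[\big]{\tfrac1K\textstyle\sum_{k=1}^K \nabla f_i(\yy^r_{i,k-1})} + \rbr[\big]{1 - \tfrac{S}{N}}\E[\cc_i^{r-1}]\,.
\]

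Next I would subtract $\nabla f_i(\xx^\star)$ and apply convexity of $\norm{\cdot}^2$ (Jensen) to this convex combination with weights $S/N$ and $1-S/N$:
\[
    \norm{\E[\cc_i^r] - \nabla f_i(\xx^\star)}^2 \leq \tfrac{S}{N}\norm[\big]{\tfrac1K\textstyle\sum_k \nabla f_i(\yy^r_{i,k-1}) - \nabla f_i(\xx^\star)}^2 + \rbr[\big]{1-\tfrac{S}{N}}\norm{\E[\cc_i^{r-1}] - \nabla f_i(\xx^\star)}^2\,.
\]
Averaging over $i$ and taking full expectation, the second term is exactly $(1-\tfrac{S}{N})\ce_{r-1}$ by definition \eqref{eqn:control-lag-def}, which already produces the first term of the claimed bound.

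It then remains to control the fresh term. Pulling the average over $k$ outside the square by Jensen and splitting each summand through $\nabla f_i(\xx^{r-1})$ with the relaxed triangle inequality (Lemma~\ref{lem:norm-sum}) gives
\[
    \norm[\big]{\tfrac1K\textstyle\sum_k \nabla f_i(\yy^r_{i,k-1}) - \nabla f_i(\xx^\star)}^2 \leq \tfrac2K\textstyle\sum_k\norm{\nabla f_i(\yy^r_{i,k-1}) - \nabla f_i(\xx^{r-1})}^2 + 2\norm{\nabla f_i(\xx^{r-1}) - \nabla f_i(\xx^\star)}^2\,.
\]
For the first piece I would apply $\beta$-smoothness \eqref{asm:smoothness} term by term and recognize $\tfrac{1}{KN}\sum_{i,k}\E\norm{\yy^r_{i,k-1}-\xx^{r-1}}^2 = \error_r$ from \eqref{eqn:client-drift-def}, yielding $2\beta^2\error_r$ once the $S/N$ weight is carried along. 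For the second piece, averaging over $i$ and invoking the smoothness-at-the-optimum inequality \eqref{eqn:smoothness} gives $\tfrac1N\sum_i\norm{\nabla f_i(\xx^{r-1}) - \nabla f_i(\xx^\star)}^2 \leq 2\beta(f(\xx^{r-1}) - f(\xx^\star))$, so the factor $2$ compounds to $4\beta(\E[f(\xx^{r-1})] - f(\xx^\star))$. Collecting both with the $S/N$ prefactor reproduces the bracketed term exactly.

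The main obstacle is conceptual rather than computational: getting the bookkeeping of the two nested expectations in \eqref{eqn:control-lag-def} exactly right, so that it is the \emph{sampling} randomness that yields the convex combination (and hence the $1-\tfrac{S}{N}$ contraction), while the gradient stochasticity is cleanly absorbed by the inner $\E[\cdot]$ and never surfaces as a variance term in $\ce_r$. One must also be careful that $\tilde\cc_i^r$ is independent of the sampling event so that the expectation factorizes; this relies on the (stated but easy-to-overlook) fact that every client performs the identical local computation regardless of whether it is ultimately selected.
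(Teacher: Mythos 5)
Your proposal is correct and takes essentially the same route as the paper's own proof: both take expectation over the sampling to express $\E[\cc_i^r]$ as a convex combination with weights $\tfrac{S}{N}$ and $1-\tfrac{S}{N}$, apply Jensen's inequality to peel off $(1-\tfrac{S}{N})\ce_{r-1}$, split the fresh term through $\nabla f_i(\xx^{r-1})$ via the relaxed triangle inequality, and conclude with $\beta$-smoothness \eqref{eqn:lip-grad} for the drift term and the smoothness-at-the-optimum bound \eqref{eqn:smoothness} for the $4\beta(\E[f(\xx^{r-1})]-f(\xx^\star))$ term. Your bookkeeping of the two randomness sources (sampling versus gradient noise, with every client performing the identical local computation regardless of selection) matches the paper's construction exactly, and your constant $2\beta^2\error_r$ agrees with the lemma statement.
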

\begin{proof}
 Recall that after round $r$, the control update rule \eqref{eqn:control-updates-sample} implies that $\cc^r_i$ is set as per
\[
    \cc^r_i = \begin{cases}
\cc^{r-1}_i &\text{ if } i \notin\ \cS^r \text{ i.e. with probability }(1 - \frac{S}{N}).\,,\\
\frac{1}{K}\sum_{k=1}^K g_i(\yy^{r}_{i,k-1}) &\text{ with probability }\frac{S}{N}.
    \end{cases}
\]
Taking expectations on both sides yields
\[
    \E[\cc^r_i] = (1 - \tfrac{S}{N})\E[\cc^{r-1}_i] + \tfrac{S}{KN}\tsum_{k=1}^K \E[\nabla f_i(\yy^{r}_{i,k-1})]\,, \ \ \forall\ i \in [N]\,.
\]
Plugging the above expression in the definition of $\ce_r$ we get
\begin{align*}
    \ce_r &= \frac{1}{N}\sum_{i=1}^N \norm{\E[\cc_i^r] - \nabla f_i(\xx^\star)}^2\\
    &= \frac{1}{N}\sum_{i=1}^N \norm{(1 - \tfrac{S}{N})(\E[\cc^{r-1}_i] - \nabla f_i(\xx^\star)) + \tfrac{S}{N} (\tfrac{1}{K}\tsum_{k=1}^K \E[\nabla f_i(\yy^{r}_{i,k-1})] - \nabla f_i(\xx^\star))}^2\\
    &\leq \rbr{1 - \tfrac{S}{N}} \ce_{r-1} + \tfrac{S}{N^2 K} \tsum_{k=1}^K \E\norm{\nabla f_i(\yy^{r}_{i,k-1}) -\nabla f_i(\xx^\star) }^2\,.
\end{align*}
The final step applied Jensen's inequality twice. We can then further simplify using the relaxed triangle inequality as
     \begin{align*}
 \E_{r-1}[\ce_{r}] &\leq \rbr*{1 - \frac{S}{N}}\ce_{r-1} + \frac{S}{N^2K}\sum_{i, k} \E \norm{\nabla f_i(\yy^{r}_{i,k-1}) - \nabla f_i(\xx^\star)}^2\\
 &\leq \rbr*{1 - \frac{S}{N}}\ce_{r-1} + \frac{2S}{N^2}\sum_{i} \E \norm{\nabla f_i(\xx^{r-1}) - \nabla f_i(\xx^\star)}^2 + \frac{2S}{N^2K}\sum_{i,k}\E \norm{\nabla f_i(\yy^{r}_{i,k-1}) - \nabla f_i(\xx^{r-1})}^2\\
 &\refLE{eqn:lip-grad} \rbr*{1 - \frac{S}{N}}\ce_{r-1} + \frac{2S}{N^2}\sum_{i} \E \norm{\nabla f_i(\xx^{r-1}) - \nabla f_i(\xx^\star)}^2 + \frac{2S}{N^2K}\beta^2 \sum_{i,k}\E \norm{\yy^{r}_{i,k-1} - \xx^{r-1}}^2\\
 &\refLE{eqn:smoothness}\rbr*{1 - \frac{S}{N}}\ce_{r-1} + \frac{S}{N}(4\beta(\E[f(\xx^{r-1})] - f(\xx^\star)) + \beta^2\error_r)\,.
     \end{align*}
     The last two inequalities follow from smoothness of $\{f_i\}$ and the definition $\error_r = \frac{1}{NK}\beta^2 \sum_{i,k}\E \norm{\yy^{r}_{i,k-1} - \xx^{r-1}}^2$.
     \end{proof}

\paragraph{Bounding client-drift.} We will now bound the final source of error which is the client-drift.
\begin{lemma}\label{lem:drift-bound-sampling}
    Suppose our step-sizes satisfy $\eta_l \leq \frac{1}{81 \beta K\eta_g}$ and $f_i$ satisfies assumptions \ref{asm:strong-convexity}--\ref{asm:smoothness}. Then, for any global $\eta_g \geq 1$  we can bound the drift as
    \[
       3\beta\tilde\eta\error_{r}  \leq \tfrac{2\tilde\eta^2}{3}\ce_{r-1}+ \tfrac{\tilde\eta}{25\eta_g^2}(\E[f(\xx^{r-1})] - f(\xx^\star))+ \tfrac{\tilde\eta^2 }{K\eta_g^2}\sigma^2\,.
    \]
\end{lemma}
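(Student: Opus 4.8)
The plan is to mirror the bounded-drift argument for \fedavg\ (Lemma~\ref{lem:fedavg-error-bound}), replacing the raw stochastic gradient by the corrected direction $\vv^r_{i,k} = g_i(\yy^r_{i,k-1}) - \cc_i + \cc$ from \eqref{eqn:local-updates-sample} and tracking the quality of the control variates rather than the gradient dissimilarity. As in the \fedavg\ case, when $K=1$ every client has $\yy_{i,0}=\xx^{r-1}$, so $\error_r=0$ by \eqref{eqn:client-drift-def} and the bound is vacuous; I therefore assume $K\geq 2$.

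First I would set up a one-step recursion for $e_k^{(i)} := \E\norm{\yy_{i,k}^r - \xx^{r-1}}^2$. Conditioning on the history up to round $r-1$, the variates $\cc_i$ and $\cc$ are fixed and the only fresh randomness is the minibatch, so $\vv^r_{i,k}$ has conditional mean $\nabla f_i(\yy_{i,k-1}) - \cc_i + \cc$ and conditional variance at most $\sigma^2$. Separating mean and variance (Lemma~\ref{lem:independent}) and then applying the relaxed triangle inequality (Lemma~\ref{lem:norm-sum} with $a=\tfrac{1}{K-1}$) gives
\[
  e_k^{(i)} \leq \Big(1+\tfrac{1}{K-1}\Big)e_{k-1}^{(i)} + K\eta_l^2\,\E\norm{\nabla f_i(\yy_{i,k-1}) - \cc_i + \cc}^2 + \eta_l^2\sigma^2 \,.
\]
I would split the mean direction as $(\nabla f_i(\yy_{i,k-1})-\nabla f_i(\xx^{r-1})) + (\nabla f_i(\xx^{r-1})-\nabla f_i(\xx^\star)) + (\nabla f_i(\xx^\star)-\cc_i+\cc)$ and bound its squared norm by three times the sum of squares (Lemma~\ref{lem:norm-sum}). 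Smoothness turns the first piece into $3\beta^2 e_{k-1}^{(i)}$, which is absorbed into the contraction factor: since $\eta_l\leq\tfrac{1}{81\beta K\eta_g}$ forces $3K\eta_l^2\beta^2\ll\tfrac{1}{K-1}$, the effective coefficient stays below $1+\tfrac{2}{K-1}$.

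Next I would unroll this geometric recursion from $e_0^{(i)}=0$; since $\sum_{\tau=0}^{K-1}(1+\tfrac{2}{K-1})^\tau \leq 3K$, I obtain $e_k^{(i)} \lesssim K^2\eta_l^2\,(A_i + B_i) + K\eta_l^2\sigma^2$, where $A_i := \E\norm{\nabla f_i(\xx^{r-1})-\nabla f_i(\xx^\star)}^2$ and $B_i := \E\norm{\nabla f_i(\xx^\star)-\cc_i+\cc}^2$. Averaging over $k$ and $i$ to form $\error_r$, the function-gap pieces are handled by \eqref{eqn:smoothness}, giving $\tfrac1N\sum_i A_i\leq 2\beta\,\E[f(\xx^{r-1})-f^\star]$. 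The control pieces are where the bias definition \eqref{eqn:control-lag-def} of $\ce_{r-1}$ enters: writing $a_i := \E[\cc_i]-\nabla f_i(\xx^\star)$ and using $\tfrac1N\sum_j\nabla f_j(\xx^\star)=\nabla f(\xx^\star)=0$ so that $\E[\cc]=\tfrac1N\sum_j a_j=:\bar a$, the mean part of $\tfrac1N\sum_i B_i$ equals $\tfrac1N\sum_i\norm{a_i-\bar a}^2 = \tfrac1N\sum_i\norm{a_i}^2-\norm{\bar a}^2 \leq \ce_{r-1}$, while the minibatch variance of each $\cc_i$ (at most $\sigma^2/K$, exactly as used in Lemma~\ref{lem:server-variance-sample}) contributes an $\cO(\sigma^2/K)$ term.

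Finally I would multiply through by $3\beta\tilde\eta$ and substitute $\eta_l=\tilde\eta/(K\eta_g)$, so that $K^2\eta_l^2=\tilde\eta^2/\eta_g^2$ and $K\eta_l^2=\tilde\eta^2/(K\eta_g^2)$; repeatedly invoking $\beta\tilde\eta\leq\tfrac{1}{81}$ then collapses the three resulting coefficients to at most $\tfrac{2\tilde\eta^2}{3}$ in front of $\ce_{r-1}$, at most $\tfrac{\tilde\eta}{25\eta_g^2}$ in front of the function gap, and at most $\tfrac{\tilde\eta^2}{K\eta_g^2}$ in front of $\sigma^2$, which is exactly the claimed inequality. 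The main obstacle is the bookkeeping around $\cc_i$: the recursion naturally produces the full second moment $B_i$, but the companion Lemmas~\ref{lem:server-variance-sample} and~\ref{lem:control-error-sample} track only the \emph{bias} $\ce_{r-1}$, so the three results compose consistently only if $B_i$ is reduced to $\ce_{r-1}$ without spurious cross terms. This is precisely what the centering identity $\tfrac1N\sum_i\norm{a_i-\bar a}^2\leq\tfrac1N\sum_i\norm{a_i}^2$ accomplishes, and it is what delivers the tight constant $\tfrac{2}{3}$ rather than a loose multiple of $\ce_{r-1}$; keeping the residual $\sigma^2/K$ contributions inside the single allotted $\tfrac{\tilde\eta^2}{K\eta_g^2}\sigma^2$ budget is the other point where the step-size constraint must be spent carefully.
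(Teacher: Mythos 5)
Your proposal is correct and lands on the stated constants, but it differs from the paper's proof at two local steps. The paper handles the $\nabla f_i(\yy_{i,k-1})$ term by adding and subtracting $\eta_l \nabla f_i(\xx^{r-1})$ and invoking the contractive-mapping Lemma~\ref{lem:contractive} (which uses convexity and $\eta_l \leq 1/\beta$), so the geometric factor stays at $1+\tfrac{1}{K-1}$ and the leftover is $\norm{\cc-\cc_i+\nabla f_i(\xx^{r-1})}^2$, which the paper splits through $\nabla f_i(\xx^\star)$ and controls via Jensen, $\norm{\cc}^2 \leq \tfrac{1}{N}\sum_i \norm{\cc_i-\nabla f_i(\xx^\star)}^2$, ending with a coefficient $6$ on the control-error term. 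You instead recycle the technique of the paper's own \fedavg\ drift bound (Lemma~\ref{lem:fedavg-error-bound}): a three-way split through $\nabla f_i(\xx^{r-1})$ and $\nabla f_i(\xx^\star)$, with the smoothness piece $3K\eta_l^2\beta^2 \E\norm{\yy_{i,k-1}-\xx^{r-1}}^2$ absorbed into an inflated factor $1+\tfrac{2}{K-1}$, and your centering identity $\tfrac1N\sum_i\norm{a_i-\bar a}^2 \leq \tfrac1N\sum_i\norm{a_i}^2$ replaces the paper's Jensen step, giving a tighter coefficient ($1$ versus $6$) on $\ce_{r-1}$; both variations compose correctly with Lemmas~\ref{lem:server-variance-sample} and~\ref{lem:control-error-sample}, and your final bookkeeping ($K^2\eta_l^2=\tilde\eta^2/\eta_g^2$, $\beta\tilde\eta\leq\tfrac1{81}$, $\eta_g\geq 1$) verifies against the claimed $\tfrac{2\tilde\eta^2}{3}$, $\tfrac{\tilde\eta}{25\eta_g^2}$, and $\tfrac{\tilde\eta^2}{K\eta_g^2}$ budgets with room to spare. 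One pedantic caveat: with base $1+\tfrac{2}{K-1}$, your claim $\sum_{\tau=0}^{K-1}(1+\tfrac{2}{K-1})^\tau \leq 3K$ is marginally false for large $K$ (the sum approaches $\tfrac{e^2-1}{2}K \approx 3.2K$); the paper's SCAFFOLD proof avoids this because contraction keeps the base at $1+\tfrac{1}{K-1}$, for which $(K-1)\bigl((1+\tfrac{1}{K-1})^K-1\bigr) \leq 3K$ holds, whereas the paper's \fedavg\ lemma asserts the same $3K$ bound you do---so this is shared, harmless slack, and your margins survive even a $4K$ bound. Note finally that convexity, which you avoid at the contraction step, still enters your argument through \eqref{eqn:smoothness} and through $\nabla f(\xx^\star)=0$ in the centering step, consistent with the lemma's hypotheses.
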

\begin{proof}
    First, observe that if $K=1$, $\error_r =0$ since $\yy_{i,0} = \xx$ for all $i \in [N]$ and that $\ce_{r-1}$ and the right hand side are both positive. Thus the lemma is trivially true if $K=1$. For $K > 1$, we build a recursive bound of the drift.Starting from the definition of the update \eqref{eqn:local-updates-sample} and then applying the relaxed triangle inequality, we can expand
    \begin{align*}
        \frac{1}{S}\E_{r-1}\sbr[\bigg]{\sum_{i \in \cS} \norm[\big]{(\yy_i - \eta_l\vv_i) - \xx}^2} &= \frac{1}{S}\E_{r-1}\sbr[\bigg]{\sum_{i \in \cS} \norm[\big]{\yy_i - \eta_l g_i(\yy_i) + \eta_l\cc - \eta_l\cc_i - \xx}^2}\\
        &\leq \frac{1}{S}\E_{r-1}\sbr[\bigg]{\sum_{i \in \cS} \norm[\big]{\yy_i - \eta_l \nabla f_i(\yy_i) + \eta_l\cc - \eta_l\cc_i - \xx}^2} + \eta_l^2\sigma^2\\
        &\leq \frac{(1+ a)}{S}\E_{r-1}\sbr[\bigg]{\sum_{i \in \cS} \underbrace{\norm[\big]{\yy_i - \eta_l \nabla f_i(\yy_i) + \eta_l\nabla f_i(\xx) - \xx}^2}_{\cT_2}} \\ &\hspace{3cm}+(1 + \tfrac{1}{a})\eta_l^2 \underbrace{\E_{r-1}\sbr[\bigg]{\frac{1}{S}\sum_{i \in \cS}\norm{\cc - \cc_i + \nabla f_i(\xx)}^2}}_{\cT_3} + \eta_l^2\sigma^2\,.
    \end{align*}
    The final step follows from the relaxed triangle inequality (Lemma~\ref{lem:norm-sum}). Applying the contractive mapping Lemma~\ref{lem:contractive} for $\eta_l \leq 1/\beta$ shows
    \[
        \cT_2 = \frac{1}{S}\sum_{i \in \cS}\norm[\big]{\yy_i - \eta_l \nabla f_i(\yy_i) + \eta_l\nabla f_i(\xx) - \xx}^2 \leq \norm[\big]{\yy_i - \xx}^2\,.
    \]
    Once again using our relaxed triangle inequality to expand the other term $\cT_3$, we get
    \begin{align*}
        \cT_3 &= \E_{r-1}\sbr[\bigg]{\frac{1}{S}\sum_{i \in \cS}\norm{\cc - \cc_i + \nabla f_i(\xx)}^2}\\
        &= \frac{1}{N}\sum_{j =1}^N\norm{\cc - \cc_i + \nabla f_i(\xx)}^2\\
        &= \frac{1}{N}\sum_{j =1}^N\norm{\cc - \cc_i + \nabla f_i(\xx^\star) + \nabla f_i(\xx) - \nabla f_i(\xx^\star)}^2\\
        &\leq 3\norm{\cc}^2 + \frac{3}{N}\sum_{j =1}^N \norm{\cc_i - \nabla f_i(\xx^\star)}^2  + \frac{3}{N}\sum_{j =1}^N \norm{\nabla f_i(\xx) - \nabla f_i(\xx^\star)}^2\\
        &\leq \frac{6}{N}\sum_{j =1}^N \norm{\cc_i - \nabla f_i(\xx^\star)}^2  + \frac{3}{N}\sum_{j =1}^N \norm{\nabla f_i(\xx) - \nabla f_i(\xx^\star)}^2\\
        &\leq \frac{6}{N}\sum_{j =1}^N \norm{\cc_i - \nabla f_i(\xx^\star)}^2  + 6\beta (f(\xx) - f(\xx^\star))\,.
    \end{align*}
    The last step used the smoothness of $f_i$. Combining the bounds on $\cT_2$ and $\cT_3$ in the original inequality and using $a = \frac{1}{K-1}$ gives    
    \begin{multline*}
    \frac{1}{N}\sum_{i}\E \norm[\big]{\yy_{i,k} - \xx}^2  \leq \frac{(1 + \tfrac{1}{K-1})}{N}\sum_{i}\E \norm[\big]{\yy_{i,k-1} - \xx}^2 + \eta_l^2 \sigma^2 \\ +  6\eta_l^2 K\beta (f(\xx) - f(\xx^\star)) + \frac{6K \eta_l^2}{N}\sum_{i}\E\norm{\cc_i - \nabla f_i(\xx^\star)}^2\,.
    \end{multline*}
    Recall that with the choice of $\cc_i$ in \eqref{eqn:control-updates-sample}, the variance of $c_i$ is less than $\frac{\sigma^2}{K}$. Separating its mean and variance gives
    \begin{multline}\label{eqn:error-recursion-sample}
    \frac{1}{N}\sum_{i}\E \norm[\big]{\yy_{i,k} - \xx}^2  \leq \rbr*{1 + \frac{1}{K-1}}\frac{1}{N}\sum_{i}\E \norm[\big]{\yy_{i,k-1} - \xx}^2 + 7\eta_l^2 \sigma^2 +\\  6\eta_l^2 K\beta (f(\xx) - f(\xx^\star)) + \frac{6K \eta_l^2}{N}\sum_{i}\norm{\E[\cc_i] - \nabla f_i(\xx^\star)}^2
    \end{multline}
    Unrolling the recursion \eqref{eqn:error-recursion-sample}, we get the following for any $k \in \{1,\dots,K\}$
    \begin{align*}
     \frac{1}{N}\sum_{i}\E \norm[\big]{\yy_{i,k} - \xx}^2  &\leq \rbr*{6K \beta\eta_l^2 (f(\xx) - f(\xx^\star)) + 6K \eta_l^2 \ce_{r-1} + 7 \beta \eta_l^2\sigma^2} \rbr*{\sum_{\tau=0}^{k-1} (1 + \tfrac{1}{K-1})^\tau} \\
     & \leq \rbr*{6K \beta\eta_l^2 (f(\xx) - f(\xx^\star)) + 6K \eta_l^2 \ce_{r-1} + 7 \beta \eta_l^2\sigma^2} (K-1) ((1 + \tfrac{1}{K-1})^K - 1)\\
      & \leq \rbr*{6K \beta\eta_l^2 (f(\xx) - f(\xx^\star)) + 6K \eta_l^2 \ce_{r-1} + 7 \beta \eta_l^2\sigma^2} 3K\\
     &\leq 18 K^2 \beta\eta_l^2 (f(\xx) - f(\xx^\star)) + 18 K^2 \eta_l^2 \ce_{r-1} + 21 K \beta \eta_l^2\sigma^2\,.
    \end{align*}
    The inequality $(K-1) ((1 + \tfrac{1}{K-1})^K - 1) \leq 3K$ can be verified for $K =2,3$ manually. For $K \geq 4$,
    \[
        (K-1) ((1 + \tfrac{1}{K-1})^K - 1) < K(\exp(\tfrac{K}{K-1}) - 1) \leq K (\exp(\tfrac{4}{3})-1) < 3K\,.
    \]
    Again averaging over $k$ and multiplying by $3\beta$ yields
    \begin{align*}
          3\beta \error_{r} &\leq   54 K^2 \beta^2\eta_l^2 (f(\xx) - f(\xx^\star)) + 54 K^2 \beta \eta_l^2 \ce_{r-1} + 63 \beta K \eta_l^2\sigma^2\\
          &= \tfrac{1}{\eta_g^2} \rbr*{54 \beta^2\tilde\eta^2 (f(\xx) - f(\xx^\star)) + 54\beta\tilde\eta^2 \ce_{r-1} + 63\beta \tilde\eta^2 \tfrac{\sigma^2}{K}}\\
          &\leq \tfrac{1}{\eta_g^2} \rbr*{\tfrac{1}{25} (f(\xx) - f(\xx^\star)) + \tfrac{2}{3}\tilde\eta \ce_{r-1} + \tilde\eta \tfrac{\sigma^2}{K}}\,.
    \end{align*}
    The equality follows from the definition $\tilde\eta = K\eta_l\eta_g$, and the final inequality uses the bound that $\tilde\eta \leq \tfrac{1}{81\beta}$. \end{proof}
    
\paragraph{Progress in one round.} Now that we have a bound on all errors, we can describe our progress.
\begin{lemma}\label{lem:progress-sample}
    Suppose assumptions \ref{asm:strong-convexity}--\ref{asm:smoothness} are true. Then the following holds for
    any step-sizes satisfying $\eta_g\geq 1$, $\eta_l \leq \min\rbr*{\tfrac{1}{81 \beta
        K\eta_g}, \tfrac{S}{15\mu N K\eta_g}}$, and effective step-size $\tilde\eta := K \eta_g \eta_l$
    \begin{multline*}
        \expect\sbr*{\norm{\xx^{r} - \xx^\star}^2 +
          \tfrac{9N\tilde\eta^2}{S} \ce_{r} } \leq (1 - \tfrac{\mu
          \tilde\eta}{2}) \rbr*{\E\norm{\xx^{r-1} - \xx^\star}^2+
          \tfrac{9N\tilde\eta^2}{S} \ce_{r-1}} - \tilde\eta(\E[f(\xx^{r-1})] -
        f(\xx^\star))+ \tfrac{12\tilde\eta^2 }{KS}(1 + \tfrac{S}{\eta_g^2})\sigma^2\,.
    \end{multline*}
\end{lemma}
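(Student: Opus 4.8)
The plan is to run a one-step Lyapunov (potential function) argument with the potential
\[
    \Phi_r := \E\norm{\xx^{r} - \xx^\star}^2 + \tfrac{9N\tilde\eta^2}{S}\,\ce_{r}\,,
\]
and show $\Phi_r \leq (1-\tfrac{\mu\tilde\eta}{2})\Phi_{r-1} - \tilde\eta(\E[f(\xx^{r-1})]-f(\xx^\star)) + \tfrac{12\tilde\eta^2}{KS}(1+\tfrac{S}{\eta_g^2})\sigma^2$. The coefficient $\tfrac{9N\tilde\eta^2}{S}$ on the control-lag $\ce_r$ is chosen precisely so that the three error quantities we track (the distance $\norm{\xx^r-\xx^\star}^2$, the drift $\error_r$, and the lag $\ce_r$) close into a self-contained contraction. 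First I would expand $\E\norm{\xx^{r}-\xx^\star}^2 = \norm{\xx^{r-1}-\xx^\star}^2 + 2\E\inp{\Delta\xx}{\xx^{r-1}-\xx^\star} + \E\norm{\Delta\xx}^2$. The key observation for the cross term is that, because the control variates satisfy $\cc = \tfrac1N\sum_i \cc_i$, the corrections cancel in expectation: $\E_{r-1}[\Delta\xx] = -\tfrac{\tilde\eta}{KN}\sum_{k,i}\E_{r-1}[\nabla f_i(\yy_{i,k-1})]$, since $\sum_i(\cc-\cc_i)=0$. Applying the perturbed strong convexity bound (Lemma~\ref{lem:magic}) with $h=f_i$, $\xx=\yy_{i,k-1}$, $\zz=\xx^{r-1}$, $\yy=\xx^\star$ and averaging over $i,k$ (using $f=\tfrac1N\sum_i f_i$ and the definition of $\error_r$) yields
\[
    2\E\inp{\Delta\xx}{\xx^{r-1}-\xx^\star} \leq -\tfrac{\mu\tilde\eta}{2}\norm{\xx^{r-1}-\xx^\star}^2 - 2\tilde\eta(\E[f(\xx^{r-1})]-f(\xx^\star)) + 2\beta\tilde\eta\error_r\,.
\]

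Next I would substitute the three structural estimates already available. For $\E\norm{\Delta\xx}^2$ I invoke the server-variance bound (Lemma~\ref{lem:server-variance-sample}), which contributes $8\beta\tilde\eta^2(\E f - f^\star) + 8\tilde\eta^2\ce_{r-1} + 4\tilde\eta^2\beta^2\error_r + \tfrac{12\tilde\eta^2\sigma^2}{KS}$. For the potential's lag term I multiply the control-lag recursion (Lemma~\ref{lem:control-error-sample}) by $\tfrac{9N\tilde\eta^2}{S}$, giving a contraction $\tfrac{9N\tilde\eta^2}{S}(1-\tfrac{S}{N})\ce_{r-1} = (\tfrac{9N\tilde\eta^2}{S} - 9\tilde\eta^2)\ce_{r-1}$ together with extra $36\beta\tilde\eta^2(\E f - f^\star)$ and $O(\beta^2\tilde\eta^2)\error_r$ contributions. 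At this stage I collect the $\error_r$ coefficients from all sources; they take the form $(2\beta\tilde\eta + O(\beta^2\tilde\eta^2))\error_r$, which for $\tilde\eta \leq \tfrac{1}{22\beta}$ (guaranteed by $\eta_l \leq \tfrac{1}{81\beta K\eta_g}$) is dominated by $3\beta\tilde\eta\error_r$. I then eliminate the drift entirely via Lemma~\ref{lem:drift-bound-sampling}, which replaces it by $\tfrac{2\tilde\eta^2}{3}\ce_{r-1} + \tfrac{\tilde\eta}{25\eta_g^2}(\E f - f^\star) + \tfrac{\tilde\eta^2\sigma^2}{K\eta_g^2}$.

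The final step is coefficient verification, where the step-size constraints earn their keep. The $\ce_{r-1}$ coefficient becomes $\tfrac{9N\tilde\eta^2}{S} - \tilde\eta^2 + \tfrac{2}{3}\tilde\eta^2 = \tfrac{9N\tilde\eta^2}{S} - \tfrac13\tilde\eta^2$, which is at most $(1-\tfrac{\mu\tilde\eta}{2})\tfrac{9N\tilde\eta^2}{S}$ exactly when $\tilde\eta \leq \tfrac{2S}{27\mu N}$ — this is what forces the constraint $\eta_l \leq \tfrac{S}{15\mu N K\eta_g}$. The $(\E f - f^\star)$ coefficient is $-2\tilde\eta + 44\beta\tilde\eta^2 + \tfrac{\tilde\eta}{25\eta_g^2}$, which is $\leq -\tilde\eta$ since $44\beta\tilde\eta + \tfrac{1}{25\eta_g^2} \leq \tfrac{44}{81} + \tfrac{1}{25} < 1$ using $\tilde\eta\leq\tfrac{1}{81\beta}$ and $\eta_g\geq 1$; and the residual noise $\tfrac{12\tilde\eta^2\sigma^2}{KS} + \tfrac{\tilde\eta^2\sigma^2}{K\eta_g^2}$ is bounded by the claimed $\tfrac{12\tilde\eta^2}{KS}(1+\tfrac{S}{\eta_g^2})\sigma^2$. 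The main obstacle I anticipate is precisely this circular coupling: the drift lemma re-injects a $\ce_{r-1}$ term that must be reabsorbed into the contraction factor $(1-\tfrac{\mu\tilde\eta}{2})$ of the control-lag term in $\Phi$, and simultaneously the $\error_r$ aggregated from distance, variance, and lag must stay below the drift lemma's budget. Threading both conditions is what pins down the two-sided step-size window, and keeping the constants consistent through all three substitutions is the delicate part rather than any single inequality.
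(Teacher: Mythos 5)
Your proposal is correct and follows essentially the same route as the paper's own proof: the identical Lyapunov potential $\norm{\xx^r - \xx^\star}^2 + \tfrac{9N\tilde\eta^2}{S}\ce_r$, the same three ingredients (Lemma~\ref{lem:server-variance-sample} for the second moment, Lemma~\ref{lem:control-error-sample} scaled by $\tfrac{9N\tilde\eta^2}{S}$, and Lemma~\ref{lem:drift-bound-sampling} for the drift), and the same use of perturbed strong convexity on the cross term after noting that $\sum_i(\cc - \cc_i) = 0$ makes the corrections vanish in expectation. Your coefficient bookkeeping (the aggregate $2\beta\tilde\eta + 22\beta^2\tilde\eta^2$ on $\error_r$ absorbed into $3\beta\tilde\eta$, the net $\tfrac{9N\tilde\eta^2}{S} - \tfrac{1}{3}\tilde\eta^2$ on $\ce_{r-1}$ requiring $\tilde\eta \leq \tfrac{2S}{27\mu N}$, and $-2\tilde\eta + 44\beta\tilde\eta^2 + \tfrac{\tilde\eta}{25\eta_g^2} \leq -\tilde\eta$) matches the paper's computations exactly.
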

\begin{proof}
    Starting from our server update equation,
    \[
\Delta\xx = -\frac{\tilde\eta}{KS}\sum_{k, i\in \cS} (g_i(\yy_{i,k-1}) + \cc - \cc_i) \text{, and } \E[\Delta\xx] = -\frac{\tilde\eta}{KN} \sum_{k, i} g_i(\yy_{i,k-1})\,.
    \]
    We can then apply Lemma~\ref{lem:server-variance-sample} to bound the second moment of the server update as
      \begin{align*}
\E_{r-1}\norm{\xx + \Delta\xx - \xx^\star}^2 &= \E_{r-1}\norm{\xx - \xx^\star}^2 - \frac{2\tilde\eta}{KS}\E_{r-1}\sum_{k, i \in \cS} \inp{\nabla f_i(\yy_{i,k-1})}{\xx - \xx^\star} + \E_{r-1}\norm[\big]{\Delta \xx}^2\nonumber\\
&\leq  \underbrace{\frac{2\tilde\eta}{KS}\E_{r-1}\sum_{k, i \in \cS} \inp{\nabla f_i(\yy_{i,k-1})}{\xx^\star - \xx}}_{\cT_4} +  \E_{r-1}\norm{\xx - \xx^\star}^2  \nonumber\\&\hspace{1cm} + 8\beta \tilde\eta^2 (\E[f(\xx^{r-1})] -f(\xx^\star)) + 8\tilde\eta^2 \ce_{r-1} + 4\tilde\eta^2\beta^2\error + \frac{12\tilde\eta^2\sigma^2}{KS} \,.
    \end{align*}
      The term $\cT_4$ can be bounded by using perturbed strong-convexity (Lemma~\ref{lem:magic}) with $h = f_i$, $\xx = \yy_{i,k-1}$, $\yy = \xx^\star$, and $\zz = \xx$ to get
    \begin{align*}
\E[\cT_4] &= \frac{2\tilde\eta}{KS}\E\sum_{k, i \in \cS} \inp*{ \nabla f_i(\yy_{i,k-1})}{\xx^\star - \xx} \\
&\leq \frac{2\tilde\eta}{KS} \E\sum_{k, i \in \cS} \rbr*{f_i(\xx^\star) - f_i(\xx) + \beta \norm{\yy_{i,k-1} - \xx}^2 - \frac{\mu}{4}\norm{\xx - \xx^\star}^2}\\
&= -2\tilde\eta\E\rbr*{f(\xx) - f(\xx^\star) + \frac{\mu}{4}\norm{\xx - \xx^\star}^2} + 2\beta \tilde\eta\error\,.
    \end{align*}
    Plugging $\cT_4$ back, we can further simplify the expression to get
     \begin{align*}
\E \norm{\xx + \Delta\xx - \xx^\star}^2 &\leq \E \norm{\xx - \xx^\star}^2  -2\tilde\eta\rbr*{f(\xx) - f(\xx^\star) + \frac{\mu}{4}\norm{\xx - \xx^\star}^2} + 2\beta\tilde\eta\error
\\ &\hspace{2cm}+ \frac{12 \tilde\eta^2 \sigma^2}{KS} + 8\beta \tilde\eta^2 (\E[f(\xx^{r-1})] -f(\xx^\star)) + 8\tilde\eta^2 \ce_{r-1} + 4\tilde\eta^2\beta^2\error  \\
&= (1 - \tfrac{\mu \tilde\eta}{2})\norm{\xx - \xx^\star}^2  + (8\beta\tilde\eta^2 -2\tilde\eta)(f(\xx) - f(\xx^\star))     \\&\hspace{2cm}+ \frac{12 \tilde\eta^2 \sigma^2}{KS}+ (2\beta\tilde\eta + 4\beta^2\tilde\eta^2)\error + 8\tilde\eta^2 \ce_{r-1}\,.
    \end{align*}
    We can use Lemma \ref{lem:control-error-sample} (scaled by $9\tilde\eta^2 \frac{N}{S}$) to bound the control-lag
    \[
 9\tilde\eta^2\tfrac{N}{S}\ce_r \leq  (1 - \tfrac{\mu \tilde\eta}{2})9\tilde\eta^2\tfrac{N}{S}\ce_{r-1} +  9(\tfrac{\mu\tilde\eta N}{2S} - 1) \tilde\eta^2\ce_{r-1} + 9\tilde\eta^2 \rbr*{4\beta (\E[f(\xx^{r-1})] - f(\xx^\star)) + 2\beta^2 \error}
    \]
    Now recall that Lemma~\ref{lem:drift-bound-sampling} bounds the client-drift:
    \[
        3\beta\tilde\eta\error_{r}  \leq \tfrac{2\tilde\eta^2}{3}\ce_{r-1}+ \tfrac{\tilde\eta}{25\eta_g^2}(\E[f(\xx^{r-1})] - f(\xx^\star))+ \tfrac{\tilde\eta^2 }{K\eta_g^2}\sigma^2\,.
    \]
    Adding all three inequalities together,
    \begin{align*}
\E \norm{\xx + \Delta\xx - \xx^\star}^2 +  \frac{9\tilde\eta^2 N \ce_r}{S} &\leq (1  - \frac{\mu\tilde\eta}{2})\rbr*{\E \norm{\xx - \xx^\star}^2 +  \frac{9\tilde\eta^2 N \ce_{r-1}}{S}}  + (44\beta\tilde\eta^2 - \frac{49}{25}\tilde\eta)(f(\xx) - f(\xx^\star)) \\ &\hspace{1cm} + \frac{12 \tilde\eta^2 \sigma^2}{KS}(1 + \tfrac{S}{\eta_g^2})+ (22\beta^2\tilde\eta^2 - \beta\tilde\eta)\error + (\tfrac{9\mu\tilde\eta N}{2S}- \tfrac{1}{3})\tilde\eta^2 \ce_{r-1}
    \end{align*}
    Finally, the lemma follows from noting that $\tilde\eta \leq \frac{1}{81\beta}$ implies $44\beta^2\tilde\eta^2 \leq \frac{24}{25}\tilde\beta$ and $\tilde\eta \leq \frac{S}{15 \mu N}$ implies $\tfrac{9\mu\tilde\eta N}{2S} \leq \frac{1}{3}$.
\end{proof}

\textbf{The final rate for strongly convex} follows simply by unrolling the recursive bound in Lemma~\ref{lem:progress-sample} using Lemma~\ref{lem:constant}. Also note that if $c^0_i = g_i(\xx^0)$, then $\frac{\tilde\eta N}{S}\ce_0$ can be bounded in terms of function sub-optimality $F$.
For the \textbf{general convex} setting, averaging over $r$ in Lemma~\ref{lem:progress-sample} with $\mu=0$ gives
\begin{align*}
        \frac{1}{R}\sum_{r=1}^{R} \E[f(\xx^{r-1})] -
        f(\xx^\star) &\leq \frac{1}{\tilde\eta R}\norm{\xx^{0} - \xx^\star}^2 +
          \frac{9N\tilde\eta}{S R} \ce_{0}  + \frac{12\tilde\eta }{KS}(1 + \tfrac{S}{\eta_g^2})\sigma^2\\
          &\leq 4\norm{\xx^{0} - \xx^\star} \sigma \sqrt{\frac{3(1 + S/\eta_g^2)}{RKS}}\\
          &\hspace{2cm} + \sqrt{\frac{N}{S}}\frac{\norm{\xx^{0} - \xx^\star}^2 + 9\ce_0}{R} + \frac{81 \beta \norm{\xx^{0} - \xx^\star}^2 }{R}\,.
          \,.
\end{align*}
The last step follows from using a step size of $\tilde\eta = \min\rbr*{\frac{1}{81 \beta}, \sqrt{\frac{S}{N}}, \frac{\norm{\xx^{0} - \xx^\star}}{\sigma}\sqrt{\frac{KS}{12R(1 + \tfrac{S}{\eta^2_g}}} }$.


\subsection{Convergence of \covfefe\ for non-convex functions  (Theorem \ref{thm:convergence})}
We now analyze the most general case of \covfefe\ with option II on functions which are potentially non-convex. Just as in the non-convex proof, we will first bound the variance of the server update in Lemma~\ref{lem:non-convex-variance-sample}, the change in control lag in Lemma~\ref{lem:non-convex-control-error-sample} and finally we bound the client-drift in Lemma~\ref{lem:nonconvex-drift-bound-sampling}. Combining these three together gives us the progress made in one round in Lemma~\ref{lem:non-convex-progress-sample}. The final rate is derived from the progress made using Lemma~\ref{lemma:general}.

\paragraph{Additional notation.}
Recall that in round $r$, we update the control variate as \eqref{eqn:control-updates-sample}
\[
    \cc_i^r = \begin{cases}
    \frac{1}{K}\sum_{k=1}^K g_i(\yy_{i,k-1}^{r}) &\text{ if } i \in \cS^r\,,\\
    \cc_i^{r-1} & \text{ otherwise}\,.
    \end{cases}
\]
We introduce the following notation to keep track of the `lag' in the update of the control variate: define a sequence of parameters $\{\alphav_{i,k-1}^{r-1}\}$ such that for any $i \in [N]$ and $k \in [K]$ we have $\alphav_{i,k-1}^{0} := \xx^0$ and for $r \geq 1$,
\begin{equation}\label{def:alpha}
    \alphav_{i,k-1}^{r} := \begin{cases}
    \yy_{i,k-1}^{r} &\text{ if } i \in \cS^r\,,\\
    \alphav_{i,k-1}^{r-1} & \text{ otherwise}\,.
    \end{cases}
\end{equation}
By the update rule for control variates \eqref{eqn:control-updates-sample} and the definition of $\{\alphav_{i,k-1}^{r-1}\}$ above, the following property always holds:
\[
    \cc_i^r = \frac{1}{K}\sum_{k=1}^K g_i(\alphav_{i,k-1}^{r})\,.
\]
We can then define the following $\tce_r$ to be the error in control variate for round $r$:
     \begin{equation}\label{def:non-convex-control-lag}
        \tce_{r} := \frac{1}{KN}\sum_{k=1}^K \sum_{i=1}^N\E \norm{\alphav_{i,k-1}^{r} - \xx^{r}}^2\,.
     \end{equation}
Also recall the closely related definition of client drift caused by local updates:
 \[
    \error_r := \frac{1}{KN}\sum_{k=1}^K \sum_{i=1}^N \E[\norm{\yy_{i,k}^r - \xx^{r-1}}^2]\,.
 \]

 \paragraph{Variance of server update.} Let us analyze how the control variates effect the variance of the aggregate server update.
\begin{lemma}\label{lem:non-convex-variance-sample}
 For updates \eqref{eqn:local-updates-sample}---\eqref{eqn:global-updates-sample}and assumptions
\ref{asm:variance} and \ref{asm:smoothness}, the following holds true for any $\tilde\eta := \eta_l\eta_g K \in [0 ,1/\beta]$:
\[
    \E\norm{\E_{r-1} [\xx^r] - \xx^{r-1}}^2 \leq 2\tilde\eta^2 \beta^2\error_r + 2\tilde\eta^2\E\norm{\nabla f(\xx^{r-1})}^2 \,, \text{ and}
\]
     \[
\E \norm{\xx^r - \xx^{r-1}}^2 \leq 4\tilde\eta^2\beta^2\error_r + 8\tilde\eta^2 \beta^2 \tce_{r-1} + 4\tilde\eta^2  \E\norm{\nabla f(\xx^{r-1})}^2 + \frac{9 \tilde\eta^2 \sigma^2}{KS}\,.
     \]
\end{lemma}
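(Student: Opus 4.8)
The plan is to write the server increment, using \eqref{eqn:local-updates-sample} and \eqref{eqn:global-updates-sample}, as $\Delta\xx := \xx^r - \xx^{r-1} = -\frac{\tilde\eta}{KS}\sum_{k,i\in\cS}\vv_{i,k}$ with $\vv_{i,k} = g_i(\yy_{i,k-1}) - \cc_i^{r-1} + \cc^{r-1}$ (superscripts dropped). The key structural fact, exactly as in the proof of Lemma~\ref{lem:server-variance-sample}, is that because $\cS$ is sampled uniformly and $\cc^{r-1} = \frac1N\sum_i\cc_i^{r-1}$, the control variates cancel in the sampling-expectation, so that $\E_{r-1}[\Delta\xx] = -\frac{\tilde\eta}{KN}\sum_{k,i}\nabla f_i(\yy_{i,k-1})$. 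For the \textbf{first statement}, I would start from this conditional mean, insert $\pm\nabla f_i(\xx^{r-1})$, use $\frac1N\sum_i\nabla f_i(\xx^{r-1}) = \nabla f(\xx^{r-1})$, and apply the relaxed triangle inequality (Lemma~\ref{lem:norm-sum}, part~1 with $a=1$) to peel off $2\tilde\eta^2\norm{\nabla f(\xx^{r-1})}^2$. For the remainder, Jensen (Lemma~\ref{lem:norm-sum}, part~2) pulls the average inside the norm and smoothness \eqref{eqn:lip-grad} turns each $\norm{\nabla f_i(\yy_{i,k-1}) - \nabla f_i(\xx^{r-1})}^2$ into $\beta^2\norm{\yy_{i,k-1}-\xx^{r-1}}^2$, whose average is $\beta^2\error_r$ by \eqref{eqn:client-drift-def}. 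No variance term appears since we bound the norm of a conditional mean; taking full expectation closes the first claim.

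For the \textbf{second statement} I would decompose the summand of $\Delta\xx$ as
\[
g_i(\yy_{i,k-1}) - \cc_i + \cc = \underbrace{\big(g_i(\yy_{i,k-1}) - \nabla f_i(\yy_{i,k-1})\big)}_{\text{current noise}} + \underbrace{\big(\nabla f_i(\yy_{i,k-1}) - \nabla f_i(\xx^{r-1})\big)}_{\text{drift}} + \underbrace{\big(\nabla f_i(\xx^{r-1}) - \cc_i + \cc\big)}_{=:\,T_i}\,,
\]
and split $\E\norm{\Delta\xx}^2$ into the corresponding three squared-norm pieces via Lemma~\ref{lem:norm-sum}. The current-noise piece is mean-zero as a martingale difference sequence of $KS$ terms each of variance $\le\sigma^2$, so by Lemma~\ref{lem:independent} it contributes a term of order $\tilde\eta^2\sigma^2/(KS)$. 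The drift piece is handled by Jensen plus smoothness exactly as in the first statement, yielding the $\beta^2\error_r$ term. Since $T_i$ is independent of $k$, the third piece reduces to $\E\norm{\frac1S\sum_{i\in\cS}T_i}^2$, and its sampling-mean is $\frac1N\sum_i T_i = \nabla f(\xx^{r-1}) - \cc + \cc = \nabla f(\xx^{r-1})$.

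The crux is the $T_i$ term, and here I would emphatically \emph{not} use naive Jensen over the sampled set: that would produce $\frac1N\sum_i\norm{\nabla f_i(\xx^{r-1})}^2$, which cannot be controlled by $\norm{\nabla f(\xx^{r-1})}^2$ without a gradient-dissimilarity assumption — and Theorem~\ref{thm:convergence} makes none. Instead I would separate sampling-mean from sampling-variance (Lemma~\ref{lem:independent}, using that uniform without-replacement sampling gives a $\tfrac1S$ factor), so the mean yields exactly $\norm{\nabla f(\xx^{r-1})}^2$ and the variance is $\tfrac1{SN}\sum_i\norm{T_i}^2$ up to constants. Each $\norm{T_i}^2$ is then bounded by $\norm{\cc_i - \nabla f_i(\xx^{r-1})}^2$ and $\norm{\cc - \nabla f(\xx^{r-1})}^2$ via the relaxed triangle inequality, and expanding $\cc_i = \frac1K\sum_{k'}g_i(\alphav_{i,k'-1})$ from \eqref{eqn:control-updates-sample}/\eqref{def:alpha} and separating its stale noise (a further $\sigma^2/K$, which becomes $\sigma^2/(KS)$ after the $\tfrac1S$) from the bias $\nabla f_i(\alphav_{i,k'-1}) - \nabla f_i(\xx^{r-1})$ (bounded by $\beta^2\norm{\alphav_{i,k'-1}-\xx^{r-1}}^2$ via smoothness, averaging to $\beta^2\tce_{r-1}$ by \eqref{def:non-convex-control-lag}) produces the $\tilde\eta^2\beta^2\tce_{r-1}$ term and the balance of the $\tfrac{9\tilde\eta^2\sigma^2}{KS}$.

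The main obstacle is therefore the bookkeeping of three distinct randomness sources — current-round gradient noise, without-replacement client sampling, and the \emph{stale} noise embedded in $\cc_i$ from the round in which $\alphav$ was last refreshed — and routing each through Lemma~\ref{lem:independent} so the constants collapse to exactly $4,8,4,9$. The conceptually delicate point (and the one that forces the mean/variance separation rather than Jensen on the $T_i$ term) is that, absent any similarity assumption, only the aggregate $\norm{\nabla f(\xx^{r-1})}^2$ may appear, while every per-client discrepancy must be quotiented by $S$ and absorbed into $\tce_{r-1}$ and the variance.
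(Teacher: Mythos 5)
Your proposal is correct, and on the decisive term it genuinely diverges from the paper's proof. The first statement and the overall skeleton (separating current-round gradient noise, bounding the drift piece by Jensen plus smoothness to get $\beta^2\error_r$, and routing the control-variate staleness into $\tce_{r-1}$) coincide with the paper. The difference is the treatment of $T_i = \nabla f_i(\xx^{r-1}) - \cc_i + \cc$: the paper in fact \emph{does} apply Jensen across the sampled clients --- the step you rule out --- but only after regrouping the summand as $(\nabla f_i(\yy_{i,k-1})-\nabla f_i(\xx^{r-1})) + (\E[\cc]-\nabla f(\xx^{r-1})) + \nabla f(\xx^{r-1}) - (\E[\cc_i]-\nabla f_i(\xx^{r-1}))$, so that the four-way relaxed triangle inequality (Lemma~\ref{lem:norm-sum}) leaves only smoothness-controlled differences, two control-lag differences (each bounded via $\beta^2\tce_{r-1}$, whence the coefficient $4+4=8$), and the aggregate $\norm{\nabla f(\xx^{r-1})}^2$; no bare $\norm{\nabla f_i(\xx^{r-1})}^2$ ever appears, so the dissimilarity issue you flag is resolved by grouping before Jensen rather than by avoiding Jensen. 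Your alternative --- conditioning on the pre-round filtration so that the $T_i$ are deterministic, then separating the sampling mean from the sampling variance and exploiting the exact cancellation $\tfrac{1}{N}\sum_i T_i = \nabla f(\xx^{r-1})$ (in which even the stale noise inside the $\cc_i$ cancels exactly) --- is equally valid and in fact slightly tighter, since the control-lag and stale-noise contributions come out quotiented by $S$, i.e., you get a $\tce_{r-1}/S$ term where the paper settles for $\tce_{r-1}$. Two caveats: the $\tfrac{1}{S}$ variance factor for uniform without-replacement sampling is standard but is not literally contained in Lemma~\ref{lem:independent} as stated (the paper sidesteps it by applying part~1 of that lemma with $\tau=3$ to the three aggregates $\tfrac{1}{KS}\sum_{k,i\in\cS}g_i(\yy_{i,k-1})$, $\tfrac{1}{S}\sum_{i\in\cS}\cc_i$, and $\cc$, each of variance at most $\sigma^2/(KS)$, which is exactly where the constant $9$ comes from), so you would need to supply that fact; and your nested factor-of-two splits will not reproduce the constants $4,8,4,9$ ``exactly'' as claimed --- you obtain the same functional form with different (partly smaller) constants, which is harmless downstream since Lemmas~\ref{lem:non-convex-control-error-sample} and~\ref{lem:non-convex-progress-sample} only need these shapes up to a constant rescaling of the step-size bound.
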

\begin{proof}
Recall that that the server update satisfies
\[
    \E[\Delta\xx] = -\frac{\tilde\eta}{KN} \sum_{k, i} \E[g_i(\yy_{i,k-1})]\,.
\]
From the definition of $\alphav_{i,k-1}^{r-1}$ and dropping the superscript everywhere we have
    \[
\Delta\xx = -\frac{\tilde\eta}{KS}\sum_{k, i\in \cS} (g_i(\yy_{i,k-1}) + \cc - \cc_i) \text{ where } \cc_i = \frac{1}{K}\sum_{k}g_i(\alphav_{i,k-1}) \,.
    \]
    Taking norm on both sides and separating mean and variance, we proceed as
    \begin{align*}
        \E\norm{\Delta \xx}^2 &= \E\norm{-\frac{\tilde\eta}{KS}\sum_{k, i\in \cS} (g_i(\yy_{i,k-1}) - g_i(\alphav_{i,k-1}) + \cc - \cc_i)}^2\\
        &\leq\E\norm[\bigg]{-\frac{\tilde\eta}{KS}\sum_{k, i\in \cS} (\nabla f_i(\yy_{i,k-1}) + \E[\cc] - \E[\cc_i])}^2 + \frac{9 \tilde\eta^2 \sigma^2}{KS}\\
        &\leq \E\sbr[\bigg]{\frac{\tilde\eta^2  }{KS}\sum_{k, i\in \cS} \norm[\bigg]{\nabla f_i(\yy_{i,k-1}) + \E[\cc] - \E[\cc_i]}^2} + \frac{9 \tilde\eta^2 \sigma^2}{KS}\\
        &= \frac{\tilde\eta^2  }{KN}\sum_{k, i}\E \norm[\bigg]{(\nabla f_i(\yy_{i,k-1})- \nabla f_i(\xx)) + (\E[\cc] - \nabla f(\xx)) + \nabla f(\xx) - (\E[\cc_i]- \nabla f_i(\xx))}^2 + \frac{9 \tilde\eta^2 \sigma^2}{KS}\\
        &\leq \frac{4\tilde\eta^2  }{KN}\sum_{k, i}\E \norm{\nabla f_i(\yy_{i,k-1})- \nabla f_i(\xx)}^2 + \frac{8\tilde\eta^2  }{KN}\sum_{k, i}\E \norm{\nabla f_i(\alphav_{i,k-1}) - \nabla f_i(\xx)}^2 \\&\hspace{3cm}+ 4\tilde\eta^2  \E\norm{\nabla f(\xx)}^2 + \frac{9 \tilde\eta^2 \sigma^2}{KS}\\
        &\leq 4\tilde\eta^2\beta^2 \error_r + 8\beta^2 \tilde\eta^2 \tce_{r-1} + 4\tilde\eta^2  \E\norm{\nabla f(\xx)}^2 + \frac{9 \tilde\eta^2 \sigma^2}{KS}\,.
    \end{align*}
    In the first inequality, note that the three random variables---$\frac{1}{KS}\sum_{k,i\in\cS}g_i(\yy_{i,k})$,  $\frac{1}{S}\sum_{i\in\cS}\cc_i$, and $\cc$---may not be independent but each have variance smaller than $\frac{\sigma^2}{KS}$ and so we can apply Lemma~\ref{lem:independent}. The rest of the inequalities follow from repeated applications of the relaxed triangle inequality, $\beta$-Lipschitzness of $f_i$, and the definition of $\tce_{r-1}$ \eqref{def:non-convex-control-lag}. This proves the second statement. The first statement follows from our expression of $\E_{r-1}[\Delta \xx]$ and similar computations.
\end{proof}

\paragraph{Lag in the control variates.} We now analyze the `lag' in the control variates due to us sampling only a small subset of clients each round. Because we cannot rely on convexity anymore but only on the Lipschitzness of the gradients, the control-lag increases faster in the non-convex case.
\begin{lemma}\label{lem:non-convex-control-error-sample}
 For updates \eqref{eqn:local-updates-sample}---\eqref{eqn:global-updates-sample} and assumptions
\ref{asm:variance}, \ref{asm:smoothness}, the following holds true for any $\tilde\eta \leq \frac{1}{24 \beta} (\tfrac{S}{N})^\alpha$ for $\alpha \in [\frac{1}{2},1]$ where $\tilde \eta:= \eta_l\eta_g K$:
     \[
\tce_{r} \leq \rbr{1 - \tfrac{17S}{36N}}\tce_{r-1} + \tfrac{1}{48\beta^2}(\tfrac{S}{N})^{2\alpha -1} \norm{\nabla f(\xx^{r-1})}^2 + \tfrac{97}{48}(\tfrac{S}{N})^{2\alpha -1} \error_{r} + (\tfrac{S}{N \beta^2})\frac{\sigma^2}{32KS} \,.
     \]
\end{lemma}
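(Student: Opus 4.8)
The plan is to turn the definition \eqref{def:non-convex-control-lag} of $\tce_r$ into a one-step recursion by combining the sampling dynamics of the lag sequence $\{\alphav^r_{i,k-1}\}$ with the two bounds on the server update supplied by Lemma~\ref{lem:non-convex-variance-sample}. The decisive device will be to pivot the reference point $\xx^r$ not only through $\xx^{r-1}$ but also through its conditional mean $\E_{r-1}[\xx^r]$, thereby splitting the server step into a \emph{mean drift} $\E_{r-1}[\xx^r]-\xx^{r-1}$ and a \emph{fluctuation} $\xx^r-\E_{r-1}[\xx^r]$. This is precisely why Lemma~\ref{lem:non-convex-variance-sample} is stated in two parts: the mean drift (first statement) is free of $\tce_{r-1}$, whereas the fluctuation (bounded by the second statement) carries the $8\tilde\eta^2\beta^2\tce_{r-1}$ term. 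Keeping these two contributions on separate footings is what makes the contraction factor $1-\tfrac{17S}{36N}$ attainable.

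Concretely, I would first take the expectation over the client sampling in \eqref{def:alpha}: for each index $(i,k)$ the $\alphav$-term equals $\tfrac{S}{N}\norm{\yy^r_{i,k-1}-\xx^r}^2$ (refreshed clients) plus $(1-\tfrac{S}{N})\norm{\alphav^{r-1}_{i,k-1}-\xx^r}^2$ (kept clients). For the kept terms, which carry $\tce_{r-1}$, I would apply a relaxed triangle inequality (Lemma~\ref{lem:norm-sum}) with parameter $a$ between $\alphav^{r-1}_{i,k-1}-\xx^{r-1}$ and the mean drift, and then add the fluctuation through a Pythagorean split whose cross term vanishes because $\alphav^{r-1}_{i,k-1}$ is measurable with respect to the filtration before round $r$. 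The mean drift is then controlled by the first statement of Lemma~\ref{lem:non-convex-variance-sample}, so its inflated $(1+\tfrac1a)$ prefactor only multiplies $\error_r$ and $\norm{\nabla f(\xx^{r-1})}^2$; the fluctuation is controlled by the second statement with an $O(1)$ prefactor, so its $\tce_{r-1}$ content remains tiny. The refreshed terms are handled analogously, using $\tfrac{1}{KN}\sum_{k,i}\E\norm{\yy^r_{i,k-1}-\xx^{r-1}}^2\le\error_r$.

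To fix the constants I would choose $a$ so that $1+\tfrac1a=\tfrac{3N}{S}$; then $(1+a)(1-\tfrac{S}{N})=1-\tfrac{2S}{3N-S}\le 1-\tfrac{17S}{36N}-\tfrac{7S}{36N}$, leaving a spare $\tfrac{7S}{36N}$ of contraction. The step-size hypothesis $\tilde\eta\le\tfrac{1}{24\beta}(\tfrac{S}{N})^{\alpha}$ gives $\tilde\eta^2\beta^2\le\tfrac{1}{576}(\tfrac{S}{N})^{2\alpha}\le\tfrac{1}{576}\tfrac{S}{N}$. Feeding this in, the inflated mean-drift terms produce coefficients of order $(\tfrac{S}{N})^{2\alpha-1}$ on $\norm{\nabla f(\xx^{r-1})}^2$ and $\error_r$ (matching $\tfrac{1}{48\beta^2}$ and $\tfrac{97}{48}$ once the fluctuation and refresh contributions are also collected), the variance term emerges as $\tfrac{S}{N\beta^2}\tfrac{\sigma^2}{32KS}$, and the fluctuation's lag content $8\tilde\eta^2\beta^2\tce_{r-1}\le\tfrac{1}{72}\tfrac{S}{N}\tce_{r-1}$ is absorbed into the spare $\tfrac{7S}{36N}$ budget.

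The main obstacle is exactly the $\tce_{r-1}$ dependence of the server update. A single relaxed triangle inequality through $\xx^{r-1}$ followed by the full second-moment bound would multiply $\tilde\eta^2\beta^2\tce_{r-1}$ by the unavoidable $(1+\tfrac1a)\sim\tfrac{N}{S}$ factor, yielding a $(\tfrac{S}{N})^{2\alpha-1}\tce_{r-1}$ contribution; for $\alpha<1$ and small $\tfrac{S}{N}$ this swamps the $\Theta(\tfrac{S}{N})$ contraction and breaks the recursion. Avoiding it forces the mean/fluctuation split described above, and the delicate technical point is to carry this out rigorously despite the coupling between the sampling event $\cbr{i\notin\cS^r}$ and the random reference point $\xx^r$; exploiting the $\tfrac1S$ sampling-variance gain together with the filtration-measurability of $\alphav^{r-1}_{i,k-1}$ is what keeps the fluctuation's prefactor at $O(1)$ rather than $O(\tfrac{N}{S})$.
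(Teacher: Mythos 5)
Your proposal follows essentially the same route as the paper's proof: the paper likewise decomposes $\tce_r$ over the sampling event into kept and refreshed terms, expands the kept term around $\xx^{r-1}$, and applies Young's inequality with parameter $b = \tfrac{S}{2(N-S)}$ so that only the conditional mean $\E_{r-1}[\Delta\xx^r]$ (controlled by the first statement of Lemma~\ref{lem:non-convex-variance-sample}, which is free of $\tce_{r-1}$) picks up the $N/S$ inflation, while the full second moment --- whose $8\tilde\eta^2\beta^2\tce_{r-1}$ content is absorbed into the contraction via the step-size bound --- enters with an $O(1)$ prefactor, exactly your mean/fluctuation split. Apart from your slightly different choice of Young parameter ($1+\tfrac1a = \tfrac{3N}{S}$ versus the paper's $\tfrac1b \le \tfrac{2N}{S}$), which only shifts the intermediate constants immaterially given the spare contraction budget, the argument coincides with the paper's.
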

\begin{proof}
The proof proceeds similar to that of Lemma~\ref{lem:control-error-sample} except that we cannot rely on convexity. Recall that after round $r$, the definition of $\alphav_{i,k-1}^r$ \eqref{def:alpha} implies that
\[
    \E_{\cS^r}[\alphav^r_{i,k-1}] = (1 - \tfrac{S}{N})\alphav^{r-1}_{i,k-1} + \tfrac{S}{N}\yy_{i,k-1}^r\,.
\]
Plugging the above expression in the definition of $\tce_r$ we get
\begin{align*}
    \tce_{r} &= \frac{1}{KN} \sum_{i,k}\E \norm{\alphav_{i,k-1}^{r} - \xx^{r}}^2\\
    &= \rbr*{1 - \frac{S}{N}}\cdot \underbrace{\frac{1}{KN}\sum_i\E \norm{\alphav_{i,k-1}^{r-1} - \xx^r }^2}_{\cT_5} + \frac{S}{N}\cdot\underbrace{\frac{1}{KN} \sum_{k, i} \E\norm{\yy^{r}_{i,k-1} -\xx^{r}}^2}_{\cT_6}\,.
\end{align*}
We can expand the second term $\cT_6$ with the relaxed triangle inequality to claim
\[
    \cT_6 \leq 2(\error_r + \E\norm{\Delta \xx^r}^2)\,.
\]
We will expand the first term $\cT_5$ to claim for a constant $b \geq 0$ to be chosen later
\begin{align*}
    \cT_5 &= \frac{1}{KN}\sum_i\E( \norm{\alphav_{i,k-1}^{r-1} - \xx^{r-1} }^2 + \norm{\Delta \xx^r}^2 + \E_{r-1}\inp*{\Delta \xx^r}{\alphav_{i,k-1}^{r-1} - \xx^{r-1} })\\
    &\leq  \frac{1}{KN}\sum_i\E (\norm{\alphav_{i,k-1}^{r-1} - \xx^{r-1} }^2 + \norm{\Delta \xx^r}^2 + \tfrac{1}{b}\norm{\E_{r-1}[\Delta \xx^r]}^2 + b\norm{\alphav_{i,k-1}^{r-1} - \xx^{r-1} }^2)
\end{align*}
where we used Young's inequality which holds for any $b \geq 0$. Combining the bounds for $\cT_5$ and $\cT_6$,
\begin{align*}
    \tce_{r} &\leq \rbr*{1 - \tfrac{S}{N}}(1 + b)\tce_{r-1} + 2\tfrac{S}{N}\error_r + 2 \E\norm{\Delta \xx^r}^2 + \tfrac{1}{b}\E\norm{ \E_{r-1}[\Delta \xx^r]}^2\\
    &\leq (\rbr*{1 - \tfrac{S}{N}}(1 + b) + 16\tilde\eta^2\beta^2)\tce_{r-1} + (\tfrac{2S}{N} + 8\tilde\eta^2\beta^2 + 2\tfrac{1}{b}\tilde\eta^2\beta^2)\error_r + (8 + 2\tfrac{1}{b})\tilde\eta^2\E\norm{\nabla f(\xx)}^2)+ \frac{18\tilde\eta^2\sigma^2}{KS}
\end{align*}
The last inequality applied Lemma~\ref{lem:non-convex-variance-sample}. Verify that with choice of $b = \frac{S}{2(N-S)}$, we have $\rbr*{1 - \tfrac{S}{N}}(1 + b) \leq (1 - \frac{S}{2N}) $ and $\frac{1}{b} \leq \frac{2N}{S}$\,. Plugging these values along with the bound on the step-size $16\beta^2\tilde\eta^2\leq \frac{1}{36}(\tfrac{S}{N})^{2\alpha} \leq \frac{S}{36N}$ completes the lemma.
\end{proof}

\paragraph{Bounding the drift.}
We will next bound the client drift $\error_r$. For this, convexity is not crucial and we will recover a very similar result to Lemma~\ref{lem:drift-bound-sampling} only use the Lipschitzness of the gradient.
 \begin{lemma}\label{lem:nonconvex-drift-bound-sampling}
Suppose our step-sizes satisfy $\eta_l \leq \frac{1}{24 \beta K\eta_g}$ and $f_i$ satisfies assumptions \ref{asm:variance}--\ref{asm:smoothness}. Then, for any global $\eta_g \geq 1$  we can bound the drift as
\[
    \tfrac{5}{3}\beta^2 \tilde\eta \error_r  \leq \tfrac{5}{3}\beta^3 \tilde\eta^2\tce_{r-1}+ \tfrac{\tilde\eta}{24\eta_g^2}\E\norm{\nabla f(\xx^{r-1})}^2 + \tfrac{\tilde\eta^2\beta }{4 K\eta_g^2}\sigma^2\,.
\]
\end{lemma}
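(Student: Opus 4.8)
The plan is to mirror the convex bounded-drift estimate (Lemma~\ref{lem:drift-bound-sampling}) but, since we now only have smoothness \eqref{asm:smoothness} and no convexity, to replace the single place where that proof invoked the contractive-mapping Lemma~\ref{lem:contractive} by a plain Lipschitz bound that we absorb into the recursion. If $K=1$ then $\yy_{i,0}=\xx^{r-1}$ forces $\error_r=0$ and the claim is trivial, so assume $K\geq 2$ and abbreviate $D_k:=\tfrac1N\sum_i\E\norm{\yy_{i,k}^r-\xx^{r-1}}^2$, so that $D_0=0$ and $\error_r=\tfrac1K\sum_{k=1}^K D_k$. Starting from the local update \eqref{eqn:local-updates-sample}, I would first separate the fresh stochastic-gradient noise with Lemma~\ref{lem:independent} (contributing $\eta_l^2\sigma^2$ per step), and then apply the relaxed triangle inequality (Lemma~\ref{lem:norm-sum}) with $a=\tfrac{1}{K-1}$ to obtain
\[
D_k \le \rbr*{1+\tfrac{1}{K-1}}D_{k-1} + K\eta_l^2\cdot\tfrac1N\sum_i\E\norm{\nabla f_i(\yy_{i,k-1}^r)-\cc_i^{r-1}+\cc^{r-1}}^2 + \eta_l^2\sigma^2 .
\]

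The heart of the argument is bounding the update-direction term. I would write $\nabla f_i(\yy)-\cc_i+\cc = \bigl(\nabla f_i(\yy)-\nabla f_i(\xx)\bigr) + \nabla f(\xx) + \bigl[(\nabla f_i(\xx)-\cc_i)-(\nabla f(\xx)-\cc)\bigr]$ (with $\xx=\xx^{r-1}$) and split the square into three pieces. The first is controlled by smoothness, $\norm{\nabla f_i(\yy)-\nabla f_i(\xx)}^2\le\beta^2\norm{\yy-\xx}^2$, so it feeds back a term $\lesssim\beta^2 D_{k-1}$; the second is exactly $\norm{\nabla f(\xx^{r-1})}^2$. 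For the third (control-error) term I use $\cc^{r-1}=\tfrac1N\sum_j\cc_j^{r-1}$, so the bracket is the centering of $\nabla f_i(\xx)-\cc_i$ about its own average; centering only decreases the squared norm, giving $\tfrac1N\sum_i\norm{\cdot}^2\le\tfrac1N\sum_i\E\norm{\nabla f_i(\xx)-\cc_i}^2$ and saving the factor $N$. Finally, using $\cc_i^{r-1}=\tfrac1K\sum_k g_i(\alphav_{i,k-1}^{r-1})$ I separate its mean and variance (Lemma~\ref{lem:independent}, variance $\le\sigma^2/K$) and bound the mean part by smoothness against the lag positions, $\tfrac1N\sum_i\norm{\nabla f_i(\xx^{r-1})-\E[\cc_i^{r-1}]}^2\le\tfrac{\beta^2}{KN}\sum_{i,k}\E\norm{\alphav_{i,k-1}^{r-1}-\xx^{r-1}}^2=\beta^2\tce_{r-1}$ by \eqref{def:non-convex-control-lag}. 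Altogether the update-direction term is $\lesssim \beta^2 D_{k-1}+\norm{\nabla f(\xx^{r-1})}^2+\beta^2\tce_{r-1}+\sigma^2/K$.

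Substituting back, the coefficient of $D_{k-1}$ becomes $1+\tfrac1{K-1}+O(K\beta^2\eta_l^2)$; since $\eta_l\le\tfrac1{24\beta K\eta_g}$ and $\eta_g\ge1$ give $K\beta^2\eta_l^2\le\tfrac{1}{576K}$, this surplus is negligible and the effective multiplier stays below $1+\tfrac{2}{K-1}$, whose geometric sum over the $K$ steps is $O(K)$ (verified exactly as in Lemma~\ref{lem:drift-bound-sampling}). Unrolling from $D_0=0$ and averaging over $k$ then yields $\error_r \lesssim K^2\eta_l^2\norm{\nabla f(\xx^{r-1})}^2 + K^2\beta^2\eta_l^2\,\tce_{r-1} + K\eta_l^2\sigma^2$; rewriting through $\tilde\eta=K\eta_l\eta_g$ (so $K^2\eta_l^2=\tilde\eta^2/\eta_g^2$ and $K\eta_l^2=\tilde\eta^2/(K\eta_g^2)$) and multiplying by $\tfrac53\beta^2\tilde\eta$ produces exactly the three terms of the claim once the surplus powers of $\tilde\eta$ are collapsed using $\beta\tilde\eta\le\tfrac1{24}$. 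The main obstacle, and the only genuinely new ingredient relative to the convex case, is this control-error step: without convexity the contractive-mapping Lemma~\ref{lem:contractive} is unavailable, so the Lipschitz term must be pushed into the recursion coefficient, and one must carefully disentangle the stochasticity carried inside $\cc_i^{r-1}$ (its $\sigma^2/K$ variance) from its bias, which is what links the drift to the control-lag $\tce_{r-1}$ rather than to a function-value gap as in the convex proof. Keeping every constant small enough that the relatively generous step-size $\tfrac1{24\beta K\eta_g}$ still suffices is the delicate bookkeeping that the formal proof must carry out.
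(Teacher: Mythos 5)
Your proposal matches the paper's proof essentially step for step: the trivial $K=1$ case, a per-step recursion on $\tfrac1N\sum_i\E\norm{\yy_{i,k}-\xx}^2$ with the fresh stochastic noise separated off first, the relaxed triangle inequality with $a=\tfrac{1}{K-1}$, decomposition of the update direction around $\nabla f(\xx^{r-1})$ with the smoothness term $\beta^2\norm{\yy_{i,k-1}-\xx}^2$ absorbed into the recursion coefficient (exactly the paper's replacement for the contraction lemma used in the convex case), the control mismatch bounded by $\beta^2\tce_{r-1}$ via the lag positions $\alphav_{i,k-1}^{r-1}$, geometric unrolling with the same $(K-1)\bigl((1+\tfrac{1}{K-1})^K-1\bigr)\leq 3K$ bound, conversion through $\tilde\eta = K\eta_l\eta_g$, and multiplication by $\tfrac53\beta^2\tilde\eta$. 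The only cosmetic deviations are your three-way split with a centering/Jensen step where the paper uses a four-way split bounding $\norm{\cc-\nabla f(\xx)}^2$ and $\norm{\cc_i-\nabla f_i(\xx)}^2$ separately, and your explicit mean--variance separation of $\cc_i$ (whose $\sigma^2/K$ contribution the paper silently absorbs into its loose constants).
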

\begin{proof}
First, observe that if $K=1$, $\error_r =0$ since $\yy_{i,0} = \xx$ for all $i \in [N]$ and that $\tce_{r-1}$ and the right hand side are both positive. Thus the Lemma is trivially true if $K=1$ and we will henceforth assume $K\geq 2$. Starting from the update rule \eqref{eqn:local-updates-sample} for $i \in [N]$ and $k \in [K]$
\begin{align*}
    \E\norm{\yy_{i,k} - \xx}^2 &= \E\norm{\yy_{i,k-1}-\eta_l(g_i(\yy_{i,k-1}) + \cc - \cc_i) - \xx}^2\\
    &\leq \E\norm{\yy_{i,k-1}-\eta_l(\nabla f_i(\yy_{i,k-1}) + \cc - \cc_i) - \xx}^2 + \eta_l^2 \sigma^2\\
    &\leq (1 + \tfrac{1}{K - 1})\E\norm{\yy_{i,k-1} - \xx}^2 +K \eta_l^2 \E\norm{\nabla f_i(\yy_{i,k-1}) + \cc - \cc_i}^2 + \eta_l^2 \sigma^2\\
    &= (1 + \tfrac{1}{K - 1})\E\norm{\yy_{i,k-1} - \xx}^2 + \eta_l^2 \sigma^2\\&\hspace{1cm}+K \eta_l^2 \E\norm{\nabla f_i(\yy_{i,k-1})- \nabla f_i(\xx)  + (\cc - \nabla f(\xx)) + \nabla f(\xx) - (\cc_i - \nabla f_i(\xx)}^2 \\
    &\leq (1 + \tfrac{1}{K - 1})\E\norm{\yy_{i,k-1} - \xx}^2 + 4K\eta_l^2 \E\norm{\nabla f_i(\yy_{i,k-1})- \nabla f_i(\xx)}^2  + \eta_l^2 \sigma^2\\&\hspace{1cm} + 4K\eta_l^2 \E\norm{\cc - \nabla f(\xx)}^2 + 4K\eta_l^2 \E\norm{\nabla f(\xx)}^2 + 4K\eta_l^2 \E\norm{\cc_i - \nabla f_i(\xx)}^2 \\
    &\leq (1 + \tfrac{1}{K - 1} + 4K\beta^2\eta_l^2)\E\norm{\yy_{i,k-1} - \xx}^2 + \eta_l^2 \sigma^2 + 4K\eta_l^2 \E\norm{\nabla f(\xx)}^2 \\&\hspace{1cm} + 4K\eta_l^2 \E\norm{\cc - \nabla f(\xx)}^2 + 4K\eta_l^2 \E\norm{\cc_i - \nabla f_i(\xx)}^2
\end{align*}
The inequalities above follow from repeated application of the relaxed triangle inequalities and the $\beta$-Lipschitzness of $f_i$. Averaging the above over $i$, the definition of $\cc = \frac{1}{N}\sum_i \cc_i$ and $\tce_{r-1}$ \eqref{def:non-convex-control-lag} gives
\begin{align*}
    \frac{1}{N}\sum_i\E\norm{\yy_{i,k} - \xx}^2 &\leq (1 + \tfrac{1}{K - 1} + 4K\beta^2\eta_l^2)\frac{1}{N}\sum_i\E\norm{\yy_{i,k-1} - \xx}^2 \\&\hspace{1cm}+ \eta_l^2 \sigma^2 + 4K\eta_l^2 \E\norm{\nabla f(\xx)}^2 + 8K\eta_l^2\beta^2 \tce_{r-1}\\
    &\leq \rbr*{\eta_l^2 \sigma^2 + 4K\eta_l^2 \E\norm{\nabla f(\xx)}^2 + 8K\eta_l^2\beta^2 \tce_{r-1}} \rbr*{\sum_{\tau =0 }^{k-1}(1 + \tfrac{1}{K - 1} + 4K\beta^2\eta_l^2)^{\tau}}\\
    &= \rbr*{\frac{\tilde\eta^2 \sigma^2}{K^2\eta_g^2} + \frac{4\tilde\eta^2}{K\eta_g^2} \E\norm{\nabla f(\xx)}^2 + \frac{8\tilde\eta^2\beta^2}{K\eta_g^2} \tce_{r-1}} \rbr*{\sum_{\tau =0 }^{k-1}(1 + \tfrac{1}{K - 1} + \frac{4\beta^2\tilde\eta^2}{K\eta_g^2})^{\tau}}\\
    &\leq \rbr*{\frac{\tilde\eta \sigma^2}{24 \beta K^2\eta_g^2} + \frac{1}{144 \beta^2 K\eta_g^2} \E\norm{\nabla f(\xx)}^2 + \frac{\tilde\eta\beta}{3K\eta_g^2} \tce_{r-1}}3K\,.
\end{align*}
The last inequality used the bound on the step-size $\beta\tilde\eta \leq \frac{1}{24}$. Averaging over $k$ and multiplying both sides by $\frac{5}{3}\beta^2 \tilde\eta$ yields the lemma statement.
\end{proof}

\paragraph{Progress made in each round.} Given that we can bound all sources of error, we can finally prove the progress made in each round.
\begin{lemma}\label{lem:non-convex-progress-sample}
 Suppose the updates \eqref{eqn:local-updates-sample}---\eqref{eqn:global-updates-sample}  satisfy assumptions \ref{asm:variance}--\ref{asm:smoothness}. For any effective step-size $\tilde\eta := K \eta_g \eta_l$ satisfying $\tilde\eta \leq \frac{1}{24\beta}\rbr*{\frac{S}{N}}^{\frac{2}{3}}$,
 \[
    \rbr[\bigg]{\E[f(\xx^{r})] + 12 \beta^3 \tilde\eta^2\tfrac{N}{S}\tce_{r} } \leq \rbr[\bigg]{\E[f(\xx^{r-1})]+ 12 \beta^3 \tilde\eta^2\tfrac{N}{S}\tce_{r-1}} + \frac{5\beta \tilde\eta^2 \sigma^2}{ K S}(1 + \tfrac{S}{\eta_g^2}) - \frac{\tilde\eta}{14}\E \norm{\nabla f(\xx^{r-1})}^2\,.
 \]
\end{lemma}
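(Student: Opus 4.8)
The plan is to run the standard non-convex descent argument on the potential $\Phi_r := \E[f(\xx^r)] + 12\beta^3\tilde\eta^2\tfrac{N}{S}\tce_r$, feeding in the three auxiliary lemmas just established. First I would invoke $\beta$-smoothness via the quadratic upper bound \eqref{eqn:quad-upper} to write, conditional on the filtration up to round $r-1$,
\[
\E_{r-1}[f(\xx^r)] \leq f(\xx^{r-1}) + \inp{\nabla f(\xx^{r-1})}{\E_{r-1}[\Delta\xx]} + \tfrac{\beta}{2}\E_{r-1}\norm{\xx^r - \xx^{r-1}}^2\,.
\]
For the quadratic term I would plug in the second bound of Lemma~\ref{lem:non-convex-variance-sample}. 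For the inner product I would use $\E_{r-1}[\Delta\xx] = -\tfrac{\tilde\eta}{KN}\sum_{k,i}\E_{r-1}[\nabla f_i(\yy_{i,k-1})]$, split each $\nabla f_i(\yy_{i,k-1})$ as $\nabla f_i(\xx^{r-1})$ plus a deviation so the leading piece is exactly $-\tilde\eta\norm{\nabla f(\xx^{r-1})}^2$, and control the deviation by Young's inequality together with Jensen and $\beta$-Lipschitzness, yielding a cross term bounded by $\tfrac{\tilde\eta}{2}\norm{\nabla f(\xx^{r-1})}^2 + \tfrac{\tilde\eta\beta^2}{2}\error_r$. This produces a one-round descent bound whose right-hand side is a linear combination of $\norm{\nabla f(\xx^{r-1})}^2$, $\error_r$, $\tce_{r-1}$, and $\sigma^2$.

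Next I would add to this the control-lag recursion of Lemma~\ref{lem:non-convex-control-error-sample} with $\alpha = \tfrac23$, scaled by the potential weight $12\beta^3\tilde\eta^2\tfrac{N}{S}$. The crucial algebraic coincidence is that this weight is chosen so that $12\beta^3\tilde\eta^2\tfrac{N}{S}\bigl(1 - \tfrac{17S}{36N}\bigr) = 12\beta^3\tilde\eta^2\tfrac{N}{S} - \tfrac{17}{3}\beta^3\tilde\eta^2$; that is, the contraction of $\tce_{r-1}$ regenerates the matching potential term while shedding a clean $\tfrac{17}{3}\beta^3\tilde\eta^2\tce_{r-1}$. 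Combined with the $4\beta^3\tilde\eta^2\tce_{r-1}$ coming from the variance bound, this leaves a residual $-\tfrac{5}{3}\beta^3\tilde\eta^2\tce_{r-1}$.

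Then I would collect all coefficients of $\error_r$ — namely $\tfrac{\tilde\eta\beta^2}{2} + 2\beta^3\tilde\eta^2$ from descent plus $\tfrac{97}{4}\beta^3\tilde\eta^2(\tfrac{N}{S})^{2/3}$ from the scaled control-lag bound — and check, using $\beta\tilde\eta \leq \tfrac{1}{24}(\tfrac{S}{N})^{2/3}$, that their sum is at most $\tfrac{5}{3}\beta^2\tilde\eta$, which is exactly the left-hand side of Lemma~\ref{lem:nonconvex-drift-bound-sampling}; I would then apply that drift bound. The $+\tfrac{5}{3}\beta^3\tilde\eta^2\tce_{r-1}$ it returns cancels the residual $-\tfrac{5}{3}\beta^3\tilde\eta^2\tce_{r-1}$ precisely, so no stray $\tce_{r-1}$ survives. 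What remains is a genuine decrease of $\Phi_r$ together with gradient and noise terms, which I would simplify using $\beta\tilde\eta \leq \tfrac{1}{24}$ and $\eta_g \geq 1$: the net coefficient of $\norm{\nabla f(\xx^{r-1})}^2$ works out to $-\tfrac{35}{96}\tilde\eta \leq -\tfrac{\tilde\eta}{14}$, and the accumulated $\sigma^2$ contributions are dominated by $\tfrac{5\beta\tilde\eta^2}{KS}(1+\tfrac{S}{\eta_g^2})\sigma^2$. Taking full expectations gives the claim.

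I expect the main obstacle to be the coefficient bookkeeping in the third step: the whole argument hinges on three simultaneous budget conditions — that the $\error_r$ coefficient not exceed $\tfrac53\beta^2\tilde\eta$, that the $\tce_{r-1}$ terms cancel exactly, and that the gradient coefficient remain sufficiently negative — all of which are delicately balanced by the choices of the potential weight $12\beta^3\tilde\eta^2\tfrac{N}{S}$, the contraction factor $\tfrac{17}{36}$ built into Lemma~\ref{lem:non-convex-control-error-sample}, and the step-size ceiling $\tfrac{1}{24}(\tfrac{S}{N})^{2/3}$. Propagating the fractional powers $(\tfrac{N}{S})^{2/3}$ correctly through the scaling is where an error is most likely to creep in.
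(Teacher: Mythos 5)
Your proposal is correct and follows essentially the same route as the paper's proof: smoothness descent combined with Lemma~\ref{lem:non-convex-variance-sample}, the control-lag recursion of Lemma~\ref{lem:non-convex-control-error-sample} at $\alpha=\tfrac{2}{3}$ scaled by the potential weight $12\beta^3\tilde\eta^2\tfrac{N}{S}$, and Lemma~\ref{lem:nonconvex-drift-bound-sampling}, with exactly the same cancellation bookkeeping (the paper's combined $\tce_{r-1}$ coefficient $(\tfrac{5}{3}-\tfrac{17}{3})\beta^3\tilde\eta^2$ matches your $+4-\tfrac{17}{3}+\tfrac{5}{3}=0$ accounting, and your $-\tfrac{35}{96}\tilde\eta$ gradient coefficient and $\sigma^2$ constants agree with the paper's). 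The only cosmetic difference is that you bound the inner product via a Young/Jensen split, whereas the paper uses the equivalent identity $-ab \leq \tfrac{1}{2}\rbr{(b-a)^2 - a^2}$.
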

\begin{proof}
Starting from the smoothness of $f$ and taking conditional expectation gives
\[
    \E_{r-1}[f(\xx + \Delta \xx)] \leq f(\xx) + \inp{\nabla f(\xx)}{\E_{r-1}[\Delta \xx]} + \frac{\beta}{2}\E_{r-1}\norm{\Delta\xx}^2\,.
\]
We as usual dropped the superscript everywhere. Recall that the server update can be written as
    \[
\Delta\xx = -\frac{\tilde\eta}{KS}\sum_{k, i\in \cS} (g_i(\yy_{i,k-1}) + \cc - \cc_i) \text{, and } \E_\cS[\Delta\xx] = -\frac{\tilde\eta}{KN} \sum_{k, i} g_i(\yy_{i,k-1})\,.
    \]
    Substituting this in the previous inequality and applying Lemma~\ref{lem:non-convex-variance-sample} to bound $\E[\norm{\Delta \xx}^2]$ gives
    \begin{align*}
        \E[f(\xx + \Delta \xx)] - f(\xx) &\leq -\frac{\tilde\eta}{KN}\sum_{k, i} \inp{\nabla f(\xx)}{ \E[\nabla f_i(\yy_{i,k-1})]} + \frac{\beta}{2}\E\norm{\Delta \xx}^2\\
        &\leq -\frac{\tilde\eta}{KN}\sum_{k, i} \inp{\nabla f(\xx)}{ \E[\nabla f_i(\yy_{i,k-1})]} +  \\&\hspace{2cm} 2\tilde\eta^2\beta^3\error_r + 4\tilde\eta^2 \beta^3 \tce_{r-1} + 2\beta \tilde\eta^2  \E\norm{\nabla f(\xx)}^2 + \frac{9 \beta \tilde\eta^2 \sigma^2}{2KS}\\
        &\leq - \frac{\tilde\eta}{2}\norm{\nabla f(\xx)}^2 + \frac{\tilde\eta }{2 }\sum_{i,k}\E\norm[\bigg]{\frac{1}{KN }\sum_{i,k}\nabla f_i(\yy_{i,k-1}) - \nabla f(\xx)}^2 +  \\&\hspace{2cm} 2\tilde\eta^2\beta^3\error_r + 4\tilde\eta^2 \beta^3 \tce_{r-1} + 2\beta \tilde\eta^2  \E\norm{\nabla f(\xx)}^2 + \frac{9 \beta \tilde\eta^2 \sigma^2}{2KS} \\
        &\leq - \frac{\tilde\eta}{2}\norm{\nabla f(\xx)}^2 + \frac{\tilde\eta }{2KN }\sum_{i,k}\E\norm[\bigg]{\nabla f_i(\yy_{i,k-1}) - \nabla f_i(\xx)}^2 +  \\&\hspace{2cm} 2\tilde\eta^2\beta^3\error_r + 4\tilde\eta^2 \beta^3 \tce_{r-1} + 2\beta \tilde\eta^2  \E\norm{\nabla f(\xx)}^2 + \frac{9 \beta \tilde\eta^2 \sigma^2}{2KS} \\
        &\leq - (\tfrac{\tilde\eta}{2} - 2\beta \tilde\eta^2)\norm{\nabla f(\xx)}^2 + (\tfrac{\tilde\eta }{2} + 2\beta\tilde\eta^2)\beta^2\error_r+ 4\beta^3\tilde\eta^2 \tce_{r-1}  + \frac{9 \beta \tilde\eta^2 \sigma^2}{2KS}\,.
    \end{align*}
    The third inequality follows from the observation that $-ab = \frac{1}{2}((b - a)^2 - a^2) - \frac{1}{2}b^2 \leq \frac{1}{2}((b - a)^2 - a^2)$ for any $a,b \in \real$, and the last from the $\beta$-Lipschitzness of $f_i$. Now we use Lemma~\ref{lem:non-convex-control-error-sample} to bound $\tce_r$ as
    \begin{align*}
        12 \beta^3 \tilde\eta^2\tfrac{N}{S}\tce_{r} &\leq 12 \beta^3 \tilde\eta^2\tfrac{N}{S}\rbr*{\rbr{1 - \tfrac{17S}{36N}}\tce_{r-1} + \tfrac{1}{48\beta^2}(\tfrac{S}{N})^{2\alpha -1} \norm{\nabla f(\xx^{r-1})}^2 + \tfrac{97}{48}(\tfrac{S}{N})^{2\alpha -1} \error_{r} + (\tfrac{S}{N \beta^2})\frac{\sigma^2}{32KS}}\\
        &= 12 \beta^3 \tilde\eta^2\tfrac{N}{S}\tce_{r-1} - \tfrac{17}{3}\beta^3\tilde\eta^2\tce_{r-1} + \tfrac{1}{4}\beta \tilde\eta^2(\tfrac{N}{S})^{2 - 2\alpha}\norm{\nabla f(\xx)}^2 + \tfrac{97}{4}\beta^3\tilde\eta^2(\tfrac{N}{S})^{2 - 2\alpha} \error_r + \frac{3\beta \tilde\eta^2 \sigma^2}{8KS}\,.
    \end{align*}
    Also recall that Lemma~\ref{lem:nonconvex-drift-bound-sampling} states that
    \[
        \tfrac{5}{3}\beta^2 \tilde\eta \error_r  \leq \tfrac{5}{3}\beta^3 \tilde\eta^2\tce_{r-1}+ \tfrac{\tilde\eta}{24\eta_g^2}\E\norm{\nabla f(\xx^{r-1})}^2 + \tfrac{\tilde\eta^2\beta }{4 K\eta_g^2}\sigma^2\,.
    \]
    Adding these bounds on $\tce_r$ and $\error_r$ to that of $\E[f(\xx + \Delta \xx)]$ gives
    \begin{multline*}
        (\E[f(\xx + \Delta \xx)] + 12 \beta^3 \tilde\eta^2\tfrac{N}{S}\tce_{r}) \leq (\E[f(\xx)] + 12 \beta^3 \tilde\eta^2\tfrac{N}{S}\tce_{r-1}) + (\tfrac{5}{3}- \tfrac{17}{3})\beta^3\tilde\eta^2\tce_{r-1}\\ - (\tfrac{\tilde\eta}{2} - 2\beta \tilde\eta^2- \tfrac{1}{4}\beta \tilde\eta^2(\tfrac{N}{S})^{2 - 2\alpha})\norm{\nabla f(\xx)}^2 + (\tfrac{\tilde\eta }{2} - \tfrac{5\tilde\eta }{3} + 2\beta\tilde\eta^2 + \tfrac{97}{4}\beta\tilde\eta^2(\tfrac{N}{S})^{2 - 2\alpha})\beta^2\error_r  + \tfrac{39 \beta \tilde\eta^2 \sigma^2}{8KS}(1 + \tfrac{S}{\eta_g^2})\,.
    \end{multline*}
    By our choice of $\alpha = \frac{2}{3}$ and plugging in the bound on step-size $\beta \tilde\eta (\tfrac{N}{S})^{2 - 2\alpha} \leq \frac{1}{24}$ proves the lemma.
    \end{proof}
The \textbf{non-convex rate} of convergence now follows by unrolling the recursion in Lemma~\ref{lem:non-convex-progress-sample} and selecting an appropriate step-size $\tilde\eta$ as in Lemma~\ref{lemma:general}. Finally note that if we initialize $\cc_i^0 = g_i(\xx^0)$ then we have $\tce_0 = 0$.


\newpage
\section{Usefulness of local steps  (Theorem \ref{thm:interpolation})}\label{appsec:interpolation}
Let us state our rates of convergence for \covfefe\ which interpolates between identical and completely heterogeneous clients. In this section, we always set $\eta_g = 1$ and assume all clients participate $(S = N)$.
\begin{theorem}\label{thm:interpolation-full}
    Suppose that the functions $\{f_i\}$ are quadratic and satisfy assumptions
\ref{asm:variance}, \ref{asm:smoothness} and additionally \ref{asm:hessian-similarity}. Then, for global step-size $\eta_g = 1$ in each of the following cases, there exist probabilities
    $\{p_k^r\}$ and local step-size $\eta_l$
    such that the output \eqref{eqn:covfefe-agg-simple} of \covfefe\ when run with no client sampling ($S = N$) using update \eqref{eqn:covfefe-simple} satisfies:
        \begin{itemize}
            \item \textbf{Strongly convex:} $f_i$ satisfies \eqref{asm:strong-convexity} for $\mu >0$, $\eta_l \leq \min(\frac{1}{10\beta}, \frac{1}{22\delta K}, \frac{1}{10\mu K})$, $R \geq \max(\frac{20\beta}{\mu}, \frac{44\delta K + 20\mu K}{\mu}, 20K)$ then
            \[
    \expect[\norm{\nabla f(\bar\xx^R)}^2] \leq
    \tilde\cO\rbr*{\frac{\beta \sigma^2}{\mu RKN} + \mu D^2 \exp\rbr*{-\frac{\mu}{20\beta + 44\delta K + 20\mu K}RK}}\,.
        \]
        \item \textbf{General convex:} $f$ satisfies $\nabla^2 f \mgeq -\delta I$, $\eta_l\leq \min(\frac{1}{10\beta}, \frac{1}{22\delta K})$, and $R \geq 1$, then
        \[
    \expect[\norm{\nabla f(\bar\xx^R)}^2] \leq
    \cO\rbr*{\frac{ \sigma \sqrt{\beta (f(\xx_0) - f^\star)}}{\sqrt{RKN}} + \frac{(\beta + \delta K) (f(\xx_0) - f^\star)}{R K}}\,.
        \]
        \end{itemize}
    \end{theorem}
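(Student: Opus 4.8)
The plan is to exploit the fact that for quadratic $f_i$ the Hessians $\mA_i := \nabla^2 f_i$ are \emph{constant} matrices, so that the correction term of \covfefe\ turns each local update into an \emph{exact} perturbed gradient step on $f$. Writing $\mA := \nabla^2 f = \frac1N\sum_i \mA_i$, the hypotheses give $\mu\mI \preceq \mA$ in the strongly convex case (and $-\delta\mI\preceq\mA$ in the weakly convex case), $\norm{\mA_i}\le\beta$, and crucially $\norm{\mA_i - \mA}\le\delta$. First I would establish the key identity: with $S=N$, $\eta_g=1$, and control variates $\cc=\nabla f(\xx)$, $\cc_i=\nabla f_i(\xx)$ fixed at the start of each round, the per-step direction is
\[
  \vv_{i,k} = g_i(\yy_{i,k-1}) + \cc - \cc_i = \nabla f(\xx) + \mA_i(\yy_{i,k-1}-\xx) + \xi_{i,k}\,,
\]
with $\xi_{i,k}$ mean-zero and $\E\norm{\xi_{i,k}}^2\le\sigma^2$. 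Hence the drift $\ee_{i,k}:=\yy_{i,k}-\xx$ obeys the \emph{linear} recursion $\ee_{i,k}=(\mI-\eta_l\mA_i)\ee_{i,k-1}-\eta_l(\nabla f(\xx)+\xi_{i,k})$ with $\ee_{i,0}=\0$. This exact linearity is what makes the quadratic case tractable and is precisely why the general case remains open.

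Next I would unroll this recursion, average over clients, and recognize the server step as a preconditioned, noisy gradient step: in expectation
\[
  \E[\xx^r] = \xx^{r-1} - \mP\,\nabla f(\xx^{r-1})\,,\qquad \mP := \eta_l\tfrac1N\sum_{i=1}^N\sum_{k=0}^{K-1}(\mI-\eta_l\mA_i)^k\,,
\]
which telescopes to $\tfrac1N\sum_i\mA_i^{-1}(\mI-(\mI-\eta_l\mA_i)^K)$ where the $\mA_i$ are invertible. The heart of the argument is to control $\mP$. I would show that, under $\eta_l\lesssim 1/(K\delta+\beta)$, one has $\mP\succeq c\,\eta_l K\,\mI$ for a constant $c$ and that $\mP$ deviates from the ``ideal'' $K$-step operator built from $\mA$ by $O(\eta_l K\delta)$ in operator norm. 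This is exactly where $\norm{\mA_i-\mA}\le\delta$ and the step-size cap enter: the $\delta K$ term in the denominator of $\eta_l$ keeps the $K$-fold products $(\mI-\eta_l\mA_i)^K$ close to $(\mI-\eta_l\mA)^K$, so the per-client preconditioners cannot drift apart.

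Given control of $\mP$, the one-round progress follows a standard template. For the strongly convex case I would track $\norm{\xx^r-\xx^\star}^2$, using $\mP\succeq c\eta_l K\mI$ and $\mu$-strong convexity to extract a contraction factor $(1-c\mu\eta_l K)$, while the $O(\eta_l K\delta)$ deviation of $\mP$ and the gradient noise contribute additive terms of order $\tilde\eta^2\delta^2$ (absorbed through the drift) and $\tilde\eta^2\sigma^2/(KN)$. For the weakly convex case I would instead apply the smoothness descent inequality $f(\xx^r)\le f(\xx^{r-1})+\inp{\nabla f(\xx^{r-1})}{\E[\xx^r]-\xx^{r-1}}+\tfrac{\beta}{2}\E\norm{\xx^r-\xx^{r-1}}^2$, using $\mP\succeq c\eta_l K\mI$ to produce a $-c\,\eta_l K\norm{\nabla f(\xx^{r-1})}^2$ term. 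In both cases I would bound the client drift directly from the linear recursion, $\error_r\lesssim \eta_l^2K^2(\norm{\nabla f(\xx^{r-1})}^2+\sigma^2)$, so the $\delta$-induced error scales as $(\eta_l K\delta)^2$ and is dominated once $\eta_l\le 1/(15K\delta+8\beta)$.

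Finally I would assemble the recursions and invoke Lemma~\ref{lem:constant} (strongly convex) and Lemma~\ref{lemma:general} (weakly convex), tuning $\tilde\eta=K\eta_l$ to read off the stated rates. The main obstacle I anticipate is the third step: establishing two-sided operator-norm bounds on $\mP$ and on its deviation from the ideal operator \emph{uniformly over} the $\mA_i$, since the perturbation $(\mA_i-\mA)\ee_{i,k-1}$ feeds back into the drift and could in principle amplify across the $K$ local steps. The balance producing the optimal $K=\beta/\delta$ emerges here, from trading the communication term $\beta/(\mu K)$ against the irreducible Hessian-mismatch error $\delta/\mu$.
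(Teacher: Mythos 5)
Your opening reductions are sound and in fact coincide with the paper's ingredients: the exact linear drift recursion $\ee_{i,k}=(\mI-\eta_l\mA_i)\ee_{i,k-1}-\eta_l(\nabla f(\xx^{r-1})+\xi_{i,k})$, the representation $\E[\xx^r]=\xx^{r-1}-\mP\nabla f(\xx^{r-1})$, the drift bound $\error_r\lesssim\eta_l^2K^2(\norm{\nabla f}^2+\sigma^2)$, and the $1/N$ variance reduction from conditional independence of the clients' noise are all present (Lemmas~\ref{lem:interp-drift}, \ref{lem:avg-variance}, \ref{lem:interp-variance}). The gap is in the round-level control of $\mP$ on which everything else in your plan rests. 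The claimed bound $\mP\succeq c\,\eta_lK\,\mI$ is false precisely in the regime the theorem targets: on an eigendirection of $\mA_i$ with eigenvalue $\lambda>0$, the corresponding eigenvalue of $\eta_l\sum_{k=0}^{K-1}(1-\eta_l\lambda)^k$ is $\frac{1-(1-\eta_l\lambda)^K}{\lambda}\le\frac{1}{\lambda}$, which saturates at $1/\beta\ll\eta_lK$ once $K\eta_l\beta\gg1$ --- and the theorem permits (indeed needs) $\eta_l=\Theta(1/\beta)$ with $K$ large, since that is exactly where local steps beat large-batch SGD. Consequently the per-round term $-c\,\eta_lK\norm{\nabla f(\xx^{r-1})}^2$ in your weakly convex argument is unobtainable; moreover the generic descent inequality you invoke carries the penalty $\frac{\beta}{2}\E\norm{\xx^r-\xx^{r-1}}^2\approx\frac{\beta}{2}\eta_l^2K^2\norm{\nabla f}^2$, which overwhelms the linear term unless $\eta_lK\lesssim1/\beta$, so this route can only certify the rate $\beta F/R$ of large-batch SGD, not $(\beta+\delta K)F/(RK)$. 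The paper sidesteps both problems by never forming a one-round operator: it proves per-local-step descent of the Lyapunov function $\xi^r_{i,k}=\E[f(\yy^r_{i,k})]-f^\star+\delta(1+\tfrac1K)^{K-k}\E\norm{\yy^r_{i,k}-\xx^{r-1}}^2$ (Lemma~\ref{lem:interp-onestep}), and correspondingly the output \eqref{eqn:covfefe-agg-simple} is a \emph{randomly sampled within-round} average $\xx^r_k$, so the telescoped quantity is $\frac{\eta}{6}\sum_k w_k\E\norm{\nabla f(\xx^r_k)}^2$ with gradients along the trajectory, where no saturation occurs.

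The second quantitative failure is the first-order operator-norm treatment of the Hessian mismatch. The deviation is $\norm{\mP-\bar\mP}=O(\eta_l^2K^2\delta)$ (not $O(\eta_lK\delta)$ as written), so against the contraction margin $\eta_l\mu K$ of $\norm{\mI-\bar\mP\mA}$ one needs $\eta_l^2K^2\delta\beta\lesssim\eta_l\mu K$, i.e.\ $\eta_l\lesssim\mu/(K\delta\beta)$ --- far below the theorem's threshold $\eta_l\le\frac{1}{22\delta K}$, and yielding a DANE-like $\delta\beta/\mu^2$ round complexity instead of $\delta/\mu$ (the very bound the theorem improves). The resolution is that the mismatch must stay glued to the drift: the error injected at step $k$ is $(\mA_i-\mA)\ee_{i,k-1}$ with $\norm{\ee_{i,k-1}}=O(\eta_l k\norm{\nabla f})$ along the trajectory, hence second order, and it is cancelled \emph{per step} against $-\frac{\eta}{6}\norm{\nabla f(\yy^r_{i,k-1})}^2$ through the decaying weight $\delta(1+\tfrac1K)^{K-k}$ in $\xi^r_{i,k}$; this is exactly where $\eta_l\delta K\lesssim1$ suffices. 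Two further items your plan leaves unresolved: $\mP$ and $\mA$ do not commute (so even the ``ideal'' contraction $\norm{\mI-\mP\mA}$ needs an argument beyond eigenvalue bookkeeping), and for the weakly convex guarantee one must relate the average of the per-client Lyapunov values to the averaged point $\xx^r_k$, for which the paper uses $\delta$-weak convexity of $f$ to make the shifted function convex and apply Jensen (Lemma~\ref{lem:averaging-value}) --- plain Jensen fails for non-convex $f$ and your round-level view has no substitute. Your closing intuition about the feedback of $(\mA_i-\mA)\ee_{i,k-1}$ across the $K$ steps is accurate, but the uniform two-sided operator bounds you propose are precisely the tool that cannot resolve it.
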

Note that if $\delta = 0$, we match (up to acceleration) the lower bound in \citep{woodworth2018graph}. While certainly $\delta = 0$ when the functions are identical as studied in \citep{woodworth2018graph}, our upper-bound is significantly stronger since it is possible that $\delta = 0$ even for highly heterogeneous functions. For example, objective perturbation \citep{chaudhuri2011differentially,kifer2012private} is an optimal mechanism to achieve differential privacy for smooth convex objectives \citep{bassily2014private}. Intuitively, objective perturbation relies on masking each client's gradients by adding a large random linear term to the objective function. In such a case, we would have high gradient dissimilarity but no Hessian dissimilarity.

Our non-convex convergence rates are the first of their kind as far as we are aware---no previous work shows how one can take advantage of similarity for non-convex functions. However, we should note that non-convex quadratics do not have a global lower-bound on the function value $f^\star$. We will instead assume that $f^\star$ almost surely lower-bounds the value of $f(\xx^R)$, implicitly assuming that the iterates remain bounded.

\paragraph{Outline.} In the rest of this section, we will focus on proving Theorem~\ref{thm:interpolation-full}. We will show how to bound variance in Lemma~\ref{lem:interp-variance}, bound the amount of drift in Lemma~\ref{lem:interp-drift}, and show progress made in one step in Lemma~\ref{lem:interp-onestep}. In all of these we do not use convexity, but strongly rely on the functions being quadratics. Then we combine these to derive the progress made by the server in one round---for this we need \emph{weak}-convexity to argue that averaging the parameters does not hurt convergence too much. As before, it is straight-forward to derive rates of convergence from the one-round progress using Lemmas~\ref{lem:constant} and \ref{lemma:general}.

\subsection{Additional notation and assumptions}
For any matrix $M$ and vector $\vv$, let $\norm{\vv}_M^2 := \vv^\top M \vv$. Since all functions in this section are quadratics, we can assume w.l.o.g they are of the following form:
\begin{align*}
    f_i(\xx) - f_i(\xx_i^\star) = \frac{1}{2}\norm{\xx - \xx_i^\star}_{A_i}^2 \text{ for } i \in [N]\,, \text{ and }
    f(\xx) = \frac{1}{2}\norm{\xx - \xx_i^\star}_{A}^2 \,, \text{ for all } \xx\,,
\end{align*}
for some $\{\xx_i^\star\}$ and $\xx^\star$, $A := \frac{1}{N}\sum_{i=1}^N A_i$. We also assume that $A$ is a symmetric matrix though this requirement is easily relaxed. Note that this implies $f(\xx^\star) = 0$ and that $\nabla f_i(\xx) = A(\xx - \xx^\star_i)$. If $\{f_i\}$ are additionally convex, we have that $\xx_i^\star$ is the optimum of $f_i$ and $\xx^\star$ the optimum of $f$. However, this is not necessarily true in general.

We will also focus on a simplified version of \covfefe\ where in each round $r$, client $i$ performs the following update starting from $\yy_{i,0}^r \leftarrow \xx^{r-1}$:
\begin{align}
    \begin{split}\label{eqn:covfefe-simple}
    \yy_{i,k}^r &= \yy_{i,k-1}^r - \eta (g_i(\yy_{i,k-1}^r) + \nabla f(\xx^{r-1}) - \nabla f_i(\xx^{r-1}))\,, \text { i.e.}\\
        \E_{r-1,k-1}[\yy_{i,k}^r] &= \yy_{i,k-1}^r - \eta A(\yy_{i,k-1}^r - \xx^\star) - \eta(A_i - A)(\yy_{i,k-1}^r - \xx^{r-1}))\,,
    \end{split}
\end{align}
where the second part is specialized to quadratics and the expectation is conditioned over everything before current step $k$ of round $r$. At the end of each round, as before, $\xx^r = \frac{1}{N}\sum_{i=1}^N\yy^r_{i,K}$. The final output of the algorithm is chosen using probabilities $\{p^r_k\}$ as
\begin{equation}\label{eqn:covfefe-agg-simple}
    \bar\xx^R = \xx^r_k \text { with probability }p_k^r \,, \text{ where } \xx^r_k := \frac{1}{N}\sum_{i=1}^N \yy_{i,k}^r \,.
\end{equation}
Note that we are now possibly outputting iterates computed within a single round and that $\xx^r = \xx^r_K$. Beyond this, the update above differs from our usual \covfefe\ in two key aspects: a) it uses gradients computed at $\xx^{r-1}$ as control variates instead of those at either $\xx^{r-2}$ (as in option I) or $\yy_{i,k}^{r-1}$ (as in option II), and b) it uses full batch gradients to compute its control variates instead of stochastic gradients. The first issue is easy to fix and our proof extends to using both option I or option II using techniques in Section~\ref{appsec:covfefe}. The second issue is more technical---using stochastic gradients for control variates couples the randomness across the clients in making the local-updates \emph{biased}. While it may be possible to get around this (cf. \citep{lei2017less,nguyen2017sarah,tran2019hybrid}), we will not attempt to do so in this work. Note that if $K$ local update steps typically represents running multiple epochs on each client. Hence one additional epoch to compute the control variate $\nabla f_i(\xx)$ does not significantly add to the cost.

Finally, we define the following sequence of positive numbers for notation convenience:
\begin{align*}
   \xi_{i,k}^r &:= \rbr*{\E_{r-1}[f(\yy_{i,k}^r)] - f(\xx^\star) + \delta(1 + \tfrac{1}{K})^{K-k}\E_{r-1}\norm{\yy_{i,k}^r- \xx^{r-1}}^2}\,, \text{ and}\\
   \tilde\xi_{i,k}^r &:= \rbr*{[f(\E_{r-1}[\yy_{i,k}^r])] - f(\xx^\star) + \delta(1 + \tfrac{1}{K})^{K-k}\E_{r-1,k-1}\norm{\E_{r-1}[\yy_{i,k}^r] - \xx^{r-1}}^2}\,.
\end{align*}
Observe that for $k=0$, $\xi_{i,0}^r = \tilde \xi_{i,0}^r = f(\xx^{r-1}) - f(\xx^\star)$.

\subsection{Lemmas tracking errors}
\paragraph{Effect of averaging.} We see how averaging can reduce variance. A similar argument was used in the special case of one-shot averaging in \citep{zhang2013communication}.
\begin{lemma}\label{lem:avg-variance}
    Suppose $\{f_i\}$ are quadratic functions and assumption \ref{asm:variance} is satisfied. Then let $\xx_{k}^r$ and $\yy_{i,k}^r$ be vectors in step $k$ and round $r$ generated using \eqref{eqn:covfefe-simple}---\eqref{eqn:covfefe-agg-simple}. Then,
    \[
        \E_{r-1}\norm{\nabla f(\xx^r_k)}^2 \leq \frac{1}{N}\sum_{i=1}^N\norm{\nabla f(\E_{r-1}[\yy_{i,k}^r])}^2 + \frac{1}{N^2}\sum_{i=1}^N\E_{r-1}[\norm{\nabla f(\yy^r_{i,k})}^2]\,.
    \]
\end{lemma}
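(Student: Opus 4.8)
The plan is to exploit the fact that for quadratic $f$ the gradient $\nabla f(\xx) = A(\xx - \xx^\star)$ is an \emph{affine} map, so that it commutes both with averaging over clients and with taking conditional expectations. First I would observe that, since $\xx^r_k = \tfrac{1}{N}\sum_{i=1}^N \yy^r_{i,k}$,
\[
\nabla f(\xx^r_k) = A\rbr*{\tfrac{1}{N}\tsum_{i}\yy^r_{i,k} - \xx^\star} = \tfrac{1}{N}\tsum_{i=1}^N \nabla f(\yy^r_{i,k})\,,
\]
and likewise $\E_{r-1}[\nabla f(\yy^r_{i,k})] = \nabla f(\E_{r-1}[\yy^r_{i,k}])$. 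Writing $\Xi_i := \nabla f(\yy^r_{i,k})$ and $\xi_i := \E_{r-1}[\Xi_i] = \nabla f(\E_{r-1}[\yy^r_{i,k}])$, the quantity to be bounded is then exactly $\tfrac{1}{N^2}\E_{r-1}\norm{\sum_i \Xi_i}^2$.

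The key structural point is that, conditioned on the filtration up to round $r$, the $N$ client trajectories $\{\yy^r_{i,\cdot}\}$ are \emph{independent}: they all start from the deterministic point $\xx^{r-1}$, the correction terms $\nabla f(\xx^{r-1}) - \nabla f_i(\xx^{r-1})$ are deterministic given this filtration, and the only randomness inside round $r$ is each client's own stochastic gradient noise, drawn independently across clients. I would then apply the exact bias--variance split $\E_{r-1}\norm{\sum_i \Xi_i}^2 = \norm{\sum_i \xi_i}^2 + \E_{r-1}\norm{\sum_i (\Xi_i - \xi_i)}^2$ and expand the variance term as $\sum_{i,j}\E_{r-1}[(\Xi_i - \xi_i)^\top(\Xi_j - \xi_j)]$. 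By the conditional independence just described, every cross term $i \neq j$ factors into $\E_{r-1}[\Xi_i - \xi_i]^\top\E_{r-1}[\Xi_j - \xi_j] = 0$, leaving $\E_{r-1}\norm{\sum_i(\Xi_i - \xi_i)}^2 = \sum_i \E_{r-1}\norm{\Xi_i - \xi_i}^2$. This is the same martingale/independence computation already used in Lemma~\ref{lem:independent}.

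Finally I would bound the two resulting terms separately. For the mean term the relaxed triangle inequality (Lemma~\ref{lem:norm-sum}.2) gives $\tfrac{1}{N^2}\norm{\sum_i \xi_i}^2 \leq \tfrac{1}{N}\sum_i \norm{\xi_i}^2 = \tfrac{1}{N}\sum_i \norm{\nabla f(\E_{r-1}[\yy^r_{i,k}])}^2$, which matches the first term of the claim. For the variance term I use that a variance never exceeds the second moment, $\E_{r-1}\norm{\Xi_i - \xi_i}^2 \leq \E_{r-1}\norm{\Xi_i}^2$, yielding $\tfrac{1}{N^2}\sum_i \E_{r-1}\norm{\Xi_i}^2 = \tfrac{1}{N^2}\sum_i \E_{r-1}\norm{\nabla f(\yy^r_{i,k})}^2$, the second term. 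Adding the two bounds gives the lemma.

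The main thing to get right---rather than any hard estimate---is the independence step: one must verify carefully that conditioning on everything prior to round $r$ really does decouple the clients, so that the cross terms vanish and one obtains the tight $\sum_i$ form (rather than the lossy $N\sum_i$ coming from a pure triangle-inequality bound). This is precisely where the $\tfrac{1}{N}$ variance-reduction-through-averaging factor in the second term originates, and it is the only place the algorithmic structure (independent per-client noise, shared deterministic control variates) is used; the affineness of $\nabla f$ for quadratics supplies everything else.
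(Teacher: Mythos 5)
Your proposal is correct and follows essentially the same route as the paper: both exploit that $\nabla f(\xx) = A(\xx-\xx^\star)$ is affine, perform the exact bias--variance split, use conditional independence of the client trajectories within round $r$ (possible precisely because the control variates in \eqref{eqn:covfefe-simple} are deterministic full-batch gradients at $\xx^{r-1}$) to kill the cross terms, and finish with Jensen on the mean term and variance $\leq$ second moment on the fluctuation term. Your write-up is in fact slightly more explicit than the paper's, which leaves the Jensen step on the mean term implicit.
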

\begin{proof} Observe that the variables $\{\yy_{i,k} - \xx\}$ are independent of each other (the only source of randomness is the local gradient computations). The rest of the proof is exactly that of Lemma~\ref{lem:independent}. Dropping superscripts everywhere, 
    \begin{align*}
        \E_{r-1}\norm{A(\xx_k^r - \xx^\star)}^2 &= \E_{r-1}\norm{\tfrac{1}{N}\sum_i A(\yy_{i,k} - \xx^\star)}^2\\
        &= \E_{r-1}\norm{\tfrac{1}{N}\sum_i A(\E_{r-1}[\yy_{i,k}] - \xx^\star)}^2 + \E_{r-1}\norm{\tfrac{1}{N}\sum_i A(\E_{r-1}[\yy_{i,k}] - \yy_{i,k})}^2\\
        &= \E_{r-1}\norm{\tfrac{1}{N}\sum_i A(\E_{r-1}[\yy_{i,k}] - \xx^\star)}^2 + \tfrac{1}{N^2}\sum_i \E_{r-1}\norm{A(\E_{r-1}[\yy_{i,k}] - \yy_{i,k})}^2\\
        &= \E_{r-1}\norm{\tfrac{1}{N}\sum_i A(\E_{r-1}[\yy_{i,k}] - \xx^\star)}^2 + \tfrac{1}{N^2}\sum_i \E_{r-1}\norm{A(\yy_{i,k} - \xx^\star - \E_{r-1}[\yy_{i,k} - \xx^\star])}^2\\
        &\leq \E_{r-1}\norm{\tfrac{1}{N}\sum_i A(\E_{r-1}[\yy_{i,k}] - \xx^\star)}^2 + \tfrac{1}{N^2}\sum_i \E_{r-1}\norm{A(\yy_{i,k} - \xx^\star)}^2\,.
    \end{align*}
    The third equality was because $\{\yy_{i,k}\}$ are independent of each other conditioned on everything before round $r$.
\end{proof}

We next see the effect of averaging on function values.
\begin{lemma}\label{lem:averaging-value}
    Suppose that $f$ is $\delta$ general-convex, then we have:
    \[
        \frac{1}{N}\sum_{i=1}^n \xi_{i,k}^r \geq \E_{r-1}[f(\xx^r_{k})] - f(\xx^\star)\,, \text{ and } \frac{1}{N}\sum_{i=1}^n \tilde\xi_{i,k}^r \geq f(\E_{r-1}[\xx^r_{k}]) - f(\xx^\star)\,.
    \]
\end{lemma}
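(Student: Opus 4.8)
The plan is to reduce both inequalities to one exact identity for quadratics, combined with the weak-convexity bound $\nabla^2 f = A \mgeq -\delta I$ that $\delta$ general-convexity provides. For any quadratic $f(\xx) = \tfrac12\norm{\xx - \xx^\star}_A^2$ and any points $\{\zz_i\}_{i=1}^N$ with mean $\bar\zz := \tfrac1N\sum_i \zz_i$, the variance-decomposition (exact Jensen-gap) identity
\[
  \frac1N\sum_{i=1}^N f(\zz_i) = f(\bar\zz) + \frac{1}{2N}\sum_{i=1}^N \norm{\zz_i - \bar\zz}_A^2
\]
holds, since it is just the elementary fact $\tfrac1N\sum_i \zz_i^\top A \zz_i - \bar\zz^\top A \bar\zz = \tfrac1N\sum_i (\zz_i - \bar\zz)^\top A (\zz_i - \bar\zz)$. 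First I would establish this identity, then lower-bound its gap term using $A \mgeq -\delta I$, which gives $\tfrac{1}{2N}\sum_i \norm{\zz_i - \bar\zz}_A^2 \geq -\tfrac{\delta}{2N}\sum_i \norm{\zz_i - \bar\zz}^2$. This is the only place weak convexity enters.

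For the first claim I apply the identity with $\zz_i = \yy_{i,k}^r$, so $\bar\zz = \xx^r_k$, obtaining the realization-wise bound $\tfrac1N\sum_i f(\yy_{i,k}^r) \geq f(\xx^r_k) - \tfrac{\delta}{2N}\sum_i \norm{\yy_{i,k}^r - \xx^r_k}^2$. The key step is to recenter: because the mean $\xx^r_k$ minimizes $\sum_i \norm{\yy_{i,k}^r - \zz}^2$ over $\zz$, I may replace $\xx^r_k$ by $\xx^{r-1}$ inside the squared norm, so $\tfrac1N\sum_i \norm{\yy_{i,k}^r - \xx^r_k}^2 \leq \tfrac1N\sum_i \norm{\yy_{i,k}^r - \xx^{r-1}}^2$. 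I then take the conditional expectation $\E_{r-1}$ (which preserves the inequality) and add $w \cdot \tfrac1N\sum_i \E_{r-1}\norm{\yy_{i,k}^r - \xx^{r-1}}^2$ to both sides, where $w := \delta(1+\tfrac1K)^{K-k}$; the left side is then exactly $\tfrac1N\sum_i \xi_{i,k}^r + f(\xx^\star)$. Since $(1+\tfrac1K)^{K-k} \geq 1$ for $0 \leq k \leq K$, we have $w \geq \delta \geq \tfrac{\delta}{2}$, so the residual coefficient $(w - \tfrac{\delta}{2})$ is nonnegative and the remaining drift term can be dropped, yielding $\tfrac1N\sum_i \xi_{i,k}^r \geq \E_{r-1}[f(\xx^r_k)] - f(\xx^\star)$.

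The second claim follows the same route applied to the deterministic points $\bar\yy_{i,k} := \E_{r-1}[\yy_{i,k}^r]$, whose mean is $\E_{r-1}[\xx^r_k]$ by linearity of expectation; here the squared-norm terms defining $\tilde\xi_{i,k}^r$ are already in the required non-expectation form, so no averaging over the clients' within-round randomness is needed and the argument is purely deterministic given the round-$(r{-}1)$ filtration. The hard part will be, essentially, nothing beyond correctly identifying the Jensen gap for a quadratic as $\tfrac{1}{2N}\sum_i \norm{\zz_i - \bar\zz}_A^2$ and controlling its sign through $A \mgeq -\delta I$; the rest is the mean-minimizes-squared-distance inequality and bookkeeping of the weight $w$. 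Notably, no convexity of $f$ beyond the lower Hessian bound is invoked, which is exactly what the ``$\delta$ general-convex'' hypothesis permits.
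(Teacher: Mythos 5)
Your proof is correct, and it reaches the lemma by a genuinely different (more explicit, quadratic-specific) route than the paper. The paper's own proof is two lines: since $\nabla^2 f \mgeq -\delta I$, the shifted function $\zz \mapsto f(\zz) + \delta(1+\tfrac{1}{K})^{K-k}\norm{\zz - \xx^{r-1}}^2$ is convex (its Hessian is $\mgeq -\delta I + 2\delta(1+\tfrac{1}{K})^{K-k} I \mgeq \delta I$), so Jensen's inequality over the $N$ points $\yy_{i,k}^r$ plus dropping the nonnegative term $\delta(1+\tfrac{1}{K})^{K-k}\norm{\xx^r_k - \xx^{r-1}}^2$ at the mean yields both claims at once. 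You instead unroll what Jensen does for a quadratic by hand: the exact gap identity $\tfrac{1}{N}\sum_i f(\zz_i) - f(\bar\zz) = \tfrac{1}{2N}\sum_i \norm{\zz_i - \bar\zz}_A^2$, the bound $\norm{\vv}_A^2 \geq -\delta\norm{\vv}^2$ from $A \mgeq -\delta I$, and the recentering $\sum_i\norm{\zz_i - \bar\zz}^2 \leq \sum_i\norm{\zz_i - \xx^{r-1}}^2$ — which, note, is just a weakening of the parallel-axis identity $\tfrac{1}{N}\sum_i\norm{\zz_i - \xx}^2 = \norm{\bar\zz - \xx}^2 + \tfrac{1}{N}\sum_i\norm{\zz_i - \bar\zz}^2$, whose dropped term $\norm{\bar\zz - \xx}^2$ is exactly the term the paper's proof discards. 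Your steps all check out, including the sign-flip of the recentering under the negative coefficient $-\tfrac{\delta}{2}$, the weight bookkeeping (both routes ultimately need only $w \geq \tfrac{\delta}{2}$, for which the slack $(1+\tfrac{1}{K})^{K-k} \geq 1$ is ample; your stated $w \geq \delta$ is stronger than necessary but harmless), and the purely deterministic treatment of the second claim via $\E_{r-1}[\yy_{i,k}^r]$. The trade-off: your argument makes the Jensen gap explicit, but is tied to the quadratic structure through the exact identity; the paper's argument is shorter and never uses that $f$ is quadratic, so this particular lemma (unlike the neighboring variance lemmas in the same section) holds verbatim for any $\delta$-weakly-convex $f$.
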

\begin{proof}
    Since $f$ is $\delta$-general convex, it follows that the function $f(\zz) + \delta(1 + \tfrac{1}{K})^{K-k}\norm{\zz - \xx}^2_2$ is convex in $\zz$ for any $k \in [K]$. The lemma now follows directly from using convexity and the definition of $\xx^r_k = \frac{1}{N}\yy_{i,k}^r$.
\end{proof}

\paragraph{Bounding drift of one client.} We see how the client drift of \covfefe\ depends on $\delta$.
\begin{lemma}\label{lem:interp-drift}
    For the update \eqref{eqn:covfefe-simple}, assuming \eqref{asm:hessian-similarity} and that $\{f_i\}$ are quadratics, the following holds for any $\eta \leq \frac{1}{21\delta K}$
    \[
        \E_{r-1,k-1}\norm{\yy_{i,k}^r - \xx^{r-1}}^2 \leq (1 + \tfrac{1}{2K})\norm{\yy_{i,k-1}^r - \xx^{r-1}}^2 + 7K\eta^2\norm{\nabla f(\yy_{i,k-1}^r)}^2 + \eta^2\sigma^2\,.
    \]
\end{lemma}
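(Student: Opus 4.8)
The plan is to separate the mean and the variance of the local update, and then to control the squared norm of the mean drift using the quadratic structure together with the Hessian-dissimilarity bound \eqref{asm:hessian-similarity}. First I would condition on everything up to step $k-1$ of round $r$ and apply Lemma~\ref{lem:independent}. Since the only randomness in the update \eqref{eqn:covfefe-simple} is the stochastic gradient $g_i(\yy_{i,k-1}^r)$, whose deviation from $\nabla f_i(\yy_{i,k-1}^r)$ has variance at most $\sigma^2$ by \ref{asm:variance}, this yields
\[
    \E_{r-1,k-1}\norm{\yy_{i,k}^r - \xx^{r-1}}^2 \leq \norm{\E_{r-1,k-1}[\yy_{i,k}^r] - \xx^{r-1}}^2 + \eta^2\sigma^2\,,
\]
which isolates the additive $\eta^2\sigma^2$ term and reduces the task to bounding the mean drift.

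The key step is to rewrite the mean drift in a form that exposes the Hessian dissimilarity. Using the specialized expectation in \eqref{eqn:covfefe-simple} and the quadratic identity $A(\yy_{i,k-1}^r - \xx^\star) = \nabla f(\yy_{i,k-1}^r)$, I would collect terms into
\[
    \E_{r-1,k-1}[\yy_{i,k}^r] - \xx^{r-1} = \rbr*{\mI - \eta(A_i - A)}(\yy_{i,k-1}^r - \xx^{r-1}) - \eta\,\nabla f(\yy_{i,k-1}^r)\,.
\]
The crucial observation is that the matrix multiplying the current drift is $\mI - \eta(A_i - A)$ rather than $\mI - \eta A_i$: the control variate has cancelled the full local curvature $A_i$ and left only the \emph{difference} $A_i - A$. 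By \eqref{asm:hessian-similarity}, $\norm{A_i - A} \leq \delta$, so $\norm{\mI - \eta(A_i - A)} \leq 1 + \eta\delta$, and the expansion factor is governed by $\delta$ rather than by $\beta$. This is precisely the mechanism that makes local steps useful when the Hessians are similar.

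Next I would apply the relaxed triangle inequality (Lemma~\ref{lem:norm-sum}) in the form $\norm{\vv_i+\vv_j}^2 \leq (1+a)\norm{\vv_i}^2 + (1+\tfrac{1}{a})\norm{\vv_j}^2$ with $a = \tfrac{1}{7K-1}$, which makes the factor $(1+\tfrac{1}{a}) = 7K$ match the target gradient coefficient $7K\eta^2$:
\[
    \norm{\E_{r-1,k-1}[\yy_{i,k}^r] - \xx^{r-1}}^2 \leq (1+a)(1+\eta\delta)^2\norm{\yy_{i,k-1}^r - \xx^{r-1}}^2 + 7K\eta^2\norm{\nabla f(\yy_{i,k-1}^r)}^2\,.
\]
It then remains to verify the drift coefficient satisfies $(1+a)(1+\eta\delta)^2 \leq 1 + \tfrac{1}{2K}$. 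Substituting the step-size bound $\eta\delta \leq \tfrac{1}{21K}$ gives $(1+\eta\delta)^2 \leq 1 + \tfrac{2}{21K} + \tfrac{1}{441K^2}$ and $1+a = \tfrac{7K}{7K-1}$, whose product stays below $1 + \tfrac{5}{21K} < 1 + \tfrac{1}{2K}$ for every $K \geq 1$. Combining this with the $\eta^2\sigma^2$ term from the first step gives the claim.

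I expect the only delicate point to be the constant bookkeeping in this final inequality: the step-size constant ($21$) and the splitting parameter $a$ must be chosen jointly so that the $\norm{\nabla f}^2$ coefficient lands at exactly $7K\eta^2$ while the drift coefficient simultaneously stays under $1+\tfrac{1}{2K}$ for all $K$, including small $K$. Everything else is a routine consequence of the quadratic structure and the two technical lemmas already established.
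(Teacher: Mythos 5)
Your proof is correct and follows essentially the same route as the paper's: mean--variance separation giving the additive $\eta^2\sigma^2$, the identification of the drift-multiplying matrix $\mI - \eta(A_i - A)$ whose norm is controlled by $1+\eta\delta$ via \eqref{asm:hessian-similarity}, and the relaxed triangle inequality producing the $7K\eta^2$ gradient coefficient --- your only structural difference is using $a = \tfrac{1}{7K-1}$ uniformly, which handles $K=1$ without the separate case the paper needs for its choice $a = \tfrac{1}{7(K-1)}$. One small correction: your intermediate claim that the product stays below $1+\tfrac{5}{21K}$ is actually false (at $K=1$ the product is $\tfrac{7}{6}\cdot\tfrac{484}{441} \approx 1.280 > 1+\tfrac{5}{21} \approx 1.238$, and the positive cross terms push it above $1+\tfrac{5}{21K}$ at every $K$), but this is immaterial since the product is at most roughly $1 + \tfrac{0.29}{K} \leq 1 + \tfrac{1}{2K}$ for all $K\geq 1$, so the lemma follows as claimed.
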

\begin{proof}
    Starting from the update step \eqref{eqn:covfefe-simple}
    \begin{align*}
        \E_{r-1,k-1}\norm{\yy_i^+ - \xx}^2 &\leq \norm{\yy_i - \xx -\eta A(\yy_i - \xx^\star) -\eta(A_i - A)(\yy_i - \xx)}^2 + \eta^2\sigma^2\\
        &\leq (1 + \tfrac{1}{7(K-1)})\norm{(I-\eta(A_i - A)(\yy_i - \xx)}^2 + 7K\eta^2\norm{A(\yy_i - \xx^\star)}^2 + \eta^2\sigma^2\,.
    \end{align*}
    Note that if $K=1$, then the first inequality directly proves the lemma. For the second inequality, we assumed $K \geq 2$ and then applied our relaxed triangle inequality. By assumption \ref{asm:hessian-similarity}, we have the following for $\eta\delta \leq 1$
    \[
        \norm{(I - \eta(A_i - A))^2} = \norm{I - \eta(A_i - A)}^2 \leq (1 + \eta\delta)^2 \leq 1 + 3\eta\delta\,.
    \]
    Using the bound on the step-size $\eta \leq \frac{1}{21 \delta K}$ gives
    \[
        \E_{r-1,k-1}\norm{\yy_i^+ - \xx}^2 \leq (1 + \tfrac{1}{7K})(1 + \tfrac{1}{7(K-1)})\norm{\yy_i - \xx}^2 + 7K\eta^2\norm{A(\yy_i - \xx^\star)}^2 + \eta^2\sigma^2
    \]
    Simple computations now give the Lemma statement for all $K\geq 1$.
\end{proof}

\paragraph{Tracking the variance.} We will see how to bound the variance of the output.
\begin{lemma}\label{lem:interp-variance}
Consider the update \eqref{eqn:covfefe-simple} for quadratic $\{f_i\}$ with $\eta \leq \max(\frac{1}{2\delta K}, \frac{1}{\beta})$. Then, if further \eqref{asm:hessian-similarity},~\eqref{asm:smoothness} and \eqref{asm:variance} are satisfied, we have
\[
    \E_{r-1}f(\xx^r) \leq f(\E_{r-1}[\xx^r]) + 3K \beta \tfrac{\sigma^2}{N}\,.
\]
Further if $\{f_i\}$ are strongly convex satisfying \eqref{asm:strong-convexity}, we have
\[
    \E_{r-1}f(\xx^r) \leq f(\E_{r-1}[\xx^r]) + \beta\tfrac{\sigma^2}{N}\sum_{k=1}^K (1 - \mu \eta)^{k-1}\,.
\]
\end{lemma}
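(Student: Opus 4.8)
The plan is to exploit the exact bias--variance decomposition that holds for quadratic objectives. Since $f(\xx)=\tfrac12\norm{\xx-\xx^\star}_A^2$, for the random vector $\xx^r$ (conditioned on everything up to round $r-1$) one has the identity
\[
    \E_{r-1} f(\xx^r) = f(\E_{r-1}[\xx^r]) + \tfrac12\,\E_{r-1}\norm{\xx^r - \E_{r-1}[\xx^r]}_A^2\,,
\]
the cross term vanishing because $A$ is symmetric. So the whole statement reduces to controlling the $A$-weighted variance of $\xx^r=\tfrac1N\sum_i\yy_{i,K}^r$. The client increments $\{\yy_{i,K}^r-\E_{r-1}[\yy_{i,K}^r]\}$ are mutually independent and zero mean (the only randomness is each client's local gradient noise, which is independent across clients), exactly as in Lemma~\ref{lem:avg-variance} and Lemma~\ref{lem:independent}. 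Hence the variance of the average splits as a $\tfrac1{N^2}\sum_i$ sum, and using $\norm{A_i}\le\beta$, i.e. $\norm{\vv}_A^2\le\beta\norm{\vv}^2$, I would pass to the Euclidean norm:
\[
    \tfrac12\,\E_{r-1}\norm{\xx^r - \E_{r-1}[\xx^r]}_A^2 \le \frac{\beta}{2N^2}\sum_{i=1}^N V_{i,K}\,,\qquad V_{i,k}:=\E_{r-1}\norm{\yy_{i,k}^r-\E_{r-1}[\yy_{i,k}^r]}^2\,.
\]

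Next I would set up a one-step variance recursion for a single client. Because $f_i$ is quadratic, the update \eqref{eqn:covfefe-simple} is affine: writing $\bxi_{i,k}:=g_i(\yy_{i,k-1}^r)-\nabla f_i(\yy_{i,k-1}^r)$ for the (conditionally zero-mean, variance $\le\sigma^2$) noise, the deviation $\ee_{i,k}:=\yy_{i,k}^r-\E_{r-1}[\yy_{i,k}^r]$ satisfies $\ee_{i,k}=(I-\eta A_i)\ee_{i,k-1}-\eta\,\bxi_{i,k}$, since the deterministic control-variate term $\nabla f(\xx^{r-1})-\nabla f_i(\xx^{r-1})$ and the linear part $A_i$ are fixed given $\cF_{r-1}$. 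Taking squared norms and noting that the cross term drops out because $\ee_{i,k-1}$ is measurable and $\E[\bxi_{i,k}\mid\cF_{k-1}]=0$, I obtain
\[
    V_{i,k}\le\norm{I-\eta A_i}^2\,V_{i,k-1}+\eta^2\sigma^2\,,\qquad V_{i,0}=0\,.
\]
This is the crux: it shows the variance injected per local step is only $\eta^2\sigma^2$, amplified each step by the operator norm of the local contraction map.

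It then remains to bound $\norm{I-\eta A_i}$ and unroll. In the strongly convex case $\mu I\preceq A_i\preceq\beta I$, so for $\eta\le 1/\beta$ one has $\norm{I-\eta A_i}^2\le(1-\mu\eta)^2\le 1-\mu\eta$, giving $V_{i,K}\le\eta^2\sigma^2\sum_{k=1}^K(1-\mu\eta)^{k-1}$ and hence the stated geometric-sum bound after collecting the $\beta/N$ factor and using $\eta\le1/\beta$ to absorb the residual $\eta^2$. In the general (weakly convex) case I would use only $-\delta I\preceq A_i\preceq\beta I$, which for $\eta\le1/\beta$ yields $\norm{I-\eta A_i}\le 1+\eta\delta$; the step-size restriction $\eta\le\tfrac1{2\delta K}$ keeps the $K$-fold expansion $(1+\eta\delta)^{2(k-1)}$ bounded by an absolute constant, so $\tfrac1N\sum_i V_{i,K}=O(K\eta^2\sigma^2)$ and the claimed $3K\beta\sigma^2/N$ follows. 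The main obstacle is purely in the second case: the matrices $A_i$ do not commute with $A$, so the contraction must be handled through operator norms rather than a simultaneous eigenbasis, and one must verify that the step-size bound is tight enough to prevent the accumulated expansion over $K$ steps from degrading the $O(K)$ scaling; the martingale cancellation of cross terms (justifying the clean additive $\eta^2\sigma^2$) is the other point requiring care.
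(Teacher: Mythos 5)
Your proposal is correct and follows essentially the same route as the paper's proof: an exact bias--variance split for the quadratic $f$, independence of the per-client noise to reduce the variance of the average to a $\tfrac{1}{N^2}\sum_i$ sum, passage to the Euclidean norm via $\norm{\cdot}_A^2 \leq \beta\norm{\cdot}_2^2$, and control of the accumulated noise through $\norm{I-\eta A_i}$, bounded by $1-\mu\eta$ in the strongly convex case and by $1+\eta\delta$ (kept to a constant over $K$ steps by $\eta \leq \tfrac{1}{2\delta K}$) in the weakly convex case. The only cosmetic difference is that you run a one-step variance recursion $V_{i,k} \leq \norm{I-\eta A_i}^2 V_{i,k-1} + \eta^2\sigma^2$ where the paper unrolls the affine update into the explicit martingale sum $\eta\sum_{k}(I-\eta A_i)^{k-1}\zeta_{i,k}$; these are equivalent, and your constant accounting (retaining a harmless extra $\eta^2$ factor relative to the stated bounds) matches the paper's.
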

\begin{proof}
    We can rewrite the update step \eqref{eqn:covfefe-simple} as below:
    \[
        \yy_{i,k} = \yy_{i,k-1} - \eta (A_i(\yy_{i,k-1} - \xx^\star) + (A - A_i)(\xx - \xx^\star)) - \eta\zeta_{i,k}\,,
    \]
    where by the bounded variance assumption \ref{asm:variance}, $\zeta_{i,k}$ is a random variable satisfying $\E_{k-1,r-1}[\zeta_{i,k}] = 0$ and $\E_{k-1,r-1}\norm{\zeta_{i,k}}^2 \leq \sigma^2$. Subtracting $\xx^\star$ from both sides and unrolling the recursion gives
    \begin{align*}
        \yy_{i,K} - \xx^\star &= (I - \eta A_i)(\yy_{i,K-1} - \xx^\star) - \eta ((A - A_i)(\xx - \xx^\star) + \zeta_{i,K})\\
        &= (I - \eta A_i)^K(\xx - \xx^\star) -  \sum_{k=1}^{K} \eta (I - \eta A_i)^{k-1}(\zeta_{i,k} + (A - A_i)(\xx - \xx^\star))\,.
    \end{align*}
    Similarly, the expected iterate satisfies the same equation without the $\zeta_{i,k}$
    \[
        \E_{r-1}[\yy_{i,K}] - \xx^\star = (I - \eta A_i)^K(\xx - \xx^\star) -  \sum_{k=1}^{K} \eta (I - \eta A_i)^{k-1} (A - A_i)(\xx - \xx^\star)\,.
    \]
    This implies that the difference satisfies
    \[  
        \E_{r-1}[\yy_{i,K}] - \yy_{i,K} = \eta \sum_{k=1}^K (I - \eta A_i)^{k-1}\zeta_{i,k}\,.
    \]
    We can relate this to the function value as follows:
    \begin{align*}
        \E_{r-1}\norm{\xx^r_K -\xx^\star}_A^2 &= \norm{\E_{r-1}[\xx^r_k] - \xx^\star}_A^2 + \E_{r-1}\norm{\E_{r-1}[\xx^r_k] - \xx^r_K}_A^2 \\
        &= \norm{\E_{r-1}[\xx^r_k] - \xx^\star}_A^2 + \E_{r-1}\norm{\tfrac{1}{N}\tsum_{i}(\E_{r-1}[\yy_{i,K}] - \yy_{i,K})}_A^2\\
        &= \norm{\E_{r-1}[\xx^r_k] - \xx^\star}_A^2 + \eta^2 \E_{r-1}\norm{\tfrac{1}{N}\tsum_{i,k}(I - \eta A_i)^{k-1}\zeta_{i,k}}_A^2\\
        &= \norm{\E_{r-1}[\xx^r_k] - \xx^\star}_A^2 + \tfrac{\eta^2}{N^2} \E_{r-1}\tsum_{i,k}\norm{(I - \eta A_i)^{k-1}\zeta_{i,k}}_A^2\\
        &\leq \norm{\E_{r-1}[\xx^r_k] - \xx^\star}_A^2 + \tfrac{\beta \eta^2}{N^2} \E_{r-1}\tsum_{i,k}\norm{(I - \eta A_i)^{k-1}\zeta_{i,k}}_2^2\,.
    \end{align*}
    The last inequality used smoothness of $f$ and the one before that relied on the independence of $\zeta_{i,k}$. Now, if $f_i$ is general convex we have for $\eta \leq \frac{1}{2\delta K}$ that $I - \eta A_i \mleq (1 + \frac{1}{2K})I$ and hence
    \[
        \norm{(I - \eta A_i)^{k-1}\zeta_{i,k}}_2^2 \leq \sigma^2 (1 + \tfrac{1}{2K})^{2(k-1)} \leq 3\sigma^2\,.
    \]
    This proves our second statement of the lemma. For strongly convex functions, we have for $\eta \leq \frac{1}{\beta}$,
    \[
        \norm{(I - \eta A_i)^{k-1}\zeta_{i,k}}_2^2 \leq \sigma^2 (1 - \eta\mu)^{2(k-1)}\leq \sigma^2 (1 - \eta\mu)^{k-1}\,.
    \]
\end{proof}

\subsection{Lemmas showing progress}
\paragraph{Progress of one client in one step.} Now we focus only on a single client and monitor their progress.
\begin{lemma}\label{lem:interp-onestep}
    Suppose \eqref{asm:hessian-similarity}, \eqref{asm:smoothness} and \eqref{asm:variance} hold, and $\{f_i\}$ are quadratics. Then, the following holds for the update \eqref{eqn:covfefe-simple} with $\eta \leq \min(\frac{1}{10\beta}, \frac{1}{22\delta K}, \frac{1}{\mu K})$ with $\mu = 0$ is $f$ is non-convex or general-convex
    \begin{align*}
        \xi_{i,k}^r &\leq (1 - \tfrac{\mu\eta}{6})\xi_{i,k-1}^r- \tfrac{\eta}{6} \E_{r-1}\norm{\nabla f(\yy_{i,k-1}^r)}^2 + 7\beta \eta^2 \sigma^2  \text{, and }\\      
        \tilde\xi_{i,k}^r &\leq (1 - \tfrac{\mu\eta}{6})\tilde\xi_{i,k-1}^r- \tfrac{\eta}{6} \norm{\nabla f(\E_{r-1}[\yy_{i,k-1}^r])}^2\,.
    \end{align*}
    
\end{lemma}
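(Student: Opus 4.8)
The plan is to exploit that $f$ is quadratic so its second-order Taylor expansion is exact, and that the \covfefe\ correction cancels the first-order heterogeneity up to the Hessian gap. Writing $\yy := \yy_{i,k-1}^r$, $\xx := \xx^{r-1}$, and $\yy^+ := \yy_{i,k}^r$, the mean update \eqref{eqn:covfefe-simple} is $\E_{r-1,k-1}[\yy^+] = \yy - \eta\bar\vv$ with $\bar\vv = \nabla f_i(\yy) + \nabla f(\xx) - \nabla f_i(\xx)$. Since $\nabla f_i(\zz) = A_i(\zz - \xx_i^\star)$ and $\nabla f(\zz) = A(\zz - \xx^\star)$, a direct computation gives the key identity $\bar\vv = \nabla f(\yy) + (A_i - A)(\yy - \xx)$, and by \eqref{asm:hessian-similarity} the error obeys $\norm{(A_i - A)(\yy - \xx)} \le \delta\norm{\yy - \xx}$. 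Thus the expected update is a full-gradient step on $f$ perturbed by a vector of size $O(\delta\norm{\yy-\xx})$, which is exactly what the drift penalty built into $\xi,\tilde\xi$ is designed to absorb.

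First I would establish a one-step descent inequality for the noiseless (expected) trajectory, which directly yields the $\tilde\xi$ bound. Using the exact quadratic identity $f(\zz') = f(\zz) + \inp{\nabla f(\zz)}{\zz' - \zz} + \tfrac12\norm{\zz'-\zz}_A^2$ at $\zz = \yy$, $\zz' = \E_{r-1,k-1}[\yy^+]$, together with $\norm{\cdot}_A^2 \le \beta\norm{\cdot}^2$ (from $A \mleq \beta I$) and Young's inequality on the cross term $\inp{\nabla f(\yy)}{(A_i-A)(\yy-\xx)}$, gives, once $\eta \le \tfrac{1}{10\beta}$,
\[
f(\E_{r-1,k-1}[\yy^+]) - f(\xx^\star) \le f(\yy) - f(\xx^\star) - c_1\eta\norm{\nabla f(\yy)}^2 + c_2\,\eta\delta^2\norm{\yy-\xx}^2
\]
for absolute constants $c_1,c_2$. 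To pass to the stochastic $\xi$ bound I would split mean and variance as in Lemma~\ref{lem:independent}: $\E f(\yy^+) = f(\E[\yy^+]) + \tfrac{\eta^2}{2}\E\norm{\vv-\bar\vv}_A^2 \le f(\E[\yy^+]) + \tfrac{\beta\eta^2\sigma^2}{2}$, since $\vv-\bar\vv = g_i(\yy)-\nabla f_i(\yy)$ has variance $\le\sigma^2$. Crucially, because $f$ is quadratic the gradients are affine, so the expected iterates $\{\E_{r-1}[\yy_{i,k}^r]\}_k$ obey exactly the same deterministic recursion; this is what makes the $\tilde\xi$ statement the clean noiseless specialization of the $\xi$ statement.

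Next I would fold in the drift Lemma~\ref{lem:interp-drift} (and its $\sigma=0$ specialization for $\tilde\xi$) using the geometric weights $w_k := \delta(1+\tfrac1K)^{K-k}$ built into $\xi_{i,k}^r$. Adding $w_k$ times the bound $\E\norm{\yy^+-\xx}^2 \le (1+\tfrac{1}{2K})\norm{\yy-\xx}^2 + 7K\eta^2\norm{\nabla f(\yy)}^2 + \eta^2\sigma^2$ to the descent inequality, the $\norm{\yy-\xx}^2$ terms combine with coefficient $w_k(1+\tfrac{1}{2K}) + c_2\eta\delta^2$, and the weight growth provides slack $w_{k-1} - w_k(1+\tfrac{1}{2K}) = \tfrac{1}{2K}w_k \ge \tfrac{\delta}{2K}$; the cap $\eta \le \tfrac{1}{22\delta K}$ makes $c_2\eta\delta^2 \le \tfrac{\delta}{2K}$, so this coefficient telescopes into $w_{k-1}$. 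The noise contributions sum to $\tfrac{\beta\eta^2\sigma^2}{2} + w_k\eta^2\sigma^2 \le 7\beta\eta^2\sigma^2$ using $w_k \le 3\delta \le 6\beta$. It then remains to check that the residual coefficient of $\norm{\nabla f(\yy)}^2$, namely $-c_1\eta + 7K\eta^2 w_k$, is at most $-\tfrac{\eta}{6}$. This is the step I expect to be the main obstacle: $7K\eta^2 w_k = O(\eta\cdot K\eta\delta)$ is of the \emph{same} order $\eta$ as the descent gain, so the two step-size caps must be traded off tightly against each other, and it is exactly the quadratic structure (no higher-order Taylor remainder) that keeps this balance closable.

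Finally, for the strongly convex case I would upgrade to the contractive factor $(1-\tfrac{\mu\eta}{6})$. For quadratics with $A \mgeq \mu I$ the PL inequality $\norm{\nabla f(\yy)}^2 \ge 2\mu(f(\yy)-f(\xx^\star))$ holds, so a fixed fraction of the gradient budget converts into $-\tfrac{\mu\eta}{6}(f(\yy)-f(\xx^\star))$, contracting the function-gap part of $\xi_{i,k-1}^r$; the leftover weight slack, controlled by $\eta \le \tfrac{1}{\mu K}$ (so $\mu\eta \le \tfrac1K$), supplies the matching $(1-\tfrac{\mu\eta}{6})$ contraction for the drift-penalty part $w_{k-1}\norm{\yy-\xx}^2$. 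Collecting the pieces gives the strongly convex inequality, and setting $\mu=0$ recovers the general/non-convex statement. Throughout, convexity enters only through $A \mgeq \mu I$ (or $A \mgeq -\delta I$), the rest relying solely on the exact affine structure of the quadratic gradients.
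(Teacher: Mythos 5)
Your proposal is correct and follows essentially the same route as the paper's proof: the identity $\bar\vv = \nabla f(\yy) + (A_i - A)(\yy - \xx)$ with the perturbation absorbed via the geometric weights $\delta(1+\tfrac1K)^{K-k}$ and Lemma~\ref{lem:interp-drift} (including the same telescoping slack and the same tight balance $\tfrac{3\eta}{2} - 2\beta\eta^2 - O(\delta K\eta^2) \geq \tfrac{\eta}{3}$ under the step-size caps), the same PL-type split $\eta\norm{A(\yy-\xx^\star)}^2 \geq \tfrac{\mu\eta}{2}\norm{\yy-\xx^\star}_A^2 + \tfrac{\eta}{2}\norm{A(\yy-\xx^\star)}^2$ for the $(1-\tfrac{\mu\eta}{6})$ contraction, and the same affine-gradient observation reducing the $\tilde\xi$ bound to the $\sigma=0$ case. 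Your phrasing via the exact quadratic Taylor expansion and an explicit mean--variance split is just a cosmetic repackaging of the paper's direct expansion of $\E\norm{\yy_i^+ - \xx^\star}_A^2$, since $f(\zz)-f(\xx^\star) = \tfrac12\norm{\zz-\xx^\star}_A^2$.
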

\begin{proof}
    Recall that $\xi_{i,k}^r \geq 0 $ is defined to be
    \[
        \xi_{i,k}^r := \rbr*{\E_{r-1}[f(\yy_{i,k}^r)] - f(\xx^\star) + \delta(1 + \tfrac{1}{K})^{K-k}\E_{r-1}\norm{\yy_{i,k}^r- \xx^{r-1}}^2}\,.
     \]
    Let us start from the local update step \eqref{eqn:covfefe-simple} (dropping unnecessary subscripts and superscripts)
    \begin{align*}
        \E_{r-1,k-1}\norm{\yy_i^+ - \xx^\star}^2_A &\leq \norm{\yy_i - \xx^\star}^2_A - 2\eta\inp*{A(\yy_i - \xx^\star)}{A(\yy_i - \xx^\star)} + 2\eta \inp*{(A - A_i)(\yy_i - \xx)}{A(\yy_i - \xx^\star)} \\&\hspace{2cm}+ \eta^2\norm{A(\yy_i - \xx^\star) + (A_i - A)(\yy_i - \xx))}^2_A + \beta\eta^2\sigma^2\\
        &\leq \norm{\yy_i - \xx^\star}^2_A - \tfrac{3\eta}{2} \norm{A(\yy_i - \xx^\star)}^2_2 + 2\eta \norm{(A - A_i)(\yy_i - \xx)}_2^2\\&\hspace{2cm}+ 2\eta^2\norm{A(\yy_i - \xx^\star)}^2_A + 2\eta^2\norm{(A_i - A)(\yy_i - \xx))}^2_A + \beta\eta^2\sigma^2\\
        &\leq \norm{\yy_i - \xx^\star}^2_A - (\tfrac{3\eta}{2} - 2\eta^2\beta) \norm{A(\yy_i - \xx^\star)}^2_2 + \beta\eta^2\sigma^2 + \delta^2(2\eta^2\beta + 2\eta)\norm{\yy_i - \xx}^2_2\\
        &\leq \norm{\yy_i - \xx^\star}^2_A - (\tfrac{3\eta}{2} - 2\eta^2\beta) \norm{A(\yy_i - \xx^\star)}^2_2 + \beta\eta^2\sigma^2 + \tfrac{\delta}{10K}\norm{\yy_i - \xx}^2_2\,.
    \end{align*}
    The second to last inequality used that $\norm{\,\cdot\,}^2_A \leq \beta \norm{\,\cdot\,}^2_2$ by \eqref{asm:smoothness} and that $\norm{(A - A_i)(\,\cdot\,)}^2_2 \leq \delta^2\norm{\,\cdot\,}^2_2$ by \eqref{asm:hessian-similarity}. The final inequality used that $\eta \leq \max(\frac{1}{10\beta}, \frac{1}{22\delta K})$. Now, multiplying Lemma~\ref{lem:interp-drift} by $\delta(1 + \frac{1}{K})^{K-k}\leq \tfrac{20\delta}{7}$ we have
    \begin{align*}
        \delta(1 + \tfrac{1}{K})^{K-k}\E_{r-1,k-1}\norm{\yy_i^+ - \xx}^2 &\leq \delta(1 + \tfrac{1}{K})^{K-k}(1 + \tfrac{1}{2K})\norm{\yy_i - \xx}^2 + 20\delta K \eta^2\norm{A(\yy_i - \xx^\star)}^2 + 3\delta\eta^2\sigma^2\\
        &\leq \delta(1 + \tfrac{1}{K})^{K-k}(1 + \tfrac{1}{2K} + \tfrac{1}{10 K})\norm{\yy_i - \xx}^2 - \tfrac{\delta}{10K}\norm{\yy_i - \xx}^2 \\&\hspace{2cm}+ 20\delta K \eta^2\norm{A(\yy_i - \xx^\star)}^2 + 3\delta\eta^2\sigma^2\\
        &\leq (1 - \tfrac{1}{5K})\delta(1 + \tfrac{1}{K})^{K-k+1}(1 + \tfrac{1}{K})\norm{\yy_i - \xx}^2 - \tfrac{\delta}{10K}\norm{\yy_i - \xx}^2\\&\hspace{2cm}+ 20\delta K \eta^2\norm{A(\yy_i - \xx^\star)}^2 + 3\delta\eta^2\sigma^2\,.
    \end{align*}
    Adding this to our previous equation gives the following recursive bound:
    \begin{multline*}
        \rbr*{\E_{r-1,k-1}\norm{\yy_i^+ - \xx^\star}^2_A + \delta(1 + \tfrac{1}{K})^{K-k}\E_{r-1,k-1}\norm{\yy_i^+ - \xx}^2} \leq \\\rbr*{\norm{\yy_i - \xx^\star}^2_A + (1 - \tfrac{1}{5K})\delta(1 + \tfrac{1}{K})^{K-k+1}\norm{\yy_i - \xx}^2}
        - (\tfrac{3\eta}{2} - 2\eta^2\beta- 20\delta K \eta^2) \norm{A(\yy_i - \xx^\star)}^2_2 + (3\delta+\beta) \eta^2 \sigma^2
    \end{multline*}
    The bound on our step-size $\eta \leq \min(\frac{1}{10\beta}, \frac{1}{22\delta K})$ implies that $\tfrac{3\eta}{2} - 2\eta^2\beta- 20\delta K \eta^2 \geq \tfrac{\eta}{3}$ and recall that $\delta \leq 2\beta$. This proves first statement of the lemma for non-strongly convex functions ($\mu =0 $). If additionally $f$ is strongly-convex with $\mu > 0$, we have 
    \[
        \eta \norm{A(\yy_i - \xx^\star)}^2_2 \geq \tfrac{\mu\eta}{2} \norm{\yy_i - \xx^\star}^2_A + \tfrac{\eta}{2} \norm{A(\yy_i - \xx^\star)}^2_2\,.
    \]
    This can be used to tighten the inequality as follows\begin{multline*}
        \rbr*{\E_{r-1,k-1}\norm{\yy_i^+ - \xx^\star}^2_A + \delta(1 + \tfrac{1}{K})^{K-(k -1)}\E_{r-1,k-1}\norm{\yy_i^+ - \xx}^2} \leq \\\rbr*{(1 - \tfrac{\mu\eta}{6})\norm{\yy_i - \xx^\star}^2_A + (1 - \tfrac{1}{5K})\delta(1 + \tfrac{1}{K})^{K-k+1}\norm{\yy_i - \xx}^2}
        - \tfrac{\eta}{2} \norm{A(\yy_i - \xx^\star)}^2_2 + 7\beta \eta^2 \sigma^2
    \end{multline*}
    If $\eta \leq \frac{1}{\mu K}$, then $(1 - \tfrac{1}{5K}) \leq (1 - \tfrac{\mu \eta}{6})$ and we have the strongly-convex version of the first statement.

    Now for the second statement, recall that $\tilde\xi_{i,k}^r \geq 0 $ was defined to be
    \[
        \tilde\xi_{i,k}^r := \rbr*{[f(\E_{r-1}[\yy_{i,k}^r])] - f(\xx^\star) + \delta(1 + \tfrac{1}{K})^{K-k}\E_{r-1}\norm{\E_{r-1}[\yy_{i,k}^r] - \xx^{r-1}}^2}\,.   
    \]
    Observe that for quadratics, $\E_{r-1}[\nabla f(\xx)] = \nabla f(\E_{r-1}[\xx])$. This implies that the analysis of $\tilde\xi_{i,k}^r$ is essentially of a deterministic process with $\sigma = 0$, proving the second statement. It is also straightforward to repeat exactly the above argument to formally verify the second statement.
\end{proof}

\paragraph{Server progress in one round.} Now we combine the progress made by each client in one step to calculate the server progress.
\begin{lemma}\label{lem:interp-one-round}
    Suppose \eqref{asm:hessian-similarity}, \eqref{asm:smoothness} and \eqref{asm:variance} hold, and $\{f_i\}$ are quadratics. Then, the following holds for the update \eqref{eqn:covfefe-simple} with $\eta \leq \min(\frac{1}{10\beta}, \frac{1}{21\delta K}, \frac{1}{10\mu K})$ and weights $w_k := (1-\tfrac{\mu \eta}{6})^{1-k}$:
    \begin{align*}
        \frac{\eta}{6}\sum_{k=1}^K w_k \E_{r-1}\norm{\nabla f(\xx_{k}^r)}^2 \leq (f(\E_{r-2}[\xx^{r-1}]) - f^\star) - w_K (f(\E_{r-1}[\xx^{r}]) - f^\star) + \sum_{k=1}^K w_k 8 \eta \tfrac{\sigma^2}{N}\,.
    \end{align*}
    Set $\mu =0$ if $\{f_i\}$s are not strongly-convex (is only general-convex).
\end{lemma}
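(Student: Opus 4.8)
The plan is to bound the server gradient $\E_{r-1}\norm{\nabla f(\xx_k^r)}^2$ by per-client quantities and then telescope the two one-step potential recursions of Lemma~\ref{lem:interp-onestep}. First I would invoke Lemma~\ref{lem:avg-variance} to split $\E_{r-1}\norm{\nabla f(\xx_k^r)}^2 \le \tfrac1N\sum_i\norm{\nabla f(\E_{r-1}[\yy_{i,k}^r])}^2 + \tfrac1{N^2}\sum_i\E_{r-1}\norm{\nabla f(\yy_{i,k}^r)}^2$. The first (expected-iterate) piece is exactly what the deterministic recursion for $\tilde\xi_{i,k}^r$ controls, and the second (noisy) piece is controlled by the stochastic recursion for $\xi_{i,k}^r$; crucially the second piece carries the prefactor $1/N^2$, which makes its function-value contribution lower order and localizes the $\sigma^2$ dependence.

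The core computation is the weighted telescoping. Since $w_k(1-\tfrac{\mu\eta}{6}) = w_{k-1}$, multiplying each inequality of Lemma~\ref{lem:interp-onestep} by $w_k$ and summing over $k=1,\dots,K$ collapses the $\xi$ and $\tilde\xi$ terms into the boundary values $w_0\,\xi_{i,0}^r - w_K\,\xi_{i,K}^r$ (resp. with $\tilde\xi$), plus the accumulated noise $7\beta\eta^2\sigma^2\sum_k w_k$ from the $\xi$ branch. I would then average over $i$, use $\xi_{i,0}^r=\tilde\xi_{i,0}^r=f(\xx^{r-1})-f^\star$, discard the nonnegative endpoints $\xi_{i,K}^r\ge0$, and apply Lemma~\ref{lem:averaging-value} to the surviving $\tilde\xi$ endpoint so that $\tfrac1N\sum_i\tilde\xi_{i,K}^r \ge f(\E_{r-1}[\xx^r])-f^\star$. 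This is the step where weak convexity is needed: it guarantees that averaging the client parameters into $\xx^r$ does not increase the objective beyond the averaged potential. Bounding $w_0=1-\tfrac{\mu\eta}{6}\le1$ then produces the input term $f(\xx^{r-1})-f^\star$ and the output term $-w_K(f(\E_{r-1}[\xx^r])-f^\star)$.

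It remains to put the right-hand side into the stated cross-round form. The residual function-value contribution from the noisy branch is only $O(1/N)$ and its $\sigma^2$ part is tamed by the step-size bound $\beta\eta\le\tfrac{1}{10}$, which gives $7\beta\eta^2\le\eta$, so after scaling by $1/N$ everything fits inside the per-step budget $8\eta\sigma^2/N$. Finally, to replace the random starting iterate $\xx^{r-1}$ by its conditional mean $\E_{r-2}[\xx^{r-1}]$ --- which is what makes the potential telescope cleanly from one round to the next --- I would take the outer expectation and invoke Lemma~\ref{lem:interp-variance} for round $r-1$, turning $f(\xx^{r-1})$ into $f(\E_{r-2}[\xx^{r-1}])$ up to its variance term. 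The strongly convex case uses the additional restriction $\eta\le\tfrac{1}{10\mu K}$ so that the round-level discount carried by $w_K$ is consistent with the per-step contraction $(1-\tfrac{\mu\eta}{6})$; setting $\mu=0$ recovers the general/weakly convex statement.

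I expect the main obstacle to be the expected-iterate bookkeeping rather than any single inequality: one must carefully track which expectation ($\E_{r-1}$ vs.\ $\E_{r-2}$) each term lives under, ensure that the $\tfrac1{N^2}$ averaging branch of Lemma~\ref{lem:avg-variance} contributes only lower-order function-value terms so the coefficient on the input potential stays at $1$, and verify that the $w_K$-weighted output potential of round $r$ matches the input potential of round $r+1$. Keeping the two recursions ($\xi$ and $\tilde\xi$) aligned through the same weights $w_k$, while only the stochastic one accrues noise, is the delicate structural point.
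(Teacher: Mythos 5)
Your proposal follows the paper's architecture step for step: split $\E_{r-1}\norm{\nabla f(\xx_k^r)}^2$ via Lemma~\ref{lem:avg-variance}, telescope the two recursions of Lemma~\ref{lem:interp-onestep} with the weights $w_k$ (using $w_k(1-\tfrac{\mu\eta}{6})=w_{k-1}$), convert the surviving end-of-round potential with Lemma~\ref{lem:averaging-value}, and swap $f(\xx^{r-1})$ for $f(\E_{r-2}[\xx^{r-1}])$ with Lemma~\ref{lem:interp-variance} applied at round $r-1$; the noise accounting ($7\beta\eta^2\le\eta$ under $\beta\eta\le\tfrac{1}{10}$, all fitting under $\sum_k w_k\,8\eta\sigma^2/N$) is also right. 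But there is one genuine gap, located exactly at the point you flagged as the delicate structural issue: your plan to ``discard the nonnegative endpoints $\xi_{i,K}^r\ge 0$'' and absorb the residue of the $1/N^2$ branch into the noise budget does not work. After the split and the telescoping, the input potential carries coefficient $\tfrac1N\sum_i\bigl(\tilde\xi_{i,0}^r+\tfrac1N\xi_{i,0}^r\bigr)=(1+\tfrac1N)\bigl(f(\xx^{r-1})-f^\star\bigr)$. The surplus $\tfrac1N\bigl(f(\xx^{r-1})-f^\star\bigr)$ is a suboptimality term, not a variance term: it scales with the initial error, not with $\sigma^2$, so no step-size condition lets you hide it inside $8\eta\sigma^2/N$ as you claim. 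If you instead keep it, your per-round inequality has input coefficient $1+\tfrac1N$ rather than $1$, and unrolling over $R$ rounds inflates the bound by $(1+\tfrac1N)^R\approx e^{R/N}$; in the strongly convex case, since the per-round contraction is only $(1-\tfrac{\mu\eta}{6})^K$ with $\mu\eta K\le\tfrac{1}{10}$, the inflation can cancel the contraction entirely when $\mu\eta K\lesssim 1/N$, so neither the stated lemma nor the theorem behind it would follow.

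The paper's resolution is precisely the endpoint you discarded: it retains $\theta^r_-:=\tfrac1N\sum_i\bigl(\tilde\xi_{i,K}^r+\tfrac1N\xi_{i,K}^r\bigr)$ and applies Lemma~\ref{lem:averaging-value} to the combined quantity, giving $\theta^r_-\ge f(\E_{r-1}[\xx^r])+\tfrac1N f(\xx^r)-(1+\tfrac1N)f^\star$, while Lemma~\ref{lem:interp-variance} is used to convert only the \emph{unit-weight} copy of the input $(1+\tfrac1N)\bigl(f(\xx^{r-1})-f^\star\bigr)$ into $f(\E_{r-2}[\xx^{r-1}])$ plus variance, leaving a residual $\tfrac1N\,\E f$ term that is cancelled against the $-\tfrac1N\,\E f$ term coming from the retained $\xi_{i,K}$ endpoint at the adjacent round boundary. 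This pairing is what keeps the coefficient on the input potential equal to $1$, and it cannot be reproduced once $\xi_{i,K}^r$ has been thrown away. (A side remark that supports your noise bookkeeping: the statement of Lemma~\ref{lem:interp-variance} in the paper drops an $\eta^2$ --- its own proof yields $3K\beta\eta^2\sigma^2/N$ --- and with that correction the variance contributions indeed fit under the budget via $\beta\eta\le\tfrac{1}{10}$, as you computed.)
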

\begin{proof}
    Let us do the non-convex (and general convex) case first. By summing over Lemma \ref{lem:interp-onestep} we have
    \[
        \frac{\eta}{6}\sum_{k=1}^K \E_{r-1}\norm{\nabla f(\yy_{i,k})}^2 \leq \xi_{i,0}^r - \xi_{i,K}^r + 7K\beta \eta^2 \sigma^2\,.
    \]
    A similar result holds with $\sigma = 0$ for $\E_{r-1}[\yy_{i,k}]$. Now, using Lemma~\ref{lem:avg-variance} we have that
    \[
        \frac{\eta}{6}\sum_{k=1}^K \E_{r-1}\norm{\nabla f(\xx_{k}^r)}^2 \leq \underbrace{\frac{1}{N}\sum_{i=1}^N(\tilde\xi_{i,0}^r + \tfrac{1}{N}\xi_{i,0})}_{=: \theta^r_{+}} - \underbrace{\frac{1}{N}\sum_{i=1}^N(\tilde\xi_{i,K}^r + \tfrac{1}{N}\xi_{i,K})}_{=: \theta^r_{-}} + 7K \beta \eta^2 \tfrac{\sigma^2}{N} \,.
    \]
    Using Lemma~\ref{lem:interp-variance}, we have that 
    \[
        \theta^{r}_{+} = (1+ \tfrac{1}{N})(f(\xx^{r-1}) -f(\xx^\star)) \leq  f(\E_{r-1}[\xx^r]) + \tfrac{1}{N}\E f(\xx^r) - (1 + \tfrac{1}{N})f(\xx^\star) + 3K\beta \tfrac{\sigma^2}{N}\,.
    \] 
    Further, by Lemma~\ref{lem:averaging-value}, we have that
    \[
        \theta^r_{-} \geq f(\E_{r-1}[\xx^r]) + \tfrac{1}{N}f(\xx^r) - (1 + \tfrac{1}{N})f(\xx^\star)\,.
    \]
    Combining the above gives:
    \[
        \frac{\eta}{6}\sum_{k=1}^K \E_{r-1}\norm{\nabla f(\xx_{k}^r)}^2 \leq  f(\E_{r-2}[\xx^{r-1}]) - f(\E_{r-1}[\xx^r]) + 10\beta K \tfrac{\sigma^2}{N}\,.
    \]
    proving the second part of the Lemma for weights $w_k = 1$. The proof of strongly convex follows a very similar argument. Unrolling Lemma~\ref{lem:interp-onestep} using weights $w_k := (1-\tfrac{\mu \eta}{6})^{1-k}$ gives 
    \[
        \frac{\eta}{6}\sum_{k=1}^K w_k \E_{r-1}\norm{\nabla f(\xx_{k}^r)}^2 \leq \theta^r_{+} - w_K \theta^r_{-} + \sum_{k=1}^K w_k 7 \eta \tfrac{\sigma^2}{N}\,.
    \]
    As in the general-convex case, we can use Lemmas~\ref{lem:averaging-value}, \ref{lem:avg-variance} and \ref{lem:interp-variance} to prove that
    \[
        \frac{\eta}{6}\sum_{k=1}^K w_k \E_{r-1}\norm{\nabla f(\xx_{k}^r)}^2 \leq (f(\E_{r-2}[\xx^{r-1}]) - f^\star) - w_K (f(\E_{r-1}[\xx^{r}]) - f^\star) + \sum_{k=1}^K w_k 8 \eta \tfrac{\sigma^2}{N}\,.
    \]
\end{proof}

\paragraph{Deriving final rates.} The proof of Theorem~\ref{thm:interpolation-full} follows by appropriately unrolling Lemma~\ref{lem:interp-one-round}. For general-convex functions, we can simply use Lemma~\ref{lemma:general} with the probabilities set as $p_k^r = \frac{1}{KR}$. For strongly-convex functions, we use $p_k^r \propto (1 - \frac{\mu \eta}{6})^{1 - rk}$ and follow the computations in Lemma~\ref{lem:constant}.

\end{document}